\theoremstyle{plain}
\newtheorem{theorem}{Theorem}[section]
\newtheorem{proposition}[theorem]{Proposition}
\newtheorem{lemma}[theorem]{Lemma}
\newtheorem{corollary}[theorem]{Corollary}
\theoremstyle{definition}
\newtheorem{definition}[theorem]{Definition}
\newtheorem{assumption}[theorem]{Assumption}
\theoremstyle{remark}
\newtheorem{remark}[theorem]{Remark}
\setlist[enumerate]{label*=\arabic*.}
\setlist{leftmargin=5.5mm} 
\newcommand{\ie}{\textit{i.e., }}
\newcommand{\eg}{\textit{e.g., }}
\newcommand{\suchthat}{\textrm{s.t.}}
\newcommand{\iid}{i.i.d.}
\newcommand\nth{\textsuperscript{th} }
\newcommand\first{\textsuperscript{st} }
\newcommand\nd{\textsuperscript{nd} }
\newcommand{\vect}[1]{\mathbf{#1}}
\newcommand{\doubleN}{\mathbb{N}}
\newcommand{\naturals}{\doubleN^{+}}
\DeclareMathOperator*{\argmin}{argmin}
\newcommand{\vw}{\vect{w}}           
\newcommand{\residual}{\vect{r}}
\newcommand{\1}{\vect{1}}
\newcommand{\0}{\mat{0}}
\newcommand{\teacher}{\vw^{\star}}
\newcommand{\mat}[1]{\mathbf{#1}}
\newcommand{\X}{\mat{X}}
\newcommand{\M}{\mat{M}}
\newcommand{\I}{\mat{I}}
\newcommand{\B}{\mat{B}}
\newcommand{\x}{\vect{x}}
\newcommand{\vv}{\vect{v}}
\newcommand{\e}{\vect{e}}
\newcommand{\valpha}{\bm{\alpha}}
\newcommand{\vu}{\vect{u}}
\newcommand{\y}{\vect{y}}
\newcommand{\w}{\vw}
\def\mP{{\mat{P}}}
\def\mQ{{\mat{Q}}}
\newcommand\smallcdots{\hskip.6pt\cdotp\!\hskip.6pt\cdotp\!\hskip.6pt\cdotp\!\hskip.6pt}
\newcommand\smalldots{.\hskip.8pt\!.\hskip.8pt\!.}
\newcommand{\scirc}{\hskip.8pt\!\circ\hskip.9pt\!}
\def\reals{\mathbb{R}}
\newcommand{\hfrac}[2]{{#1}/{#2}}
\newcommand{\norm}[1]{\left\Vert{#1}\right\Vert}
\newcommand{\abs}[1]{\left\vert{#1}\right\vert}
\newcommand{\cnt}[1]{\left[{#1}\right]}
\newcommand{\explain}[1]{\left[\substack{#1}\right]}
\newcommand{\expectation}{\mathop{\mathbb{E}}}
\newcommand{\prn}[1]{\left({#1}\right)}
\newcommand{\bigprn}[1]{\big({#1}\big)}
\newcommand{\Bigprn}[1]{\Big({#1}\Big)}
\newcommand{\biggprn}[1]{\bigg({#1}\bigg)}
\newcommand{\tprn}[1]{({#1})}
\newcommand{\smallnorm}[1]{\Vert{#1}\Vert}
\newcommand{\tnorm}[1]{\smallnorm{#1}}
\newcommand{\bignorm}[1]{\big\Vert{#1}\big\Vert}
\newcommand{\Bignorm}[1]{\Big\Vert{#1}\Big\Vert}
\newcommand{\biggnorm}[1]{\bigg\Vert{#1}\bigg\Vert}
\newcommand{\teachers}{\mathcal{W}^{\star}}
\newcommand{\feasible}{\mathcal{W}}
\newcommand{\tsum}{{\sum}}
\newcommand{\ball}[1]{\mathcal{B}^{d}}
\newcommand{\supp}{\hat{\mathcal{S}}}
\newcommand{\dataset}{S}
\newcommand{\wlim}{\w_{\infty}}
\newcommand{\lamloss}{\mathcal{L}_{\lambda}}
\newcommand{\dist}{{d}}
\renewcommand\dim{D}
\newcommand{\itr}{t}
\newcommand{\bigO}{\mathcal{O}}
\newcommand{\avgitr}{{\overline{\w}}}
\newcommand{\wsign}{s}
\newcommand{\algmargin}{\hspace{-.5em}}
\newtheorem{claim}[theorem]{Claim}
\newtheorem{property}{Property}
\newtheorem{example}{Example}
\newcommand{\remove}[1]{REMOVE!}
\newenvironment{proof-sketch}{\noindent{\bf Proof sketch.}}{}
\def\figref#1{Figure~\ref{#1}}
\def\secref#1{Section~\ref{#1}}
\def\eqref#1{Eq.~(\ref{#1})}
\def\lemref#1{Lemma~\ref{#1}}
\def\thmref#1{Theorem~\ref{#1}}
\def\corref#1{Corollary~\ref{#1}}
\def\clmref#1{Claim~\ref{#1}}
\def\defref#1{Def.~\ref{#1}}
\def\propref#1{Prop.~\ref{#1}}
\def\procref#1{Scheme~\ref{#1}}
\def\asmref#1{Assumption~\ref{#1}}
\def\appref#1{Appendix~\ref{#1}}
\definecolor{residuals}{RGB}{200,27,80}
\newcommand{\customcomment}[1]{{{#1}}}
\definecolor{itay}{RGB}{210,30,220}
\definecolor{gon}{RGB}{40,180,220}
\definecolor{mnb}{RGB}{220,180,20}
\definecolor{red2}{RGB}{240,17,17}
\definecolor{edward}{RGB}{30,200,30}
\newcommand{\itay}[1]{TODO}
\newcommand{\edward}[1]{TODO}
\newcommand{\gon}[1]{TODO}
\newcommand{\mnb}[1]{TODO}
\definecolor{todo}{RGB}{50,50,200}
\newcommand{\todo}[1]{TODO}
\newcommand{\deleted}[1]{TODO}
\newcommand{\dnote}[1]{TODO}
\newcommand{\rnotice}[1]{\customcomment{\textcolor{red2}{#1}}}
\newcommand{\nnotice}[1]{\customcomment{\textcolor{blue}{#1}}}
\newcommand{\unnotice}[1]{TODO}
\newenvironment{recall}[1][\proofname]{\par
\normalfont \topsep6\p@\@plus6\p@\relax
\trivlist
\item\relax
{\bfseries
Recall #1}%
{\bfseries\@addpunct{.}}\hspace\labelsep\ignorespaces
}
\long\def\supptitle#1{

   \gdef\@runningheadingerrortitle{0}


   \ifnum\statePaper=0
    {
     \gdef\@runningtitle{Manuscript under review by AISTATS \@conferenceyear}
    }
   \fi


   \ifnum\statePaper=1
   {
   \ifx\undefined\@runningtitle
    {
    \gdef\@runningtitle{#1}
    }
   \fi
   }
   \fi

   \ifnum\@runningheadingerrortitle=0
         {
         \global\setbox\titrun=\vbox{\small\bfseries\@runningtitle}%
         \ifdim\wd\titrun>\textwidth%
            {\gdef\@runningheadingerrortitle{2}
             \gdef\@messagetitle{Running heading title too long}
            }%
         \else\ifdim\ht\titrun>10pt
              {\gdef\@runningheadingerrortitle{3}
              \gdef\@messagetitle{Running heading title breaks the line}
              }%
              \fi
          \fi
         }
    \fi

   \ifnum\@runningheadingerrortitle>0
     {
        \fancyhead[CE]{\small\bfseries\@messagetitle}
        \ifnum\@runningheadingerrortitle>1
           \typeout{}%
           \typeout{}%
           \typeout{*******************************************************}%
           \typeout{Running heading title exceeds size limitations for running head.}%
           \typeout{Please supply a shorter form for the running head}
           \typeout{with \string\runningtitle{...}\space just after \string\begin{document}}%
           \typeout{*******************************************************}%
           \typeout{}%
           \typeout{}%
        \fi
     }
  \else
     {
          \fancyhead[CE]{\small\bfseries\@runningtitle}
     }
  \fi

  \hsize\textwidth
  \linewidth\hsize \toptitlebar {\centering
  {\Large\bfseries #1 \par}}
 \bottomtitlebar
}
\renewcommand*\backref[1]{\ifx#1\relax \else (cited on {p.~#1}) \fi}
\newcommand{\theHalgorithm}{\arabic{algorithm}}
\icmltitlerunning{Continual Learning in Linear Classification on Separable Data}
\begin{document}

\twocolumn[
\icmltitle{Continual Learning in Linear Classification on Separable Data}


\begin{icmlauthorlist}
\icmlauthor{Itay Evron}{tech}
\hspace{1em}
\icmlauthor{Edward Moroshko}{tech}
\hspace{1em}
\icmlauthor{Gon Buzaglo}{tech}
\hspace{1em}
\icmlauthor{Maroun Khriesh}{tech}
\hspace{1em}
\icmlauthor{Badea Marjieh}{tech}
\\
\icmlauthor{Nathan Srebro}{tyt}
\hspace{1em}
\icmlauthor{Daniel Soudry}{tech}
\end{icmlauthorlist}

\icmlaffiliation{tech}{Department of Electrical and Computer Engineering, Technion, Haifa, Israel}
\icmlaffiliation{tyt}{Toyota Technological Institute at Chicago, Chicago IL, USA}

\icmlcorrespondingauthor{Itay Evron}{itay@evron.me}

\icmlkeywords{Continual learning, Linear classification, Catastrophic forgetting}

\vskip 0.3in
]

\printAffiliationsAndNotice{} 

\begin{abstract}
We analyze continual learning on a sequence of separable linear classification tasks with binary labels.
We show theoretically that learning with weak regularization reduces to solving a sequential max-margin problem, corresponding to a special case of the Projection Onto Convex Sets (POCS) framework.
We then develop upper bounds on the forgetting 
and other quantities of interest under various settings with recurring tasks, including cyclic and random orderings of tasks.
We discuss several practical implications to popular training practices like regularization scheduling and weighting.
We point out several theoretical differences between our continual classification setting and a recently studied continual regression setting.
\end{abstract}

\begin{figure*}[!b]
\label{fig:super_feasibility}
\centering
\vspace{-.6em}
\caption{Illustrating our setting from \secref{sec:setting} and
the Sequential Max-Margin dynamics from \procref{proc:adaptive} in \secref{sec:algorithmic_bias}.}
\vspace{.1in}

\hspace{.5em}
\begin{subfigure}[t]{0.31\textwidth}
    \centering
    \frame{\includegraphics[width=.99\linewidth]{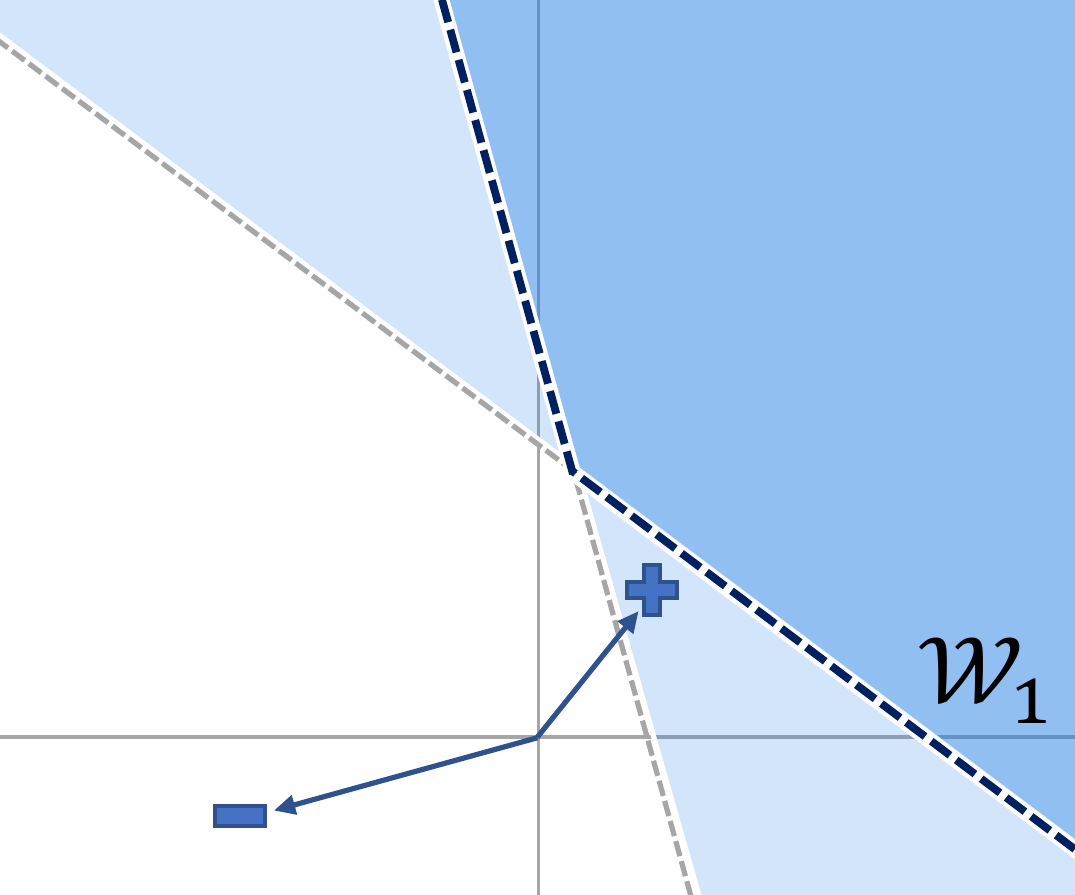}}
    \vspace{-.6em}
    \caption{
    \label{fig:one_task_feasibility}
    A $2$-dimensional task with two samples (one positive and one negative).
    Each sample $(\x,y)$ induces a constraint to a \emph{halfspace} $y\w^\top \x\ge1$.
    The task's feasible set $\feasible_1$ is defined as the intersection of these two halfspaces,
    and is thus an affine polyhedral cone.
    }
\end{subfigure}
\hfill
\begin{subfigure}[t]{0.31\textwidth}
    \centering
    \frame{\includegraphics[width=.99\linewidth]{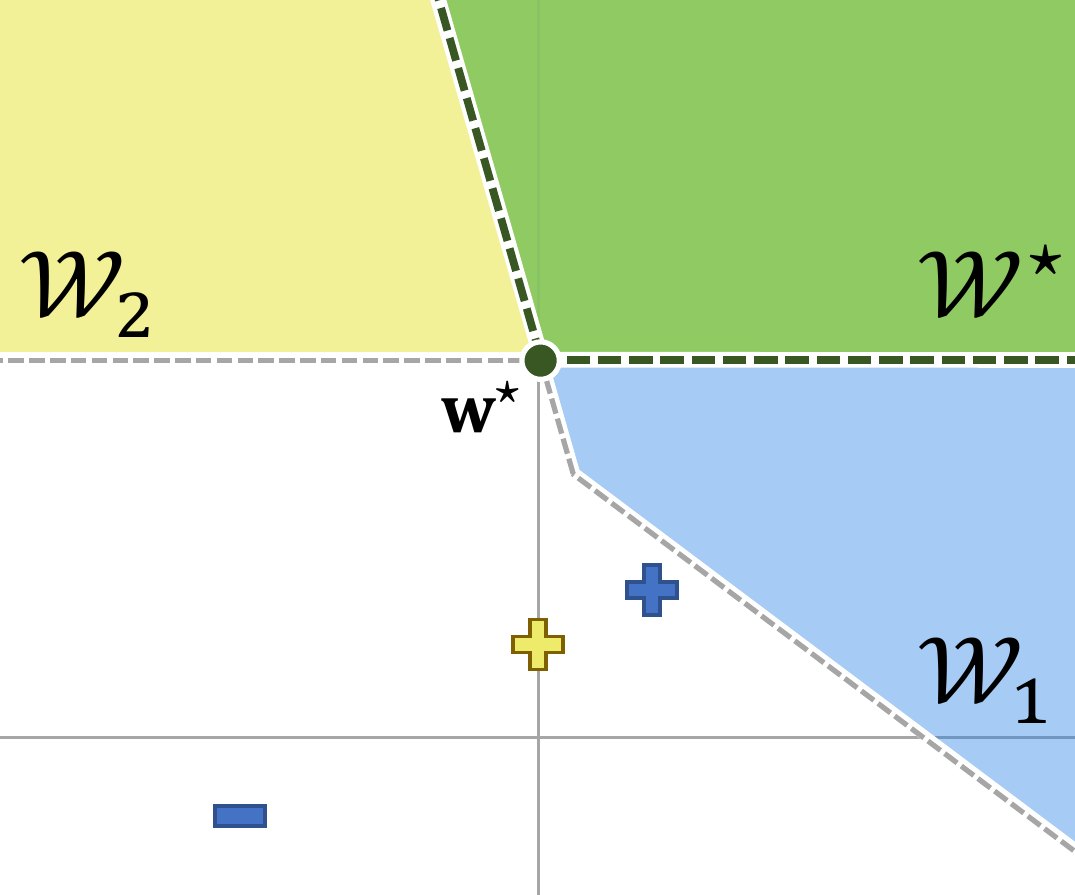}}
    \vspace{-.6em}
    \caption{
    \label{fig:two_tasks_feasibility}
    Two tasks, the first having two samples and the second having a single sample.
    The intersection of $\feasible_1$ and $\feasible_2$ defines the \linebreak
    \emph{offline} feasible set $\teachers$,
    in which all samples of all tasks are correctly classified with a margin of at least $1$.
    Notice the min-norm offline solution $\teacher\in\teachers$.
    }
\end{subfigure}
\hfill
\begin{subfigure}[t]{0.31\textwidth}
    \centering
    \frame{\includegraphics[width=.99\linewidth]{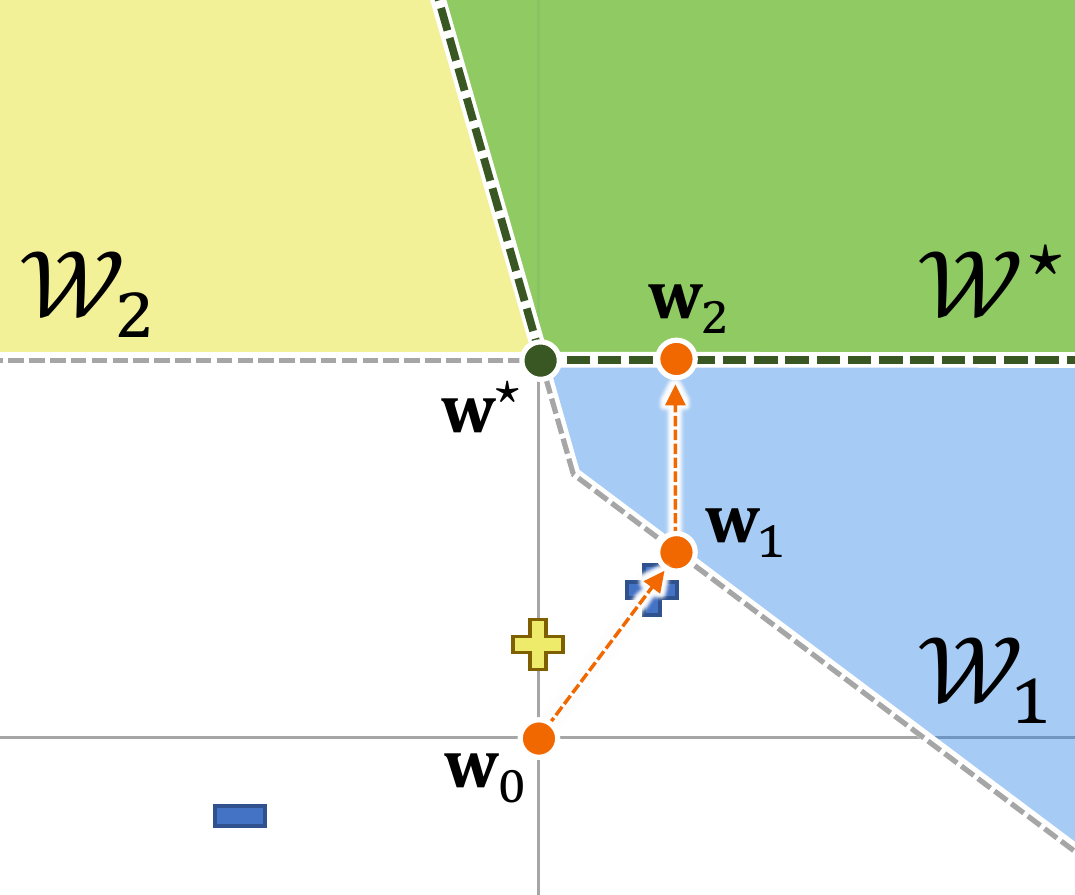}}
    \vspace{-.6em}
    \caption{
    \label{fig:feasibility_iterates}
    Learning the 1\textsuperscript{st} task projects ${\w_0\!=\!\0_{\dim}}$
    onto $\feasible_1$.
    The obtained iterate
    ${\w_1=\mP_1(\w_0)}$ is the max-margin solution of the 1\textsuperscript{st} task.
    Then, learning
    \linebreak
    the 2\textsuperscript{nd} task, projects $\w_1$ onto $\feasible_2$
    to obtain (in this case)
    an offline solution
    ${\w_2=\mP_2(\w_1)
    \in
    \teachers\subset\feasible_2}$.
    Notably, 
    $\w_2$ is \emph{not} the min-norm  $\teacher$
    (\secref{sec:convergence_to_min}).
    }
\end{subfigure}
\hspace{.5em}
\vspace{-.9em}
\end{figure*}

\section{Introduction}

Continual learning deals with learning settings where distributions, or tasks, change over time,
breaking traditional i.i.d.~assumptions.
While models trained sequentially are expected to accumulate knowledge and improve over time, practically they suffer from   
\emph{catastrophic forgetting} \citep{mccloskey1989catastrophic,goodfellow2013empirical},
\ie their performance on previously seen tasks deteriorates over time.

Much research in continual learning has focused on heuristic approaches to remedying forgetting. 
Recent approaches achieve impressive empirical performance, but often require storing examples from previous tasks
(\eg \citet{robins1995rehearsal,rolnick2019experience}), iteratively expanding the learned models 
(\eg \citet{yoon2018lifelong}), 
or lessening the plasticity of these models and harming their performance on new tasks
(\eg \citet{kirkpatrick2017overcomingEWC}).


We theoretically study the continual learning of a linear classification model on separable data with binary classes.
\linebreak
Even though this is a fundamental setup to consider, there are still very few analytic results on it,
since most of the continual learning theory thus far has focused on regression settings
(\eg \citet{bennani2020generalisationWithOGD,doan2021NTKoverlap,Asanuma_2021,lee2021taskSimilarity,evron2022catastrophic,goldfarb2023analysis,li2023fixedDesign}).
Even in the broader deep learning scope, theoreticians often start from very simple models and use them to gain insight into phenomena arising in more practical models 
(\eg \citet{belkin2018understand,woodworth2020kernel}).

Our paper reveals a surprising algorithmic bias of linear classifiers trained continually to minimize the exponential loss on separable data with weak regularization. 
Specifically, we prove that the weights converge in the same direction as the iterates of a Sequential Max-Margin scheme. 
This creates a bridge between the popular regularization methods
for continual learning 
\citep{kirkpatrick2017overcomingEWC,zenke2017continual}
and the well-studied POCS framework -- Projections Onto Convex Sets (also known as the convex feasibility problem or successive projections). 

Our results complement those of a recent paper
\citep{evron2022catastrophic}
that analyzed the worst-case performance of continual linear regression.
That paper showed that continually learning linear regression tasks with vanilla SGD,
implicitly performs sequential projections onto closed \emph{subspaces},
and connected that regime to the area of Alternating Projections \citep{vonNeumann1949rings,halperin1962product}.
In our paper, we draw comparisons between our continual \emph{classification} setting and their continual \emph{regression} setting
(summarized in App.~\ref{app:comparison}).
We point out inherent differences between these two settings, emphasizing the need for a proper and thorough analytical understanding dedicated to continual \emph{classification} settings.

\paragraph{Our Contributions}
Our analysis reveals the following:
\begin{itemize}[leftmargin=5mm]\itemsep.8pt
    \item Explicit regularization methods for continual learning of linear classification models are linked to the framework of Projections Onto Convex Sets (POCS).
    \item Each learned task
    brings the learner \emph{closer} to an ``offline'' feasible solution solving 
    \emph{all} tasks.
    However, there exist task sequences for which the learner stays arbitrarily far from offline feasibility, even after infinitely many tasks.
    \item When tasks recur cyclically or randomly, the learner converges to an offline solution with linear rates.
    \item 
    If we converge to an offline solution, it may not be the minimum-norm solution (in contrast to continual regression), but it still needs to be $2$-optimal (minimal).
    %
    \item Scheduling the regularization strength endangers convergence to an offline solution and optimality guarantees.
    \item Using popular regularization weighting schemes based on Fisher-information matrices, 
    does not necessarily prevent forgetting
    (in contrast to continual regression).
    %
    %
    \item Early stopping (without regularization) does \emph{not} yield the same solutions as weak regularization (unlike in stationary settings with a single task).
\end{itemize}


\section{Setting}
\label{sec:setting}

We consider $T\ge 2$ binary classification tasks.
Each task $m=1,\dots,T$ 
is defined by
a dataset $\dataset_m$ consisting of
tuples 
of {$\dim$-dimensional} samples and their binary labels,
\linebreak
\ie each tuple is
$(\x, y)
\!\in\!
{\,\reals^{\dim} \!\times\! \left\{-1,+1\right\}\,}$,
for a finite $\dim$.

\paragraph{Notation.}
Throughout the paper, we denote the (isotropic) Euclidean norm of vectors by $\norm{\vect{v}}$,
and the weighted norm by 
$\norm{\vv}_{\B}
\triangleq
\vv^\top \B \vv$,
for some $\B\succ\0_{\dim\times\dim}$.
We denote the set of natural numbers starting from $1$ by ${\naturals\triangleq \mathbb{N}\setminus\{0\}}$
and the natural numbers from $1$ to $n$ by $\cnt{n}$.
\linebreak
We define the distance of a vector $\w\in\reals^{\dim}$ from a closed set $\mathcal{C}\subseteq\reals^{\dim}$ as
${\dist(\w,\mathcal{C})\triangleq 
\min_{\vv\in\mathcal{C}}\norm{\w-\vv}}$.
\linebreak
Finally, we denote the maximal norm of any data point by ${R\triangleq \max_{m\in\cnt{T}}\max_{(\x,y)\in\dataset_m }\norm{\x}}$.

\pagebreak

Our main assumption in this paper is that the tasks are jointly-separable,
\ie they can be perfectly fitted simultaneously, as in ``offline'' non-continual settings. 
This can be formally stated as follows.

\begin{assumption}[Separability]
\label{asm:separability}
Each task $m\in\cnt{T}$ is separable, 
\ie it has a \emph{non-empty} feasible set
defined as
$$
\feasible_m
\,\!\triangleq\!\,
\left\{
\w\!\in\!\reals^{\dim}
\mid
y \w^\top \x\ge 1,
~
\forall (\x, y)\in \dataset_m \right\}\,.
$$
Moreover, the $T$ tasks are \emph{jointly}-separable ---
there exists a {non-empty} {offline}
feasible set:
$$\teachers
\,\!\triangleq\!\,
\feasible_1
\cap \dots \cap
\feasible_T
\neq\emptyset
\,.$$
\end{assumption}
A similar assumption was made in the continual regression setting \citep{evron2022catastrophic}. 
It is a reasonable assumption in overparameterized regimes, 
where feasible offline solutions often \emph{do} exist. 
Practically, this is commonly the case in modern deep networks.
Theoretically, with sufficient overparameterization \citep{du2019gradient} or high enough margin \citep{Ji2020Polylogarithmic}, 
it is often easy to converge to a zero-loss solution.

\bigskip

To facilitate our results and discussions,
we specifically define the minimum-norm offline solution.
This solution is traditionally linked to good generalization.

\begin{definition}[Minimum-norm offline solution]
\label{def:min-norm-solution}
We denote the offline solution with the minimal norm by
\begin{align*}        
\begin{aligned}
\teacher 
\triangleq
{\argmin}_{\w\in\teachers}
\norm{\w}\,.
\end{aligned}
\end{align*}
\end{definition}

\medskip

Figures~\ref{fig:one_task_feasibility}~and~\ref{fig:two_tasks_feasibility} illustrate our definitions.
Notice how both the feasible set $\feasible_m$ of each task and the offline feasible set $\teachers$ are closed, convex, and affine polyhedral cones.

\pagebreak

\section{Algorithmic Bias in Regularization Methods}
\label{sec:algorithmic_bias}

Regularization methods are highly influential in continual learning
\citep{kirkpatrick2017overcomingEWC,zenke2017continual,aljundi2018memory}.
In this section, we propose a novel analysis for such methods on separable datasets in the spirit of theoretical work on algorithmic biases outside the scope of continual learning.
Concretely, we discover that
weakly-regularized models, trained sequentially to minimize the exponential loss,%
\footnote{
It should be possible to extend our results to other losses with exponential tails, \eg cross-entropy,
as has been done 
in previous theoretical works on stationary settings
\citep{soudry2018journal}.
}
converge in direction to the iterates of a sequential projection scheme.

Specifically, we study \procref{proc:regularized}, in which the learner sequentially sees one task
(out of $T$)
at a time,
for $k$ iterations
($k\!>\!T$ implies repetitions).
Starting from $\w_0^{(\lambda)}\!\!=\!\!\0_{\dim}$,
at each iteration $t\!\in\!\cnt{k}$,
the learner minimizes
the exponential loss of the current task's dataset $\dataset_{t}$, 
while biasing towards the previous task's solution $\w_{t-1}^{(\lambda)}$ 
using the Euclidean norm.
The regularization strength is determined by a sequence of (possibly constant) positive scalars $\lambda_1,\smalldots,\lambda_{k}\!>\!0$.
The norms are possibly weighted by a sequence of positive-definite matrices $\B_1,\smalldots,\B_{k}\succ\0_{\dim\times\dim}$. 

\begin{algorithm}[ht!]
   \caption{Regularized Continual Learning
    \label{proc:regularized}}
\begin{algorithmic}
    \vspace{-.1em}
   \STATE {\algmargin\bfseries Initialization:} 
   $\w_{0}^{(\lambda)} = \0_{\dim}$
   \STATE {\algmargin{\bfseries Iterative update for each task $t\in[k] 
   $:}}
    \vspace{-.7em}
    \begin{align}
    \algmargin
    \label{eq:weakly_regularized_problem}
    \w_{t}^{(\lambda)}\!
    =
    \argmin_{\w \in \reals^{\dim}} \!\!
    {
    \sum_{(\x,y)\in \dataset_{t} }
    \!\!\!
    e^{-y \w^\top \x }
    +
    \frac{\lambda_t}{2}
    \norm{\w\!-\!\w_{t-1}^{(\lambda)}}^2_{\B_{t}}
    }
    \end{align}
    \vspace{-1.1em}
\end{algorithmic}
\end{algorithm}

\vspace{-.4em}
To clarify, each of the $k$ iterations 
corresponds to learning a \emph{whole} task (to convergence), and not to performing a single gradient step.

\vspace{.2em}

As a start, we first focus on 
\emph{constant} strengths $\lambda\!>\!0$
\linebreak
and on
``vanilla'' L$2$ regularization, \ie 
$\B_1 = \smalldots = \B_{k} = \I$.
Vanilla L$2$ regularization has recently been shown to be competitive 
with more popular norm-weighting schemes like EWC
\citep{lubana2022quadratic,smith2022closer}.
We address the more complicated setups in \secref{sec:extensions}.

For an arbitrary $\lambda$,
the objective in \eqref{eq:weakly_regularized_problem}
is hard to analyze,
even for the first task, where the regularization term reduces to the traditional unbiased term $\norm{\w}^2$.
\linebreak
Previous works were still able to perform non-trivial analysis
by examining the weakly-regularized case, 
\ie in the limit of $\lambda\to 0$
(\eg for linear models \citep{rosset2004margin} 
or homogeneous neural networks \citep{wei2019regularization}).

Under our continual setting,
we also take this approach and analyze 
the weakly-regularized model.
This enables us to gain analytical insights into regularization methods for continual learning.
We establish an equivalence between the weakly-regularized \procref{proc:regularized} and the following Sequential Max-Margin \procref{proc:adaptive},
and propose novel perspectives and techniques for analyzing regularization methods.

\begin{algorithm}[ht!]
   \caption{Sequential Max-Margin}
    \label{proc:adaptive}
\begin{algorithmic}
   \STATE {\algmargin\bfseries Initialization:} 
   $\w_0 = \0_{\dim}$
   \STATE {\algmargin{\bfseries Iterative update for each task $t\in[k] 
   $:}} 
    \vspace{-.3em}
    \begin{align}
    \algmargin
    \label{eq:adaptive_dynamics}
    \w_{t}
    =
    \mP_{t}\prn{\w_{t-1}}
    \triangleq
    \argmin_{\w \in \reals^{\dim}} 
    \enskip & 
    \!
    \norm{\w-{\w}_{t-1}}^2
    \\
    \suchthat
    \enskip & 
    \,
    y\w^\top \x \!\ge\! 1,
    \,
    \forall (\x,y)\!\in\! \dataset_{t}
    \nonumber
    \end{align}
    \vspace{-1.6em}
\end{algorithmic}
\end{algorithm}

\vspace{-0.3em}

We are now ready to state our fundamental result,
showing that 
the regularized continual iterates $\tprn{\w_t^{(\lambda)}}$  of \procref{proc:regularized}
converge in direction to the Sequential Max-Margin 
iterates
$\prn{\w_t}=\tprn{\mP_{t}\prn{\w_{t-1}}}$ of \procref{proc:adaptive},
obtained by successive projections onto closed convex sets, as depicted in \figref{fig:feasibility_iterates}.

\vspace{0.3em}

\begin{theorem}[Weakly-regularized Continual Learning converges to Sequential Max-Margin]
\label{thm:weakly_regularized}
Let $\lambda_t\!=\!\lambda\!>\!0$, 
and
\linebreak
$\B_{t}\!=\!\I$, $\forall t\!\in\!\cnt{k}$.
Then, 
for almost all separable datasets,%
\footnote{
This holds w.p.~$1$ for
separable datasets (Assumption~\ref{asm:separability}) sampled from \emph{any} absolutely continuous distribution.
It also holds even when the datasets are separable 
but not \emph{jointly}-separable.
\label{fn:almost_all}
}
\linebreak
in the limit of $\lambda\!\to\!0$,
it holds that
$\w_{t}^{(\lambda)} 
\to
{\ln\prn{\tfrac{1}{\lambda}}\w_{t}}$
with a residual of
$\tnorm{\w_{t}^{(\lambda)} 
\!-
{\ln\prn{\tfrac{1}{\lambda}}\w_{t}}}
=\bigO\prn{t\ln\ln\prn{\tfrac{1}{\lambda}}}
$. 
\linebreak
As a result, 
at any iteration 
$t=o\prn{\frac{\ln\prn{\nicefrac{1}{\lambda}}}{\ln \ln\prn{\nicefrac{1}{\lambda}}}}$,
we get
\vspace{-0.5em}
$$
\lim_{\lambda\to0}
\frac{\w_t^{(\lambda)}}{\tnorm{\w_t^{(\lambda)}}}
=
\frac{\w_t}{\tnorm{\w_t}}
\,.
$$
\end{theorem}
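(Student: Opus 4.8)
The plan is to prove the residual bound $\tnorm{\w_t^{(\lambda)} - \beta\w_t} = \bigO\prn{t\ln\beta}$, where $\beta\triangleq\ln\prn{\nicefrac{1}{\lambda}}$, by induction on $t$, and then read off the directional statement as a corollary. The backbone of the argument is the first-order stationarity condition of the strongly convex objective in \eqref{eq:weakly_regularized_problem}: since $\B_t=\I$ and $\lambda_t=\lambda$, the minimizer $\w_t^{(\lambda)}$ is the unique solution of
\[
\lambda\prn{\w_t^{(\lambda)}-\w_{t-1}^{(\lambda)}}
=
\sum_{(\x,y)\in\dataset_t} y\,\x\, e^{-y\,(\w_t^{(\lambda)})^\top \x}\,.
\]
I would compare this against the Karush--Kuhn--Tucker conditions of the projection $\w_t=\mP_t(\w_{t-1})$ in \eqref{eq:adaptive_dynamics}, which read $\w_t-\w_{t-1}=\sum_{(\x,y)\in\dataset_t}\alpha_{(\x,y)}\, y\,\x$ with multipliers $\alpha_{(\x,y)}\ge 0$ that are positive exactly on the \emph{support vectors} (the active constraints $y\w_t^\top\x=1$) and zero elsewhere. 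The two conditions become compatible once the ``soft multipliers'' $\tfrac{1}{\lambda}e^{-y(\w_t^{(\lambda)})^\top\x}$ are related to the hard multipliers $\alpha_{(\x,y)}$, which is the relation I aim to establish.

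For the base case $t=1$ we have $\w_0^{(\lambda)}=\0_{\dim}$, so \eqref{eq:weakly_regularized_problem} is the classical weakly-regularized exponential-loss problem whose regularization path is known to align with the max-margin direction $\w_1=\mP_1(\0_{\dim})$. Substituting the ansatz $\w_1^{(\lambda)}=\beta\w_1+\residual_1$ into the stationarity condition and using the projection's KKT relation $\w_1=\sum\alpha_{(\x,y)}\,y\x$ turns the balance into $e^{-y(\w_1^{(\lambda)})^\top\x}=\lambda\beta\,\alpha_{(\x,y)}+\smallO(\lambda\beta)$ on each support vector. Taking logarithms, the attained margin must equal $\beta-\ln\beta-\ln\alpha_{(\x,y)}+\smallO(1)$; comparing with $y(\beta\w_1)^\top\x=\beta$ (since $y\w_1^\top\x=1$ on support vectors) forces $y\residual_1^\top\x=-\ln\beta+\bigO(1)=-\ln\ln\prn{\nicefrac1\lambda}+\bigO(1)$. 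Genericity of the data (``almost all datasets'') ensures the support vectors are linearly independent, so these inner products determine the on-span part of $\residual_1$ up to $\bigO(1)$, giving $\tnorm{\residual_1}=\bigO(\ln\ln\prn{\nicefrac1\lambda})$; this factor $\ln\beta=\ln\ln\prn{\nicefrac1\lambda}$ is precisely the per-task correction that the theorem tracks.

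For the inductive step I would assume $\w_{t-1}^{(\lambda)}=\beta\w_{t-1}+\residual_{t-1}$ with $\tnorm{\residual_{t-1}}=\bigO\prn{(t-1)\ln\beta}$ and substitute $\w_t^{(\lambda)}=\beta\w_t+\residual_t$ into the stationarity condition. Writing $\beta\w_t=\beta\w_{t-1}+\beta\sum\alpha_{(\x,y)}y\x$ via the KKT relation of $\mP_t$, the same matching as in the base case holds on the support vectors of the projection, while non-support points have $y\w_t^\top\x>1$, hence margins exceeding $\beta$ and loss gradients of order $\lambda^{1+c}$ for some $c>0$ --- negligible against the $\lambda\beta$ scale. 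Decomposing $\residual_t$ along and orthogonally to the support span, the support-span part is freshly pinned to $\bigO(\ln\beta)$ exactly as above, whereas orthogonally to the span the stationarity condition gives $\w_t^{(\lambda)}\approx\w_{t-1}^{(\lambda)}$, so that component of $\residual_t$ coincides with that of $\residual_{t-1}$. The non-expansiveness ($1$-Lipschitzness) of the Euclidean projection $\mP_t$ guarantees that this carried-over error is not amplified, so $\tnorm{\residual_t}\le\tnorm{\residual_{t-1}}+\bigO(\ln\beta)$ and the bound accumulates additively to $\tnorm{\residual_t}=\bigO\prn{t\ln\beta}=\bigO\prn{t\ln\ln\prn{\nicefrac1\lambda}}$.

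Finally, the directional statement follows by normalizing $\w_t^{(\lambda)}=\beta\w_t+\residual_t$: since $\tnorm{\beta\w_t}=\beta\tnorm{\w_t}$ grows linearly in $\beta$ while $\tnorm{\residual_t}=\bigO\prn{t\ln\ln\prn{\nicefrac1\lambda}}$, a standard estimate gives
\[
\Bignorm{\frac{\w_t^{(\lambda)}}{\tnorm{\w_t^{(\lambda)}}}-\frac{\w_t}{\tnorm{\w_t}}}
=
\bigO\!\prn{\frac{\tnorm{\residual_t}}{\beta\tnorm{\w_t}}}
=
\bigO\!\prn{\frac{t\,\ln\ln\prn{\nicefrac1\lambda}}{\ln\prn{\nicefrac1\lambda}}}\,,
\]
which tends to $0$ exactly when $t=\smallO\prn{\ln\prn{\nicefrac1\lambda}\big/\ln\ln\prn{\nicefrac1\lambda}}$, as claimed. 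I expect the inductive step to be the main obstacle, for two reasons: (i) one must show the ``soft'' support set (the points with non-vanishing loss gradient) asymptotically coincides with the ``hard'' support set of $\mP_t$ and that the remaining margins stay above $\beta$, which relies on the genericity assumption and on a uniform-in-$\lambda$ a-priori bound on the iterates; and (ii) one must certify that the residual propagates additively rather than multiplicatively across tasks, for which the non-expansiveness of $\mP_t$ together with the strong convexity of \eqref{eq:weakly_regularized_problem} is the decisive quantitative lever.
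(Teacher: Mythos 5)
Your overall architecture is the same as the paper's: induction on $t$, the ansatz $\w_t^{(\lambda)} \approx \ln\prn{\tfrac{1}{\lambda}}\w_t$ with a per-task correction of size $\ln\ln\prn{\tfrac{1}{\lambda}}$, matching the stationarity condition of \eqref{eq:weakly_regularized_problem} against the KKT conditions of $\mP_t$, additive accumulation of residuals, and the final normalization estimate. But there is a genuine gap at the crux of your matching step: you assert that the KKT multipliers of the projection are ``positive exactly on the support vectors.'' KKT plus complementary slackness only give $\alpha_{(\x,y)} \ge 0$ on active constraints; strict positivity (strict complementarity) can genuinely fail. The paper exhibits an explicit degenerate instance in \appref{sec:limitations}: with $\X_1=[\e_1]$ and $\X_2=[\e_1,\e_2]$ one gets $\w_2-\w_1=\e_2$, forcing $\alpha_2(\e_1)=0$ even though $\e_1$ is a support vector of the second task. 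When some $\alpha(\x)=0$, your step ``taking logarithms, the attained margin must equal $\beta-\ln\beta-\ln\alpha_{(\x,y)}+\smallO(1)$'' is undefined, and no finite correction $\tilde{\w}_t$ solving $\sum_{\x\in\supp_t}\x\, e^{-\tilde{\w}_t^\top\x}=\sum_{\x\in\supp_t}\x\,\alpha(\x)$ exists. Proving that strict complementarity holds for \emph{almost all} datasets --- and that it survives the recursion, since the dual at task $t$ depends on $\w_{t-1}$, hence on all earlier data --- is exactly the content of the paper's \lemref{lem:kkt_w_tilde} and \corref{cor:finite_w_tilde}, and it occupies most of the proof (the duals are expressed as rational functions of the data, and a bespoke task-sequence construction certifies the relevant polynomials are not identically zero). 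Your genericity invocation covers only linear independence of the support set, which does not imply positivity of the duals.

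Two further issues, both repairable but currently unproven in your write-up. First, substituting the ansatz into the stationarity equation and ``solving'' for $\residual_t$ only determines what the residual must be \emph{if} the minimizer is near the ansatz, which is circular without a quantitative lever; the paper's engine is \lemref{lem:strong-convexity} applied to the $\lambda$-strongly convex $\lamloss$, giving $\tnorm{\w-\w_t^{(\lambda)}}\le \frac{1}{\lambda}\tnorm{\nabla\lamloss(\w)}$ at the explicit candidate $\prn{\ln\prn{\tfrac{1}{\lambda}}+\wsign_t\ln\ln\prn{\tfrac{1}{\lambda}}}\w_t+\tilde{\w}_t$. Note also that in the paper the inherited residual $\residual_{t-1}^{(\lambda)}$ enters \emph{additively through the gradient}, via the regularizer's center $\w_{t-1}^{(\lambda)}$ --- this, not non-expansiveness of $\mP_t$ (which plays no role in this theorem's proof), is what makes the accumulation additive. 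Second, you never treat the case $\w_t=\mP_t(\w_{t-1})=\w_{t-1}$, where there need be no positive duals at all; the paper handles it by flipping the sign of the correction ($\wsign_t=+1$ rather than $-1$), which makes the support-vector contribution vanish at the relevant scale, and your multiplier-matching argument has no analogue there.
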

\vspace{-0.5em}

In \appref{app:algorithmic_bias}, we prove this theorem.
There, we also discuss the limitations of our analysis (\appref{sec:limitations}) and identify aspects where it can be improved.

\vspace{0.8em}

\begin{remark}[Important differences from existing analysis]
\label{rmk:analysis_differences}
Existing works (\eg \citet{rosset2004margin}) analyzed weakly-regularized models with an \emph{unbiased} regularizer $\norm{\w}_{p}^{p}$.
This allowed them to concentrate on the limit \emph{margin} which implies convergence in direction.
However, we show that to analyze the \emph{continual} regularizer in \eqref{eq:weakly_regularized_problem},
one must also take into account the \emph{scale} of the solutions $(\w_{t}^{(\lambda)})$.
Therefore, we analyze both the scale \emph{and} the direction of weakly-regularized solutions,
requiring more refined techniques.
\end{remark}


\section{Sequential Max-Margin Projections}\label{sec:adaptive_svm}

Now, we turn to exploit the connection we have established between the weakly-regularized continual 
Scheme~\ref{proc:regularized}
and the Sequential Max-Margin Scheme~\ref{proc:adaptive}.
Using tools from existing literature on Projections Onto Convex Sets (POCS), we gain valuable insights into 
the dynamics of training
models continually.
Specifically, we derive optimality guarantees and convergence bounds in several interesting settings.

\pagebreak

\subsection{Quantities of Interest}
\label{sec:quantities}
We have three quantities of interest.
While the first two are widely used in the POCS literature,
the latter is specific to our continual classification setting.

\begin{definition}[Quantities of interest]
\label{def:quantities}
Let $\w_{t}\in\reals^{\dim}$ be the $t$\nth iterate, 
obtained while continually learning $T$ jointly-separable tasks.
The following quantities are of interest: 
\vspace{-0.6cm}
\begin{enumerate}[leftmargin=0.36cm, itemindent=0cm, labelsep=0.1cm]\itemsep1pt
    \item \textbf{Distance to the offline feasible set:}
    \hfill
    $\dist\!\prn{\w_{t}, \teachers}$

    \item \textbf{Maximum dist.~to \emph{any} feasible set:}
    \hfill
    $
    \m@th\displaystyle
    \max_{m\in\cnt{T}}
    \!
    \dist\!\prn{\w_{t},\feasible_m}
    $ 
    
    \item \textbf{Forgetting:}
    We define the forgetting of a previously-seen task ${m\in\cnt{t}}$ as the maximal squared hinge loss on \emph{any} sample $(\x,y)$ of the task. 
    More formally,
    $$
    F_{m}
    (\w_{t})
    \triangleq
    {\max}_{(\x,y)\in\dataset_m}
    \!
    \left(
    \max\left\{0,\,1-y\w_{t}^\top \x\right\}
    \right)^2\,.
    $$ 
    Throughout our paper,
    we 
    analyze both the maximal and the average forgetting,
    \ie 
    \linebreak
    $\max_{m\in\cnt{t}}F_{m}(\w_{t})$
    and
    $\tfrac{1}{t}\sum_{m=1}^{t}F_{m}(\w_{t})$.
\end{enumerate}
\end{definition}

\paragraph{Explaining our forgetting.}
Previous works on continual linear \emph{regression} defined forgetting using the MSE on previously-seen tasks,
\linebreak
\ie 
$
{
\forall m<t\!:~
F_{m}(\w_{t})
=
\tfrac{1}{\abs{\dataset_{m}}}
\sum_{(\x,y)\in\dataset_{m}}
\prn{\w_{t}^{\top}\x-y}^2
}
$
(\eg \citet{doan2021NTKoverlap,evron2022catastrophic}).
Then, lower forgetting implies better \emph{training} loss on previous tasks.

In continual linear \emph{classification}, 
hinge losses capture similar properties.
First, per our definitions,
immediately after learning the $m$\nth task, 
the forgetting on it is $0$,
since $\forall \vv\!\in\!\reals^{\dim}\!:
\mP_m(\vv)\!\in\!\feasible_m$ 
and 
$\forall\vu\!\in\!\feasible_{m}\!:
F_m(\vu)=0$.
Moreover, 
a lower forgetting implies
better training margins and generalization performance on previous tasks.
Since the squared hinge loss is a surrogate for the 0-1 loss, $F_m \!\prn{\w_{t}}\!<\!1$
implies no prediction errors on the $m$\nth task.

\vspace{.55em}

\begin{remark}[Forgetting vs.~Regret]
\label{rmk:regret}
\textbf{Forgetting} is different from the \textbf{regret} used to analyze online learning algorithms \cite{crammer2006onlinePassAgg,shalev2012online,HOIonline}.
Forgetting quantifies the degradation on previous tasks in hindsight, 
while regret cumulatively captures the ability to predict future datapoints (or tasks). 
\end{remark}

A favorable property of the quantities we defined, is that they bound each other.
\begin{lemma}[Connecting quantities]
\label{lem:euclidean_to_hinge}
Recall our definition of
$
{R\triangleq 
\max_{m\in\cnt{T}}
\max_{(\x,y)\in\dataset_m }
\!\!\norm{\x}}$.
The quantities of \defref{def:quantities} are related
as follows,
$\forall\w\!\in\!\reals^{\dim},
\,m\!\in\!\cnt{T}$:
\begin{align*}
    F_m(\w)
    \le
    \dist^2(\w, \feasible_m)
    \!\!
    \max_{(\x,y)\in\dataset_m}
    \!\!\!\!
    \tnorm{\x}^2
    \le
    \dist^2(\w, \teachers) R^2.
\end{align*}
Moreover, for the Sequential Max-Margin iterates 
$(\w_t)$ of Scheme~\ref{proc:adaptive},
all quantities are upper bounded by the 
\linebreak
``problem complexity'',
\ie $\dist^2(\w_t, \teachers) R^2\le\norm{\teacher}^2\!R^2$.
\end{lemma}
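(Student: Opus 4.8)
The plan is to establish the chain of inequalities from right to left, treating each link separately. The statement has three pieces: first, the per-task forgetting bound $F_m(\w)\le \dist^2(\w,\feasible_m)\max_{(\x,y)\in\dataset_m}\norm{\x}^2$; second, the monotonicity step $\dist^2(\w,\feasible_m)\max_{(\x,y)\in\dataset_m}\norm{\x}^2\le \dist^2(\w,\teachers)R^2$; and finally the specialization to Sequential Max-Margin iterates giving $\dist^2(\w_t,\teachers)R^2\le\norm{\teacher}^2 R^2$.

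For the first link, I would fix $m$ and let $\mathbf{u}=\mP_m(\w)$ be the Euclidean projection of $\w$ onto $\feasible_m$, so that $\norm{\w-\mathbf{u}}=\dist(\w,\feasible_m)$ and $\mathbf{u}\in\feasible_m$ means $y\mathbf{u}^\top\x\ge1$ for every sample of task $m$. For any sample $(\x,y)$ of task $m$, I would bound the hinge term: since $y\mathbf{u}^\top\x\ge1$, we have $1-y\w^\top\x\le y(\mathbf{u}-\w)^\top\x\le\norm{\mathbf{u}-\w}\norm{\x}$ by Cauchy–Schwarz (using $\abs{y}=1$). Hence $\max\{0,1-y\w^\top\x\}\le\dist(\w,\feasible_m)\norm{\x}$, and squaring then taking the maximum over samples of task $m$ yields $F_m(\w)\le\dist^2(\w,\feasible_m)\max_{(\x,y)\in\dataset_m}\norm{\x}^2$. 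This is the step I expect to be the main (though mild) obstacle, since it is where the geometry of the projection meets the hinge loss; the rest is bookkeeping.

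The second link is purely set-theoretic and definitional. Since $\teachers=\feasible_1\cap\dots\cap\feasible_T\subseteq\feasible_m$, the distance to the smaller set dominates, so $\dist(\w,\feasible_m)\le\dist(\w,\teachers)$. Combined with $\max_{(\x,y)\in\dataset_m}\norm{\x}^2\le R^2$ from the definition of $R$, this gives the middle inequality. The forgetting bound then chains through both links for every $\w$ and every $m$.

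For the final statement about the iterates, I would invoke the fact that each $\mP_t$ is a projection onto the convex set $\feasible_t\supseteq\teachers$, and that $\teacher\in\teachers$. Since orthogonal projection onto a convex set is firmly nonexpansive (in particular, it does not increase distance to any point of the target set, and more relevantly to any point in a subset contained in the projection target), I would argue $\dist(\w_t,\teachers)\le\dist(\w_{t-1},\teachers)$ for each $t$, so the distance is nonincreasing. Starting from $\w_0=\0_{\dim}$ gives $\dist(\w_t,\teachers)\le\dist(\w_0,\teachers)\le\norm{\teacher-\0_{\dim}}=\norm{\teacher}$, where the last step uses $\teacher\in\teachers$ to upper bound the distance by the distance to this particular feasible point. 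Squaring and multiplying by $R^2$ completes the proof.
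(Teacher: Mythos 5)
Your proposal is correct and follows essentially the same route as the paper's proof: the hinge term is bounded by writing $1-y\w^\top\x\le y(\mP_m(\w)-\w)^\top\x$ and applying Cauchy--Schwarz, the middle inequality comes from $\teachers\subseteq\feasible_m$ together with the definition of $R$, and the final bound uses monotonicity of $\dist(\w_t,\teachers)$ under nonexpansive projections (the paper's Lemma~\ref{lem:feasible_distance_monotonicity}) with $\dist(\w_0,\teachers)=\norm{\teacher}$. The only cosmetic difference is that the paper notes this last quantity is an exact equality by minimality of $\teacher$, whereas you only need (and state) the inequality.
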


\vspace{-0.1em}

The proof is given in \appref{app:successive-projections}.

\pagebreak

\begin{remark}[Problem complexity]
\label{rmk:complexity}
    Many of our bounds use $\norm{\teacher}^2\!R^2$,
    which can be seen as the problem complexity due to its links to sample complexities \citep{novikoff62perceptron}.
    \linebreak
    Moreover, 
    Assumption~\ref{asm:separability}, 
    implies that
    $\norm{\teacher}^2\!R^2\ge1$.
\end{remark}

The following is a known useful result from the POCS literature
(\eg Lemma~3 in \citet{gubin1967methodProjections}).
\begin{lemma}[Monotonicity of distances to offline feasibility]
    \label{lem:feasible_distance_monotonicity}
    Distances from the Sequential Max-Margin iterates 
    of \procref{proc:adaptive}
    to the offline feasible set
    are non-increasing, \ie
    \vspace{-0.1em}
    $$
    \dist(\w_{t}, \teachers) \le \dist(\w_{t-1}, \teachers),
    ~\,\forall t\in\cnt{k}
    \,.
    $$
\end{lemma}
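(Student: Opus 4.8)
The plan is to derive the claim directly from the fundamental nonexpansiveness property of Euclidean projections onto closed convex sets, the only nontrivial structural input being the nesting $\teachers\subseteq\feasible_t$. Write $\feasible_t\triangleq\{\w\in\reals^{\dim}\mid y\w^\top\x\ge1,\,\forall(\x,y)\in\dataset_t\}$ for the feasible set of the task learned at iteration $t$; since tasks recur, $\feasible_t$ is one of $\feasible_1,\dots,\feasible_T$. As $\teachers=\feasible_1\cap\dots\cap\feasible_T$, every offline solution is in particular feasible for task $t$, so $\teachers\subseteq\feasible_t$ for all $t\in\cnt{k}$. By Assumption~\ref{asm:separability} both $\teachers$ and $\feasible_t$ are non-empty, and—as noted after Definition~\ref{def:min-norm-solution}—closed and convex, so both the projection $\w_t=\mP_t(\w_{t-1})$ and the metric projection onto $\teachers$ are well defined.

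The key steps, in order, are as follows. First I would invoke the variational characterization of the projection onto the convex set $\feasible_t$: $\w_t\in\feasible_t$ and $\iprod{\w_{t-1}-\w_t}{\vv-\w_t}\le0$ for every $\vv\in\feasible_t$. Second, from this I extract the standard consequence that projection does not increase the distance to any point already in the set: for every $\vu\in\feasible_t$,
$$
\norm{\w_t-\vu}^2
\le
\norm{\w_{t-1}-\vu}^2-\norm{\w_{t-1}-\w_t}^2
\le
\norm{\w_{t-1}-\vu}^2,
$$
which follows from the expansion $\norm{\w_{t-1}-\vu}^2=\norm{\w_{t-1}-\w_t}^2+2\iprod{\w_{t-1}-\w_t}{\w_t-\vu}+\norm{\w_t-\vu}^2$ together with the nonnegativity of the cross term implied by the characterization. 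Third, I specialize $\vu$ to the closest offline solution to $\w_{t-1}$, i.e.\ $\vu^\star\triangleq\argmin_{\vv\in\teachers}\norm{\w_{t-1}-\vv}$, which lies in $\teachers\subseteq\feasible_t$ and is therefore an admissible choice. Chaining everything gives
$$
\dist(\w_t,\teachers)
\le
\norm{\w_t-\vu^\star}
\le
\norm{\w_{t-1}-\vu^\star}
=
\dist(\w_{t-1},\teachers),
$$
where the first inequality holds because $\dist(\w_t,\teachers)$ is an infimum over $\teachers$ and $\vu^\star\in\teachers$, and the last equality is the definition of $\vu^\star$.

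There is no deep obstacle here; the argument is a one-line application of Fej\'er monotonicity once the inclusion $\teachers\subseteq\feasible_t$ is observed. The only points requiring care are the well-definedness of the two projections (guaranteed by non-emptiness, closedness, and convexity) and the fact that I compare $\w_t$ and $\w_{t-1}$ against the \emph{same} fixed anchor $\vu^\star\in\teachers$ rather than re-optimizing over $\teachers$ at each step—using a common anchor is exactly what makes the telescoping inequality valid. I would also note in passing that the same computation yields the stronger statement $\norm{\w_t-\vu}\le\norm{\w_{t-1}-\vu}$ for \emph{every} $\vu\in\teachers$, of which the stated distance monotonicity is an immediate corollary.
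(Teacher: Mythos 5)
Your proposal is correct and takes essentially the same route as the paper, which establishes this lemma via the non-expansiveness of the projection $\mP_t$ combined with the inclusion $\teachers\subseteq\feasible_t$ and a fixed anchor in $\teachers$ (see \corref{cor:monotonicity}, where the paper records exactly the stronger pointwise statement $\norm{\w_t-\vu}\le\norm{\w_{t-1}-\vu}$ for all $\vu\in\teachers$ that you derive in passing). The only cosmetic difference is that you prove the needed non-expansiveness inline from the variational characterization of the projection (obtaining the Fej\'er-type inequality with the extra $-\norm{\w_{t-1}-\w_t}^2$ term), whereas the paper cites it as a standard projection property from the POCS literature.
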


\vspace{-1.2em}
    
\paragraph{Question}
An immediate question arises from \lemref{lem:feasible_distance_monotonicity}:
\linebreak
when learning infinite jointly-separable tasks
($k=T\!\to\!\infty$),
\linebreak
\emph{must we converge to the offline feasible set $\teachers$?}
\linebreak
Next, we answer this question in the negative.

\subsection{Adversarial Construction: 
Maximal Forgetting}
\label{sec:adversarial}
\begin{example}[Adversarial construction]
\label{exm:adversarial}
We present a construction of task sequences that seemingly exhibit arbitrarily bad continual performance.
Even after seeing $k=T\to\infty$ jointly-separable tasks, the learner stays afar from the offline feasible set $\teachers$, \emph{and} the forgetting of previously-seen tasks is maximal.
The learner \emph{fails} to successfully accumulate experience.
See further details in 
\appref{app:adversarial}.
\end{example}

\begin{figure}[ht!]
    \vspace{-.1cm}
    \centering
    \begin{subfigure}[t]{1\linewidth}
      \centering
      \begin{minipage}[t!]{0.55\linewidth}
        \caption{
        Our construction with $T\!=\!20$ tasks in $\dim\!=\!3$.
            Datapoints (each defining a task) have a norm of $R\!=\!1$
            and are spread uniformly on a plane, slightly elevated above the $xy$-plane.
            It holds that $\norm{\teacher}\!=\!10$.
            As $T\!\to\!\infty$, 
            angles between consecutive tasks
            and applied projections, get smaller.
            \label{fig:adversarial-data}
        }
      \end{minipage}
      \hfill
      \begin{minipage}[t!]{0.44\linewidth}
        \vspace{-.58cm}
        {\includegraphics[width=.99\linewidth]{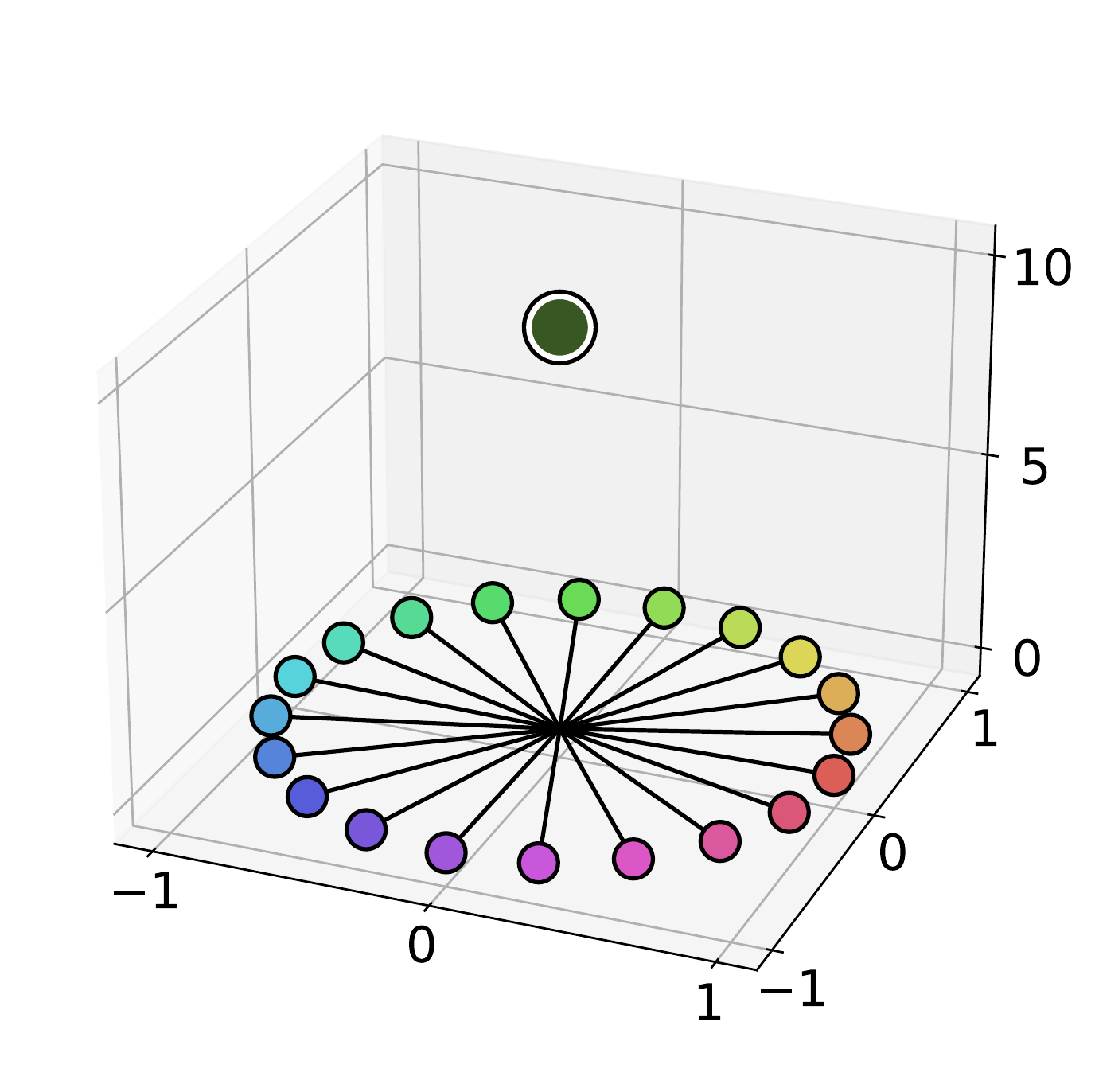}}
      \end{minipage}
    \end{subfigure}%

    \vspace{-.4em}
    
    \begin{subfigure}[t]{1\linewidth}
            \centering
            {\includegraphics[width=.99\linewidth]{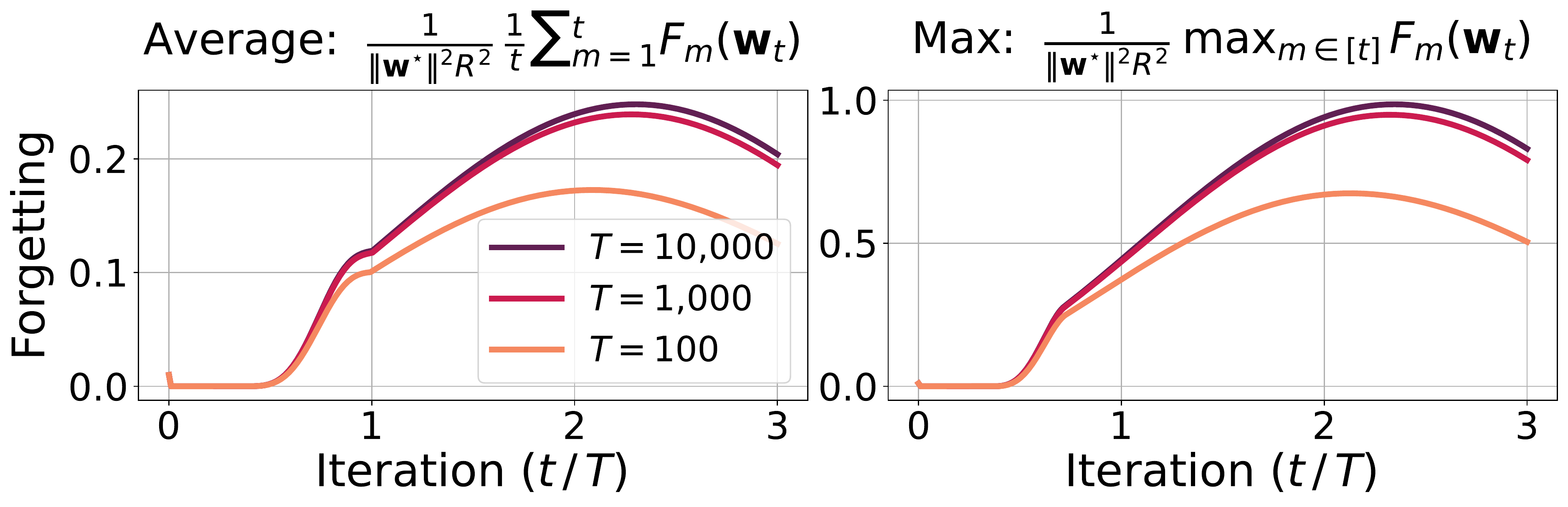}}
            \caption{
            The average and maximum forgetting for the adversarial construction
            for an increasing number of tasks $T$.
            Recall that the maximum forgetting lower bounds the distance to the offline feasible set (\lemref{lem:euclidean_to_hinge}).
            Notably, after learning $T\to \infty$ jointly separable tasks, 
            the quantities do \emph{not} decay but seemingly become arbitrarily bad (close to $\norm{\teacher}^2\!R^2$) at some point of learning.
            }
            \label{fig:adversarial}
    \end{subfigure}%
    \vspace{-.1cm}
    \caption{
    \label{fig:adaptive_illustrations}
    Illustrations of our adversarial construction.
    }
\end{figure}

\vspace{.5em}

\begin{remark}[Order of limits]
\label{rmk:limits_order}
Our paper analyzes the continual learning of $T$ tasks for $k$ iterations,
possibly taking ${k\!\to\!\infty}$
(\eg by repeating tasks).
We take $\lambda\!\to\!0$ \emph{after} fixing the number of iterations~$k$.
As a result, limit iterates hold
\vspace{-.3em}
$$
\frac{\wlim}{\tnorm{\wlim}}
\triangleq
\lim_{k\to \infty} \frac{\w_k}{\tnorm{\w_k}}
=
\lim_{k\to \infty} \lim_{\lambda\to 0}
\frac{\w_k^{(\lambda)}}{\tnorm{\w_k^{(\lambda)}}}
\,.
$$
\end{remark}

\pagebreak

\subsection{Convergence to the Minimum-Norm Solution}
\label{sec:convergence_to_min}
When solving feasibility problems for classification 
(\eg in hard-margin SVM), the minimum-norm solution is also the max-margin solution.
In turn, max-margin solutions are theoretically linked to better generalization performance.

In realizable continual linear regression 
(or generally, in alternating projections onto closed \emph{subspaces}),
it is known that if iterates converge to an offline (or globally feasible) solution,
then that solution must 
be the closest to $\w_0$,
\ie
have a minimum norm 
\citep{evron2022catastrophic,halperin1962product}.

In contrast, in separable continual linear classification settings like ours 
(or generally, in projections onto closed convex sets),
there is no such guarantee
and we can converge to a suboptimal offline solution,
as depicted in \figref{fig:feasibility_iterates}.

Nevertheless, the following optimality guarantee \emph{does} hold.

\begin{theorem}[Optimality guarantee]
\label{thm:optimality_guarantees}
Any iterate $\w_{t}$ obtained by \procref{proc:adaptive}
holds 
$\norm{\w_{t}}\le 2\norm{\teacher}$,
where $\teacher$ is the minimum-norm offline solution 
(\defref{def:min-norm-solution}).

If additionally, $\w_{t}$ is ``offline''-feasible, 
\ie $\w_{t} \in \teachers$,
then
\vspace{-.6em}
$$
\norm{\teacher} \le \norm{\w_{t}} \le 2\norm{\teacher}\,.
$$
\end{theorem}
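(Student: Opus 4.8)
The plan is to control both bounds by tracking the Euclidean distance from each iterate to the single fixed point $\teacher$, rather than to the whole set $\teachers$, and to exploit that $\teacher$ lies in \emph{every} set we project onto. The lower bound is immediate: when $\w_t\in\teachers$, the defining property of $\teacher$ as the minimum-norm element of $\teachers$ (\defref{def:min-norm-solution}) gives $\norm{\teacher}=\min_{\w\in\teachers}\norm{\w}\le\norm{\w_t}$, which is exactly the left inequality of the second claim.

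For the upper bound, the key observation is that $\teacher\in\teachers=\feasible_1\cap\dots\cap\feasible_T$ (\asmref{asm:separability}), so $\teacher$ belongs to the feasible set onto which $\mP_t$ projects at \emph{every} iteration $t$; in particular it is a fixed point, $\mP_t(\teacher)=\teacher$. Since Euclidean projections onto closed convex sets are nonexpansive, we obtain for each $t$ that $\norm{\w_t-\teacher}=\norm{\mP_t(\w_{t-1})-\mP_t(\teacher)}\le\norm{\w_{t-1}-\teacher}$. Iterating this inequality down to $t=0$ and using the initialization $\w_0=\0$ yields $\norm{\w_t-\teacher}\le\norm{\w_0-\teacher}=\norm{\teacher}$. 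A single triangle inequality then closes the argument, $\norm{\w_t}\le\norm{\w_t-\teacher}+\norm{\teacher}\le 2\norm{\teacher}$, giving the upper bound in both claims.

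The argument is short, so there is no isolated hard step; its whole content is in choosing the correct anchor. The natural temptation is to route through the set-distance monotonicity of \lemref{lem:feasible_distance_monotonicity}, which yields $\dist(\w_t,\teachers)\le\dist(\w_0,\teachers)=\norm{\teacher}$. This is \emph{not} enough: the point of $\teachers$ nearest to $\w_t$ need not have norm comparable to $\norm{\teacher}$, so the triangle inequality would fail to produce the factor $2$. The essential refinement is that, because $\teacher$ is a common fixed point of all the projections, the stronger \emph{pointwise} contraction $\norm{\w_t-\teacher}\le\norm{\w_{t-1}-\teacher}$ holds toward a point whose norm we control; the factor $2$ is then precisely the cost of the detour $\0\to\teacher\to\w_t$, and the gap relative to the lower bound reflects that the reached offline solution can be suboptimal, as \figref{fig:feasibility_iterates} illustrates.
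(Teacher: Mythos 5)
Your proof is correct and follows essentially the same route as the paper: the pointwise contraction $\norm{\w_t-\teacher}\le\norm{\w_{t-1}-\teacher}$ you derive from nonexpansiveness toward the common fixed point $\teacher$ is exactly the paper's \corref{cor:monotonicity}, and the paper likewise concludes via a single triangle inequality from $\w_0=\0_{\dim}$ and handles the lower bound by the minimality of $\teacher$. Your remark that the set-distance monotonicity of \lemref{lem:feasible_distance_monotonicity} would \emph{not} suffice is an accurate diagnosis of why the anchor must be the fixed point $\teacher$ rather than the set $\teachers$.
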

\vspace{-.35em}
The proof is given in \appref{app:min_norm}.
Further comparisons to continual regression are drawn in \appref{app:comparison}.

\subsection{Recurring Tasks}
\label{sec:repetitions}

Naturally, in many practical continual problems,
certain concepts and experiences recur at different tasks
(\eg environments of an autonomous vehicle, levels of a computer game, trends of a search engine, etc.).

Several recent papers have observed empirically that  task repetitions mitigate catastrophic forgetting in continual learning
\citep{stojanov2019incremental,cossu2022class},
even when training is performed with vanilla SGD, without any forgetting-preventing method
\citep{lesort2022scaling}.

In this section, we analytically study the influence of repetitions. 
To accomplish this, we leverage the connection that we have established between continual learning and successive projection algorithms for convex feasibility problems.
\linebreak
Importantly, we \emph{do not} propose repetitions as a training method but rather aim to understand their effects on continual learning from a projection perspective.

The results of this section are summarized in Table~\ref{tbl:summary}.

\begin{table*}[b!]
\vspace{-0.9em}
\caption{Summary of our upper bounds for recurring orderings (\secref{sec:repetitions}). 
Upper bounds of random orderings apply to the \emph{expectations}.
}
\label{tbl:summary}
\vskip 0.1in
\begin{center}
\begin{small}
\begin{sc}
\begin{tabular}{l|l|c|c|c}
\toprule
Ordering & 
Iterate type 
&
$\m@th\displaystyle
\tfrac{1}{\tnorm{\teacher}^2 R^{2}}
{\max}_{m\in\cnt{T}} 
F_m (\w_k)
$
& 
$\m@th\displaystyle
\tfrac{1}{\tnorm{\teacher}^2}{\max}_{m\in\cnt{T}} \dist^2\prn{\w_k, \feasible_m}$
& 
$\m@th\displaystyle
\tfrac{1}{\tnorm{\teacher}^2} \dist^2\prn{\w_k, \teachers}$
\\
\midrule
Cyclic   
    &
    Last ($T=2$)
    &
    \multicolumn{2}{c|}{
    $\m@th\displaystyle
    \min\bigg\{
    \frac{1}{k+1},\,\,\,\,\,\,\,\,\,\,\,\,\,\,\,
    \exp
    \!\Bigprn{-\frac{k}{4\tnorm{\teacher}^2 R^2}}
    \Big\}
    $
    }
    &
    $\m@th\displaystyle
    \exp
    \!\Bigprn{-\frac{k}{4\tnorm{\teacher}^2 R^2}}
    $
\\ 
&
    Last ($T\ge3$)
    &
    \multicolumn{2}{c|}{
    $\m@th\displaystyle
    \min\bigg\{\,
    \frac{2T^2}{\sqrt{k}},
    \,\,\,4
    \exp
    \!\Bigprn{-\frac{k}{16T^2\tnorm{\teacher}^2 R^2}}
    \Big\}
    $
    }
    &
    $\m@th\displaystyle
    \exp
    \!\Bigprn{-\frac{k}{16T^2\tnorm{\teacher}^2 R^2}}
    $
\\
&
    Average ($T\ge2$)
& 
    \multicolumn{2}{c|}{
    $\hfrac{T^2}{k} $
    }
& 
    ---
\\
\midrule
Random   
    &
    Last
    &
    \multicolumn{2}{c|}{
    $\m@th\displaystyle
    \exp
    \!\Bigprn{-\frac{k}{4T\tnorm{\teacher}^2 R^2}}
    $
    }
    &
    $\m@th\displaystyle
    \exp
    \!\Bigprn{-\frac{k}{4T\tnorm{\teacher}^2 R^2}}
    $
\\
(i.i.d.)   
    &
    Average
    &
    \multicolumn{2}{c|}{
    $\hfrac{T}{k} $
    }
    &
    ---
\\
\bottomrule
\end{tabular}
\end{sc}
\end{small}
\end{center}
\vskip -0.1in
\end{table*}

\paragraph{Recurring tasks vs. Batch learning}
The recurring tasks setting should not be confused with standard batch training
(\ie by regarding each task as a batch). 
While in batch training, a \emph{single} gradient-descent step is made for each batch, 
in our continual learning setting
(\procref{proc:regularized})
each task is solved completely (to separation).

\pagebreak

The following is a key lemma in our paper.
Much of the research on POCS has focused on defining and applying linear regularity conditions, under which iterates $(\w_t)$ converge linearly 
(like $c^{t}$ for some $c\!\in\!\left[0,1\right)$) 
to the feasible sets' intersection $\teachers$
(\eg \citet{bauschke1993convergence}).
\linebreak
In realizable continual linear \emph{regression}, \citet{evron2022catastrophic} showed that no such regularity holds, and while their forgetting,
\ie ${F_m(\w_t)\triangleq \dist^2 (\w_t, \feasible_m),~\forall m\!\in\!\cnt{T}}$,
is upper bounded \emph{universally}, 
no universal bounds can be derived for 
$\dist^2 (\w_t, \teachers)$
even when $\norm{\teacher}$ and $R$ are bounded.
\linebreak
In contrast, we prove that in separable continual linear \emph{classification},
linear regularity \emph{does} hold
and nontrivial bounds \emph{can} be derived for $\dist^2 (\w_t, \teachers)$.

\vspace{0.2em}

\begin{lemma}[Linear regularity of Sequential Max-Margin]
\label{lem:regularity}
At the $t$\nth iteration, the distance to the offline feasible set is tied to the distance to the farthest feasible set of any \emph{specific} task.
Specifically, it holds that $\forall t\!\in\!\naturals$,
\begin{align*}
    \dist^2\!\prn{\w_{t}, \teachers}
    \le
    4\,\tnorm{\teacher}^2 R^2 
    \max_{m\in\cnt{T}} \dist^2 
    \bigprn{\w_{t}, \feasible_m}
    \,
    .
\end{align*}
The proofs for this section are given in \appref{app:repetitions_proofs}.
\end{lemma}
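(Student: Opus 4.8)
The plan is to establish a \emph{pointwise} linear-regularity inequality that holds for every $\w\in\reals^{\dim}$ (and hence in particular for the iterates $\w_t$), using the minimum-norm offline solution $\teacher$ as a universal, feasibility-restoring direction. Throughout, fix $\w$ and write $\rho\triangleq\max_{m\in\cnt{T}}\dist\!\prn{\w,\feasible_m}$, so that the goal becomes $\dist^2\!\prn{\w,\teachers}\le 4\tnorm{\teacher}^2R^2\rho^2$.

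First I would bound the violation of each individual constraint in terms of $\rho$. For any sample $(\x,y)\in\dataset_m$, the halfspace $\{\vv\in\reals^{\dim}\mid y\vv^\top\x\ge1\}$ \emph{contains} $\feasible_m$, so by monotonicity of the distance under set inclusion its distance from $\w$ is at most $\dist\!\prn{\w,\feasible_m}\le\rho$. Since the distance from $\w$ to this halfspace equals $\max\{0,\,1-y\w^\top\x\}/\tnorm{\x}$ and $\tnorm{\x}\le R$, this yields $\max\{0,\,1-y\w^\top\x\}\le R\rho$, i.e.\ $y\w^\top\x\ge1-R\rho$ simultaneously for \emph{every} sample of \emph{every} task.

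Next I would exhibit an explicit offline-feasible point close to $\w$. The key observation is that the single shift $\w+R\rho\,\teacher$ already lies in $\teachers$: since $\teacher\in\teachers$ satisfies $y\teacher^\top\x\ge1$ and $R\rho\ge0$, each constraint gives $y\prn{\w+R\rho\,\teacher}^\top\x\ge\prn{1-R\rho}+R\rho=1$. Hence $\dist\!\prn{\w,\teachers}\le\tnorm{R\rho\,\teacher}=R\rho\,\tnorm{\teacher}$, and squaring gives $\dist^2\!\prn{\w,\teachers}\le R^2\tnorm{\teacher}^2\rho^2$ --- which is even tighter than the stated bound and therefore implies it.

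The main step, and essentially the only idea, is the choice of the displacement $R\rho\,\teacher$: an \emph{additive} shift (rather than a convex combination toward $\teacher$) succeeds precisely because $\teachers$ is upward-closed along $\teacher$ --- moving by any nonnegative multiple of $\teacher$ only increases every margin $y\w^\top\x$. The only routine points to verify are the point-to-halfspace distance identity together with the inclusion-monotonicity of $\dist$, and the elementary algebra $\prn{1-R\rho}+R\rho=1$. No use of the iterate structure is actually required, though if one prefers to keep the statement tied to $\w_t$, the norm bound $\tnorm{\w_t}\le2\tnorm{\teacher}$ from \thmref{thm:optimality_guarantees} is available.
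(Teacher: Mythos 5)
Your proof is correct, and it takes a genuinely different---and in fact sharper---route than the paper's. The paper first proves that $\teachers$ has a nonempty interior, namely the ball $\mathcal{B}\prn{2\teacher,\nicefrac{1}{R}}\subset\teachers$ (\lemref{lem:feasible_interior}), and then follows the generic bounded-linear-regularity argument of \citet{gubin1967methodProjections,nedic2010randomPOCS}: with $\varepsilon=\max_{m}\dist\prn{\w,\feasible_m}$ it forms the convex combination $\y=\frac{\nicefrac{1}{R}}{\varepsilon+\nicefrac{1}{R}}\w+\frac{\varepsilon}{\varepsilon+\nicefrac{1}{R}}\,2\teacher$, verifies $\y\in\teachers$ by rewriting it, for each $m$, as a convex combination of $\mP_m(\w)$ and a point $\vect{z}_m$ of the interior ball, and obtains $\dist\prn{\w,\teachers}\le R\varepsilon\norm{\w-2\teacher}$; the constant $4\tnorm{\teacher}^2$ then enters only through the \emph{iterate-specific} monotonicity bound $\norm{\w_t-2\teacher}\le\norm{\w_0-2\teacher}=2\norm{\teacher}$ (\corref{cor:monotonicity}). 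You instead exploit the particular affine-cone structure of the classification constraints: the point-to-halfspace distance identity plus inclusion-monotonicity gives $y\w^\top\x\ge1-R\rho$ simultaneously for every sample, and since every constraint has threshold $1$ while $y{\teacher}^\top\x\ge1$, the single additive shift $\w+R\rho\,\teacher$ restores joint feasibility, yielding the pointwise bound $\dist^2\prn{\w,\teachers}\le\tnorm{\teacher}^2R^2\rho^2$ for \emph{all} $\w\in\reals^{\dim}$---a factor of $4$ stronger than the stated lemma, with no appeal to the iterate structure, the interior-ball lemma, or monotonicity (your closing mention of $\tnorm{\w_t}\le2\tnorm{\teacher}$ is indeed unnecessary). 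The trade-off: the paper's machinery applies to arbitrary closed convex sets admitting an interior ball, and \lemref{lem:feasible_interior} is reused independently elsewhere (e.g., to invoke Proposition~8 of \citet{nedic2010randomPOCS} in the proof of \thmref{thm:random_rates}), whereas your argument is specific to halfspaces with unit offsets; in exchange it is shorter, tight (for orthogonal constraints $w_i\ge1$ from $\w=\0_{\dim}$ it holds with equality), and its improved constant would propagate to sharpen the exponents in the linear rates of \thmref{thm:2_tasks_cyclic}.
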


So far we assumed that at iteration $t$,
the learner solves a task, \ie a dataset $\dataset_{t}$ (out of $T$ possible datasets),
corresponding to a projection $\mP_{t}$.
To facilitate our next results for cases where tasks recur, we now define task ordering functions.
\begin{definition}[Task ordering]
\label{def:ordering}
    A task ordering is a function
    $$\tau:~\naturals\to\cnt{T}$$
    that maps an iteration to a learning task.
\end{definition}

\subsubsection{Cyclic Ordering}
\label{sec:cyclic}

Mathematically, 
when analyzing successive projections (like in our \procref{proc:adaptive}), 
it is common to first study a cyclic setting, where the projections form a clearer (cyclic) operator.
This setting has been at the center of focus in many theoretical papers (\eg \citet{agmon1954motzkin,halperin1962product,deutsch2006ratePOCS_I,borwein2014analysis,evron2022catastrophic}).

Practically, in continual learning, cyclic orderings are indeed less flexible.
However, we believe that they \emph{do} emerge naturally in real-world scenarios.
For instance, virtual assistants support the recurring daily routines of their customers
\linebreak
and should be able to continue learning from new experiences.

\begin{definition}[Cyclic task ordering]
\label{def:cyclic_ordering}
A cyclic ordering over $T$ tasks is defined as 
$$\tau(t)
\triangleq
1+\prn{\prn{t-1}\bmod T},
\enskip~
\forall t\in\naturals
\,,
$$
and induces a cyclic operator,
in the sense that
$$\w_{nT} 
=
\prn{\mP_{\tau(nT)}\circ\smallcdots\circ\mP_{\tau(1)}}
\!(\w_0)
=
(\mP_T\circ\smallcdots\circ\mP_1)^n
(\w_0).$$
\end{definition}

{We illustrate such orderings in the following \figref{fig:cyclic}.}

\begin{figure}[ht!]
    \vspace{.11cm}
    \centering
    \frame{\includegraphics[width=.69\linewidth]{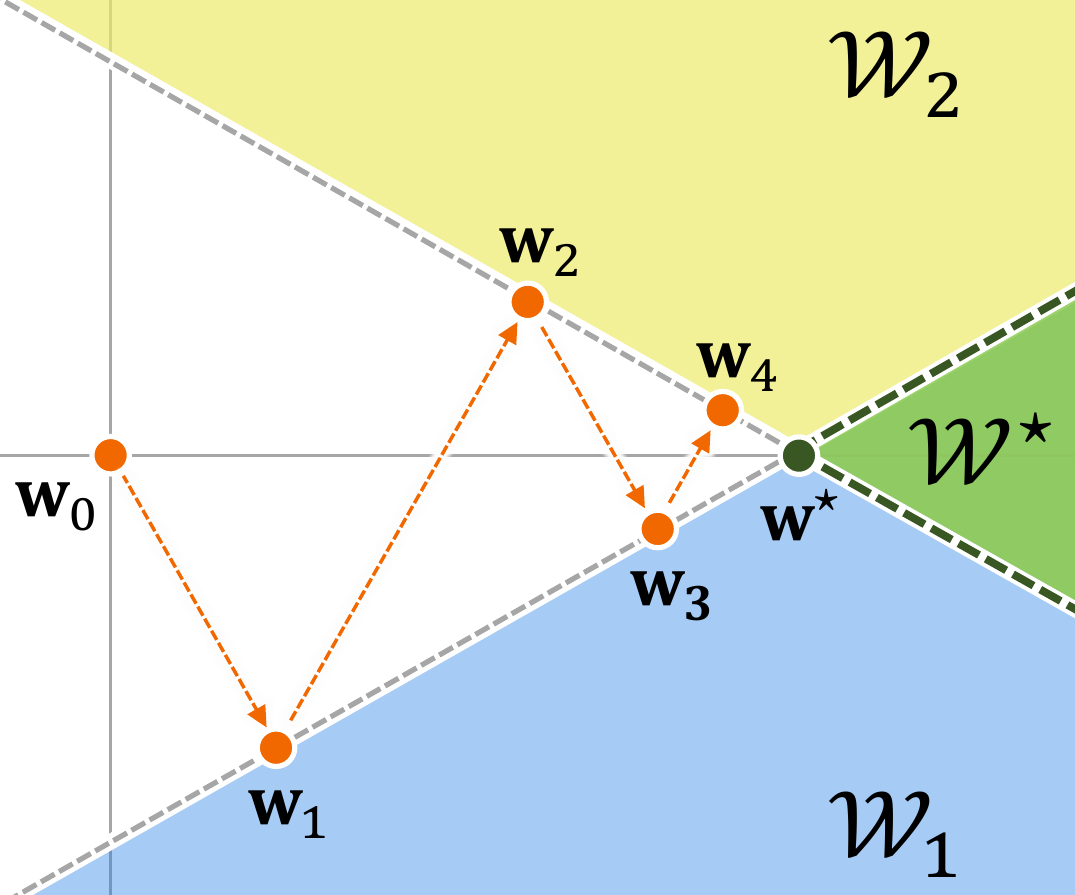}}
    \vspace{-.2cm}
    \caption{A cyclic setting with two tasks.
    Each time we solve a task, we project the previous iterate onto its corresponding convex~set.
    }
    \label{fig:cyclic}
    \vspace{-.1cm}
\end{figure}

\pagebreak

Using the linear regularity from \lemref{lem:regularity},
we prove linear convergence results for cyclic orderings.
Our proofs use tools and results from existing POCS literature.

\begin{lemma}[Limit guarantees for cyclic orderings]
\label{lem:cyclic_lim} 
Under~a~cyclic ordering
(and the separability assumption~\ref{asm:separability}), 
the iterates converge to a 2-optimal 
${\wlim\in\teachers}$.
That is,
$$
\lim_{k\to \infty} 
\dist(\w_k, \teachers)=0,
~~~
\norm{\teacher} \le \norm{\w_\infty} \le 2\norm{\teacher}
\,.
$$
\vspace{-.4cm}
\end{lemma}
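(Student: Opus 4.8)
The plan is to separate the statement into two parts: first the convergence to offline feasibility, $\dist(\w_k,\teachers)\to 0$ together with convergence of the iterates to a single limit point $\wlim\in\teachers$, and then the $2$-optimality of that limit, which will follow almost immediately from the earlier optimality guarantee. For the feasibility part I would start by observing that $(\w_t)$ is Fej\'er-monotone with respect to $\teachers$: for any fixed $\vv\in\teachers\subseteq\feasible_{\tau(t)}$, the point $\vv$ is a fixed point of the projection $\mP_{\tau(t)}$, so the variational characterization of projection onto a convex set gives the Pythagorean bound $\norm{\w_t-\vv}^2\le\norm{\w_{t-1}-\vv}^2-\norm{\w_t-\w_{t-1}}^2$. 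This refines \lemref{lem:feasible_distance_monotonicity} (which is the special case of distance to the whole set). In particular $\norm{\w_t-\vv}$ is non-increasing and bounded below, hence convergent, so $(\w_t)$ is bounded; and telescoping the bound yields $\sum_t\norm{\w_t-\w_{t-1}}^2<\infty$, whence $\norm{\w_t-\w_{t-1}}\to 0$. Since $\w_t=\mP_{\tau(t)}(\w_{t-1})$, this is exactly $\dist(\w_{t-1},\feasible_{\tau(t)})\to 0$.

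Next I would upgrade this per-step vanishing into the uniform statement $\max_{m\in\cnt{T}}\dist(\w_t,\feasible_m)\to 0$. Under a cyclic ordering every set $\feasible_m$ is visited once in each block of $T$ consecutive iterations, so for each $m$ the quantity $\dist(\w_t,\feasible_m)$ vanishes along the subsequence of iterations projecting onto $\feasible_m$; because consecutive iterates differ by $\norm{\w_t-\w_{t-1}}\to 0$ and any fixed $\w_s$ lies within $T$ steps of such an iteration, the triangle inequality transfers the vanishing to the full sequence, uniformly over the finitely many $m$. Plugging this into \lemref{lem:regularity}, namely $\dist^2(\w_t,\teachers)\le 4\norm{\teacher}^2R^2\max_m\dist^2(\w_t,\feasible_m)$, gives $\dist(\w_t,\teachers)\to 0$. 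Finally, boundedness yields a cluster point $\wlim$, which lies in the closed set $\teachers$ because its distance to that set vanishes; Fej\'er-monotonicity makes $\norm{\w_t-\wlim}$ non-increasing, and since a subsequence converges to $\wlim$, so does the whole sequence, establishing $\w_k\to\wlim\in\teachers$.

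The $2$-optimality is then quick. The lower bound $\norm{\teacher}\le\norm{\wlim}$ is immediate, since $\wlim\in\teachers$ and $\teacher$ is by definition the minimum-norm element of $\teachers$ (\defref{def:min-norm-solution}). For the upper bound, \thmref{thm:optimality_guarantees} gives $\norm{\w_t}\le 2\norm{\teacher}$ for every iterate, so passing to the limit and using continuity of the norm yields $\norm{\wlim}\le 2\norm{\teacher}$, as claimed.

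I expect the main obstacle to be the middle step: converting the summable-increment fact $\norm{\w_t-\w_{t-1}}\to 0$, which only controls the distance to the single set visited at each iteration, into the uniform bound $\max_m\dist(\w_t,\feasible_m)\to 0$ needed to invoke linear regularity. This is precisely where the cyclic structure is essential, and where one must carefully bound how far $\w_t$ can drift over a window of $T$ iterations. Every remaining step is either standard Fej\'er-monotonicity bookkeeping or a direct application of a result already established in the excerpt.
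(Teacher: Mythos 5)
Your proof is correct, but it takes a genuinely different route from the paper's. The paper dispatches this lemma in a few lines by citing results already in hand: convergence of the iterates to some $\wlim\in\teachers$ is deduced from the \emph{quantitative} universal rates of \propref{prop:cyclic_universal} (which bound $\max_{m}\dist^2(\w_k,\feasible_m)$ by $\bigO(1/\sqrt{k})$ and are proved independently of this lemma, avoiding circularity with \thmref{thm:2_tasks_cyclic}), together with finite-dimensionality and the nonempty intersection; the $2$-optimality is then read off from \thmref{thm:optimality_guarantees}, exactly as you do. You instead give a self-contained \emph{qualitative} argument: the Fej\'er-monotonicity inequality $\norm{\w_t-\vv}^2\le\norm{\w_{t-1}-\vv}^2-\norm{\w_t-\w_{t-1}}^2$ for $\vv\in\teachers$ (which is indeed valid --- it is Property~\ref{prop:projections}.4 applied with the fixed point $\vv$), telescoping to get summable squared increments, the cyclic-window triangle-inequality step to upgrade per-step vanishing to $\max_m\dist(\w_t,\feasible_m)\to 0$, and then \lemref{lem:regularity} plus the standard Fej\'er cluster-point argument. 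All steps check out, and your identification of the window argument as the crux is accurate. What each approach buys: the paper's citation-based proof is shorter and inherits explicit rates, while yours needs no rate computation and would survive even in settings where the universal bounds of \propref{prop:cyclic_universal} were unavailable. One further remark: your invocation of \lemref{lem:regularity} is actually dispensable --- since $(\w_t)$ is bounded and $\max_m\dist(\w_t,\feasible_m)\to 0$, any cluster point lies in each closed set $\feasible_m$, hence in $\teachers$, and Fej\'er monotonicity then forces convergence of the whole sequence; this is closer to the classical POCS arguments (\eg Gubin et al.) the paper points to, and shows the qualitative statement holds without any regularity assumption.
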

The proofs for this section are given in \appref{app:cyclic_proofs}.

\bigskip

\begin{theorem}[Linear rates for cyclic orderings]
\label{thm:2_tasks_cyclic}
    For ${T\ge2}$
    jointly-separable tasks
    learned cyclically,
    after 
    ${k\!=\!nT}$ iterations ($n$ cycles),
    our quantities of interest 
    (\defref{def:quantities})
    converge linearly as
    \vspace{-.1cm}
    \begin{align*}
    \underbrace{
        \max_{m\in\cnt{T}}
        F_m (\w_k)}_{
        \substack{
        \text{Maximum forgetting}
        }
    }
    &\le
    \underbrace{
        \max_{m\in\cnt{T}}
        \dist^{2\!}\prn{\w_k, \feasible_{m}}
    }_{
        \substack{
        \text{Max.~dist.~to any feasible set}
        }
    }
    R^2
    \le 
    \\
    &\le 
    \underbrace{
    \dist^{2\!}\prn{\w_k, \teachers}
    }_{
    \substack{
    \text{Dist.~to}
    \\
    \text{offline feasible set}
    }
    }
    R^2
    \le
    g(k)
    \norm{\teacher}^2 \!R^2
    %
    \,,
    \end{align*}
    
    \vspace{-.35cm}
    where
    $$
    g(k)
    \triangleq
    \begin{cases}
    \enskip\,
    \exp
    \!\prn{-\tfrac{k}{4\tnorm{\teacher}^2 R^2}} & T=2
    \\
    4
    \exp\!\prn{-\tfrac{k}{16T^2\tnorm{\teacher}^2 R^2}} 
    & T\ge 3
    \end{cases}
    $$
    \linebreak
    and
    $\norm{\teacher}^2 \!R^2\!\ge\! 1$ is the problem complexity (Rem.~\ref{rmk:complexity}).
\end{theorem}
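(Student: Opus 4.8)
The first two inequalities in the chain are essentially bookkeeping. The leftmost, $\max_{m}F_m(\w_k)\le \max_m\dist^2(\w_k,\feasible_m)R^2$, is exactly the first bound of \lemref{lem:euclidean_to_hinge} combined with $\max_{(\x,y)\in\dataset_m}\norm{\x}^2\le R^2$. The middle one, $\max_m\dist^2(\w_k,\feasible_m)\le\dist^2(\w_k,\teachers)$, follows because $\teachers\subseteq\feasible_m$ for every $m$, so the distance to the (larger) set $\feasible_m$ can only be smaller than the distance to $\teachers$. Everything therefore reduces to the rightmost linear-rate estimate $\dist^2(\w_k,\teachers)\le g(k)\norm{\teacher}^2$, which is where the real work lies.

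For that I would set up a per-cycle contraction of $a_n\triangleq\dist^2(\w_{nT},\teachers)$. The workhorse is the obtuse-angle (firm-nonexpansiveness) property of Euclidean projection onto a convex set: for any $\w$ and any $\vv\in\feasible_m\supseteq\teachers$ one has $\norm{\mP_m(\w)-\vv}^2\le\norm{\w-\vv}^2-\dist^2(\w,\feasible_m)$. Choosing $\vv$ to be the nearest point of $\teachers$ to $\w$ shows that each projection decreases the squared distance to $\teachers$ by at least $\dist^2(\cdot,\feasible_m)$; summing over one cycle (iterations $t=nT+1,\dots,(n+1)T$, which project onto $\feasible_1,\dots,\feasible_T$ in turn) gives
$$
\dist^2(\w_{nT},\teachers)-\dist^2(\w_{(n+1)T},\teachers)\ \ge\ \sum_{j=1}^{T}\dist^2\bigprn{\w_{nT+j-1},\feasible_j}\,.
$$

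Next I would convert this into a statement purely about $\w_{nT}$ so that \lemref{lem:regularity} applies. Since each step satisfies $\norm{\w_{nT+i}-\w_{nT+i-1}}=\dist(\w_{nT+i-1},\feasible_i)$ and $\w_{nT+m}\in\feasible_m$, the triangle inequality yields $\dist(\w_{nT},\feasible_m)\le\sum_{i\le m}\dist(\w_{nT+i-1},\feasible_i)$, and Cauchy--Schwarz then gives $\max_m\dist^2(\w_{nT},\feasible_m)\le T\sum_j\dist^2(\w_{nT+j-1},\feasible_j)$, i.e.\ at most $T$ times the per-cycle decrease. Feeding this into \lemref{lem:regularity}, $\dist^2(\w_{nT},\teachers)\le 4\norm{\teacher}^2R^2\max_m\dist^2(\w_{nT},\feasible_m)$, produces a recursion of the form $a_{n+1}\le\bigprn{1-\tfrac{1}{cT\norm{\teacher}^2R^2}}a_n$. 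Using $a_0=\dist^2(\0_\dim,\teachers)=\norm{\teacher}^2$ and $1-x\le e^{-x}$, iterating over $n=k/T$ cycles gives the exponential $g(k)$. The $T=2$ case is cleaner and yields the sharper constant: there $\w_{nT}\in\feasible_2$ already, so only the single term $\dist^2(\w_{nT},\feasible_1)$ survives, no Cauchy--Schwarz factor is lost, and the exponent improves accordingly.

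The main obstacle is this drift-control step: the regularity inequality is anchored at the single iterate $\w_{nT}$, whereas the guaranteed decrease is a sum of distances measured at the successively \emph{moved} iterates within the cycle. Bounding how far $\w_{nT+j-1}$ can have drifted from $\w_{nT}$ using only the step lengths that already appear in the decrease is exactly what lets the two be matched, and it is the source of the factor $T$ (hence $T^2$ after substituting $k=nT$). The remaining numerical constants (the leading $4$ and the $16$) arise from being deliberately conservative in this matching and in the regularity constant, and I would not attempt to optimize them.
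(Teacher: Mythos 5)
Your argument is correct, and for $T\ge3$ it takes a genuinely different and more self-contained route than the paper. The paper proves the $T\ge3$ case by combining the linear regularity of \lemref{lem:regularity} with an off-the-shelf rate from the POCS literature (Theorem~3.15 of Deutsch--Hundal), together with \lemref{lem:cyclic_lim} to replace $\norm{\wlim}^2$ by $4\norm{\teacher}^2$ --- which is exactly where the leading $4$ and the $16T^2$ in $g(k)$ come from. You instead build the contraction from scratch: the Fej\'er-type one-step decrease $\dist^2\prn{\mP_m(\w),\teachers}\le\dist^2\prn{\w,\teachers}-\dist^2\prn{\w,\feasible_m}$ (Property~\ref{prop:projections}, item~4, with the anchor $\mP_{\teachers}(\w)$), telescoped over a cycle, then matched to the regularity inequality anchored at $\w_{nT}$ via the triangle-inequality-plus-Cauchy--Schwarz drift bound $\max_m\dist^2\prn{\w_{nT},\feasible_m}\le T\sum_{j}\dist^2\prn{\w_{nT+j-1},\feasible_j}$. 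All steps check out (\lemref{lem:regularity} does apply at $\w_{nT}$, and $a_0=\norm{\teacher}^2$), and the recursion $a_{n+1}\le\bigprn{1-\tfrac{1}{4T\tnorm{\teacher}^2R^2}}a_n$ yields $\dist^2\prn{\w_k,\teachers}\le\exp\bigprn{-\tfrac{k}{4T^2\tnorm{\teacher}^2R^2}}\norm{\teacher}^2$, which is \emph{strictly stronger} than the stated $g(k)=4\exp\bigprn{-\tfrac{k}{16T^2\tnorm{\teacher}^2R^2}}$, so the theorem follows a fortiori. What the paper's citation-based route buys is brevity; what yours buys is better constants and independence from the external result.

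One quantitative caveat in the $T=2$ case: as literally written, your per-cycle bookkeeping (keeping only the surviving term $\dist^2\prn{\w_{2n},\feasible_1}$) gives a factor $\bigprn{1-\tfrac{1}{4\tnorm{\teacher}^2R^2}}$ per \emph{two} iterations, i.e.\ $\exp\bigprn{-\tfrac{k}{8\tnorm{\teacher}^2R^2}}$, which falls short of the claimed $\exp\bigprn{-\tfrac{k}{4\tnorm{\teacher}^2R^2}}$ by a factor of $2$ in the exponent. The fix is a one-liner available inside your own machinery, and is what the paper effectively does: for $T=2$ \emph{every} iterate $\w_t$ lies in the set just projected onto, so $\max_m\dist\prn{\w_t,\feasible_m}=\dist\prn{\w_t,\feasible_{\tau(t+1)}}$ at every $t$ (both parities), and combining the one-step decrease with \lemref{lem:regularity} gives the contraction $\dist^2\prn{\w_{t+1},\teachers}\le\bigprn{1-\tfrac{1}{4\tnorm{\teacher}^2R^2}}\dist^2\prn{\w_t,\teachers}$ per \emph{iteration}, recovering the stated rate exactly.
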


 \vspace{-0.45em}

\pagebreak

\paragraph{Detour: Universal rates for \emph{general} cyclic Projections onto Convex Sets (POCS) settings.}
We take a brief detour to derive universal bounds for general POCS settings, where problems do not necessarily hold regularity conditions like the ones in our \lemref{lem:regularity}.
While cyclic POCS settings have been studied for decades
\citep{agmon1954motzkin,deutsch2006ratePOCS_I,deutsch2006ratePOCS_II,deutsch2008ratePOCS_III},
most previous works either focused on settings with regularity assumptions or yielded problem-dependent rates that can be arbitrarily bad.

Our result below extends the universal results of a recent work that focused on closed \emph{subspaces} only 
\citep{evron2022catastrophic}; and is also related to a recent work that considered closed \emph{convex sets}  as well,
but dismissed the number of sets $T$ as $\bigO\prn{1}$
\citep{reich2022polynomial}.

\begin{proposition}[Universal rates for general cyclic POCS]
\label{prop:cyclic_universal}
    Let $\feasible_1, \dots, \feasible_{T}$ be closed convex sets with nonempty intersection $\teachers$.
    Let
    $\w_{k}=(\mP_T\circ\cdots\circ\mP_1)^{n}
    (\w_0)$
    be the iterate after $k\!=\!nT$ iterations ($n$ cycles)
    of cyclic projections onto these convex sets.
    Then, the maximal distance to any (specific) convex set,
    is upper bounded \emph{universally} as,
    \begin{align*}
        \text{For $T=2$: }~~
        &
        \max_{m\in\cnt{T}} 
        \dist^{2\!}\prn{\w_k, \feasible_{m}}
        \le
        \frac{1}{k+1}\,
        \dist^{2\!}\prn{\w_{0}, \teachers}
        \,,
        \\
        \text{For $T\ge3$: }~~
        &
        \max_{m\in\cnt{T}} 
        \dist^{2\!}\prn{\w_k, \feasible_{m}}
        \le
        \frac{2T^2}{\sqrt{k}}\,
        \dist^{2\!}\prn{\w_{0}, \teachers}
        \,.
    \end{align*}
\end{proposition}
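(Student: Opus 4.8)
The engine of the proof is the firm nonexpansiveness of Euclidean projections. For any $\vv\in\teachers\subseteq\feasible_{\tau(t)}$, the step $\w_t=\mP_{\tau(t)}(\w_{t-1})$ satisfies the obtuse-angle inequality for projections onto a convex set containing $\vv$, giving $\norm{\w_t-\vv}^2\le\norm{\w_{t-1}-\vv}^2-\norm{\w_t-\w_{t-1}}^2$. Telescoping over $t=1,\dots,k$ and taking $\vv$ to be the projection of $\w_0$ onto $\teachers$ yields the square-summability of the steps, $\sum_{t=1}^{k}\norm{\w_t-\w_{t-1}}^2\le\dist^2(\w_0,\teachers)$, which holds with no regularity assumption. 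I will combine this with two monotonicity facts: (i) $\dist(\w_t,\teachers)$ is non-increasing (\lemref{lem:feasible_distance_monotonicity}), hence $\dist(\w_t,\teachers)\le\dist(\w_0,\teachers)$ throughout; and (ii) the period-$T$ displacements $\Delta_t\triangleq\norm{\w_t-\w_{t-T}}$ are non-increasing, since $\tau(t)=\tau(t-T)$ under the cyclic order means $\w_t,\w_{t-T}$ are images of $\w_{t-1},\w_{t-T-1}$ under the \emph{same} projection, which is nonexpansive, so $\Delta_t\le\Delta_{t-1}$.

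The second ingredient is a geometric reduction from step information to feasibility: under the cyclic order the most recent projection onto $\feasible_m$ before iteration $k=nT$ is at iteration $k-(T-m)=(n-1)T+m$, so $\w_{k-(T-m)}\in\feasible_m$ and $\dist(\w_k,\feasible_m)\le\norm{\w_k-\w_{k-(T-m)}}$, a displacement over at most $T-1$ consecutive steps. For $T=2$ this is clean: the last projection lands in $\feasible_2$, so $\dist(\w_k,\feasible_2)=0$ and $\max_m\dist^2(\w_k,\feasible_m)=\dist^2(\w_k,\feasible_1)\le\norm{\w_k-\w_{k-1}}^2$, the squared final step. For two convex sets the step lengths are themselves non-increasing (a classical alternating-projection fact recoverable from firm nonexpansiveness), so the final step is bounded by the average step, $\norm{\w_k-\w_{k-1}}^2\le\tfrac1k\sum_{t=1}^k\norm{\w_t-\w_{t-1}}^2\le\tfrac1k\dist^2(\w_0,\teachers)$; a slightly more careful accounting of the residual $\dist^2(\w_k,\teachers)$ left in the telescoping sharpens the denominator to $k+1$.

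For $T\ge 3$ the step lengths, the per-cycle step-energies, and the individual-set distances are all non-monotone, and this is exactly what degrades the rate from $\tfrac1k$ to $\tfrac1{\sqrt k}$. My plan is to replace the non-monotone single steps by the monotone surrogate $\Delta_t$. Square-summability transfers to it via Cauchy--Schwarz over a window, $\sum_t\Delta_t^2\le T\sum_t\sum_{j=t-T+1}^{t}\norm{\w_j-\w_{j-1}}^2\le T^2\dist^2(\w_0,\teachers)$, and monotonicity then gives $\Delta_t^2\le T^2\dist^2(\w_0,\teachers)/t$, so $\Delta_k\le T\,\dist(\w_0,\teachers)/\sqrt k$. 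I then bound the within-cycle infeasibility $\dist(\w_k,\feasible_m)\le\norm{\w_k-\w_{k-(T-m)}}$ by interpolating between one \emph{uniformly bounded} factor --- the distance $\dist(\w_k,\teachers)\le\dist(\w_0,\teachers)$, monotone by \lemref{lem:feasible_distance_monotonicity} --- and one \emph{fast-decaying} factor coming from $\Delta_k$; an AM--GM/Cauchy--Schwarz step over the at most $T$ steps of the last partial cycle assembles these into $\max_m\dist^2(\w_k,\feasible_m)\le\tfrac{2T^2}{\sqrt k}\dist^2(\w_0,\teachers)$. I expect the main obstacle to be precisely this final transfer: producing the product structure (a bounded factor times a $\sqrt k$-factor) at the genuine last iterate, since the non-monotonicity of the individual-set distances forbids simply evaluating a summable ``average-cycle'' bound there, and getting the constants to land at exactly $2T^2$ will require careful bookkeeping of the Cauchy--Schwarz factors of $T$.
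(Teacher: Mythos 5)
Your route is correct, and for $T\ge3$ it is genuinely different from (and more elementary than) the paper's. The paper proves an operator-decomposition lemma (\lemref{lem:dimension_independent}, writing $\I-\mQ_m$ as a telescoping sum of terms $\left(\I-\mQ_{\ell+1}\right)\circ\M_{\ell}$) to obtain $\dist^2\prn{\vv,\mathcal{C}_m}\le m\prn{\norm{\vv}^2-\norm{\M(\vv)}^2}$, then polarizes the per-cycle energy decrease into $2m\norm{\M^{n}(\vv)-\M^{n+1}(\vv)}\norm{\vv}$ and bounds the cycle displacement by $\sqrt{T/n}\,\norm{\vv}$. You replace all of that machinery by the observation $\w_{k-(T-m)}\in\feasible_m$ together with the monotone surrogate $\Delta_t=\norm{\w_t-\w_{t-T}}$, and your announced ``final transfer'' does close, landing exactly on $2T^2/\sqrt{k}$: by Cauchy--Schwarz, $\dist^2\prn{\w_k,\feasible_m}\le\norm{\w_k-\w_{k-(T-m)}}^2\le T\sum_{j=k-T+1}^{k}\norm{\w_j-\w_{j-1}}^2$; telescoping your anchored firm-nonexpansiveness inequality over the last cycle only gives $\sum_{j=k-T+1}^{k}\norm{\w_j-\w_{j-1}}^2\le\norm{\w_{k-T}-\teacher}^2-\norm{\w_k-\teacher}^2$; and polarization plus the anchored monotonicity of \corref{cor:monotonicity} gives $\norm{\w_{k-T}-\teacher}^2-\norm{\w_k-\teacher}^2\le\Delta_k\prn{\norm{\w_{k-T}-\teacher}+\norm{\w_k-\teacher}}\le 2\Delta_k\,\dist\prn{\w_0,\teachers}$. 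This polarization identity --- also the engine of the paper's Lemma~\ref{lem:dimension_independent}, part (c) --- is the concrete form of your ``interpolation'' and is the one tool your sketch does not name; note the bounded factor must be the distance to the \emph{fixed} anchor $\teacher$ (\corref{cor:monotonicity}), not $\dist\prn{\w_t,\teachers}$ itself, though both are monotone.

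Two bookkeeping corrections. First, your sliding-window bound $\sum_t\Delta_t^2\le T^2\dist^2\prn{\w_0,\teachers}$ only yields $\Delta_k\le T\,\dist\prn{\w_0,\teachers}/\sqrt{k-T+1}$ (there are $k-T+1$ admissible indices $t\ge T$), which misses the constant $2T^2/\sqrt{k}$ for small $n$. Instead sum $\Delta_{jT}^2$ over the $n$ \emph{disjoint} cycle-aligned windows: Cauchy--Schwarz then incurs a single factor $T$ and the overlap factor disappears, giving $\Delta_k^2=\Delta_{nT}^2\le\frac{1}{n}\sum_{j=1}^{n}\Delta_{jT}^2\le\frac{T}{n}\dist^2\prn{\w_0,\teachers}=\frac{T^2}{k}\dist^2\prn{\w_0,\teachers}$ exactly; your monotonicity of $\Delta_t$ makes the restriction to multiples of $T$ harmless. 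Second, for $T=2$ the sharpening to $\frac{1}{k+1}$ does not come from the residual left in the telescoping (that term is nonpositive and simply dropped); it comes from the exact identity $\dist\prn{\w_k,\feasible_1}=\norm{\w_{k+1}-\w_k}$ --- the distance equals the \emph{next} step, which by the two-set step-monotonicity (\lemref{lem:two_tasks_monotonicity} in the paper) is the smallest of the $k+1$ steps $t=0,\dots,k$, so averaging those $k+1$ squared steps and telescoping gives $\frac{1}{k+1}\dist^2\prn{\w_0,\teachers}$, which is precisely the paper's argument. With these two repairs your proof is complete; what it buys over the paper's is the avoidance of the operator-decomposition lemma entirely, at the price of an argument specific to the last iterate rather than a reusable bound on $\dist^2\prn{\vv,\mathcal{C}_m}$ for arbitrary $\vv$ (which the paper reuses, e.g., in its average-iterate analysis).
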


\vspace{-0.5em}

Clearly, the above yields
universal rates for the forgetting as well
(see \lemref{lem:euclidean_to_hinge}),
but these are essentially worse than the linear rates we got (in the presence of regularity).

\subsubsection{Random Ordering}
\label{sec:random_ordering}
In this section, we consider a uniform i.i.d.~task ordering
over a set of $T$ tasks.
Random orderings are considered more realistic than cyclic ones
(\eg driverless taxis will likely encounter recurring routes/environments randomly, rather than in a certain cycle). 
Mathematically, they also require other analytical tools and often offer different convergence guarantees.
Much like cyclic orderings, random orderings have been studied in many related areas
(\eg \citet{nedic2010randomPOCS,needell2014paved,evron2022catastrophic}).

\pagebreak

\begin{definition}[Random task ordering]
\label{def:random_ordering}
A random (uniform) ordering over $T$ tasks is defined as 
$$\Pr\prn{\tau(t)\!=\!m}
\!=\!
\Pr\prn{\tau(t')\!=\!m'}\!,
\enskip
\forall t,t'\!\in\!\naturals\!,\,m,m'\!\in\!\cnt{T}
\!.
$$
\end{definition}

\begin{theorem}[Linear rates for random orderings]
\label{thm:random_rates}
 For $T$ jointly-separable tasks
learned in a random ordering,
our quantities of interest 
(\defref{def:quantities})
converge linearly as
\begin{flalign*}
\mathrlap{
\expectation_{\tau}\!
\big[
\underbrace{
    \max_{m\in\cnt{T}}
    F_m (\w_k)}_{
    \substack{
    \text{Maximum forgetting}
    }
}
\big]
\le
\expectation_{\tau}\!
\big[
\underbrace{
    \max_{m\in\cnt{T}}
    \dist^{2\!}\prn{\w_k, \feasible_{m}}
}_{
    \substack{
    \text{Max.~dist.~to any feasible set}
    }
}
\big]
R^2
\le 
}
\\
&&
\mathllap{
\le 
\expectation_{\tau}\!
\big[
\underbrace{
\dist^{2\!}\prn{\w_k, \teachers}
}_{
\substack{
\text{Dist.~to}
\\
\text{offline feasible set}
}
}
\big]
R^2
\le
\exp\!
\prn{
\!
-\tfrac{k}{4T\tnorm{\teacher}^2 R^2}
\!
}
\norm{\teacher}^2 \!R^2
%
}.
\end{flalign*}
where $\norm{\teacher}^2 \!R^2\!\ge\! 1$ is the problem complexity (Rem.~\ref{rmk:complexity}).
\end{theorem}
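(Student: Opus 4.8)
The plan is to establish a one-step \emph{stochastic} contraction for the squared distance ${D_t \triangleq \dist^2(\w_t, \teachers)}$ and then iterate it in expectation. The two leftmost inequalities in the claimed chain hold for every realization of $\tau$ and require no probabilistic argument: applying \lemref{lem:euclidean_to_hinge} pointwise gives ${\max_m F_m(\w_k) \le \max_m \dist^2(\w_k, \feasible_m)R^2 \le \dist^2(\w_k, \teachers)R^2}$, and monotonicity of $\expectation_\tau$ preserves these when we take expectations. So the real work is the rightmost bound on $\expectation_\tau[D_k]$.

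First I would fix an iteration $t$ and let $\vu \triangleq \mP_\teachers(\w_{t-1})$ be the projection of the previous iterate onto the offline feasible set, so that $\norm{\w_{t-1}-\vu} = \dist(\w_{t-1}, \teachers)$. Since $\vu \in \teachers \subseteq \feasible_{\tau(t)}$ and $\w_t = \mP_{\tau(t)}(\w_{t-1})$ is a projection onto the \emph{convex} set $\feasible_{\tau(t)}$, the Pythagorean (obtuse-angle) property of projections onto convex sets yields the deterministic bound
$$
\dist^2(\w_t, \teachers) \le \norm{\w_t - \vu}^2 \le \norm{\w_{t-1}-\vu}^2 - \norm{\w_{t-1}-\w_t}^2 = D_{t-1} - \dist^2(\w_{t-1}, \feasible_{\tau(t)}) \,.
$$
This already re-derives the monotonicity of \lemref{lem:feasible_distance_monotonicity}; the extra subtracted term is what will drive the linear rate.

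Next I would take the conditional expectation over the random task $\tau(t)$, which is uniform on $\cnt{T}$ and independent of the history determining $\w_{t-1}$, so the single subtracted term becomes an \emph{average}:
$$
\expectation_\tau\!\left[D_t \mid \w_{t-1}\right] \le D_{t-1} - \tfrac{1}{T}\tsum_{m=1}^{T}\dist^2(\w_{t-1}, \feasible_m) \,.
$$
To convert the average into a contraction factor, I would lower-bound it by $\tfrac{1}{T}$ times the maximum, ${\tfrac{1}{T}\sum_m \dist^2(\w_{t-1}, \feasible_m) \ge \tfrac{1}{T}\max_m \dist^2(\w_{t-1}, \feasible_m)}$, and then invoke the linear-regularity \lemref{lem:regularity} at $\w_{t-1}$, which rearranges to ${\max_m \dist^2(\w_{t-1}, \feasible_m) \ge D_{t-1}/(4\norm{\teacher}^2 R^2)}$. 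Combining gives ${\expectation_\tau[D_t \mid \w_{t-1}] \le \bigprn{1 - \tfrac{1}{4T\norm{\teacher}^2 R^2}}D_{t-1}}$. Taking full expectations, iterating over $t=1,\dots,k$ by the tower property, and using $1-x\le e^{-x}$ together with ${D_0 = \dist^2(\w_0, \teachers) = \norm{\teacher}^2}$ (since $\w_0=\0_\dim$ and $\teacher$ is the min-norm element of $\teachers$) yields ${\expectation_\tau[D_k] \le \exp\!\bigprn{-\tfrac{k}{4T\norm{\teacher}^2 R^2}}\norm{\teacher}^2}$; multiplying by $R^2$ gives the claim.

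The main obstacle is the step where the uniform expectation produces an \emph{average} over feasible sets, whereas the regularity condition of \lemref{lem:regularity} is phrased in terms of the \emph{maximum}. Bounding the average below by $\tfrac{1}{T}$ times the maximum is precisely where the additional factor of $T$ enters, relative to the cyclic rate of \thmref{thm:2_tasks_cyclic}; a sharper argument controlling the full average could in principle improve the constant, but this simple bound already suffices for the stated linear rate. A minor technical point to verify is that \lemref{lem:regularity} indeed applies to $\w_{t-1}$ under \emph{every} realization of the ordering, which it does, since it holds for any iterate of \procref{proc:adaptive} independently of how the tasks were selected.
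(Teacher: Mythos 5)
Your proof is correct and delivers exactly the stated constant, but it takes a genuinely different route from the paper. The paper's proof treats the contraction as a black box: it establishes that $\teachers$ has a nonempty interior --- specifically $\mathcal{B}\prn{2\teacher,\nicefrac{1}{R}}\subset\teachers$ (\lemref{lem:feasible_interior}) --- and then directly invokes Proposition~8 of \citet{nedic2010randomPOCS} with $\overline{\w}=2\teacher$, $\delta=\nicefrac{1}{R}$, $p_{\min}=\nicefrac{1}{T}$, which yields $\prn{1-\tfrac{1}{4T\tnorm{\teacher}^2R^2}}^k\tnorm{\teacher}^2$ and hence the exponential bound; the two leftmost inequalities are then handled, as you do, by applying \lemref{lem:euclidean_to_hinge} pointwise and passing to expectations. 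You instead reprove the stochastic contraction from scratch: the Pythagorean (obtuse-angle) inequality $\tnorm{\w_t-\vu}^2\le\tnorm{\w_{t-1}-\vu}^2-\tnorm{\w_{t-1}-\w_t}^2$ with $\vu=\mP_{\teachers}(\w_{t-1})$, conditional expectation over the uniform $\tau(t)$ turning the subtracted residual into an average, the crude bound average $\ge\tfrac{1}{T}\cdot$maximum, and the paper's linear regularity \lemref{lem:regularity} to convert the maximum back into $D_{t-1}$. This is essentially the argument living \emph{inside} Ned\'{i}c's proposition, and the geometric content is shared, since \lemref{lem:regularity} is itself derived in the paper from the same interior-ball \lemref{lem:feasible_interior}; your derivation buys a self-contained, mechanism-exposing proof (each step removes in expectation a $\tfrac{1}{T}$-fraction of the regularity-controlled gap, and the $p_{\min}$ generalization in the paper's remark falls out immediately), while the paper's citation route is shorter and leans on a verified external result. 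Two small points worth tightening: your appeal to \lemref{lem:regularity} at $\w_{t-1}$ includes the case $t=1$, i.e.~$\w_0$, which the lemma's statement ($t\in\naturals$) does not formally cover, though its proof does trivially since $\norm{\w_0-2\teacher}=2\norm{\teacher}$ holds with equality; and your correct observation that the regularity constant is ordering-independent rests on \corref{cor:monotonicity} holding pathwise because $2\teacher\in\teachers\subseteq\feasible_{\tau(t)}$ for every realization, which deserves to be stated explicitly rather than asserted.
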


To show this theorem, we prove a property of \emph{our} convex sets
(\ie polyhedral cones; see \figref{fig:one_task_feasibility}) 
and apply
a result from the (random) POCS literature \citep{nedic2010randomPOCS}.
\linebreak
The proofs for this section are given in \appref{app:random_proofs}.

\begin{lemma}[Limit guarantees of random orderings]
\label{lem:random_limit}
Under a random ordering
 (and assumption~\ref{asm:separability}), 
 the iterates converge \emph{almost surely} to ${\teachers}$,
such that 
$$
\lim_{k\to \infty} 
\dist(\w_k, \teachers)=0,
~~~
\norm{\teacher} \le \norm{\w_\infty} \le 2\norm{\teacher}
\,.
$$
\end{lemma}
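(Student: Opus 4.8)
The plan is to prove \lemref{lem:random_limit} by combining an almost-sure convergence result for random projections onto convex sets with the deterministic $2$-optimality bound already established for individual iterates. The statement has two parts: the distance $\dist(\w_k,\teachers)$ vanishes almost surely, and the limit iterate $\wlim$ satisfies $\norm{\teacher}\le\norm{\wlim}\le 2\norm{\teacher}$. I would attack these in sequence, since the norm bound is essentially a consequence of the convergence plus results I may assume.

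\textbf{Almost-sure convergence.} First I would invoke \thmref{thm:random_rates}, which already gives $\expectation_{\tau}[\dist^2(\w_k,\teachers)]\le \exp(-k/(4T\tnorm{\teacher}^2R^2))\norm{\teacher}^2$. This expectation bound alone does not immediately yield almost-sure convergence of the sequence, so the cleaner route is to note that by \lemref{lem:feasible_distance_monotonicity} the sequence $\dist(\w_k,\teachers)$ is \emph{monotonically non-increasing} along every realization of $\tau$ (each step is a projection onto a convex set containing $\teachers$). A monotone nonnegative sequence whose expectation tends to $0$ converges almost surely to $0$: since $\dist(\w_k,\teachers)$ is non-increasing, it has an almost-sure limit $\ell\ge 0$; by monotone convergence (or Fatou) $\expectation_{\tau}[\ell]\le \lim_k \expectation_{\tau}[\dist^2(\w_k,\teachers)]^{1/2}=0$, forcing $\ell=0$ almost surely. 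This establishes $\lim_{k\to\infty}\dist(\w_k,\teachers)=0$ a.s. Alternatively, one can cite the almost-sure convergence guarantee from the random-POCS literature (\eg \citet{nedic2010randomPOCS}), whose hypotheses our polyhedral-cone sets satisfy by the lemma proved for \thmref{thm:random_rates}; I would reference whichever is cleaner given the appendix's setup.

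\textbf{Existence and norm bounds for $\wlim$.} Having shown $\dist(\w_k,\teachers)\to 0$, I would argue the iterates themselves converge (not merely their distance). Because $\teachers$ is closed and convex and the projections are firmly nonexpansive toward any point of $\teachers$, $\norm{\w_k - \vv}$ is non-increasing for each fixed $\vv\in\teachers$, so the sequence $(\w_k)$ is bounded (Fejér-monotone) and its limit points lie in $\teachers$; since the distance to $\teachers$ tends to $0$ and the sequence is Fejér-monotone w.r.t.~$\teachers$, all limit points coincide, giving a single limit $\wlim\in\teachers$. For the norm bounds, $\wlim\in\teachers$ gives the lower bound $\norm{\teacher}\le\norm{\wlim}$ directly from the definition of $\teacher$ as the minimum-norm offline solution (\defref{def:min-norm-solution}). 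The upper bound $\norm{\wlim}\le 2\norm{\teacher}$ follows from \thmref{thm:optimality_guarantees}, which asserts $\norm{\w_t}\le 2\norm{\teacher}$ for \emph{every} iterate of \procref{proc:adaptive}; passing to the limit (norm is continuous) yields $\norm{\wlim}\le 2\norm{\teacher}$.

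\textbf{Main obstacle.} The genuinely delicate step is upgrading the expectation bound into almost-sure convergence of the \emph{iterates} $\w_k$, not just the scalar distances. The monotonicity lemma handles the distance sequence pathwise, but concluding that $\w_k$ has a unique limit in $\teachers$ requires the Fejér-monotonicity argument combined with the fact that every limit point lies in $\teachers$; I expect the careful bookkeeping there --- ensuring the random draw of $\tau$ does not create a null set of non-convergent realizations, and that boundedness plus vanishing distance forces a single accumulation point --- to be where the proof must be most careful. The norm bounds, by contrast, are essentially immediate corollaries of \thmref{thm:optimality_guarantees} and \defref{def:min-norm-solution}.
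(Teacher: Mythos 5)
Your proposal is correct and follows essentially the same route as the paper's proof: pathwise monotonicity of $\dist(\w_k,\teachers)$ from \lemref{lem:feasible_distance_monotonicity}, the monotone convergence theorem to exchange limit and expectation, the linear rate of \thmref{thm:random_rates} to force the expected limit to zero (a nonnegative limit with zero mean vanishes a.s.), and then \thmref{thm:optimality_guarantees} with \defref{def:min-norm-solution} for the two-sided norm bound. The one place you go beyond the paper is the Fej\'er-monotonicity argument establishing that the iterates $(\w_k)$ themselves converge to a single point $\wlim\in\teachers$: the paper's proof simply speaks of ``the limit $\wlim$'' without justifying its existence, whereas your extra step --- boundedness of the sequence via \corref{cor:monotonicity}, accumulation points lying in the closed set $\teachers$ since the distance vanishes, and uniqueness of the accumulation point because $\norm{\w_k-\vv}$ is non-increasing for every fixed $\vv\in\teachers$ --- closes this (minor) gap, and since the monotonicity holds deterministically along every realization of $\tau$, the argument applies pathwise on the full-measure event where the distance vanishes, requiring no additional stochastic bookkeeping.
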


\begin{remark}[Beyond uniform distributions]
  The result from \citet{nedic2010randomPOCS}   that we used to prove \thmref{thm:random_rates}, 
  is applicable to any \iid~distribution
  with a nonzero
  \linebreak
  $p_{\min}
  \triangleq 
  \min_{m\in\cnt{T}} 
  \Pr\prn{\tau\prn{{\cdot}}\!=\!m}>0$.
  In such cases, the rate from \thmref{thm:random_rates} changes to
    $\exp\!
    \prn{
    \!
    -\tfrac{p_{\min} k}{4\tnorm{\teacher}^2 R^2}
    \!
    }$.
\end{remark}

\subsubsection{Average Iterate Analysis}
\label{sec:avg_iterate}
So far, we analyzed the convergence of
the last iterate $\w_{k}$.
\linebreak
Next, we analyze the average iterate 
\hfill
$
\m@th\displaystyle
    \avgitr_{k}\triangleq 
    \frac{1}{k}
    \tsum_{t=1}^{k} \w_t
$.
\linebreak
Such analysis often allows for stronger bounds
(\eg in SGD \citep{shamir2013stochastic}, Kaczmarz methods \citep{morshed2022sampling},
and continual regression \citep{evron2022catastrophic}).

\begin{proposition}[Universal rates for the average iterate]
    \label{prop:avearge_iterate}
    After $n$ cycles 
    under a cyclic ordering 
    ($k=nT$) we have
    $$
    \underbrace{
        \max_{m\in\cnt{T}}
        F_m (\avgitr_k)}_{
        \substack{
        \text{Maximum forgetting}
        }
    }
    \le
    \underbrace{
        \max_{m\in\cnt{T}}
        \dist^{2\!}\prn{\avgitr_k, \feasible_{m}}
    }_{
        \substack{
        \text{Max.~dist.~to any feasible set}
        }
    }
    \!
    R^2
    \le 
    \frac{T^2}{k}
    \!
    \norm{\w^{\star}}^2 \!R^2$$
    and after $k$ iterations under a random ordering we have
\begin{align*}
\expectation_{\tau}\!
    \Big[\!
    \underbrace{
        {\tfrac{1}{T}}\!\!
        \sum_{m=1}^T
        \!\!
        F_m (\avgitr_k)}_{
        \substack{
        \text{Average forgetting}
        }
    }
    \!\Big]
    \!\le
    \expectation_{\tau}\!
    \Big[\!\!\!
    \underbrace{
        {\tfrac{1}{T}}\!\!
        \sum_{m=1}^T
        \!\!
        \dist^{2\!}\prn{\avgitr_k, \feasible_{m}}
    }_{
        \substack{
        \text{Avg.~distance~to feasible sets}
        }
    }
    \!\!\!\Big]
    R^2
    \le
    \tfrac{\norm{\w^{\star}}^2 \!R^2}{k}
\end{align*}
(implying a $\frac{T}{k}\norm{\w^{\star}}^2 \!R^2$ bound on the expected \emph{maximum} forgetting and \emph{maximum} distance to any feasible set).
\end{proposition}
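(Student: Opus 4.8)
The plan is to reduce both displays to two ingredients already available in the paper: the firm non-expansiveness of the projections $\mP_t$ and the convexity of each map $\w\mapsto\dist^2(\w,\feasible_m)$. The leftmost inequalities in both statements are immediate from \lemref{lem:euclidean_to_hinge}, so the entire task is to bound the distances of the \emph{average} iterate $\avgitr_k$ to the feasible sets. Writing $\residual_t \triangleq \w_t-\w_{t-1}$ and using that $\teacher\in\teachers\subseteq\feasible_{\tau(t)}$ for every $t$, the projection inequality gives $\norm{\w_t-\teacher}^2\le\norm{\w_{t-1}-\teacher}^2-\norm{\residual_t}^2$; telescoping from $\w_0=\0_{\dim}$ yields the single global budget $\sum_{t\ge1}\norm{\residual_t}^2\le\norm{\teacher}^2$ that drives everything. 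Since $\dist(\cdot,\feasible_m)$ is convex and nonnegative, $\dist^2(\cdot,\feasible_m)$ is convex, so Jensen lets me push the squared distance inside the average over iterates.

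For the random ordering I would first apply Jensen to get $\tfrac1T\sum_m\dist^2(\avgitr_k,\feasible_m)\le\tfrac1k\sum_{t=1}^k\tfrac1T\sum_m\dist^2(\w_t,\feasible_m)$. The key observation is that $\tau(t+1)$ is uniform on $\cnt{T}$ and independent of $\w_t$, so conditioning gives $\expectation\big[\tfrac1T\sum_m\dist^2(\w_t,\feasible_m)\big]=\expectation\big[\dist^2(\w_t,\feasible_{\tau(t+1)})\big]=\expectation\norm{\residual_{t+1}}^2$, because the next projection sends $\w_t$ onto $\feasible_{\tau(t+1)}$. Summing over $t$ and invoking the global budget $\sum_t\expectation\norm{\residual_t}^2\le\norm{\teacher}^2$ delivers the $\norm{\teacher}^2/k$ bound; the $\tfrac{T}{k}$ bound on the expected \emph{maximum} then follows from $\max_m(\cdot)\le\sum_m(\cdot)$, and multiplying by $R^2$ with \lemref{lem:euclidean_to_hinge} finishes.

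For the cyclic ordering, where the independence trick is unavailable, I would group the $k=nT$ iterations into $n$ cycles and fix a task $m$. Within cycle $j$ the iterate $\w_{(j-1)T+m}$ lies in $\feasible_m$, so it is a natural feasible anchor; after applying Jensen across cycles and across the $T$ positions inside a cycle, it remains to bound $\norm{\w_{(j-1)T+i}-\w_{(j-1)T+m}}^2$, which is the squared norm of a sum of at most $T$ consecutive residuals, all belonging to cycle $j$. Cauchy--Schwarz turns this into (length)$\times\sum\norm{\residual}^2$, and a charging argument shows each residual of cycle $j$ is counted at most $\bigO(T)$ times across the positions $i$; combining the two factors of $T$ against the $\tfrac{1}{nT}$ normalization produces exactly $\tfrac{T^2}{k}\norm{\teacher}^2$, and then $R^2$ together with \lemref{lem:euclidean_to_hinge} gives the forgetting bound. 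The main obstacle is this within-cycle bookkeeping: getting the constant right requires tracking how many position-pairs $(i,m)$ a given residual sits between, and verifying that no residual crosses a cycle boundary, so that the global budget can be spent cycle-by-cycle.
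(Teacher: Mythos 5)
Your proof is correct, and the two displays split into a ``same route'' and a ``different route'' half. For the random ordering your argument is essentially the paper's: the paper also pushes the squared distance through the average via convexity of $\vv\mapsto\norm{\vv-\mP_m(\vv)}^2$ (its Lemma~\ref{lem:residual_is_convex}) and Jensen, exploits that the task drawn at step $t+1$ is uniform and independent of $\w_t$ so that the average distance over tasks equals the expected next residual, and telescopes $\norm{\w_t-\teacher}^2-\norm{\w_{t+1}-\teacher}^2$; your residual identity $\dist(\w_t,\feasible_{\tau(t+1)})=\norm{\residual_{t+1}}$ and the global budget $\sum_t\norm{\residual_t}^2\le\norm{\teacher}^2$ are just a repackaging of the same firm non-expansiveness inequality. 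For the cyclic ordering, however, you take a genuinely different and more elementary path. The paper translates the space by $-\teacher$ (Remark~\ref{rmk:translations}), rewrites each iterate via rotated part-cyclic operators $\mQ_t\circ\smallcdots\circ\mQ_1\circ\mQ_T\circ\smallcdots\circ\mQ_{t+1}$, invokes its operator bound \ref{lem:bound_on_projection_to_m_task} ($\norm{\vv-\mQ_m(\vv)}^2\le T(\norm{\vv}^2-\norm{\M(\vv)}^2)$), and telescopes across cycles --- spending, in effect, one budget of $\norm{\teacher}^2$ per in-cycle position, for $T$ budgets against the $T/k$ prefactor. You instead anchor $\w_{(j-1)T+i}$ at the same-cycle feasible iterate $\w_{(j-1)T+m}\in\feasible_m$, bound the displacement by a chain of at most $T$ within-cycle residuals via Claim~\ref{clm:square_ineq}, and spend a single global budget with a $T^2$ overcount per cycle; your anchor choice makes the cycle-boundary worry you flag vacuous, since for every position $i$ the chain between $\min(i,m)$ and $\max(i,m)$ stays inside cycle $j$ by construction. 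Both yield exactly $\tfrac{T^2}{k}\norm{\teacher}^2$. What the paper's heavier machinery buys is reusability --- Lemma~\ref{lem:dimension_independent} also powers the last-iterate universal rates of Proposition~\ref{prop:cyclic_universal} --- whereas your argument is self-contained, avoids the translation formalism and operator bookkeeping entirely, and with a sharper count of $\sum_i\abs{i-m}$ would even improve the constant by roughly a factor of two.
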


The proof is given in \appref{app:averaging}.

\begin{figure*}[t!]
    \centering
    \begin{subfigure}[t]{0.30\textwidth}
        \centering       
        
        \vspace{-4.3cm}
        \frame{\includegraphics[width=.95\linewidth]{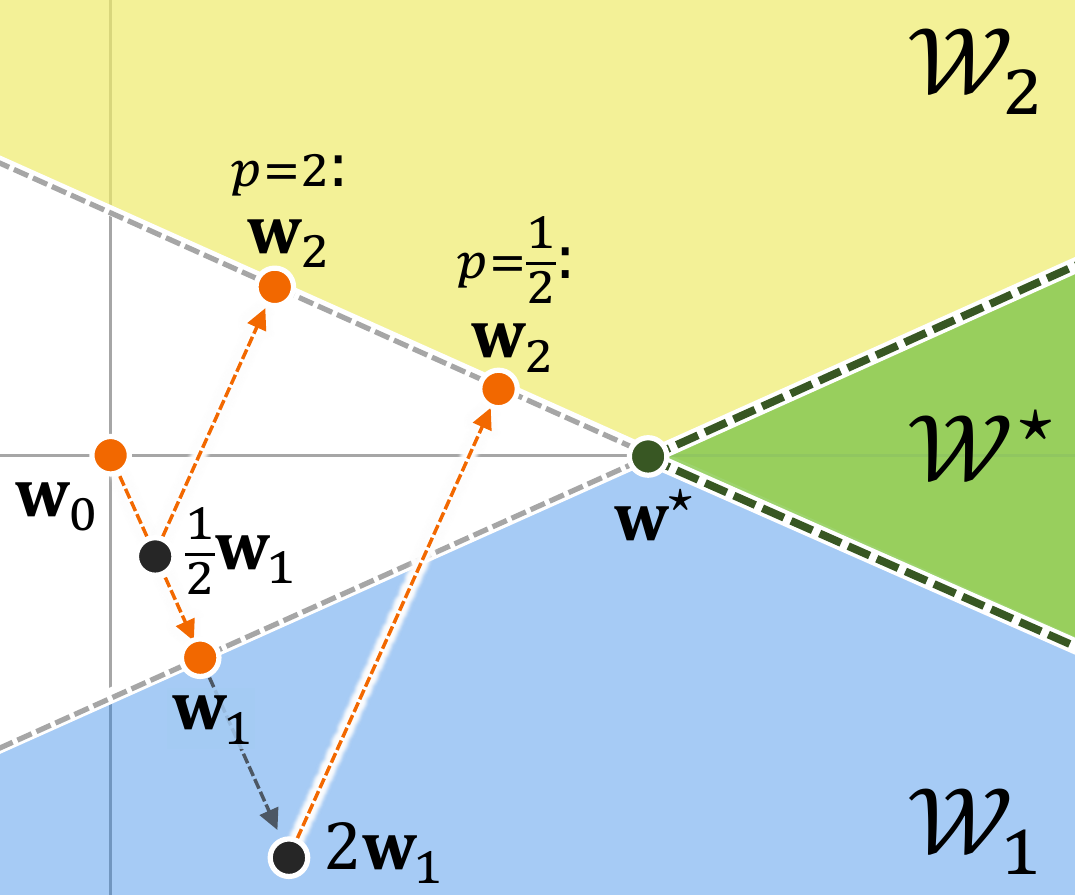}}
        \vspace{-.2em}
        
        \vspace{.25cm}
        \caption{
        Two iterations of the double exponential scheduler with two values $p=\nicefrac{1}{2}, 2$.
        \linebreak
        $p>1$ shrinks $\w_1$ before projecting it onto $\feasible_2$,
        while
        $p<1$ inflates it.
        }
        \label{fig:general_scheduling}
    \end{subfigure}%
    \hfill
    \begin{subfigure}[t]{0.32\textwidth}
        \centering            
        {\includegraphics[width=1\linewidth]{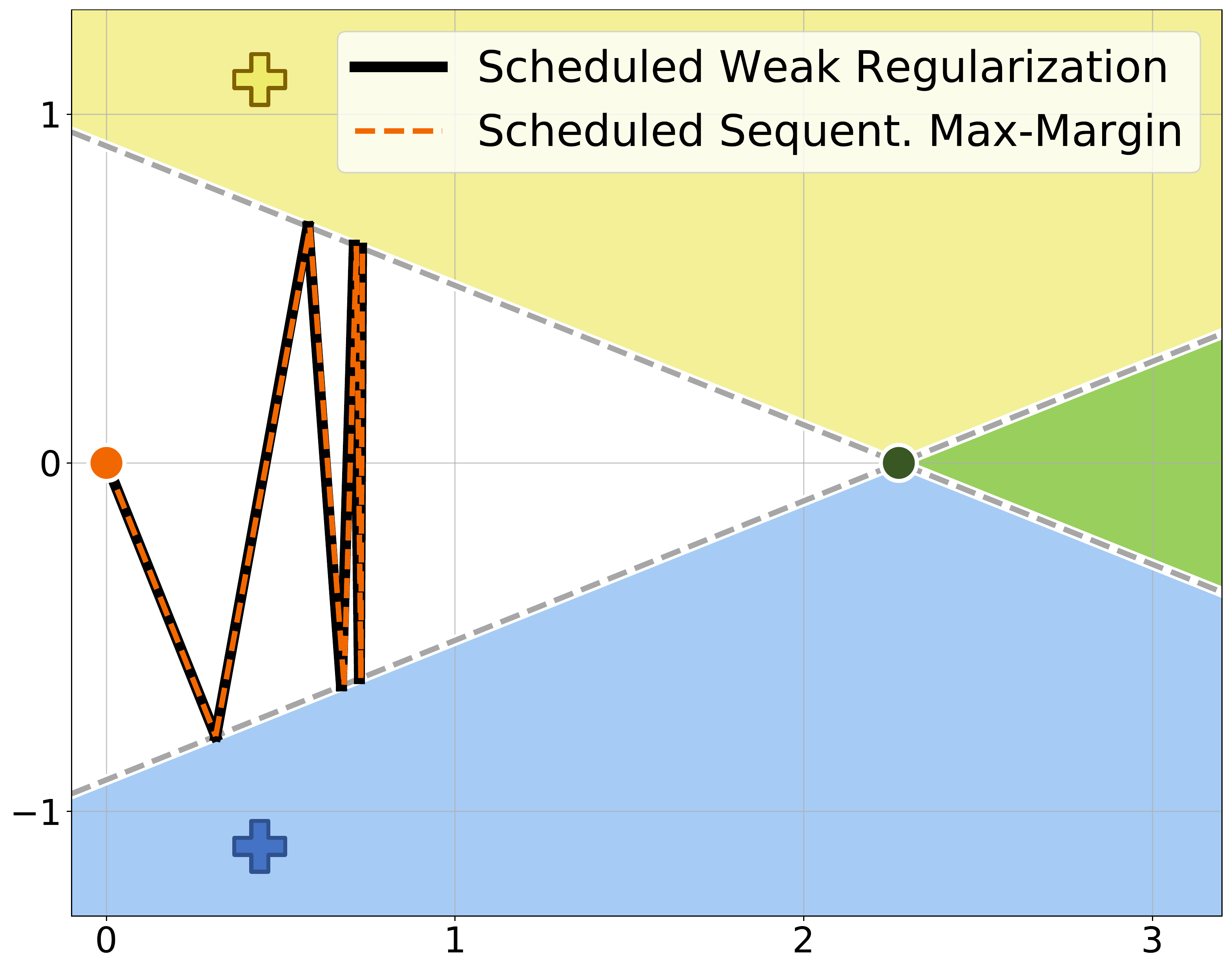}}
        \vspace{-1.4em}
        \caption{
        $p>1 \Longrightarrow $
        Might not converge to $\teachers$.
        \linebreak
        Here, ${p\!=\!2}$, ${\lambda_1\!=\!\lambda\!=\!{10}^{-32}}$, 
        ${\lambda_6\!
        \!=\!{10}^{-1024}}$. 
        Black~lines indicate changes in the \emph{directions} of the weakly-regularized solutions (their actual scale is much larger).
        }
        \label{fig:contraction_schedule}
    \end{subfigure}%
    \hfill
    \begin{subfigure}[t]{0.32\textwidth}
        \centering
        {\includegraphics[width=1\linewidth]{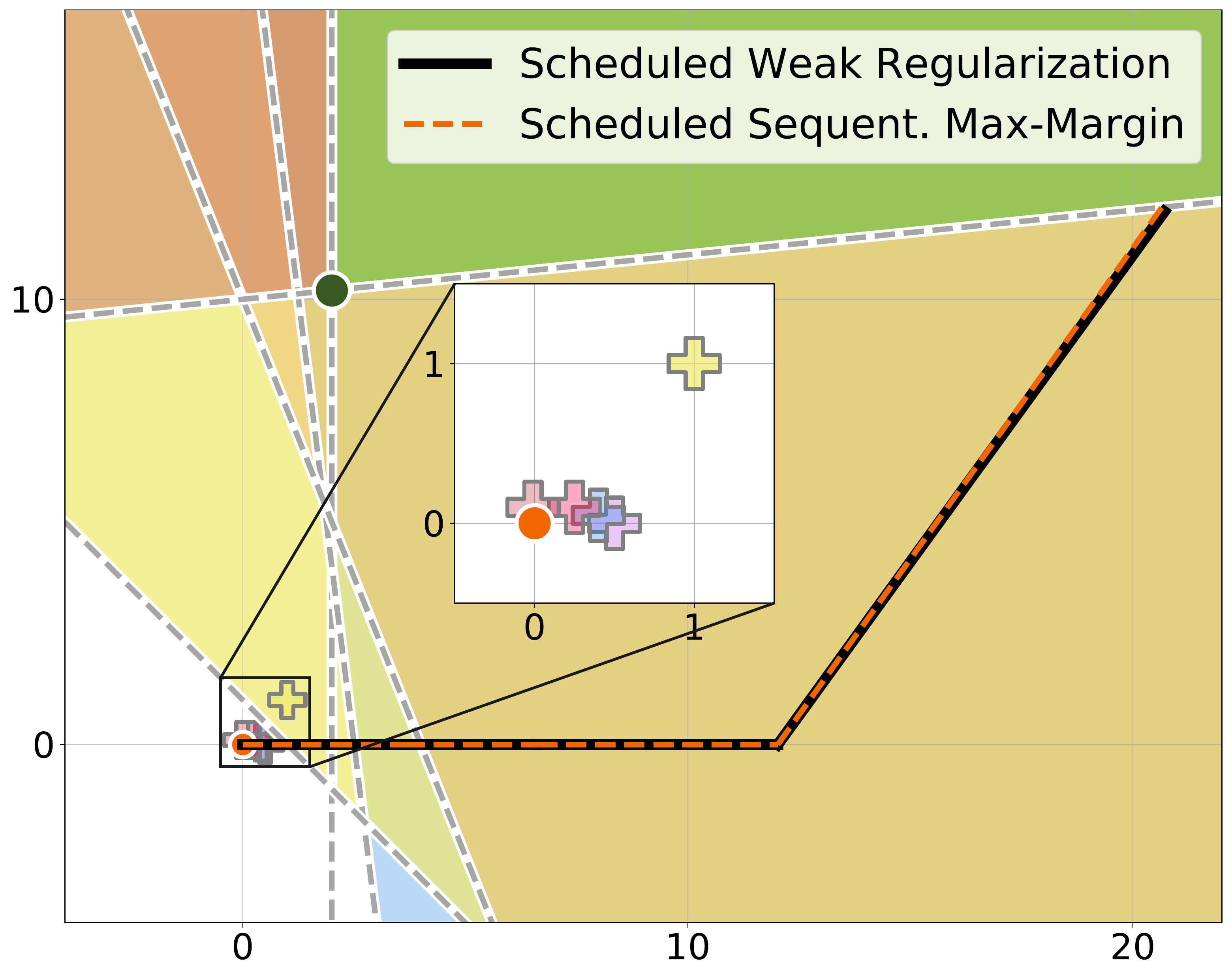}}
        \vspace{-1.4em}
        \caption{
        $p\!>\!1 \Longrightarrow $
        Possibly $\norm{\w_t}
        > 2\norm{\teacher}$
        (the guarantee from Thm.~\ref{thm:optimality_guarantees} does not hold).
        Here, $p=0.55$,
        ${\lambda_1=\lambda={10}^{-4096}}$
        and the rest
        $\lambda_2,\lambda_3,\lambda_4,\lambda_5$
        are
        $
        {10}^{-2253},
        {10}^{-1239},
        {10}^{-681},
        {10}^{-375}
        $.
        }
        \label{fig:inflation_schedule}
    \end{subfigure}%
    \vspace{-0.35em}
    \caption{
    \label{fig:extensions}
    Illustrations and experiments for
    the double exponential regularization scheduling (\eqref{eq:exponential_scheduler}
        in \secref{sec:scheduling}).
    }
    \vspace{-0.7em}
\end{figure*}

\pagebreak

\section{Extensions}
\label{sec:extensions}

\subsection{Regularization Strength Scheduling}
\label{sec:scheduling}

Up to this point, 
we assumed that the regularization strengths in  \procref{proc:regularized} are constant,
\ie $\lambda_1 = \smalldots = \lambda_{k} \triangleq \lambda$.
\linebreak
Alternatively, these strengths 
can \emph{vary}, as done in practice~in stationary settings \citep{lewkowycz2020regularizationScheduling}.
Related work in continual learning \citep{Mirzadeh2020regimes}
used a learning-rate decay scheme,
which can be seen as a form of varying regularization.
We now aim to understand the bias and implications of varying regularization strengths.

Given $\lambda>0$,
we parameterize the regularization strengths as 
$\lambda_{t}\!\triangleq\!\lambda_{t}(\lambda)\!>\!0$
for arbitrary functions
$\lambda_{t}\!:\reals_{>0}\!\to\!\reals_{>0}$
holding that
$\lim_{\lambda\to 0}\lambda_{t}(\lambda) \!=\! 0$
and
$
\lim_{\lambda\to 0}
\tfrac{\ln\lambda_{t-1}}{\ln\lambda_{t}}\!<\!\infty$ is well-defined
$\forall t\!\in\!\cnt{k}$.
We show that the weakly-regularized \procref{proc:regularized} 
(with $\lambda_1,\smalldots,\lambda_k$)
converges to the scheme below.

\begin{algorithm}[h!]
   \caption{Scheduled Sequential Max-Margin}
    \label{proc:scheduled_adaptive}
\begin{algorithmic}
   \STATE {\algmargin\bfseries Initialization:} 
   $\w_0 = \0_{\dim}$
   \STATE {\algmargin{\bfseries Iterative update $\forall t\!\in\![k] 
   $:}
   \hfill
   $
   \m@th\displaystyle
   \w_{t}
    \!=\!
    \mP_{t}
    {
    \Bigprn{
        \bigprn{
        \lim_{\lambda\to 0}
        \!
        \tfrac{\ln\lambda_{t-1}}{\ln\lambda_{t}}
        }
        \w_{t-1}
        }
    }$
   } 
    \vspace{-.2em}
\end{algorithmic}
\end{algorithm}

\begin{theorem}[Weakly-regularized models with scheduling]
\label{thm:weak_schedule}
For almost all {separable} datasets, %
when scheduling the regularization strength as described above, 
it holds that
\vspace{-0.2em}
$$
\lim_{\lambda\to0}
\frac{\w_t^{(\lambda)}}{\tnorm{\w_t^{(\lambda)}}}
=
\frac{\w_t}{\tnorm{\w_t}},
~~~
{\forall t\!\in\!\cnt{k}}
\,.
$$
\vspace{-0.8em}
\end{theorem}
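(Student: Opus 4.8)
The plan is to prove the result by induction on the iteration index $t$, following the strategy behind \thmref{thm:weakly_regularized} but now tracking how the \emph{scale} of the previous iterate interacts with the current regularization strength. Concretely, I would establish the stronger claim that $\w_t^{(\lambda)} = \ln\!\big(\tfrac{1}{\lambda_t}\big)\,\w_t + o\!\big(\ln\tfrac{1}{\lambda_t}\big)$ as $\lambda\to0$, where $\w_t$ are the iterates of \procref{proc:scheduled_adaptive}; the directional statement of the theorem then follows immediately by normalizing and using $\ln\tfrac{1}{\lambda_t}\to\infty$. It is convenient to write $c_t \triangleq \lim_{\lambda\to0}\tfrac{\ln\lambda_{t-1}}{\ln\lambda_t} = \lim_{\lambda\to0}\tfrac{\ln(1/\lambda_{t-1})}{\ln(1/\lambda_t)}$, which is finite and nonnegative by assumption, so that \procref{proc:scheduled_adaptive} reads $\w_t = \mP_t(c_t\,\w_{t-1})$.

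The base case $t=1$ is exactly the first step of \thmref{thm:weakly_regularized}: since $\w_0=\0_{\dim}$ the bias term is unbiased, so $\w_1^{(\lambda)} = \ln(1/\lambda_1)\,\mP_1(\0_{\dim}) + o(\ln(1/\lambda_1))$, independently of $c_1$. For the inductive step I would rescale the variable in \eqref{eq:weakly_regularized_problem} by writing $\w = \ln(1/\lambda_t)\,\tilde{\w}$; the loss term becomes $\sum_{(\x,y)}\lambda_t^{\,y\tilde{\w}^\top\x}$, which after dividing the whole objective by $\ln^2(1/\lambda_t)$ forces the weak-limit minimizer onto $\feasible_t$, exactly as in the unbiased analysis. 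The rescaled bias point is $\w_{t-1}^{(\lambda)}/\ln(1/\lambda_t)$, and by the induction hypothesis together with the definition of $c_t$,
$$
\frac{\w_{t-1}^{(\lambda)}}{\ln(1/\lambda_t)}
=
\frac{\ln(1/\lambda_{t-1})}{\ln(1/\lambda_t)}\,\w_{t-1}
+
\frac{o\!\big(\ln(1/\lambda_{t-1})\big)}{\ln(1/\lambda_t)}
\xrightarrow[\lambda\to0]{}
c_t\,\w_{t-1}\,.
$$
Hence the rescaled problem converges to projecting $c_t\w_{t-1}$ onto $\feasible_t$, giving $\w_t^{(\lambda)}/\ln(1/\lambda_t)\to\mP_t(c_t\w_{t-1})=\w_t$, which closes the induction.

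The main obstacle is twofold, and both parts hinge on the finiteness of $c_t$. First, I must show that $\w_t^{(\lambda)}$ genuinely grows at rate $\ln(1/\lambda_t)$, and not at the inherited rate $\ln(1/\lambda_{t-1})$ of the bias point; this is where the KKT/dual balance of the exponential loss pins the support-vector margins at $\ln(1/\lambda_t)-\bigO\!\big(\ln\ln(1/\lambda_t)\big)$, fixing the scale, and the assumption $c_t<\infty$ guarantees that the bias contribution cannot overwhelm this rate. Second, I must control the accumulated induction residual: the error $o(\ln(1/\lambda_{t-1}))$, once divided by $\ln(1/\lambda_t)$, equals $\big[o(\ln(1/\lambda_{t-1}))/\ln(1/\lambda_{t-1})\big]\cdot\big[\ln(1/\lambda_{t-1})/\ln(1/\lambda_t)\big]\to 0\cdot c_t=0$, so it vanishes in the limit \emph{precisely} because $c_t$ is finite. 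The almost-all-datasets caveat enters, as in \thmref{thm:weakly_regularized}, to ensure the projections $\mP_t$ (equivalently the max-margin solutions and their support sets) are unique and non-degenerate, so the weak limit of the rescaled minimizers is well-defined. If $c_t$ were infinite, the rescaling would be by the wrong factor and \procref{proc:scheduled_adaptive} itself would be ill-posed — exactly the degenerate case the assumption excludes.
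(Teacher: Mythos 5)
Your plan is sound and, in its core mechanism, genuinely different from the paper's proof. The paper (\appref{app:extensions}) proves Theorems~\ref{thm:weak_schedule} and~\ref{thm:weak_weighted} together by the same quantitative induction as \thmref{thm:weakly_regularized}: it evaluates the gradient of the strongly convex objective at the shifted candidate $\prn{\ln\prn{\nicefrac{1}{\lambda_t}}+\wsign_t\ln\ln\prn{\nicefrac{1}{\lambda_t}}}\w_t+\tilde{\w}_t$, where $\tilde{\w}_t$ is a finite vector satisfying $\sum_{\x\in\supp_t}\x e^{-\tilde{\w}_t^\top\x}=\w_t-c_t\w_{t-1}$ (with $c_t\triangleq\lim_{\lambda\to0}\ln\lambda_{t-1}/\ln\lambda_t$), and then applies \lemref{lem:strong-convexity} to get the explicit residual bound $\bigO\bigprn{\sum_{t'\le t}\ln\ln\prn{\nicefrac{1}{\lambda_{t'}}}}$. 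You instead rescale by $\ln\prn{\nicefrac{1}{\lambda_t}}$ and pass directly to the limiting projection problem — a variational argument that does work, with two repairs. First, the normalization: you must divide the objective by $\lambda_t\ln^2\prn{\nicefrac{1}{\lambda_t}}$, not by $\ln^2\prn{\nicefrac{1}{\lambda_t}}$; under your normalization both terms vanish pointwise at every $\tilde\w$ with positive margins and nothing enforces feasibility, whereas with the correct one the loss term becomes $\sum_{\x}\lambda_t^{\tilde{\w}^\top\x-1}/\ln^2\prn{\nicefrac{1}{\lambda_t}}$, which blows up exactly when some margin is below $1$ and vanishes otherwise (including at margin exactly $1$). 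Second, ``forces the weak-limit minimizer onto $\feasible_t$ exactly as in the unbiased analysis'' conceals the only real work — precisely the point of Remark~\ref{rmk:analysis_differences}, since here the bias point moves with $\lambda$: you need an explicit convergence-of-minimizers step. It is available — the rescaled objectives are $1$-strongly convex, their minimizers are uniformly bounded (test against a strictly feasible point such as $2\teacher$, cf.~\lemref{lem:feasible_interior}), and they epi-converge to $\frac{1}{2}\tnorm{\tilde\w-c_t\w_{t-1}}^2$ plus the indicator of $\feasible_t$, whose minimizer $\mP_t\prn{c_t\w_{t-1}}$ is unique. Your handling of the inherited residual via the finiteness of $c_t$ is exactly right, and it is the same place the paper uses that hypothesis. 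The trade-off: your route is shorter, uses the induction hypothesis only through $\w_{t-1}^{(\lambda)}/\ln\prn{\nicefrac{1}{\lambda_t}}\to c_t\w_{t-1}$, and yields only qualitative $o\prn{\ln\prn{\nicefrac{1}{\lambda_t}}}$ control — sufficient for the fixed-$t$ statement of \thmref{thm:weak_schedule} (Remark~\ref{rmk:limits_order}) — while the paper's heavier machinery buys the $\ln\ln$ residual rate that powers the $t=o\prn{\ln\prn{\nicefrac{1}{\lambda}}/\ln\ln\prn{\nicefrac{1}{\lambda}}}$ regime of \thmref{thm:weakly_regularized} and extends verbatim to the weighted norms of \thmref{thm:weak_weighted}.

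One conceptual correction: the almost-all-datasets caveat is \emph{not} about uniqueness or non-degeneracy of the projections — projection onto a nonempty closed convex set is always unique, and \procref{proc:scheduled_adaptive} is well-posed whenever $c_t$ is finite (the $c_t=\infty$ case is excluded by the scheduling assumptions, not by genericity of the data). In the paper the caveat enters solely through \lemref{lem:kkt_w_tilde} and \corref{cor:finite_w_tilde}: on a measure-zero family of datasets a support vector can carry a zero dual variable, and then no finite $\tilde{\w}_t$ with $e^{-\tilde{\w}_t^\top\x}=\alpha_t(\x)$ exists (see the explicit example closing \appref{sec:limitations}). Your variational route never constructs $\tilde{\w}_t$ and, carried out rigorously, appears not to need the caveat at all — it even handles that degenerate example — so the assumption is harmless surplus in your proof, but the justification you gave for it should be dropped.
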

The proofs for this section are given in \appref{app:extensions} \& \ref{app:scheduling}.

\medskip

Specifically, using a \textbf{double exponential scheduling} rule,
\begin{align}
    \label{eq:exponential_scheduler}
    \lambda_{t}=\lambda_{t-1}^{p}
    \,
    \tprn{
        =\lambda_{1}^{p^{t-1}}
        \!
        =\lambda^{p^{t-1}}
    }\,,\,\, 
    \text{ for some $p>0$ ,}
\end{align}
the update rule in Scheme~\ref{proc:scheduled_adaptive} becomes
%
$
{\w_{t}
\!=\!
\mP_{t}\bigprn{
    \tfrac{1}{p}\w_{t-1}
    }
}$.
\linebreak
We illustrate this in \figref{fig:general_scheduling}.

In this case, in contrast to the $p\!=\!1$ case, even when tasks recur we may not converge near the min-norm solution or the feasibility set, as demonstrated in  the next examples.

\vspace{0.1em}

\begin{example}[$p>1 \Longrightarrow $
Possibly $\wlim\notin \teachers$]
In~\figref{fig:contraction_schedule},
we cycle between two tasks using Schemes~\ref{proc:regularized}~and~\ref{proc:scheduled_adaptive} with $p=2$. 
The strengths \emph{decay} like
$\lambda, \lambda^2, \lambda^4, \lambda^8$, and so on.
Note that both schemes agree. 
Unlike the constant $\lambda$ case (Thm.~\ref{thm:2_tasks_cyclic}), 
we do \emph{not} converge to the offline set $\teachers$.
\end{example}
More generally, we prove that when $p>1$, 
the limit distance from the offline feasible set $\teachers$ can be arbitrarily bad.

\begin{proposition}
\label{prop:large_p_no_convergence}
There exists a construction of two jointly-separable tasks with $R=1$,
in which the iterates of the cyclic ordering do not converge to $\teachers$, for any $p>1$.
Specifically, for any ${p> 1,\norm{\teacher}>1}$,
it holds that
\begin{align*}
    \lim_{k\to\infty}
    \frac{\dist(\w_{k},\teachers)}{\tnorm{\teacher}}
    =
    \frac{\norm{\teacher}^2(p-1)}{
        2+\norm{\teacher}^2(p-1)
    }
    \sqrt{1-\tfrac{1}{\norm{\teacher}^2}}
    \,.
\end{align*}
\end{proposition}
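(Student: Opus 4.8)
The plan is to exhibit an explicit symmetric two-task construction, identify the limiting period-$2$ cycle of the scheduled dynamics, and then compute the distance from the limit iterate to $\teachers$ in closed form. Concretely, I take each of the two tasks to consist of a \emph{single} labeled point of unit norm (so $R=1$), placed symmetrically about the first axis: writing $\vu_m\triangleq y\x$ for the lone sample of task $m$, I set $\vu_1=(\cos\theta,\sin\theta)$ and $\vu_2=(\cos\theta,-\sin\theta)$. Then each feasible set $\feasible_m=\{\w : \vu_m^\top\w\ge 1\}$ is a halfspace, and $\teachers=\feasible_1\cap\feasible_2$ is a wedge whose corner is the min-norm solution $\teacher=(1/\cos\theta,\,0)$, so that $\norm{\teacher}=1/\cos\theta$; choosing $\theta$ with $\cos\theta=1/\norm{\teacher}$ realizes any prescribed $\norm{\teacher}>1$. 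Under the double-exponential schedule, the update is $\w_t=\mP_t\bigprn{\tfrac1p\w_{t-1}}$ (Scheme~\ref{proc:scheduled_adaptive}), and since projection onto a \emph{violated} halfspace $\{\vu_m^\top\w\ge1\}$ is the affine map $\vv\mapsto\vv+(1-\vu_m^\top\vv)\vu_m$, each half-step is $S_m(\vv)=\tfrac1p\vv+\bigprn{1-\tfrac1p\vu_m^\top\vv}\vu_m$.

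The key structural observation is that the linear part of $S_m$ is $\tfrac1p(\I-\vu_m\vu_m^\top)$, an orthogonal projection scaled by $1/p$, hence of operator norm $\tfrac1p<1$. Consequently the full cyclic map $S_2\circ S_1$ has linear part of norm at most $1/p^2<1$, so it is a \emph{contraction}: the iterates converge geometrically to a unique period-$2$ cycle $(\vect a,\vect b)$ satisfying $\vect a=S_1(\vect b)$ and $\vect b=S_2(\vect a)$. Because $S_2$ is the conjugate of $S_1$ by the reflection $(x,y)\mapsto(x,-y)$ (as $\vu_2$ is the reflection of $\vu_1$), uniqueness forces $\vect b$ to be the reflection of $\vect a$. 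I would therefore substitute $\vect b=(a_x,-a_y)$ into the fixed-point equation for $\vect a$, which decouples into two scalar equations $a_x(1-\tfrac1p)=\mu\cos\theta$ and $a_y(1+\tfrac1p)=\mu\sin\theta$ with $\mu\triangleq 1-\tfrac1p\vu_1^\top\vect b$; imposing the activeness constraint $\vu_1^\top\vect a=1$ pins down $\mu$ and yields $\vect a$ in closed form. A short consistency check confirms $\vu_2^\top\vect a=\tfrac{p\cos2\theta+1}{p+\cos2\theta}\le 1$ with strict inequality for $p>1$, so the projections are genuinely active along the cycle (making the affine description self-consistent) and $\vect a\notin\teachers$.

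It then remains to compute $\dist(\vect a,\teachers)$ and divide by $\norm{\teacher}$. Since $\vect a$ lies on the boundary line of $\feasible_1$ but strictly outside $\feasible_2$, i.e.\ ``beyond'' the corner of the wedge, the nearest offline-feasible point is the corner $\teacher$ itself in the relevant geometric regime; a direct computation of $\vect a-\teacher$ from the closed form above gives $\norm{\vect a-\teacher}=\tfrac{(p-1)\sin\theta}{(p+\cos2\theta)\cos\theta}$, and substituting $\cos\theta=1/\norm{\teacher}$, $\cos2\theta=2/\norm{\teacher}^2-1$ reduces $\tfrac{\dist(\vect a,\teachers)}{\norm{\teacher}}=\cos\theta\cdot\norm{\vect a-\teacher}$ exactly to the claimed $\tfrac{\norm{\teacher}^2(p-1)}{2+\norm{\teacher}^2(p-1)}\sqrt{1-1/\norm{\teacher}^2}$. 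By the reflection symmetry the same value holds along the whole cycle, so the limit over $k$ exists and is this positive constant, proving nonconvergence to $\teachers$.

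I expect the two genuinely delicate points to be, first, certifying that the trajectory starting from $\w_0=\0_\dim$ actually reaches the basin where every projection is active so that the affine/contraction picture applies globally (not merely locally at the cycle) — handled by checking that the early scaled iterates are small enough to violate each constraint — and, second, the geometric determination that the metric projection of the limit iterate onto the wedge $\teachers$ is the \emph{corner} $\teacher$ rather than a point on one of the faces; this amounts to verifying that the perpendicular foot on $\feasible_2$ would fall outside $\feasible_1$, which is exactly the condition that singles out the construction for which the stated formula is the true distance. Everything else is the routine linear algebra sketched above.
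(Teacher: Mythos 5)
Your proposal is correct, uses the same two-point mirror-symmetric construction as the paper's proof in \appref{app:scheduling} (only the symmetry axis differs), and reproduces the stated constant exactly, but your route to the limit is genuinely different. The paper translates coordinates by $-\teacher$, represents each (always-active) projection as a rank-one linear map $\I-\x_m\x_m^\top$, unrolls the full recursion, and sums a geometric series in $\beta\cos\theta$ with $\beta=\tfrac{1}{p}$, obtaining every iterate in closed form; you instead note that each half-step $S_m$ is affine with linear part $\tfrac{1}{p}(\I-\vu_m\vu_m^\top)$ of norm $\tfrac1p$, get a unique period-two cycle from Banach's fixed-point theorem, and collapse the algebra via the reflection symmetry. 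I verified your fixed-point system, the activeness value $\vu_2^\top\vect{a}=\tfrac{p\cos 2\theta+1}{p+\cos 2\theta}<1$, and the resulting $\norm{\vect{a}-\teacher}=\tfrac{(p-1)\sin\theta}{(p+\cos 2\theta)\cos\theta}$, which coincides with the paper's limit after converting angle conventions (your $\theta$ is $\tfrac{\pi}{2}-\tfrac{\theta_{\mathrm{paper}}}{2}$) and yields exactly the claimed expression under $\cos\theta=1/\norm{\teacher}$. Your route buys uniqueness and a geometric rate with the algebra localized at the cycle; the paper's buys an explicit formula for each iterate. Your first ``delicate point'' in fact dissolves: since projection onto a halfspace is nonexpansive, the \emph{true} update (halfspace projection composed with scaling by $\tfrac1p$) is globally $\tfrac1p$-Lipschitz, so the genuine cycle map is a $\tfrac{1}{p^{2}}$-contraction on all of $\reals^{2}$, and your check $\vu_2^\top\vect{a}<1$ certifies your affine cycle as its unique fixed point --- no basin analysis from $\w_0=\0_{\dim}$ is needed. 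Your second delicate point is real, and here you are more careful than the paper: the corner $\teacher$ is the metric projection of the limit onto the wedge iff $\vect{a}-\teacher$ lies in the normal cone at $\teacher$, which holds iff $\cos 2\theta\le 0$, i.e.\ $\norm{\teacher}\ge\sqrt{2}$; for $1<\norm{\teacher}<\sqrt{2}$ the nearest point is the perpendicular foot on the face of $\feasible_2$ and the true distance shrinks by the factor $\sin 2\theta$. The paper silently makes the same identification (it computes $\lim_{k}\norm{\w_k-\teacher}/\norm{\w_0-\teacher}$ and reports it as $\dist(\w_k,\teachers)/\norm{\teacher}$), which suffices for the qualitative nonconvergence claim for every $p>1$ but delivers the displayed constant only when $\norm{\teacher}\ge\sqrt{2}$; in a final write-up you should therefore either restrict to that regime or record the $\sin 2\theta$ correction, rather than leaving it as an unresolved check.
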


\begin{example}[$p<1 \Longrightarrow$
No optimality guarantees on $\w_t$]
In \figref{fig:inflation_schedule},
we run both Schemes~\ref{proc:regularized}~and~\ref{proc:scheduled_adaptive} with $p=0.55$ for one ``pass'' over the tasks. 
The strengths \emph{increase} like
$\lambda, \lambda^{0.55}, \lambda^{0.303}, \lambda^{0.166}$, and so on
(however, our analysis still requires that $\lambda\to 0$; see Remark~\ref{rmk:validity}).
Note that the iterates of both schemes agree. 
Unlike the guarantees for the constant $\lambda$ case (\thmref{thm:optimality_guarantees}), 
here 
$\norm{\w_5} > 2\norm{\teacher}$.
\end{example}

Roughly speaking, adversarial placements of examples (\ie feasible sets)
can make the iterates $(\w_{t})$ grow like
$\bigO\prn{p^{-t}}$,
even in an almost orthogonal direction to the min-norm $\teacher$.

\vspace{0.2em}

\begin{remark}
We analyzed the double exponential rule (\ref{eq:exponential_scheduler}) 
\linebreak
since ``milder'' rules
(\eg $\lambda_t \!=\! c^t\lambda$ for $c\!=\!\bigO(1)$)
do not influence \procref{proc:scheduled_adaptive}, 
thus not affecting \procref{proc:regularized} 
when ${\lambda\!\to\!0}$.
Even a rule like ${\lambda_t \!=\! \lambda^{t}}$ implies
${\lim_{\lambda\to 0}
\!
\tfrac{\ln\lambda_{t-1}}{\ln\lambda_{t}}
\!=\!1\!-\!\frac{1}{t}}
$ and becomes insignificant as $t$ increases.
\end{remark}

\subsection{Weighted Regularization}
\label{sec:weighted}

So far, we analyzed \emph{unweighted} regularizers  in \procref{proc:regularized}.
Most regularization methods for continual learning employ \emph{weighted} norms 
(\eg \citet{kirkpatrick2017overcomingEWC,zenke2017continual,aljundi2018memory}).
Now, we focus on such norms and link them to Sequential Max-Margin when $\lambda\to 0$.

Given  $\B_1,\smalldots,\B_{k}\!\succ\!\0_{\dim\times\dim}$, we define the following~scheme.

\begin{algorithm}[h!]
   \caption{Weighted Sequential Max-Margin
    \label{proc:weighted_adaptive}}
\begin{algorithmic}
   \STATE {\algmargin\bfseries Initialization:} 
   $\w_0 = \0_{\dim}$
   \STATE {\algmargin{\bfseries Iterative update for each task $t\in\cnt{k} 
   $:}} 
    \vspace{-.8em}
    \begin{align*}
    \algmargin
    \w_{t}
    =
    {\mP_{t}}\prn{\w_{t-1}}
    \triangleq
    {\argmin}_{\w}
    \, 
    & 
    \!
    \norm{\w-{\w}_{t-1}}^2_{\B_{t}}
    \\
    \suchthat
    \,
    \enskip & 
    \,y\w^\top \x \!\ge\! 1,
    \,
    \forall (\x,y)\!\in\! \dataset_{t}
    \nonumber
    \end{align*}
    \vspace{-1.8em}
\end{algorithmic}
\end{algorithm}

\begin{theorem}[Weak weighted regularization]
\label{thm:weak_weighted}
For almost all separable datasets
and bounded weighting schemes (holding ${0\!<\!\mu\!\le\! 
\sigma_{\min}(\B_t) \!\le\!
\sigma_{\max}(\B_t)
\!\le\! M\!<\!\infty}$), %
in the limit of $\lambda\!\to\!0$,
Schemes~\ref{proc:regularized}~and~\ref{proc:weighted_adaptive} coincide.
\linebreak
That is,
$\forall t\!\in\!\cnt{k}$,
it holds that
$\m@th\displaystyle
\lim_{\lambda\to0}
\frac{\w_t^{(\lambda)}}{\tnorm{\w_t^{(\lambda)}}}
=
\frac{\w_t}{\tnorm{\w_t}}
$.
\end{theorem}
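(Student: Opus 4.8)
The plan is to reduce each step of the weighted \procref{proc:regularized} to the unweighted single-task problem already analyzed in \thmref{thm:weakly_regularized}, via a per-iteration change of variables, and then to re-run the inductive scale-tracking argument behind that theorem. Fix an iteration $t$ and factor $\B_t=\B_t^{1/2}\B_t^{1/2}$ with $\B_t^{1/2}\succ\0_{\dim\times\dim}$. Substituting $\tilde{\w}=\B_t^{1/2}\w$ and $\tilde{\x}=\B_t^{-1/2}\x$ leaves every inner product invariant, $\w^\top\x=\tilde{\w}^\top\tilde{\x}$, so the exponential loss is unchanged, while $\norm{\w-\w_{t-1}^{(\lambda)}}_{\B_t}^2=\tnorm{\tilde{\w}-\B_t^{1/2}\w_{t-1}^{(\lambda)}}^2$. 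Hence in the $\tilde{\w}$-coordinates the step-$t$ weighted objective becomes an \emph{unweighted} single-task weakly-regularized problem on the transformed dataset $\{(\tilde{\x},y)\}$, biased toward $\B_t^{1/2}\w_{t-1}^{(\lambda)}$. The identical substitution applied to \procref{proc:weighted_adaptive} gives the projection identity $\mP_t(\vu)=\B_t^{-1/2}\widetilde{\mP}_t\bigprn{\B_t^{1/2}\vu}$, where $\widetilde{\mP}_t$ is the ordinary Euclidean max-margin projection onto the transformed feasible set $\{\tilde{\w}\mid y\tilde{\w}^\top\tilde{\x}\ge1\}$.

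With this reduction in hand, I would prove by induction on $t$ that $\w_t^{(\lambda)}=\ln\bigprn{\tfrac1\lambda}\w_t+\residual_t$ with $\tnorm{\residual_t}=\bigO\bigprn{t\ln\ln\tfrac1\lambda}$, where $(\w_t)$ are the \procref{proc:weighted_adaptive} iterates; the direction statement then follows exactly as in \thmref{thm:weakly_regularized}, since the residual is $\smallO\bigprn{\ln\tfrac1\lambda}$. The base case is immediate from $\w_0^{(\lambda)}=\w_0=\0_{\dim}$. For the step, transform the previous iterate to $\B_t^{1/2}\w_{t-1}^{(\lambda)}=\ln\bigprn{\tfrac1\lambda}\B_t^{1/2}\w_{t-1}+\B_t^{1/2}\residual_{t-1}$ and apply the single-task scale-and-direction estimate from the proof of \thmref{thm:weakly_regularized} in the $\tilde{\w}$-coordinates, starting from this point. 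That estimate returns $\tilde{\w}_t^{(\lambda)}=\ln\bigprn{\tfrac1\lambda}\widetilde{\mP}_t\bigprn{\B_t^{1/2}\w_{t-1}}+\tilde{\residual}_t$; using the projection identity, $\widetilde{\mP}_t\bigprn{\B_t^{1/2}\w_{t-1}}=\B_t^{1/2}\mP_t(\w_{t-1})=\B_t^{1/2}\w_t$, and transforming back with $\B_t^{-1/2}$ yields $\w_t^{(\lambda)}=\ln\bigprn{\tfrac1\lambda}\w_t+\B_t^{-1/2}\tilde{\residual}_t$.

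The crux — and the step I expect to be the main obstacle — is the scale bookkeeping across the \emph{varying} transformations, which is exactly where the boundedness hypothesis $0<\mu\le\sigma_{\min}(\B_t)\le\sigma_{\max}(\B_t)\le M<\infty$ enters. Passing the incoming residual through $\B_t^{1/2}$ inflates it by at most $\sqrt{M}$, the single-task estimate contributes an additive $\bigO\bigprn{\ln\ln\tfrac1\lambda}$ term, and transforming back through $\B_t^{-1/2}$ inflates by at most $1/\sqrt{\mu}$; over the finitely many steps this keeps $\tnorm{\residual_t}=\bigO\bigprn{t\ln\ln\tfrac1\lambda}$ and, crucially, keeps the transformed starting scale at $\Theta\bigprn{\ln\tfrac1\lambda}$ so the single-task estimate remains applicable at every step (the leading $\ln\bigprn{\tfrac1\lambda}$ factor is set by the margin-$1$ constraints and is unaffected by the geometry of $\tilde{\x}$). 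I would also need to verify that the genericity condition of the single-task estimate survives the change of variables: the transformed dataset is the image of the original under the fixed invertible map $\B_t^{-1/2}$, so an absolutely continuous sampling distribution maps to an absolutely continuous one and the measure-zero exceptional set of \thmref{thm:weakly_regularized} pulls back to a null set; taking the union over $t\in\cnt{k}$ preserves the ``almost all separable datasets'' qualifier.
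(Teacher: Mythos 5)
Your whitening reduction is sound as far as it goes, and it is a genuinely different organization from the paper's proof (\appref{app:extensions}), which stays in the original coordinates, carries $\B_t$ through the gradient computation, and invokes the $\lambda\mu_t$-strong convexity of the objective ($\nabla^2\lamloss(\w)\succeq\lambda\B_t\succeq\lambda\mu_t\I$). Your conjugation identity $\mP_t(\vu)=\B_t^{-1/2}\widetilde{\mP}_t\bigprn{\B_t^{1/2}\vu}$ is correct, and the inductive step of \thmref{thm:weakly_regularized} is indeed modular enough to accept an arbitrary bias point of the form $\ln\prn{\nicefrac{1}{\lambda}}\vu+\residual$. The genuine gap is your genericity transfer. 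That inductive step requires, at step $t$, a finite certificate $\tilde{\w}_t$ with $\sum_{\x\in\supp_t}\x e^{-\tilde{\w}_t^\top\x}=\B_t\prn{\w_t-\w_{t-1}}$ --- equivalently (as your own change of variables shows) strictly positive dual variables on \emph{every} support vector of the weighted projection, which is exactly \lemref{lem:kkt_w_tilde} and \corref{cor:finite_w_tilde} with general $\B_t$. Your pullback argument does not supply this: the null set controlled by \thmref{thm:weakly_regularized} is defined relative to the \emph{unweighted} dynamics, in which the incoming point at step $t$ is itself produced by unweighted projections on the same data. Under your reduction the coordinate system twists between steps by $\B_t^{1/2}\B_{t-1}^{-1/2}$, so the transformed process is \emph{not} an unweighted sequential run on any fixed transformed data sequence; the event ``all support duals are positive along the twisted trajectory'' couples each $\X_t$ to an incoming iterate that is a rational function of $\X_1,\dots,\X_{t-1}$ and $\B_1,\dots,\B_t$, and is simply a different event from the one whose complement \thmref{thm:weakly_regularized} makes null. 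Establishing it requires re-running the polynomial-nonvanishing construction with the weighting matrices as additional variables --- precisely case 2 of the paper's proof of \lemref{lem:kkt_w_tilde} --- and even then the conclusion holds for almost all datasets \emph{and} weighting schemes jointly, a caveat the paper makes explicit and discusses as a limitation for data-dependent weightings in \appref{sec:limitations}.

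A secondary slip is the scale bookkeeping you identify as the crux: your own accounting gives the recursion $\tnorm{\residual_t}\le\sqrt{\nicefrac{M}{\mu}}\,\tnorm{\residual_{t-1}}+\bigO\prn{\ln\ln\prn{\nicefrac{1}{\lambda}}}$ (inflation by $\sqrt{M}$ going in and by $1/\sqrt{\mu}$ coming out), which compounds \emph{geometrically} to $\bigO\bigprn{\prn{\nicefrac{M}{\mu}}^{t/2}\ln\ln\prn{\nicefrac{1}{\lambda}}}$ rather than the linear $\bigO\prn{t\ln\ln\prn{\nicefrac{1}{\lambda}}}$ you claim. The paper's proof exhibits the analogous $\prod_{n}\nicefrac{M_n}{\mu_n}$ factors and discusses the resulting exponential growth in Remark~\ref{rmk:validity} (interestingly, your symmetric whitening improves the per-step factor from $\nicefrac{M}{\mu}$ to $\sqrt{\nicefrac{M}{\mu}}$). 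Since $k$ is fixed before taking $\lambda\to0$ (Remark~\ref{rmk:limits_order}), this compounding is harmless for the directional limit asserted by the theorem, but it matters for any finite-$\lambda$ reading of the result and should be stated correctly.
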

\vspace{-.4em}
The proofs for this section are given in \appref{app:extensions} \& \ref{app:weighted}.

Practically, most regularization methods use weighting matrices based on Fisher information (FI) 
\citep{benzing2021unifying_regularization}.
We present a novel result showing that in continual \emph{regression}, such weighting schemes \emph{prevent} forgetting 
(thus forming an ``ideal continual learner'' as defined by \citet{peng2023ideal}).

\begin{proposition}
\label{prop:regression_fisher}
Using a Fisher-information-based weighting scheme 
of $\B_{t}=\sum_{i=1}^{t-1}\sum_{\x\in \dataset_{i}}\x\x^{\top}$,
there is no forgetting in (realizable)
continual linear\footnote{A similar guarantee also applies more broadly, for 
linear networks of any depth and non-linear networks in the NTK regime.} regression.
\end{proposition}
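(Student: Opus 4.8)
The plan is to recast the regression scheme, in the weak-regularization limit, as a sequence of \emph{weighted projections onto affine subspaces}. Writing $\X_m$ for the design matrix of task $m$ (rows $\x^\top$, $(\x,y)\in\dataset_m$) and $\y_m$ for its targets, the realizable solution set of task $m$ is the affine subspace $\mathcal{A}_m \triangleq \{\w\in\reals^{\dim} : \X_m\w=\y_m\}$, and the regression analog of \thmref{thm:weak_weighted} for the square loss gives $\w_t = \argmin_{\w\in\mathcal{A}_t}\norm{\w-\w_{t-1}}_{\B_t}^2$. Since for the square loss the per-task Fisher information coincides with the Hessian $\sum_{\x\in\dataset_i}\x\x^\top=\X_i^\top\X_i$, the proposed weighting is exactly $\B_t=\sum_{i=1}^{t-1}\X_i^\top\X_i$. ``No forgetting'' means $F_m(\w_t)=0$ for every $m<t$, i.e.\ $\X_m\w_t=\y_m$, so it suffices to prove by induction on $t$ that $\w_t\in\bigcap_{m=1}^{t}\mathcal{A}_m$.

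The key observation I would exploit is that this particular weighting turns the regularizer into the \emph{cumulative change of fit on all past tasks}: for any $\w$, $\norm{\w-\w_{t-1}}_{\B_t}^2=\sum_{i<t}\norm{\X_i(\w-\w_{t-1})}^2$. Hence the regularizer vanishes exactly on the joint null space $\bigcap_{i<t}\kernel(\X_i)$, which is precisely the set of update directions that leave every previously learned task untouched.

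For the induction step, assume $\w_{t-1}$ fits tasks $1,\dots,t-1$. By construction $\w_t\in\mathcal{A}_t$, so task $t$ is fit; the content is preserving the tasks $m<t$. Consider the candidate direction $\delta \triangleq \teacher-\w_{t-1}$, where $\teacher$ realizes all tasks. By the inductive hypothesis and realizability, $\X_i\delta=\y_i-\y_i=\0$ for all $i<t$, so $\norm{\delta}_{\B_t}^2=0$, while $\w_{t-1}+\delta=\teacher\in\mathcal{A}_t$ is feasible. Thus the minimal objective value is $0$, and \emph{any} minimizer $\w_t$ satisfies $\norm{\w_t-\w_{t-1}}_{\B_t}^2=0$, i.e.\ $\X_i(\w_t-\w_{t-1})=\0$ for every $i<t$; combined with $\X_i\w_{t-1}=\y_i$ this yields $\X_i\w_t=\y_i$. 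The base case $t=1$ is vacuous, closing the induction.

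I expect the main difficulties to be two technical points rather than the core argument. First, $\B_t=\sum_{i<t}\X_i^\top\X_i$ is only positive \emph{semi}-definite (and $\0$ when $t=1$), so it falls outside the $\B_t\succ\0$ regime of \thmref{thm:weak_weighted}; I would address this by noting that the zero-cost set is nonempty and that every element of it preserves past fits, making the conclusion independent of tie-breaking (e.g.\ an added $\varepsilon\I$, or a secondary min-norm selection in the $\lambda\to0$ limit). Second, one must justify the reduction to the projection form $\argmin_{\w\in\mathcal{A}_t}$ for the square loss, which follows from per-task realizability together with the $\lambda\to0$ analysis already developed. It is worth emphasizing why the same argument \emph{fails} for classification: there the per-task constraint set is a polyhedral cone rather than an affine subspace, the zero-cost feasible direction $\teacher-\w_{t-1}$ need not equal the projection's displacement, and the KKT direction can carry nonzero components outside $\bigcap_{i<t}\kernel(\X_i)$ --- which is exactly why \procref{proc:weighted_adaptive} does not guarantee no-forgetting in our classification setting.
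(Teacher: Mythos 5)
Your core induction is sound and rests on exactly the same key identity as the paper's proof: combining $\norm{\w-\w_{t-1}}_{\B_t}^2=\sum_{i<t}\norm{\X_i(\w-\w_{t-1})}^2$ with the inductive hypothesis $\X_i\w_{t-1}=\y_i$ turns the Fisher regularizer into the cumulative loss on past tasks, and joint realizability (via $\teacher$) makes the zero value attainable, forcing every minimizer to preserve all past fits. Where you diverge is the framing: you pass to the $\lambda\to0$ limit and argue on the projection form $\w_t=\argmin_{\w\in\mathcal{A}_t}\norm{\w-\w_{t-1}}_{\B_t}^2$, whereas the paper never takes a limit at all. It works directly with the regularized objective at \emph{arbitrary finite} $\lambda>0$: after substituting the inductive hypothesis, the whole objective becomes $\sum_{(\x,y)\in\dataset_{t+1}}(\x^\top\w-y)^2+\frac{\lambda}{2}\sum_{i\le t}\sum_{(\x,y)\in\dataset_i}(\x^\top\w-y)^2$, a nonnegative combination of all tasks' losses that $\teacher$ drives to zero, so any minimizer simultaneously zeroes the current loss \emph{and} the regularizer. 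This buys a strictly stronger and cleaner statement (no forgetting for every $\lambda$, not just asymptotically) and entirely sidesteps both technical points you flag, since positive semi-definiteness of $\B_t$ is never an obstacle when one only needs the objective to vanish.

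The one genuine soft spot in your write-up is the reduction step itself: you invoke ``the regression analog of \thmref{thm:weak_weighted} for the square loss,'' but no such result exists in the paper --- \thmref{thm:weak_weighted} concerns exponential-loss classification with $\B_t\succ\0$, and extending it to the square loss with a singular $\B_t$ (where the $\lambda\to0$ limit of the regularized path requires a tie-breaking analysis along the degenerate directions $\bigcap_{i<t}\kernel(\X_i)$) is a nontrivial claim you assert rather than prove. Fortunately your own zero-cost observation already contains the repair: run the identical argument at finite $\lambda$ on the unconstrained regularized objective, note the minimum value is zero, and conclude that any minimizer interpolates tasks $1,\dots,t$ --- which is precisely the paper's proof. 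With that substitution your argument closes with no limit, no projection characterization, and no dependence on tie-breaking; your closing remark about why the argument fails for classification (polyhedral cones rather than affine subspaces, so the zero-regularizer direction need not be feasible for the new constraints) is a correct and worthwhile observation consistent with the paper's \figref{fig:weighted}.
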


In contrast, below we show a simple example where such weighting schemes, 
even when using the full FI matrices,
do not prevent 
forgetting in continual linear \emph{classification}.

\begin{figure}[h!]
\vspace{-1.1em}  
    \centering
  \begin{minipage}[t!]{0.56\linewidth}
    \caption{
A 2-dimensional setting with
solely two orthogonal examples $\x_1 \!\perp\! \x_2$
in the 1\first task. \linebreak
Here, the iterate $\w_1$ holds ${\w_1^\top\x_1\!=\!\w_1^\top\x_2\!=\!1}$,
implying a FI matrix of  
${\B_2\!\propto \!
\sum_{i\in\cnt{2}} \x_i \x_i ^\top
\!=\!\I}$.
Then, FI weighting \textcolor{purple}{(dotted)} reverts to vanilla  L2 regularization \textcolor{orange}{(dashed)} which does not prevent forgetting.
\\
In contrast, the FI matrices in high dimensions are often non-invertible. 
Then, the learner can move freely in directions orthogonal to previous data ($\norm{\cdot}_{\B_t}$ is no longer a ``proper'' norm), thus avoiding forgetting.
\\
See further examples in App.~\ref{app:weighted_examples}.
    }
    \label{fig:weighted}
  \end{minipage}
  \hfill
  \begin{minipage}[t!]{0.4\linewidth}
    \vspace{1em}  
    {\includegraphics[width=.99\linewidth]{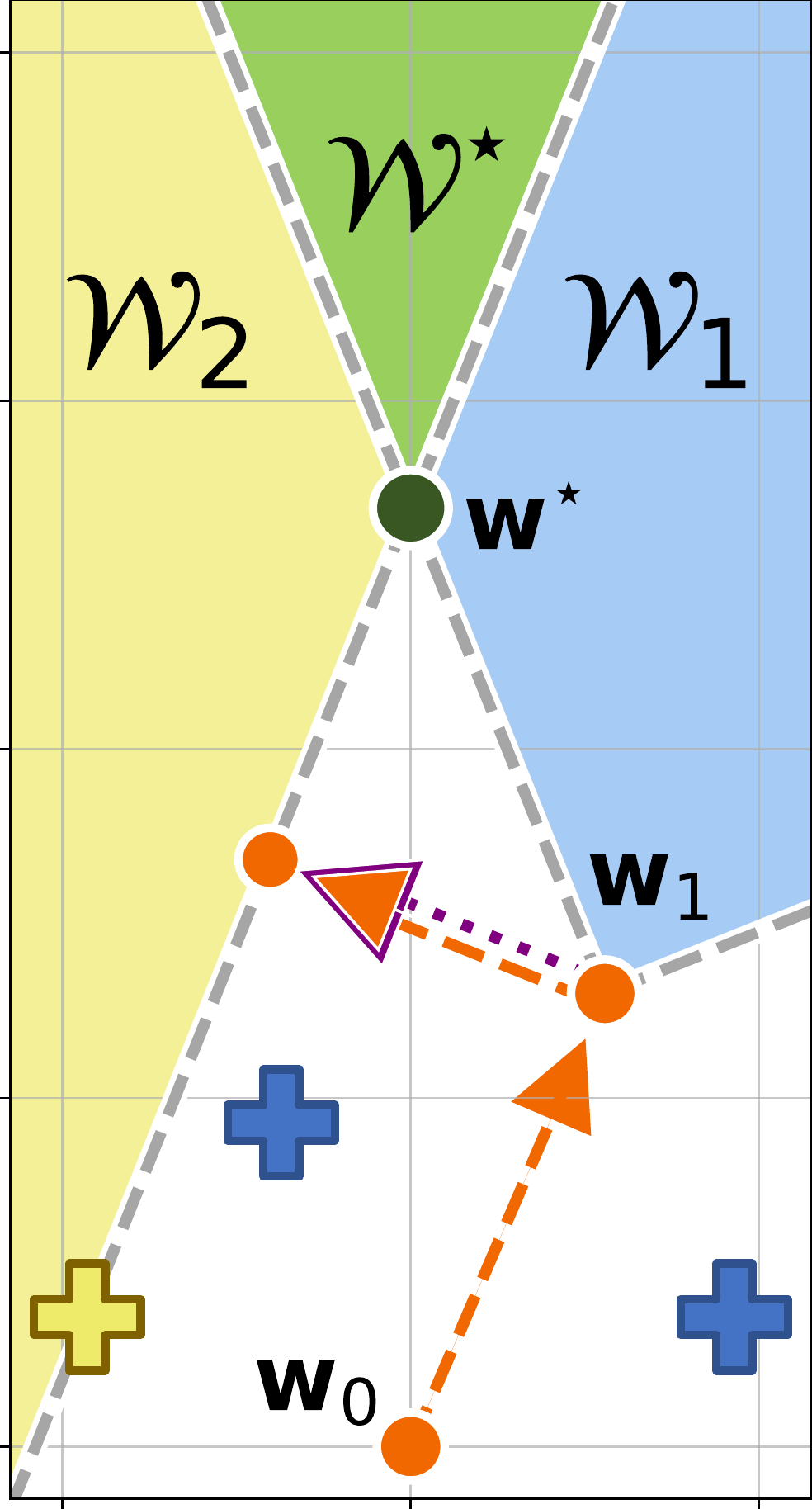}}
  \end{minipage}
\vspace{-.5em}  
\end{figure}

An interesting question arises: 
Are there general weighting schemes that can prevent forgetting in \procref{proc:weighted_adaptive}? 
We find this to be an exciting direction for future research.

\section{Related Work}

Many papers have investigated various aspects of catastrophic forgetting, including when it occurs \citep{evron2022catastrophic}, strategies to avoid it \citep{peng2023ideal}, the influence of task similarity \citep{lee2021taskSimilarity}, its impact on transferability \citep{chen2023forgetting}, and other related factors.
\linebreak
A sound understanding of forgetting can potentially  advance the continual learning field significantly.

Throughout our paper, we discussed many connections to other works from many fields.
Interestingly, our Sequential Max-Margin (SMM) scheme can be seen as a \emph{hard} variant of the Adaptive SVM algorithm \citep{yang2007adaptiveSVM},
which was previously used to \emph{practically} tackle domain adaptation and transfer learning.
(\eg in \citet{li2007regularizedAdaptation,pentina2015curriculum,tang2022IncrementalSVM}).
To the best of our knowledge, our paper is the first to highlight the connection between regularization methods for continual learning and Adaptive SVM.

For the special case where each task has one datapoint, the SMM scheme corresponds to the online Passive-Aggressive algorithm \citep{crammer2006onlinePassAgg}. 
If additionally, we consider only two tasks learned in a cyclic ordering, the regret after training task $t$ is related (but not identical) to the forgetting
(see Remark~\ref{rmk:regret}). 
Consequently, our forgetting bound in \propref{prop:cyclic_universal} matches the regret bound in their Theorem 2.

\section{Discussion and Future Work}
\subsection{Early Stopping}
\label{sec:early}

When training a \emph{single task}, it is well known that explicit regularization is related to early stopping. 
In linear classification with an exponential loss, \citet{rosset2004margin} showed that the regularization path converges 
(in the $\lambda\rightarrow 0$ limit)
to the hard-margin SVM, which is also the limit of the optimization path \cite{soudry2018journal}. 
In regression, \citet{AliKT19} connected early stopping to ridge regression.

Contrastingly, when learning a \emph{sequence of tasks}, explicit regularization and early stopping can behave differently. 
\linebreak
We first discuss a case where \emph{both} methods lead to SMM.
More generally, we demonstrate that solutions may differ.

Consider a task sequence with
$1$ sample $(\x_t, y_t)$ per task.\footnote{Equivalently, $2$ samples $(\x_t, y_t)$ and $(-\x_t, -y_t)$.
}
We minimize the exponential loss of
each task $t$ with gradient flow (GF) until 
$L_t(\w)\!\triangleq\!\exp(-y_t\x_t^{\top}\w)\!=\!\epsilon$ for
a 
fixed $\epsilon\!<\!1$, 
yielding a predictor $\w_t^{(\epsilon)}$.
Denote the 
``normalized'' predictor by ${\hat{\w}_t^{(\epsilon)}\!=\!{\w_t^{(\epsilon)}}/
\ln(\hfrac{1}{\epsilon})}$. 
Since GF stays in the data span, we have  
${\w_t^{(\epsilon)}\!=\!\w_{t-1}^{(\epsilon)}\!+\!\alpha_t y_t\x_t}$
for some $\alpha_t>0$.
The early stopping implies 
$y_t\x_t^{\top}{\w}_t^{(\epsilon)}\!=\!\ln(\hfrac{1}{\epsilon})$,
and thus 
$y_t\x_t^{\top}\hat{\w}_t^{(\epsilon)}\!=\!1$.
Hence, 
$\hat{\w}_t^{(\epsilon)}$
holds the KKT conditions of the SMM, 
\ie
${\hat{\w}_t^{(\epsilon)}\!=\!
\argmin_{\w}\!\bignorm{\w\!-\!\hat{\w}_{t-1}^{(\epsilon)}}^2 \,\suchthat\,\,y_t\x_t^{\top}\w\geq1}$
(where ${\hat{\alpha}_t \triangleq \prn{\alpha_t \,/ \ln(\hfrac{1}{\epsilon})} > 0}$ is the dual variable).\footnote{
If iteration $t$ begins and
we already have
$L_{t}({\w}_{t-1}^{(\epsilon)})\!<\!\epsilon$
and 
$y_t\x_t^{\top}\hat{\w}_{t-1}^{(\epsilon)}\!\ge\!1$,
both ES and SMM will not change their solutions.
}
In the
general case 
(\ie more than one sample per task), 
early stopping might \emph{not} agree with SMM, as depicted below.

\vspace{-.1em}

\begin{figure}[h!]
    \centering
    \begin{minipage}[t]{0.95\linewidth}
    \centering
    \includegraphics[width=.99\linewidth]{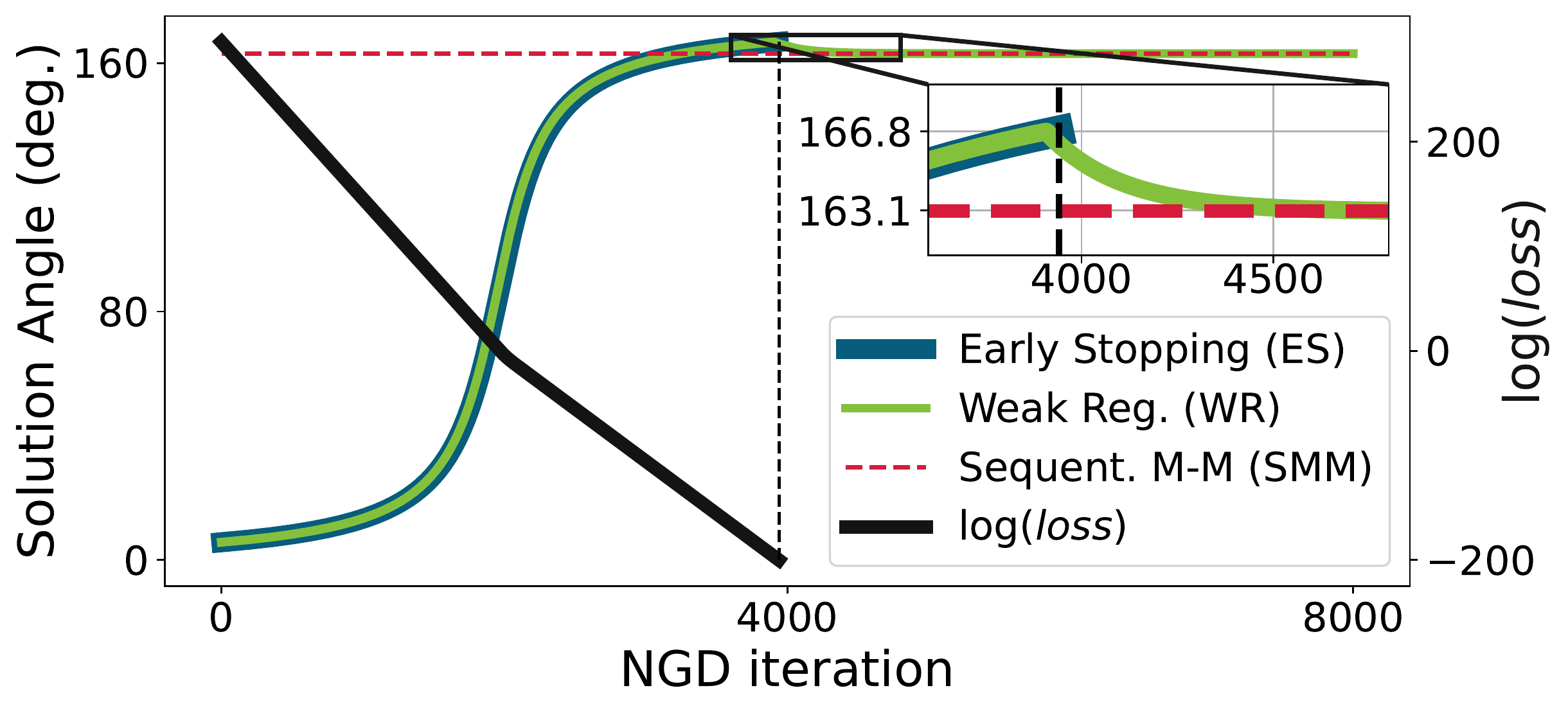}

    \vspace{-.8em}
    
    \caption{Comparing the weakly-regularized (WR) and early stopping (ES) schemes. 
    We train $2$ tasks: 
    $\dataset_1=\{(10,1)\}$, 
    $\dataset_2\!=\!\{(-10,1),(-15,0.5)\}$ 
    (where all points are labeled $+1$).
    \linebreak 
    We plot the angle of different predictors ($\angle(\w,\left[\substack{1\\0}]\right]) $) while training on the 2\nd task.
    For instance, the Sequential Max-Margin (SMM) solution for that task is $163.1^{\circ}$.
    For ES, we train both tasks with normalized GD (NGD) until their loss is ${\epsilon=e^{-200}}$. 
    Importantly, ES \emph{stops} when the loss of the 2\nd task, shown in the secondary y-axis, 
    is $e^{-200}$ and the angle is $166.8^{\circ}$.
    For WR, we set ${\lambda\!=\!e^{-200}}$ 
    and solve 
    \eqref{eq:weakly_regularized_problem} by running NGD to convergence. 
    For the 2\nd task, we initialize the NGD by $\w_1^{(\lambda)}$. 
    \linebreak
    Unlike ES, the WR solution \emph{does} converge to $163.1^{\circ}$ like SMM, as guaranteed by \thmref{thm:weakly_regularized}.
    Further~details in App.~\ref{app:early}.
    }
    \label{fig:early_stopping_vs_weakly_reg}
\end{minipage}
\end{figure}
\vspace{-.5em}

\subsection{Future Work}
There are several interesting avenues for future work, \eg extending our results 
to non-separable data (perhaps in the spirit of \citet{yang2007adaptiveSVM}),
multiclass classification with cross-entropy loss
(similarly to Appendix~4.1 in \citet{soudry2018journal}),
or non-linear models. 
One can also try to derive forgetting bounds for weighted regularization schemes and look for optimal weighting matrices.
Another challenging but rewarding avenue is to extend our analysis to \emph{finite} regularization strengths $(\lambda_t)$.
Finally, it is interesting to understand the exact algorithmic bias for early stopping in continual learning and its relation to explicit regularization.

\vspace{-.2em}

\section*{Acknowledgements}
We thank Lior Alon (MIT) for the fruitful discussions.
The research of DS was Funded by the European Union {(ERC, A-B-C-Deep, 101039436)}. Views and opinions expressed are however those of the author only and do not necessarily reflect those of the European Union or the European Research Council Executive Agency (ERCEA). Neither the European Union nor the granting authority can be held responsible for them. DS also acknowledges the support of Schmidt Career Advancement Chair in AI.
NS was partially supported by the Simons Foundation and NSF-IIS/CCF awards.

\clearpage

\bibliography{99_biblio}
\bibliographystyle{icml2023}

\clearpage

\appendix

\onecolumn

\section{Comparison: Continual Linear Classification vs. Continual Linear Regression}
\label{app:comparison}

\begin{table*}[ht!]
\vskip 0.1in
\begin{center}
\begin{small}
\begin{tabular}{|l|c|c|}
\toprule
\\[-7pt]
& 
\begin{sc}
\begin{tabular}{@{}c@{}}
Continual Linear Classification
\\
\textsc{(Ours)}
\end{tabular}
\end{sc}
&
\begin{sc}
\begin{tabular}{@{}c@{}}
Continual Linear Regression
\\
\textsc{\citep{evron2022catastrophic}}
\end{tabular}
\end{sc}
\\[8pt]
\midrule
\\[-5pt]
\begin{sc}
\begin{tabular}{@{}l@{}}
Fundamental mapping
\\
(proven algorithmic bias)
\end{tabular}
\end{sc} & 
\begin{tabular}{@{}c@{}}
   Continual linear classification with
   \\
   weakly-regularized exponential losses 
   \\
   converges to 
   \\
   a Sequential Max-Margin scheme
   \\
   (\thmref{thm:weakly_regularized})
\end{tabular}
&
\begin{tabular}{@{}c@{}}
   Continual linear regression 
   \\with unregularized losses, 
   \\solved sequentially with vanilla (S)GD, 
   \\implicitly performs successive projections
   \\
   (their Eq. (5))
\end{tabular}
\\[25pt]
\midrule
\\[-5pt]
\begin{sc}
Feasible sets
\end{sc} & 
\begin{tabular}{@{}c@{}}
   Closed, convex, and affine 
   \\
   polyhedral cones
   \\
   (Assumption~\ref{asm:separability})
\end{tabular}
&
\begin{tabular}{@{}c@{}}
   Closed affine subspaces
   \\
   (their Eq. (4))
\end{tabular}
\\[15pt]
\midrule
\\[-5pt]
\begin{sc}
Projection operators
\end{sc} & 
\begin{tabular}{@{}c@{}}
       Projection onto Convex Sets (POCS);
       \\
       Challenging to analyze
       \\
       (\eqref{eq:adaptive_dynamics})
 \end{tabular}
&
\begin{tabular}{@{}c@{}}
       Affine projections;
       \\
       Easy to analyze
       \\
       (their Eq. (5))
 \end{tabular}
\\[15pt]
\midrule
\\[-5pt]
\textsc{Optimality guarantees}
&
    \begin{tabular}{@{}c@{}}
    $\norm{\w_t} \le 2\norm{\teacher}$
    \\
    (\lemref{thm:optimality_guarantees})
    \end{tabular}
&
    \begin{tabular}{@{}c@{}}
    $\norm{\w_t} \le \norm{\teacher}$
    \\
    $\w_t\in\teachers\Longrightarrow\w_t=\teacher$
    \end{tabular}
\\[12pt]
\midrule
\\[-5pt]
\textsc{Adversarial construction}
    &
    \multicolumn{2}{c|}{
        \begin{tabular}{@{}c@{}}
       Arbitrarily bad quantities 
       \\
       $\m@th\displaystyle
       1 =
        \frac{1}{\norm{\teacher}^2 R^2}
       \lim_{T\to\infty}\max_{m\in\cnt{T}}F_m(\w_t)
       \le
       \frac{1}{\norm{\teacher}^2}
       \lim_{T\to\infty}\dist^2\prn{\w_t,\teachers}\le 1$
       \\
       (our \secref{sec:adversarial},
       their Section~4.2)
        \end{tabular}
    }
\\[20pt]
\midrule
\\[-5pt]
\textsc{Fisher-based regularization}
    &
    \begin{tabular}{@{}c@{}}
       Does not prevent forgetting
       \\
       (\figref{fig:weighted}, \appref{app:weighted_examples})
     \end{tabular}
    &
    \begin{tabular}{@{}c@{}}
       Prevents forgetting 
       \\
       (our \propref{prop:regression_fisher})
     \end{tabular}
\\[10pt]
\midrule
\\[-5pt]
\begin{sc}
\begin{tabular}{@{}l@{}}
       Linear regularity
 \end{tabular}
\end{sc}
    &
    \begin{tabular}{@{}c@{}}
       Holds
       \\
       $\m@th\displaystyle
       \dist^2 \!\prn{\w_t,\teachers} \le 
       4\!\norm{\w}^2 \!R^2 \!\max_{m\in\cnt{T}}
       \!
       \dist^2\!\prn{\w_t,\feasible_m}$
       \\
       (\lemref{lem:regularity})
     \end{tabular}
    &
    \begin{tabular}{@{}c@{}}
       Does not hold
     \end{tabular}
\\[20pt]
\midrule
\\[-.5em]
\begin{sc}
\begin{tabular}{@{}l@{}}
   Nontrivial bounds on 
   \\
   distance to the feasible set
   \\
   $\dist^2\prn{\w_t,\teachers}$
   depending on the
   \\
   problem complexity $\norm{\teacher}^2\! R^2$
    \\[10pt]
    \midrule
    \\[-.5em]
   Nontrivial bounds on 
   \\
   forgetting depending on the
   \\
   problem complexity $\norm{\teacher}^2\! R^2$
 \end{tabular}
\end{sc}
    &
    \begin{tabular}{@{}c@{}}
       Possible due to regularity;
       \\
       Exponential decrease (linear convergence)
       \\
       (Table~\ref{tbl:summary}, Theorems~\ref{thm:2_tasks_cyclic}~and~\ref{thm:random_rates})
     \end{tabular}
    &
    \begin{tabular}{@{}c@{}}
       \\
       Impossible
       \\
       (\eg their Section~5.1)
        \\[10pt]
        \midrule
        \\[-.5em]
        Possible universal bounds 
        \\
        (see below)
        \\
     \end{tabular}
%
%
%
\\[48pt]
\midrule
\\[-5pt]
\begin{sc}
\begin{tabular}{@{}l@{}}
       Universal bounds 
       \\
       on forgetting 
       \\
       under cyclic orderings
       \\ 
       (independent of $\norm{\teacher}^2\! R^2$)
 \end{tabular}
\end{sc} 
&
\begin{tabular}{@{}c@{}}
$\m@th\displaystyle
{\frac{2T^2}{\sqrt{k}}}$ 
\\
(Proposition~\ref{prop:cyclic_universal})
\end{tabular}
&
\begin{tabular}{@{}c@{}}
$\m@th\displaystyle
\min\prn{\frac{T^2}{\sqrt{k}}, \frac{T^2 \dim}{2k}}$
\\
(their Theorem 11)
\end{tabular}
\\[20pt]
\midrule
\\[-5pt]
\begin{sc}
\begin{tabular}{@{}l@{}}
       Universal bounds 
       \\
       on forgetting 
       \\
       under random orderings
       \\ 
       (independent of $\norm{\teacher}^2\! R^2$)
 \end{tabular}
\end{sc} 
&
Unclear if possible
&
\begin{tabular}{@{}c@{}}
$\m@th\displaystyle
\frac{9 \dim}{k}$
\\
(their Theorem 13)
\end{tabular}
\\[20pt]
\bottomrule
\end{tabular}
\end{small}
\end{center}
\end{table*}

\newpage




\newpage

\section{Proofs for Algorithmic Bias in Regularization Methods
(\secref{sec:algorithmic_bias})}
\label{app:algorithmic_bias}

\begin{remark}[Simplification]
\label{rmk:simplification}
For ease of notation, 
throughout our appendices, we redefine the samples so as to subsume their labels. 
That is, we handle only \emph{positive} labels by redefining each $y\x\longmapsto\x$.
This notation is common in theoretical papers 
(\eg \citet{soudry2018journal}).
\end{remark}

\medskip

\subsection{Auxiliary Lemmas}

We first present two auxiliary results that we need for our main proof.
\begin{lemma}
\label{lem:strong-convexity}
Let $f:\reals^{\dim}\to\reals$ be a $\mu$-strongly convex objective function
(holding $\nabla^2 f(\w)\succeq \mu\I$
for some $\mu>0$).
Then, for any $\w\in\reals^{\dim}$, the (Euclidean) distance between $\w$ and $\w^{\star}$ (the minimizer of the objective $f$), can be upper bounded by:
$$\norm{\w-\w^{\star}} \le \frac{1}{\mu} \norm{\nabla f(\w)}\,.$$
\end{lemma}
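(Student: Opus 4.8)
The plan is to deduce the bound from the first-order consequence of $\mu$-strong convexity, namely strong monotonicity of the gradient, and then exploit the optimality condition $\nabla f(\w^\star) = \0_{\dim}$.

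First I would convert the Hessian hypothesis $\nabla^2 f \succeq \mu\I$ into the monotonicity inequality
$$
\iprod{\nabla f(\w) - \nabla f(\w^\star)}{\w - \w^\star} \ge \mu \norm{\w - \w^\star}^2 \,.
$$
This follows by writing $\nabla f(\w) - \nabla f(\w^\star) = \int_0^1 \nabla^2 f\bigprn{\w^\star + s(\w - \w^\star)}(\w-\w^\star)\,ds$ via the fundamental theorem of calculus, taking the inner product with $\w - \w^\star$, and bounding the integrand below by $\mu\norm{\w-\w^\star}^2$ using $\nabla^2 f \succeq \mu\I$ pointwise along the segment.

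Next, since $\w^\star$ minimizes the (convex, differentiable) $f$ over all of $\reals^\dim$, its gradient vanishes there, i.e.\ $\nabla f(\w^\star) = \0_{\dim}$. Substituting into the monotonicity inequality gives $\iprod{\nabla f(\w)}{\w - \w^\star} \ge \mu\norm{\w - \w^\star}^2$. Applying Cauchy--Schwarz to the left-hand side then yields $\norm{\nabla f(\w)}\,\norm{\w-\w^\star} \ge \mu\norm{\w-\w^\star}^2$. Dividing through by $\norm{\w-\w^\star}$ (the case $\w=\w^\star$ being trivial, since both sides of the claim are then zero) and rearranging produces the claimed inequality $\norm{\w-\w^\star}\le \tfrac1\mu\norm{\nabla f(\w)}$.

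The only genuinely non-cosmetic step is the first one --- turning the second-order Hessian bound into the first-order gradient inequality --- but even this is routine once one integrates along the segment joining $\w^\star$ and $\w$. Everything afterward is Cauchy--Schwarz together with the optimality condition, so I expect no real obstacle; the lemma is essentially a textbook consequence of strong convexity, recorded here as an auxiliary tool for the main algorithmic-bias argument.
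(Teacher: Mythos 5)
Your proof is correct, and it takes a genuinely different route from the paper's. You derive the first-order strong-monotonicity inequality
$\iprod{\nabla f(\w)-\nabla f(\w^{\star})}{\w-\w^{\star}}\ge\mu\norm{\w-\w^{\star}}^2$
by integrating the Hessian along the segment joining $\w^{\star}$ and $\w$, then kill $\nabla f(\w^{\star})$ by optimality and finish with Cauchy--Schwarz and a division by $\norm{\w-\w^{\star}}$ (with the trivial case handled). The paper instead works through function values: it invokes the strong-convexity lower bound
$f(\y)\ge f(\x)+\nabla f(\x)^{\top}(\y-\x)+\tfrac{\mu}{2}\norm{\y-\x}^2$
twice, once with $\x=\w^{\star}$ to get $f(\w)-f(\w^{\star})\ge\tfrac{\mu}{2}\norm{\w-\w^{\star}}^2$, and once minimized over $\y$ to get the Polyak--{\L}ojasiewicz-type bound $f(\w)-f(\w^{\star})\le\tfrac{1}{2\mu}\norm{\nabla f(\w)}^2$, then chains the two. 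Your argument is shorter and more elementary (one monotonicity inequality plus Cauchy--Schwarz, no detour through the suboptimality gap), and it makes transparent that only the first-order characterization of strong convexity is needed --- the Hessian hypothesis enters solely to establish monotonicity, for which your integral step does require $f$ to be twice differentiable with an integrable Hessian along the segment, a condition the lemma's hypothesis already supplies. The paper's route, while slightly longer, yields the two function-value inequalities as intermediate byproducts and matches the textbook presentation it cites; both deliver the same constant $1/\mu$, so nothing downstream in the paper is affected by the choice.
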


\begin{proof}
This lemma is a known convex optimization result
(\eg see Lemma~10 in \citet{sidford2019ms}).
We prove it here for the sake of completeness.

We make use of the following property of strongly convex functions.
\begin{quote}
\vspace{-.9cm}
\item
\paragraph{Property~(9.8) from \citet{boyd2004convex}.}
    Let $f:\reals^{\dim}\to\reals$ be a $\mu$-strongly convex objective function
    (holding $\nabla^2 f(\w)\succeq \mu\I$
    for some $\mu>0$).
    Then, for any $\x,\y\in\reals^{\dim}$, we have
    $$
    f(\y)
    \ge
    f(\x)
    +
    \nabla f(\x)^\top
    \prn{\y-\x}
    +
    \frac{\mu}{2}
    \norm{\y-\x}^2
    \,.
    $$
\end{quote}

Using the above property, we prove our lemma, which is a slightly stronger result than Property~(9.11) in \citet{boyd2004convex}.
Our proof follows the one by \citet{sidford2019ms},
and is brought here for completeness.

First, we set $\x=\w^{\star},\y=\w$ (from our lemma).
Since $\w^{\star}$ is a (global) minimizer, its gradient is zero,
\ie $\nabla f(\w^{\star})\!=\!\0_{\dim}$.
\linebreak
We thus get that
\begin{align*}
f(\w)
&
\ge
f(\w^{\star})
+
\underbrace{\nabla f(\w^{\star})^\top}_{=\0_{\dim}}
\prn{\w-\w^{\star}}
+
\frac{\mu}{2}
\norm{\w-\w^{\star}}^2
\\
f(\w)
-
f(\w^{\star})
&
\ge
\frac{\mu}{2}
\norm{\w-\w^{\star}}^2
\,.
\end{align*}
Using the optimality of $\w^{\star}$, we get that
\begin{align*}
    \forall \w\in\reals^{\dim}:~
    f(\w^{\star})
    =
    \min_\y
    f(\y)
    &\ge
    \min_\y
    \prn{
        f(\w)
        +
        \nabla f(\w)^\top
        \prn{\y-\w}
        +
        \frac{\mu}{2}
        \norm{\y-\w}^2    
    }
    \\
    &
    =
    f(\w)
    +
    \min_\y
    \prn{
        \nabla f(\w)^\top
        \prn{\y-\w}
        +
        \frac{\mu}{2}
        \norm{\y-\w}^2    
    }\,.
\end{align*}
Then, by plugging in the minimizer of the right term, we get
\begin{align*}
    \forall \w\in\reals^{\dim}:~
    f(\w^{\star})
    &
    \ge
    f(\w)
    +
    \nabla f(\w)^\top
    \Bigprn{\w-\frac{1}{\mu}\nabla f(\w)-\w}
    +
    \frac{\mu}{2}
    \Bignorm{\w-\frac{1}{\mu}\nabla f(\w)-\w}^2   
    \\
    &
    =
    f(\w)
    -\frac{1}{\mu}
    \nabla f(\w)^\top
    \nabla f(\w)
    +
    \frac{\mu}{2}
    \Bignorm{\frac{1}{\mu}\nabla f(\w)}^2
    =
    f(\w)
    -\frac{1}{\mu}
    \bignorm{\nabla f(\w)}^2  
    +
    \frac{1}{2\mu}
    \bignorm{\nabla f(\w)}^2   
    \\
    &
    =
    f(\w)
    -
    \frac{1}{2\mu}
    \bignorm{\nabla f(\w)}^2   
    \\
    \frac{1}{2\mu}
    \bignorm{\nabla f(\w)}^2   
    &
    \ge
    f(\w)
    -
    f(\w^{\star})\,.
\end{align*}
Overall, we showed that
$\forall\w\in\reals^{\dim}$, it holds that
\begin{align*}
    \frac{1}{2\mu}
    \bignorm{\nabla f(\w)}^2   
    \ge
    f(\w)
    -
    f(\w^{\star})
    \ge
    \frac{\mu}{2}
    \norm{\w-\w^{\star}}^2
    ~~~\Longrightarrow~~~
    \frac{1}{\mu^2}
    \bignorm{\nabla f(\w)}^2   
    \ge
    \norm{\w-\w^{\star}}^2\,,
\end{align*}
as required.
\end{proof}

\newpage

\begin{lemma}
\label{lem:kkt_w_tilde}
Let $c_1,\dots,c_k\in \reals_{\ge 0}$
and let $\B_1,\dots, \B_k \succ \0_{\dim\times \dim}$.
Consider sequentially solving $k$ separable
tasks $(\dataset_t=(\X_t, y_t=\1))_{t\in\cnt{k}}$ 
(recall the simplification in Remark~\ref{rmk:simplification})
using the following iterative update rule:
\begin{align}
\label{eq:iterative_general_rule}
    \w_{0} &=\0_{\dim}
    \nonumber
    \\
    \forall t\in\cnt{k}:~~~\w_{t}
    &=
    {\argmin}_{\w}
    \!
    \norm{\w-c_t\w_{t-1}}^2_{{\B_{t}}}
    \enskip~
    \suchthat
    \,
    \enskip 
    \,\w^\top \x \ge 1,
    \,
    \forall \x\in \X_{t}~.
\end{align}
Then, 
for almost all datasets sampled from $k$ absolutely continuous distributions, 
when $\w_t \neq c_{t}\w_{t-1}$,
the unique dual solution 
$\valpha_t\in\reals_{\ge0}^{\abs{\dataset_{t}}}$
satisfying the KKT conditions 
of \eqref{eq:iterative_general_rule},
holds:
\begin{align*}
    \B_t(\w_t - c_{t}\w_{t-1})
    =
    \sum_{\x\in\dataset_{t}}
    \x
    \alpha_t(\x),
    \quad
    \text{AND}
    \quad
    \Bigprn{
    \forall \x\in\dataset_{t}:~
    \prn{\alpha_t(\x)>0
    \wedge 
    \w_t^\top\x = 1
    }
    ~\text{OR}~
    \prn{\alpha_t(\x)=0
    \wedge 
    \w_t^\top\x > 1
    }
    }\,.
\end{align*}
(That is, there is no support vector for which $\alpha_t(\x)=0$.)
\end{lemma}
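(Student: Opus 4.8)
The plan is to first read off the ordinary KKT system for \eqref{eq:iterative_general_rule} and then upgrade it, for almost all datasets, to the two genericity assertions in the claim: strict complementary slackness (no active point with zero multiplier) and uniqueness of $\valpha_t$. First I would establish the baseline. Since $\B_t\succ\0_{\dim\times\dim}$, the objective is a strongly convex quadratic whose gradient is a positive multiple of $\B_t\prn{\w-c_t\w_{t-1}}$, and the feasible set is a nonempty polyhedron (the task is separable), so \eqref{eq:iterative_general_rule} has a unique primal minimizer $\w_t$. Polyhedral constraints automatically satisfy a constraint qualification, so the KKT conditions are necessary and sufficient: there exists $\valpha_t\ge\0$ with the stationarity identity $\B_t\prn{\w_t-c_t\w_{t-1}}=\sum_{\x\in\dataset_t}\x\,\alpha_t(\x)$ (the constant from differentiating the quadratic absorbed into $\valpha_t$), together with primal and dual feasibility and complementary slackness $\alpha_t(\x)\prn{\w_t^\top\x-1}=0$. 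I would also note that whenever $\w_t\neq c_t\w_{t-1}$ the active set $\{\x:\w_t^\top\x=1\}$ is nonempty, since otherwise the unconstrained minimizer $c_t\w_{t-1}$ would be feasible and hence equal $\w_t$.

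The substance is the genericity upgrade, which I would reduce to a single task. Taking a union bound over the finitely many $t\in\cnt{k}$, and using that the $k$ distributions are independent, it suffices to fix an arbitrary value $\vu=\w_{t-1}$ — a deterministic, measurable function of the earlier, independent tasks — and argue over the absolutely continuous draw of $\X_t$ with $\vu$ held fixed. All the vectors and hyperplanes below then depend on $\vu$ only as a fixed parameter.

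For strict complementary slackness, let $\supp\subseteq\dataset_t$ denote the \emph{essential} support set, i.e.\ the points with $\alpha_t(\x)>0$. Conditioned on the points of $\supp$ and on $\vu$, the solution $\w_t$ coincides with the unique minimizer $\w_t(\supp)$ of the equality-constrained subproblem $\min_\w\norm{\w-c_t\vu}^2_{\B_t}$ subject to $\w^\top\x=1,\,\forall\x\in\supp$; this is a fixed deterministic vector, and it is nonzero whenever $\supp\neq\emptyset$ (as $\x^\top\w_t(\supp)=1$ on $\supp$). A degenerate active point would be some $\x^*\in\dataset_t\setminus\supp$ with $(\x^*)^\top\w_t(\supp)=1$, a single nontrivial affine equation in $\x^*$, which holds with probability zero for the absolutely continuous $\x^*$ drawn independently of the points in $\supp$. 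A union bound over the finitely many candidate pairs $(\supp,\x^*)$ then gives strict complementarity almost surely, so the active set equals $\supp$. For uniqueness of $\valpha_t$, the multipliers of strictly inactive constraints vanish by complementary slackness, so I only need the active vectors to be linearly independent; this I would get from two standard genericity facts for absolutely continuous data — any at most $\dim$ points are linearly independent almost surely, and no $\dim+1$ points lie on a common hyperplane $\{\x:\w^\top\x=1\}$ almost surely (any $\dim$ of them determine such a hyperplane, and an extra point lands on it only on a measure-zero set). Hence $\abs{\supp}\le\dim$ with $\supp$ linearly independent, making the representation $\B_t\prn{\w_t-c_t\w_{t-1}}=\sum_{\x\in\supp}\x\,\alpha_t(\x)$ unique.

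The hard part will be making the conditioning in the strict-complementarity step airtight: the identity $\supp=\{\x:\alpha_t(\x)>0\}$ is itself data-dependent, so I must phrase the bad event as a union, over the finitely many candidate support sets $\supp$ and candidate degenerate points $\x^*$, of the events $\{(\x^*)^\top\w_t(\supp)=1\}$, and verify that on the bad event the \emph{actual} minimizer genuinely equals the deterministic $\w_t(\supp)$. This is exactly where strong convexity of the equality-constrained subproblem (guaranteeing $\w_t(\supp)$ is well defined regardless of the geometry of $\supp$) and the observation $\w_t(\supp)\neq\0_{\dim}$ (guaranteeing the resulting constraint on $\x^*$ is a genuine measure-zero hyperplane) are essential.
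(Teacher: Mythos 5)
Your proposal is correct in substance, but it takes a genuinely different route from the paper's. The paper makes the data-dependence fully explicit and algebraic: it solves the KKT system in closed form, shows recursively that $\w_{t-1}$ and the restricted dual vector $\valpha_{\supp_{t}}$ are rational functions of $\X_1,\dots,\X_t$, observes that a vanishing dual coordinate forces the data to be a root of a numerator polynomial, and then rules out the zero polynomial by exhibiting an explicit task sequence (rescaled standard basis vectors with rapidly decreasing scales $\beta_t$) on which every dual coordinate is strictly positive, finishing with a union bound over iterations and support sizes. You replace all of this with a conditioning/disintegration step: fix the past (hence $\vu=\w_{t-1}$) and a candidate essential support $S$, use sufficiency of KKT for the strongly convex equality-constrained subproblem to identify $\w_t$ with $\w_t(S)$ --- a function of $(\vu,\x_S)$ alone --- and note that a degenerate active point must land on the fixed hyperplane $\left\{\x:\x^\top\w_t(S)=1\right\}$, nontrivial since $\w_t(S)\neq\0_{\dim}$, an event of conditional probability zero. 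Your route is shorter, dispenses with the delicate explicit construction, and actually yields a cleaner conclusion in the weighted case: the paper's treatment of general $\B_1,\dots,\B_k$ argues measure zero \emph{jointly} in the $(\X_t,\B_t)$-variables (``almost all weighting schemes''), whereas your argument works for every \emph{fixed} $\B_t\succ\0_{\dim\times\dim}$, which matches the lemma as stated. The trade-off is that your argument consumes fresh randomness of task $t$ relative to the past, which is exactly what is unavailable when tasks recur ($\X_t$ equal to an earlier $\X_{t'}$); the paper's polynomial framework is what its authors partially extend to recurrence and to data-dependent weighting in their limitations appendix (\appref{sec:limitations}), so it buys a robustness your conditioning cannot.

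Two small repairs before this is airtight. First, strong convexity does \emph{not} make the equality-constrained subproblem ``well defined regardless of the geometry of $S$'': the system $\x^\top\w=1$, $\forall\x\in S$, can be infeasible (take $\x'=2\x$, giving $\w^\top\x=1$ and $2\,\w^\top\x=1$). This is harmless --- on your bad event the subproblem is feasible because $\w_t$ itself satisfies the equalities, and for infeasible $S$ you may set $\w_t(S)\triangleq\0_{\dim}$, making the corresponding event empty --- but it should be stated. Relatedly, nonemptiness of the \emph{essential} support (not merely the active set) needs the stationarity condition: if all multipliers vanish then $\B_t(\w_t-c_t\vu)=\0_{\dim}$, contradicting $\w_t\neq c_t\w_{t-1}$. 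Second, ``drawn independently of the points in $\supp$'' silently assumes independence of samples \emph{within} a task; the hypothesis only gives each dataset an absolutely continuous joint law, so replace independence by the standard fact that the conditional law of $\x^*$ given the remaining columns is almost surely absolutely continuous --- the hyperplane event still has conditional probability zero, and Fubini finishes the union bound.
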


\bigskip

\begin{proof}[Proof for \lemref{lem:kkt_w_tilde}]
The proof follows the techniques of the proof of Lemma 12 in Appendix B of \citet{soudry2018journal}.

Here, we focus only on task sequences where $\w_t = \mP_t({c_{t}\w_{t-1}}) \neq c_{t}\w_{t-1},~\forall t\in\cnt{k}$
(we can always reduce to such cases by simply removing from the sequence any task $t$ for which $w_t=\w_{t-1}$).
Thus, $\w_t$ must lie on the boundary of the $t$\nth feasible set, as can be seen from the unique best approximation (or nearest point) property.
Consequently, the support set
${\supp_{t}\triangleq\left\{
\x \in \X_t \mid \w_t^\top \x = 1
\right\}}$ is nonempty.

For almost all datasets (all except measure zero), 
no more than $\dim$ datapoints will be on the same hyperplane (\eg the $\w_t^\top \x = 1$ hyperplane). 
Therefore, for any task $t\in\cnt{k}$ there can be at most $D$ support vectors, \ie $|\supp_{t}|\leq D$. 
Also, for almost all datasets, any set of at most $D$ vectors (\eg $\supp_{t}$) is linearly independent.

Let $\valpha_t\in\reals^{\abs{\dataset_t}}_{\ge0}$ be the dual solution satisfying the KKT conditions 
of \eqref{eq:iterative_general_rule} in the $t$\nth task.
We denote the matrix whose columns are the support vectors by $\X_{\supp_{t}}\in\reals^{\dim\times\abs{\supp_{t}}}$
and the dual solution, restricted to its components corresponding to support vectors, by $\valpha_{\supp_{t}}$.
Due to the complementary slackness,
datapoints outside of the support set (\ie $\w_t^\top\x>1$), must have a corresponding zero dual variable $\alpha_t(\x)=0$.
We now recall that $\B_t$ is invertible, and rewrite the stationarity condition as,
\begin{align}
\label{eq:kkt}
\B_t(\w_t - c_{t}\w_{t-1})
    &=
    \sum_{\x\in\dataset_{t}}
    \x
    \alpha_t(\x)
    =
    \sum_{\x\in\supp_{t}}
    \x
    \alpha_t(\x)
    +
    \!\!\!\!
    \cancel{
    \sum_{\x\in\dataset_{t}:\,\w_t^\top\x>1}
    \!\!\!
    \x
    \underbrace{
    \alpha_t(\x)
    }_{
    =0
    }
    }
    =
    \X_{\supp_{t}}\valpha_{\supp_{t}}
    \nonumber
    \\
    \w_{t}-c_{t}\w_{t-1}	
 &=\B_{t}^{-1}\X_{\supp_{t}}\valpha_{\supp_{t}}\,.
\end{align}

Multiplying from the left by the full row rank matrix $\X_{\supp_{t}}^{\top}$, we get on the one hand that $\valpha_{\supp_{t}}$ is uniquely defined as,
\begin{align}
\underbrace{\X_{\supp_{t}}^{\top}\w_{t}}_{=\1}-c_{t}\X_{\supp_{t}}^{\top}\w_{t-1}
&
=\X_{\supp_{t}}^{\top}\B_{t}^{-1}\X_{\supp_{t}}\valpha_{\supp_{t}}
\nonumber
\\
\valpha_{\supp_{t}}	
&=
\left(\X_{\supp_{t}}^{\top}\B_{t}^{-1}\X_{\supp_{t}}\right)^{-1}\1-c_{t}\left(\X_{\supp_{t}}^{\top}\B_{t}^{-1}\X_{\supp_{t}}\right)^{-1}\X_{\supp_{t}}^{\top}\w_{t-1}
~,
\label{eq:alpha_kkt}
\end{align}
where $\X_{\supp_{t}}^{\top}\B_{t}^{-1}\X_{\supp_{t}}$ is invertible because $\B_t,\B_t^{-1}$ are positive definite invertible matrices and $\X_{\supp_{t}}$ is full column rank as explained above (to see this, notice that $\B_t^{-1}$ has a Cholesky decomposition such that $\sqrt{\B_t^{-1}}$ preserves the full column rank of $\X_{\supp_{t}}$).
On the other hand, by substituting \eqref{eq:alpha_kkt} back into the condition in \eqref{eq:kkt}, 
we get that
$\forall t\in\cnt{k}$,
\begin{align}
\w_{t}	
&
=
\B_{t}^{-1}\X_{\supp_{t}}\valpha_{\supp_{t}}+c_{t}\w_{t-1}
\nonumber
\\
&
=\B_{t}^{-1}\X_{\supp_{t}}\left(\left(\X_{\supp_{t}}^{\top}\B_{t}^{-1}\X_{\supp_{t}}\right)^{-1}\1-
c_{t}\left(\X_{\supp_{t}}^{\top}\B_{t}^{-1}\X_{\supp_{t}}\right)^{-1}\X_{\supp_{t}}^{\top}\w_{t-1}\right)+c_{t}\w_{t-1}
\nonumber
\\
&
=\B_{t}^{-1}\X_{\supp_{t}}\left(\X_{\supp_{t}}^{\top}\B_{t}^{-1}\X_{\supp_{t}}\right)^{-1}\1+
c_{t}\left(\I-\B_{t}^{-1}\X_{\supp_{t}}\left(\X_{\supp_{t}}^{\top}\B_{t}^{-1}\X_{\supp_{t}}\right)^{-1}\X_{\supp_{t}}^{\top}\right)\w_{t-1}~.
\label{eq:w_recursive}
\end{align}

\pagebreak

Importantly, the above implies that, recursively, $\w_{t-1}$ is a rational function in the components of $\X_1, \ldots, \X_{t-1}$
(where $c_1,\ldots,c_{t}$ and $\B_1,\dots,\B_{t}$ are \emph{given}).
Clearly, the same holds for 
$\valpha_{\supp_{t}}$ (\eqref{eq:alpha_kkt})
which entirely depends on $\X_t,\B_t$, and $\w_{t-1}$.
\linebreak
Hence, its entries can be expressed as
$\m@th\displaystyle{(\valpha_{\supp_{t}})_n=
\hfrac{p^{(t)}_n(\X_1,\ldots,\X_t)}{q^{(t)}_n(\X_1,\ldots,\X_t)}}$
for some polynomials $p^{(t)}_n,q^{(t)}_n$.
\linebreak
Now, similarly to \citet{soudry2018journal},
we notice that ${(\valpha_{\supp_{t}})_n=0}$ only if $p^{(t)}_n(\X_1,\ldots,\X_t)=0$,
\ie the components of $\X_1,\ldots,\X_t$ must constitute a root of the polynomial $p^{(t)}_n$.
However, the roots of any polynomial have measure zero, unless that polynomial is the zero polynomial, \ie $p^{(t)}_n(\X_1,\ldots,\X_t)=0,~\forall \X_1,\ldots,\X_t$.

\paragraph{Our goal now.}
To prove that our polynomials $(p^{(t)}_n)_t$ cannot be zero polynomials, it is sufficient to construct a specific task sequence for which they are not zero.
Then, we will be able to conclude that the event in which $\valpha_{\supp_{t}}$ has a zero entry, is measure zero.

For the sake of readability, we divide our proof into two cases.

\begin{enumerate}
\item 
\textbf{When $\B_1=\dots=\B_k=\I_{\dim\times\dim}$}:
The expressions from \eqref{eq:alpha_kkt}~and~\eqref{eq:w_recursive} become, $\forall t\in\cnt{k}$,
\begin{align}   
\begin{split}
\valpha_{\supp_{t}}	
&
=
\left(\X_{\supp_{t}}^{\top}\X_{\supp_{t}}\right)^{-1}\1-c_{t}\left(\X_{\supp_{t}}^{\top}\X_{\supp_{t}}\right)^{-1}\X_{\supp_{t}}^{\top}\w_{t-1}
\\
\w_{t}	
&
=\X_{\supp_{t}}\left(\X_{\supp_{t}}^{\top}\X_{\supp_{t}}\right)^{-1}\1+
c_{t}\left(\I-\X_{\supp_{t}}\left(\X_{\supp_{t}}^{\top}\X_{\supp_{t}}\right)^{-1}\X_{\supp_{t}}^{\top}\right)\w_{t-1}~.
\end{split}
\label{eq:identity_w_recursive}
\end{align}

\paragraph{Detailed construction of the task sequence.}
We are given an arbitrary dimensionality $\dim$. 
Let $\e_i$ be the $i$\nth standard unit vector in $\reals^\dim$.
Let $N_{1},\dots, N_{k}\in\left[\dim\right]$ be an arbitrary sequence of the support sets' sizes.

We define the following datasets:
$$
\underbrace{\X_{t}}_{D\times N_{t}}=\beta_{t}\left[\begin{array}{ccc}
\mathbf{e}_{1} & \cdots & \mathbf{e}_{N_{t}}\end{array}\right],\quad\quad\forall t\in\left[k\right]
$$
for some sequence $\left(\beta_{t}\right)$ of strictly positive numbers which we will define later.
Under this construction (after choosing appropriate $(\beta_t)$ that will ensure projections at every iteration), the support set of each of the datasets will be identical to the dataset itself,
hence we use $\X_t = \X_{\supp_{t}}$ and $\valpha_t = \valpha_{\supp_{t}}$ interchangeably for the rest of the construction.

Notice that under our construction it holds that
$$\underbrace{\X_{t}^{\top}\X_{t}}_{N_{t}\times N_{t}}	=\beta_{t}^{2}\I_{N_{t}\times N_{t}},
\quad\quad
\underbrace{\X_{t}\X_{t}^{\top}}_{D\times D}=\beta_{t}^{2}\left[\begin{array}{cc}
\I_{N_{t}\times N_{t}}\\
 & \0_{\left(D-N_{t}\right)\times\left(D-N_{t}\right)}
\end{array}\right],
\quad\quad
\X_{t}\1_{N_{t}}=\beta_{t}\left[\substack{\1_{N_{t}}\\
\0_{D-N_{t}}}
\right]\,.
$$

Using the expression in \eqref{eq:identity_w_recursive} and recalling that $\w_{0}=\0_{D}$, we have:
\begin{align*}
\w_{t}	&=\X_{t}\left(\X_{t}^{\top}\X_{t}\right)^{-1}\1+c_{t}\left(\I-\X_{t}\left(\X_{t}^{\top}\X_{t}\right)^{-1}\X_{t}^{\top}\right)\w_{t-1}
=
\frac{1}{\beta_{t}^{2}}\X_{t}\1_{N_{t}}+c_{t}\underbrace{\left(\I-\frac{1}{\beta_{t}^{2}}\X_{t}\X_{t}^{\top}\right)}_{\triangleq\mathbf{M}_{t}}\w_{t-1}
\\
&	
 =
 \frac{1}{\beta_{t}^{2}}\X_{t}\1_{N_{t}}+c_{t}\mathbf{M}_{t}\w_{t-1}
	=\frac{1}{\beta_{t}^{2}}\X_{t}\1_{N_{t}}+c_{t}\mathbf{M}_{t}\left(\frac{1}{\beta_{t-1}^{2}}\X_{t-1}\1_{N_{t-1}}+c_{t-1}\mathbf{M}_{t-1}\w_{t-2}\right)	
\\
&=\frac{1}{\beta_{t}^{2}}\X_{t}\1_{N_{t}}+\frac{1}{\beta_{t-1}^{2}}c_{t}\mathbf{M}_{t}\X_{t-1}\1_{N_{t-1}}+c_{t}c_{t-1}\mathbf{M}_{t}\mathbf{M}_{t-1}\w_{t-2}
\\
\explain{\text{recursively}}
&
=\frac{1}{\beta_{t}^{2}}\X_{t}\1_{N_{t}}+\sum_{i=1}^{t-1}
c_{t}\cdots c_{i+1}
\frac{1}{\beta_{i}^{2}}
\mathbf{M}_{t}\cdots\mathbf{M}_{i+1}
\X_{i}\1_{N_{i}}\,.
\end{align*}

Plugging in the diagonal $\mathbf{M}_{i}=\I_{D\times D}-\frac{1}{\beta_{i}^{2}}\X_{i}\X_{i}^{\top}=\left[\begin{array}{cc}
\0_{N_{i}\times N_{i}}\\
 & \I_{\left(D-N_{i}\right)\times\left(D-N_{i}\right)}
\end{array}\right]$,
we get
\begin{align*}
    \w_{t}	
    &=
    \frac{1}{\beta_{t}^{2}}\X_{t}\1_{N_{t}}
    +
    \sum_{i=1}^{t-1}\biggprn{\frac{1}{\beta_{i}^{2}}
    \prod_{j=i+1}^{t}c_{j}\left[\begin{array}{cc}
\0_{N_{j}\times N_{j}}
\\
 & \I_{\left(D-N_{j}\right)\times\left(D-N_{j}\right)}
\end{array}\right]
\,\cdot\!\!\!\!
\underbrace{\X_{i}\1_{N_{i}}}_{
=\beta_i \left[\substack{
\1_{N_{i}}
 \\
\0_{D-N_{i}}    
}\right]
}
}
\\
&=
\frac{1}{\beta_{t}^{2}}\X_{t}\1_{N_{t}}+\sum_{i=1}^{t-1}\left(\left(\frac{1}{\beta_{i}}\prod_{j=i+1}^{t}c_{j}\right)
\begin{bmatrix}\mathbb{I}\left[1\le N_{i}\right]\prod_{j=i+1}^{t}\mathbb{I}\left[1>N_{j}\right]\\
\vdots\\
\mathbb{I}\left[D\le N_{i}\right]\prod_{j=i+1}^{t}\mathbb{I}\left[D>N_{j}\right]
\end{bmatrix}\right)	
\\
&=\frac{1}{\beta_{t}}
\begin{bmatrix}
\1_{N_{t}}\\
\0_{D-N_{t}}
\end{bmatrix}
+\sum_{i=1}^{t-1}\left(\left(\frac{1}{\beta_{i}}\prod_{j=i+1}^{t}c_{j}\right)
\begin{bmatrix}\mathbb{I}\left[N_{i}\ge1>\max\left(N_{i+1},\dots,N_{t}\right)\right]\\
\vdots\\
\mathbb{I}\left[N_{i}\ge D>\max\left(N_{i+1},\dots,N_{t}\right)\right]
\end{bmatrix}\right)\,,
\end{align*}
and since multiplying by $\X_{t}^{\top}$ from the left “trims” the last $D-N_{t}$ rows of any vector, we get:
\begin{align*}
\underbrace{\X_{t}^{\top}\w_{t-1}}_{N_t \times 1}
&=\X_{t}^{\top}\left(\frac{1}{\beta_{t-1}}
\begin{bmatrix}
\1_{N_{t-1}}\\
\0_{D-N_{t-1}}
\end{bmatrix}
+\sum_{i=1}^{t-2}\left(\left(\frac{1}{\beta_{i}}\prod_{j=i+1}^{t-1}c_{j}\right)
\begin{bmatrix}\mathbb{I}\left[N_{i}\ge1>\max\left(N_{i+1},\dots,N_{t-1}\right)\right]\\
\vdots\\
\mathbb{I}\left[N_{i}\ge D>\max\left(N_{i+1},\dots,N_{t-1}\right)\right]
\end{bmatrix}
\right)\right)
\\
&=
\beta_{t}\left(\frac{1}{\beta_{t-1}}
\begin{bmatrix}\1_{\min\left(N_{t-1},N_{t}\right)}\\
\0_{\max\left(N_{t}-N_{t-1},0\right)}
\end{bmatrix}
+\sum_{i=1}^{t-2}\left(\left(\frac{1}{\beta_{i}}
\prod_{j=i+1}^{t-1}c_{j}\right)
\begin{bmatrix}\mathbb{I}\left[N_{i}\ge1>\max\left(N_{i+1},\dots,N_{t-1}\right)\right]\\
\vdots\\
\mathbb{I}\left[N_{i}\ge N_{t}>\max\left(N_{i+1},\dots,N_{t-1}\right)\right]
\end{bmatrix}\right)\right)
\end{align*}

We then plug in the above into $\bm{\alpha}_{t}$ again (using \eqref{eq:alpha_kkt}):
\begin{align*}
\underbrace{\bm{\alpha}_{t}}_{N_{t}\times 1}	\!
&=
\underbrace{\left(\X_{t}^{\top}\X_{t}\right)^{-1}}_{N_{t}\times N_{t}}
\underbrace{
\left(\1_{N_{t}}-c_{t}\X_{t}^{\top}\w_{t-1}\right)
}_{N_t \times 1}
\\
&=	
\frac{1}{\beta_{t}^{2}}
\!
\left(\1_{N_{t}}\!-c_{t}\beta_{t}
\!
\left(\frac{1}{\beta_{t-1}}
 \!\begin{bmatrix}\1_{\min\left(N_{t-1},N_{t}\right)}\\
\0_{\max\left(N_{t}-N_{t-1},0\right)}
\end{bmatrix}
\!
+
\sum_{i=1}^{t-2}
\!
\left(\frac{1}{\beta_{i}}
\prod_{j=i+1}^{t-1}\!c_{j}\right)
\!\!\!
\begin{bmatrix}\mathbb{I}\left[N_{i}\ge1>\max\left(N_{i+1},\dots,N_{t-1}\right)\right]\\
\vdots\\
\mathbb{I}\left[N_{i}\ge N_{t}>\max\left(N_{i+1},\dots,N_{t-1}\right)\right]
\end{bmatrix}
\right)\!\right)
\end{align*}

    Then we get the following elementwise formula $\forall n\in\left[N_{t}\right]$:
\begin{align*}
\beta_{t}^{2}\left(\bm{\alpha}_{t}\right)_{n}	
=
1-\underbrace{c_{t}\beta_{t}\frac{1}{\beta_{t-1}}}_{\ge0}
\underbrace{\mathbb{I}\left[n\le N_{t-1}\right]}_{\le 1}
-
\underbrace{c_{t}\beta_{t}}_{\ge0}\sum_{i=1}^{t-2}
\underbrace{
\mathbb{I}\left[N_{i}\ge n>\max\left(N_{i+1},\dots,N_{t-1}\right)\right]
}_{\le 1}
\underbrace{\frac{1}{\beta_{i}}\prod_{j=i+1}^{t-1}c_{j}}_{\ge0}\,.
\end{align*}
Denoting $C\triangleq\max\left\{ 1,\,\max_{t\in\cnt{k}}\!c_{t} \right\}$, we lower bound the above as:
\begin{align*}
\beta_{t}^{2}\left(\bm{\alpha}_{t}\right)_{n}	
&
\ge1-c_{t}\beta_{t}\frac{1}{\beta_{t-1}}-c_{t}\beta_{t}\sum_{i=1}^{t-2}\frac{1}{\beta_{i}}\prod_{j=i+1}^{t-1}c_{j}=1-\beta_{t}\sum_{i=1}^{t-1}\frac{1}{\beta_{i}}\prod_{j=i+1}^{t}c_{j}
	\ge
 1-\beta_{t}\sum_{i=1}^{t-1}\frac{1}{\beta_{i}}C^{t-i}\
\\
\explain{C\ge 1}
&
\ge
1-\beta_{t}C^{t}\sum_{i=1}^{t-1}\frac{1}{\beta_{i}}\ge1-\left(t-1\right)\underbrace{\beta_{t}C^{t}\max_{i\in\left[t-1\right]}\frac{1}{\beta_{i}}}_{>0}>1-t\beta_{t}C^{t}\max_{i\in\left[t-1\right]}\frac{1}{\beta_{i}}\stackrel{\text{require}}{\ge}0
\\
\frac{\min_{i\in\left[t-1\right]}\beta_{i}}{tC^{t}}&\ge\beta_{t}
\end{align*}
Thus, to hold the above, we can choose $\left(\beta_{t}\right)$ to be a decreasing sequence as follows:
\begin{align*}
\beta_{1}=1,\quad
\beta_{t}=
\frac{\min_{i\in\left[t-1\right]}\beta_{i}}{tC^{t}}
=
\frac{1}{tC^{t}}\beta_{t-1}=\frac{1}{t!C^{2+3+\dots+t}}=\frac{1}{t!C^{\left(t+2\right)\left(t-1\right)/2}}\,.
\end{align*}

\paragraph{Summary of the first case.}
We showed a construction where at \emph{any} iteration $t\in\cnt{k}$,
\emph{all} $N_t$ entries of the corresponding dual vector $\valpha_{\supp_t}(=\valpha_t)$ are strictly positive.
As explained above, this implies that when $\w_t\neq c_{t}\w_{t-1}$, the polynomial $p_n^{(t)}$ is \emph{not} a zero polynomial ${\forall t\in\cnt{k}},{\forall n\in N_t}$
and thus becomes zero only in a finite number of measure zero roots.
Using the union bound on the countable number of iterations $k$ 
and all possible choices of support sizes $N_1,\smalldots,N_k\in\cnt{\dim}$, we still remain with a measure zero event.

\medskip


\item
\textbf{General $\B_1,\ldots,\B_t$}:
We can extend the techniques above and define the polynomials over the choices of $(\B_t)$ as well,
\ie 
$\m@th\displaystyle{(\valpha_{\supp_{t}})_n=
\hfrac{p^{(t)}_n(\X_1,\ldots,\X_t, \B_1,\ldots,\B_t)}{q^{(t)}_n(\X_1,\ldots,\X_t, \B_1,\ldots,\B_t)}}$
for some polynomials $p^{(t)}_n,q^{(t)}_n$.
\linebreak
Then, our first case above, where $\B_t = \I,\forall t$, shows that the roots of these updated polynomials are of measure zero.
Again, employing the union bound over
the countable number of iterations $k$ 
and all possible choices of support sizes $N_1,\smalldots,N_k\in\cnt{\dim}$,
shows that ${\forall t\in\cnt{k}:~\w_t \neq c_t \w_{t-1}\Longrightarrow\valpha_{\supp_{t}}}\succ\0$ (elementwise) almost surely.
\end{enumerate}

\vspace{-1.1em}

\end{proof}

\vspace{-0.4em}

After showing that either $\valpha_t=\0$
(when $\w_t = c_{t}\w_{t-1}$) or $\valpha_{\supp_{t}}\succ\0$ (elementwise),
we can conclude the following. 
\begin{corollary}
\label{cor:finite_w_tilde}
    Under the conditions and iterative process in \lemref{lem:kkt_w_tilde},
    when
    $\w_t\neq c_{t}\w_{t-1}$, 
    there almost surely exists a finite $\tilde{\w}_t$ such that
    $$
    \tsum_{\x\in\supp_{t}}
    \x
    \exp\prn{-\tilde{\w}_{t}^\top \x}
    =
    \B_t(\w_t - c_{t}\w_{t-1})
    =
    \tsum_{\x\in\supp_{t}}
    \x
    \underbrace{\alpha_t(\x)}_{>0}
    \,.
    $$
\end{corollary}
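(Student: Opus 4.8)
The plan is to notice that the right-hand equality, $\B_t(\w_t - c_{t}\w_{t-1}) = \sum_{\x\in\supp_{t}} \x\,\alpha_t(\x)$ with every $\alpha_t(\x) > 0$, is exactly the content of \lemref{lem:kkt_w_tilde} and requires nothing further. So the only genuine task is to produce a finite $\tilde{\w}_t$ realizing the left-hand equality. I would do this in the strongest possible way: construct $\tilde{\w}_t$ so that the identity holds \emph{term-by-term}, i.e. so that $\exp(-\tilde{\w}_t^\top\x) = \alpha_t(\x)$ for \emph{each} support vector $\x\in\supp_{t}$ individually. Summing against $\x$ then gives the claim immediately, and matching term-by-term is clearly sufficient (though not necessary).

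To achieve the term-by-term matching, I would recast it as a linear system. Requiring $\exp(-\tilde{\w}_t^\top\x) = \alpha_t(\x)$ is equivalent to $\tilde{\w}_t^\top\x = -\ln\alpha_t(\x)$ for every $\x\in\supp_{t}$. Here the key inputs from \lemref{lem:kkt_w_tilde} are that, for almost all datasets, (i) every support vector satisfies $\alpha_t(\x)>0$, so each $\ln\alpha_t(\x)$ is a well-defined finite real number, and (ii) the support vectors $\X_{\supp_{t}}$ are linearly independent with $|\supp_{t}|\le\dim$, so $\X_{\supp_{t}}\in\reals^{\dim\times|\supp_{t}|}$ has full column rank. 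Collecting the equations, I seek $\tilde{\w}_t$ with $\X_{\supp_{t}}^\top\tilde{\w}_t = \b$, where $\b\in\reals^{|\supp_{t}|}$ has entries $-\ln\alpha_t(\x)$. Because $\X_{\supp_{t}}$ has full column rank, the map $\w\mapsto\X_{\supp_{t}}^\top\w$ is surjective onto $\reals^{|\supp_{t}|}$, so the system is consistent.

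It remains to exhibit one finite solution. A convenient explicit choice is the minimum-norm solution
$$
\tilde{\w}_t
=
\X_{\supp_{t}}\bigprn{\X_{\supp_{t}}^\top\X_{\supp_{t}}}^{-1}\b
\,,
$$
where $\X_{\supp_{t}}^\top\X_{\supp_{t}}$ is invertible precisely because $\X_{\supp_{t}}$ has full column rank (the same Gram-matrix argument used in \lemref{lem:kkt_w_tilde}). Since $\b$ has finite entries and all matrix factors are finite, $\tilde{\w}_t$ is finite. By construction $\X_{\supp_{t}}^\top\tilde{\w}_t = \b$, hence $\exp(-\tilde{\w}_t^\top\x) = \alpha_t(\x)$ for each $\x\in\supp_{t}$, and therefore $\sum_{\x\in\supp_{t}}\x\exp(-\tilde{\w}_t^\top\x) = \sum_{\x\in\supp_{t}}\x\,\alpha_t(\x) = \B_t(\w_t - c_{t}\w_{t-1})$, as required.

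Conceptually there is no hard step; the corollary is essentially a repackaging of \lemref{lem:kkt_w_tilde}. The only point demanding care — and the reason the ``almost surely'' qualifier is inherited — is that both ingredients (strict positivity of every dual coordinate, so the logarithms are finite, and linear independence of the support vectors, so the linear system is solvable) hold only on the full-measure event already identified in \lemref{lem:kkt_w_tilde}. I would therefore state explicitly that $\tilde{\w}_t$ exists and is finite on that same event, and note that the role of this $\tilde{\w}_t$ is to express $\B_t(\w_t - c_{t}\w_{t-1})$ as (the negative of) a gradient of the exponential loss at a finite point, which is exactly the bridge needed to connect \procref{proc:regularized} to \procref{proc:adaptive} in the subsequent argument.
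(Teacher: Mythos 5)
Your proposal is correct and follows essentially the same route as the paper: the paper's one-line proof likewise reduces the claim to solving the linear system $\X^\top_{\supp_{t}}\tilde{\w}_t = -\ln\valpha_{\supp_{t}}$, which is consistent because $\X^\top_{\supp_{t}}$ has full row rank almost surely and the entries $-\ln\alpha_t(\x)$ are finite by the strict positivity from Lemma~\ref{lem:kkt_w_tilde}. Your only addition is writing out the explicit minimum-norm solution via the Gram matrix, which the paper leaves implicit.
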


\vspace{-1.5em}

\begin{proof}
    The support $\supp_{t}$ is linearly independent a.s., 
    so we simply require ${\X^\top_{\supp_{t}} \tilde{\w}_t \!=\!-\ln{\valpha_{\supp_{t}}}}$ for a full row rank $\X^\top_{\supp_{t}}$.
\end{proof}

\vspace{.7em}

\newpage

\subsection{Main Result}

We are now ready to prove \thmref{thm:weakly_regularized}.
In Remark~\ref{rmk:analysis_differences}, we explained why existing analysis techniques 
(\eg from \citet{rosset2004margin,wei2019regularization}) are less suitable for our setting, in which the scale of iterates is also of great importance (and not only their direction).
More related tools are the ones that were used in \citet{soudry2018journal},
which analyzed the convergence of the gradient descent iterates under \emph{unregularized} problems with exponential losses to the max-margin solution as the number of gradient steps $t\to\infty$. 
Here, however, we analyze the convergence of the (unique) minimizer of the \emph{regularized} problem to a max-margin solution, as the regularization strength $\lambda\to 0$.
While there are some technical similarities, there are also differences and challenges of a different nature.

\bigskip

\begin{recall}[\thmref{thm:weakly_regularized}]
Let $\lambda_t\!=\!\lambda\!>\!0$, 
and 
$\B_{t}\!=\!\I$, $\forall t\!\in\!\cnt{k}$.
Then, 
for almost all separable datasets,\footref{fn:almost_all} %
in the limit of $\lambda\!\to\!0$,
it holds that
$\w_{t}^{(\lambda)} 
\to
{\ln\prn{\tfrac{1}{\lambda}}\w_{t}}$
with a residual of
$\tnorm{\w_{t}^{(\lambda)} 
\!-\!
{\ln\prn{\tfrac{1}{\lambda}}\w_{t}}}
=\bigO\prn{t\ln\ln\prn{\tfrac{1}{\lambda}}}
$. 
\linebreak
As a result, 
at any iteration 
$t=o\prn{\frac{\ln\prn{\nicefrac{1}{\lambda}}}{{\ln \ln\prn{\nicefrac{1}{\lambda}}}}}$,
we get
$$
\lim_{\lambda\to0}
\frac{\w_t^{(\lambda)}}{\tnorm{\w_t^{(\lambda)}}}
=
\frac{\w_t}{\tnorm{\w_t}}
\,.
$$
\end{recall}

\medskip

\begin{proof}
We will prove by induction on $t\ge0$
that 
the scale of the residual 
$
\m@th\displaystyle
\residual_{t}^{(\lambda)}
\triangleq
{\w_{t}^{(\lambda)} 
\!-\!
{\ln\prn{\tfrac{1}{\lambda}}\w_{t}}}
$
at each iteration is 
$\bigO\prn{t\ln\ln\prn{\tfrac{1}{\lambda}}}
$;
and that consequently (since $\ln\prn{\tfrac{1}{\lambda}}\w_t$ grows faster), 
the iterates are either identical
(\ie $\w_{t}^{(\lambda)}=\w_{t}$)
or converge in the same direction when $\lambda\to0$,
\ie $
\lim_{\lambda\to0}
\frac{\w_t^{(\lambda)}}{\tnorm{\w_t^{(\lambda)}}}
=
\frac{\w_t}{\tnorm{\w_t}}
$.

\paragraph{For $t=0$:}
By the conditions of the theorem, it trivially holds that 
$\w_0^{(\lambda)}=\w_0=\0_{\dim}$ and
$\residual_{0}^{(\lambda)}
=
\w_{0}^{(\lambda)} 
-{\ln\prn{\tfrac{1}{\lambda}}\w_{0}}=\0_{\dim}$.

\bigskip

\paragraph{For $t\ge1$:}

The solved optimization problem
(recall Remark~\ref{rmk:simplification})
is:
%
$$\displaystyle
\w^{(\lambda)}_{t}
=
\argmin_{\w \in \reals^{\dim}} 
\lamloss(\w)
\triangleq
\argmin_{\w \in \reals^{\dim}} 
{
\sum_{\x\in \dataset_{t}}
e^{-\w^\top \x }
+
\frac{\lambda}{2}
\norm{\w\!-\!\w^{(\lambda)}_{t-1}}^2
}\,.
$$


\paragraph{Proof's idea.}
Notice that the objective above is $\lambda$-strongly convex, 
since its Hessian matrix is
${\nabla^2 \lamloss(\w)
=
\lambda\I + \sum_{\x} e^{-\w^\top \x } \x\x^\top
\succeq \lambda\I \succ \0}$.
We are going to define an $\bigO\prn{1}$
vector $\tilde{\w}_{t}$
and a sign $\wsign_t\in\{-1,+1\}$,
and employ the triangle inequality and \lemref{lem:strong-convexity} to show that
\begin{align}
\begin{split}
\label{eq:proofs_idea}
\bignorm{\residual_{t}^{(\lambda)}}
&\triangleq
\bignorm{
\w_{t}^{(\lambda)}-\ln\prn{\tfrac{1}{\lambda}}\w_{t}
}
\\
&
=
\bignorm{
\w_{t}^{(\lambda)}
-
\Bigprn{
\prn{\ln\prn{\tfrac{1}{\lambda}}
+
\wsign_t\ln\ln\prn{\tfrac{1}{\lambda}}}\w_{t}
+
\tilde{\w}_{t}
}
+
\wsign_t\ln\ln\prn{\tfrac{1}{\lambda}}
\w_{t}
+
\tilde{\w}_{t}
}
\\
&\le 
\underbrace{
\frac{1}{\lambda}
\norm{
\nabla
\lamloss\Bigprn{
    \prn{\ln\prn{\tfrac{1}{\lambda}}
    + \wsign_t\ln\ln\prn{\tfrac{1}{\lambda}}}\w_{t}
    +
    \tilde{\w}_{t}
}
}
}_{
=\bigO\prn{\prn{t-1}\ln\ln\prn{\tfrac{1}{\lambda}}}
}
+
\ln\ln\prn{\tfrac{1}{\lambda}}
\underbrace{\bignorm{\w_{t}}}_{=\bigO\prn{1}}
+
\underbrace{\bignorm{\tilde{\w}_{t}}}_{=\bigO\prn{1}}
=
\bigO\prn{t\ln\ln\prn{\tfrac{1}{\lambda}}}\,.
\end{split}
\end{align}
Then, since $\bignorm{
\w_{t}^{(\lambda)}-\ln\prn{\tfrac{1}{\lambda}}\w_{t}
}=\bigO\prn{t\ln\ln\prn{\tfrac{1}{\lambda}}}$
it will immediately follow that the weakly-regularized solution $\w_{t}^{(\lambda)}$
converges in direction to the Sequential Max-Margin solution $\w_{t}$.

\newpage

\paragraph{Back to the proof.}
First, we compute the gradient of $\lamloss$, normalized by $\lambda$:
\begin{align*}
    &\frac{1}{\lambda}
    \nabla
    \lamloss(\w)
    =
    \frac{1}{\lambda}
    \prn{
    \sum_{\x\in\dataset_{t}}-\x\exp\left(-\w^{\top}\x\right)
    +\lambda\w-\lambda\w^{(\lambda)}_{t-1}}
    =
    -\frac{1}{\lambda}
    \sum_{\x\in\dataset_{t}}\x\exp\left(-\w^{\top}\x\right)+
    \w-\w^{(\lambda)}_{t-1}\,.
\end{align*}
Then, we plug in
$\w
=
\prn{\ln\prn{\tfrac{1}{\lambda}}+\wsign_{t}\ln\ln\prn{\tfrac{1}{\lambda}}}\w_{t}
+\tilde{\w}_{t}
=
\ln\prn{\tfrac{1}{\lambda}\ln^{\wsign_{t}\!}\prn{\tfrac{1}{\lambda}}}
\w_{t}
+\tilde{\w}_{t}$,
for some sign $\wsign_{t}\in\{-1,+1\}$ and
a vector $\tilde{\w}_{t}$ with a norm independent of $\lambda$ (both will be defined below).
\begin{align*}
    \frac{1}{\lambda}
    \nabla
    \lamloss\Bigprn{
    \prn{\ln\prn{\tfrac{1}{\lambda}}
    +
    \wsign_{t}\ln\ln\prn{\tfrac{1}{\lambda}}
    }
    \w_{t}
    +
    \tilde{\w}_{t}}
    &=
    -\frac{1}{\lambda}
    \sum_{\x\in\dataset_{t}}
    \x
    e^{
    \ln\prn{\lambda\ln^{-\wsign_t\!}\prn{\tfrac{1}{\lambda}}}\w_{t}^\top 
    \x}
    e^{-\tilde{\w}_{t}^\top \x}
    +
    \ln\Bigprn{\tfrac{1}{\lambda}
    \ln^{\wsign_t\!}\prn{\tfrac{1}{\lambda}}}\w_{t}
    +\tilde{\w}_{t}
    -\w^{(\lambda)}_{t-1}
    \\
    &=
    -\frac{1}{\lambda}
    \sum_{\x\in\dataset_{t}}
    \x
    \prn{\lambda\ln^{-\wsign_t\!}\prn{\tfrac{1}{\lambda}}}^{\w_{t}^\top \x}
    e^{-\tilde{\w}_{t}^\top \x}
    +
    \ln\Bigprn{\tfrac{1}{\lambda}
    \ln^{\wsign_t\!}\prn{\tfrac{1}{\lambda}}}\w_{t}
    +
    \tilde{\w}_{t}
    \!-\!\w^{(\lambda)}_{t-1}
    \,.
\end{align*}
Now, denoting the set of support vectors by
$\supp_{t}\triangleq\left\{ \x\in \dataset_{t}\mid\w_{t}^{\top}\x=1\right\}$
(which might be empty for $t\ge 2$),
and using the inductive assumption
that
$
\w_{t-1}^{(\lambda)}=
\residual_{t-1}^{(\lambda)}
+
\ln\prn{\tfrac{1}{\lambda}}\w_{t-1}
$
(where $\bignorm{\residual_{t-1}^{(\lambda)}} = \bigO\prn{\prn{t-1}\ln\ln\prn{\tfrac{1}{\lambda}}}$)
, we get 
\begin{align*}
&
\frac{1}{\lambda}
    \nabla
    \lamloss\Bigprn{
    \prn{\ln\prn{\tfrac{1}{\lambda}}
    +
    \wsign_{t}\ln\ln\prn{\tfrac{1}{\lambda}}
    }
    \w_{t}
    +
    \tilde{\w}_{t}}
    \\
    &=
    -
    \frac{1}{\lambda}
    \prn{\lambda\ln^{-\wsign_t\!}\prn{\tfrac{1}{\lambda}}}^{1}
    \sum_{\x\in\supp_{t}}
    \x
    e^{-\tilde{\w}_{t}^\top \x}
    -
    \frac{1}{\lambda}\!
    \sum_{\x\notin\supp_{t}}
    \x
    \prn{\lambda\ln^{-\wsign_t\!}\prn{\tfrac{1}{\lambda}}}^{\w_{t}^\top \x}
    e^{-\tilde{\w}_{t}^\top \x}
    +
    \ln \prn{\tfrac{1}{\lambda}}\w_{t}
    -
    \ln \prn{\tfrac{1}{\lambda}} \w_{t-1}
    +
    \\
    &
    \hspace{3em}
    +
    \ln\ln^{\wsign_t\!}\prn{\tfrac{1}{\lambda}}
    \w_{t}
    +
    \tilde{\w}_{t}
    -
    \residual_{t-1}^{(\lambda)}
    \\
    &=
    -
    \ln^{-\wsign_t\!}\prn{\tfrac{1}{\lambda}}
    \sum_{\x\in\supp_{t}}
    \x
    e^{-\tilde{\w}_{t}^\top \x}
    -
    \sum_{\x\notin\supp_{t}}
    \x
    {\lambda}^{\w_{t}^\top \x-1}
    \prn{\ln\prn{\tfrac{1}{\lambda}}}^{-\wsign_t\w_{t}^\top \x}
    e^{-\tilde{\w}_{t}^\top \x}
    +
    \ln \prn{\tfrac{1}{\lambda}}
    \prn{\w_{t}-\w_{t-1}}
    +
    \\
    &
    \hspace{3em}
    +
    \ln\ln^{\wsign_t\!}\prn{\tfrac{1}{\lambda}}
    \w_{t}
    +
    \tilde{\w}_{t}
    -
    \residual_{t-1}^{(\lambda)}
    \,.
\end{align*}

By the triangle inequality 
and since $\bignorm{\ln\ln^{\wsign_t\!}\prn{\tfrac{1}{\lambda}}}
=
\bignorm{\wsign_t\ln\ln\prn{\tfrac{1}{\lambda}}}
=
\overbrace{\abs{\wsign_t}}^{= 1}
\ln\ln\prn{\tfrac{1}{\lambda}}$
, we have,
\begin{align*}   
&\norm{\frac{1}{\lambda}
\nabla
\lamloss\Bigprn{\prn{\ln\prn{\tfrac{1}{\lambda}}+
    \wsign_{t}\ln\ln\prn{\tfrac{1}{\lambda}}}\w_{t}+\tilde{\w}_{t}}
}
\\
&
\le
\,
\underbrace{\biggnorm{
    \ln \prn{\tfrac{1}{\lambda}}
    \prn{\w_{t}\!-\!\w_{t-1}}
    -
    \ln^{-\wsign_t\!}\prn{\tfrac{1}{\lambda}}
    \sum_{\x\in\supp_{t}}\!
    \x
    e^{-\tilde{\w}_{t}^\top \x}
}
}_{\triangleq\vect{a}_1(\lambda)}
+
\underbrace{
\biggnorm{
    \sum_{\x\notin\supp_{t}}
    \x
    {\lambda}^{\w_{t}^\top \x-1}
    \!
    \prn{\ln\prn{\tfrac{1}{\lambda}}}^{-\wsign_t\w_{t}^\top \x}
    e^{-\tilde{\w}_{t}^\top \x}
}
}_{\triangleq\vect{a}_2(\lambda)}
+
\\
&
\hspace{1cm}
+
\ln\ln\prn{\tfrac{1}{\lambda}}
\Bignorm{
    \w_{t}
}
+
\Bignorm{
    \tilde{\w}_{t}
}
+
\Bignorm{
    \residual_{t-1}^{(\lambda)}
}
\end{align*}

\pagebreak

We now distinguish between two cases in which
the behavior of 
the sequential max-margin differs greatly 
(see \eqref{eq:adaptive_dynamics}).
\begin{enumerate}
    \item \textbf{When 
    $\w_{t}\triangleq \mP_{t}\prn{\w_{t-1}}
    \neq \w_{t-1}$ 
    (and necessarily 
    $|{\supp_{t}}|\ge1$):}
    We choose $\wsign_{t}=-1$ and $\vect{a}_1(\lambda)$ becomes:
    \begin{align*}
    \vect{a}_1(\lambda)
    &
    =
    \bignorm{
    \ln \prn{\tfrac{1}{\lambda}}
    \prn{\w_{t}
    -\w_{t-1}}
    -
    \ln^{-\wsign_t\!}\prn{\tfrac{1}{\lambda}}
    \tsum_{\x\in\supp_{t}}
    \x
    e^{-\tilde{\w}_{t}^\top \x}
    }
    =
    \ln \prn{\tfrac{1}{\lambda}}
    \bignorm{
    \prn{\w_{t}
    -\w_{t-1}}
    -
    \tsum_{\x\in\supp_{t}}
    \x
    e^{-\tilde{\w}_{t}^\top \x}
    }\,.
    %
    \end{align*}
    We thus wish to choose 
    $\tilde{\w}_{t}$
    so as to \emph{zero} $\vect{a}_1(\lambda)$.
    Combined with the KKT conditions of
    \eqref{eq:adaptive_dynamics},
    we require
    $$
    \sum_{\x\in\supp_{t}}
    \x
    e^{-\tilde{\w}_{t}^\top \x}
    =
    \w_{t}
    -\w_{t-1}
    \stackrel{\text{KKT}}{=}
    \sum_{\x\in\supp_{t}}
    \x
    \alpha(\x)
    \,,
    $$
    where $\valpha\in\reals_{\ge0}^{\abs{\dataset_{t}}}$ 
    is the dual solution of the sequential max-margin
    problem (\eqref{eq:adaptive_dynamics}).
    In \lemref{lem:kkt_w_tilde} and 
    \corref{cor:finite_w_tilde} (applied with $\B=\mathbf{I}$ and $\mathbf{v}=\w_{t-1}$) we show that such a vector $\tilde{\w}_{t}$ almost surely  exists.

    Furthermore, 
    since 
    $\lim_{\lambda\to 0} 
    \prn{\lambda^{c-1}
    \ln^{c} \prn{\nicefrac{1}{\lambda}}
    }=0,~\forall c>1$,
    it holds that
    $\vect{a}_2(\lambda)$ becomes
    \begin{align*}   
    \vect{a}_2(\lambda)
    &=
    \biggnorm{
    \sum_{\x\notin\supp_{t}}
    \underbrace{
        \x
        e^{-\tilde{\w}_{t}^\top \x}
    }_{={\bigO\prn{1}}}
    \underbrace{
        \lambda^{\overbrace{\w_{t}^\top \x-1}^{>0}}
        \prn{\ln \prn{\nicefrac{1}{\lambda}}}^{
        {\w_{t}^\top \x}
        }
    }_{\to 0}
    }
    \xrightarrow[]{\lambda\to 0} 0\,.
    \end{align*}

    In conclusion, we can choose $\tilde{\w}_t$ and $\wsign_{t}$
    such that $\norm{\tilde{\w}_t}=\bigO\prn{1}$,
    $\vect{a}_1(\lambda)=0$,
    and $\vect{a}_2(\lambda)\to0$.

    \medskip
    
    \item
    \textbf{When 
    $\w_{t}\triangleq \mP_{t}\prn{\w_{t-1}}
    = \w_{t-1}$ 
    (and possibly $\supp_{t} = \emptyset$):}
    We choose $\tilde{\w}_t=
    \0_{\dim}
    $ and $\wsign_{t}=1$.
    It follows that
    \begin{align*}
    \vect{a}_1(\lambda)
    &
    =\bignorm{\,
    \ln^{-\wsign_t\!}\prn{\tfrac{1}{\lambda}}
    \sum_{\x\in\supp_{t}}
    \x\,
    }
    =
    \underbrace{
    \ln^{-1\!}\prn{\tfrac{1}{\lambda}}
    }_{\to 0}
    \underbrace{
    \bignorm{\,
    \tsum_{\x\in\supp_{t}}
        \x\,
    }
    }_{={\bigO\prn{1}}}
    \xrightarrow[]{\lambda\to 0} 0\,,
    \\
    \vect{a}_2(\lambda)
    &=
    \biggnorm{
    \sum_{\x\notin\supp_{t}}
    \underbrace{
        \x
    }_{={\bigO\prn{1}}}
    \underbrace{
        \lambda^{\overbrace{\w_{t}^\top \x-1}^{>0}}
    }_{\to 0}
    \underbrace{
        \prn{\ln \prn{\tfrac{1}{\lambda}}}^{
        {\overbrace{-\w_{t}^\top \x}^{<-1}}
        }
    }_{\to 0}
    }
    \xrightarrow[]{\lambda\to 0} 0\,.
    \end{align*}
\end{enumerate}

\bigskip

As explained in our proof's idea (\ref{eq:proofs_idea}), we now use the $\lambda$-strong convexity of our objective
and \lemref{lem:strong-convexity}
to bound the distance to the optimum 
by
\begin{align*}
    \bignorm{\residual_{t}^{(\lambda)}}
    &\triangleq
    \bignorm{
    \w_{t}^{(\lambda)}
    -
    \ln\prn{\tfrac{1}{\lambda}}\w_{t}
    }
    =
    \bignorm{
    \w_{t}^{(\lambda)}
    -
    \prn{
    \prn{\ln\prn{\tfrac{1}{\lambda}}
    +
    \wsign_t\ln\ln\prn{\tfrac{1}{\lambda}}}\w_{t}
    +
    \tilde{\w}_{t}
    }
    +
    \wsign_t\ln\ln\prn{\tfrac{1}{\lambda}}
    \w_{t}
    +
    \tilde{\w}_{t}
    }
    \\
    \explain{\text{triangle ineq.}}
    &
    \le
    \bignorm{
    \w_{t}^{(\lambda)}
    -
    \prn{
    \prn{\ln\prn{\tfrac{1}{\lambda}}
    +
    \wsign_t\ln\ln\prn{\tfrac{1}{\lambda}}}\w_{t}
    +
    \tilde{\w}_{t}
    }
    }
    +
    \bignorm{
    \wsign_t\ln\ln\prn{\tfrac{1}{\lambda}}
    \w_{t}
    }
    +
    \bignorm{
    \tilde{\w}_{t}
    }
    \\
    \explain{\text{\lemref{lem:strong-convexity}}}
    &
    \le 
    \frac{1}{\lambda}
    \norm{
    \nabla
    \lamloss\Bigprn{
        \prn{\ln\prn{\tfrac{1}{\lambda}}
        + \wsign_t\ln\ln\prn{\tfrac{1}{\lambda}}}\w_{t}
        +
        \tilde{\w}_{t}
    }
    }
    +
    \ln\ln\prn{\tfrac{1}{\lambda}}
    \bignorm{\w_{t}}
    +
    \bignorm{\tilde{\w}_{t}}
    \\
    \explain{\text{all the above}}
    &
    \le
    \underbrace{
        \vect{a}_1(\lambda)+\vect{a}_2(\lambda)
    }_{\to 0}
    +
    2\ln\ln\prn{\tfrac{1}{\lambda}}
    \underbrace{\bignorm{\w_{t}}
    }_{=\bigO\prn{1}}
    +
    \underbrace{
        2\bignorm{
            \tilde{\w}_{t}
        }
    }_{=\bigO\prn{1}}
    \,\,\,\,+\!\!\!\!\!\!\!\!\!
    \underbrace{
    ~
    \bignorm{\residual_{t-1}^{(\lambda)}}
    ~
    }_{=\bigO\prn{\prn{t-1}\ln\ln\prn{\tfrac{1}{\lambda}}}}
    \!\!\!\!\!\!\!.
    %
\end{align*}
Finally, we conclude that
$
    \bignorm{\residual_{t}^{(\lambda)}}
    =\bigO\prn{t \ln\ln\prn{\tfrac{1}{\lambda}}}$.
\end{proof}

\newpage

\subsection{Limitations of our Analysis}
\label{sec:limitations}
We proved that \lemref{lem:kkt_w_tilde} 
and \corref{cor:finite_w_tilde},
which are pivotal for our proofs of Theorems~\ref{thm:weakly_regularized},~\ref{thm:weak_schedule},~and~\ref{thm:weak_weighted},
hold for almost all datasets and weighting schemes.
We took the approach of \citet{soudry2018journal} who analyzed a simpler non-continual single-task case 
and derived similar results from the perspective of the roots of some polynomials.
Below, we discuss the limitations of this analytical approach in our case.
\vspace{-0.2em}
\begin{enumerate}
\item \textbf{Task recurrence.}
In our proof of \lemref{lem:kkt_w_tilde}, we employed a construction where tasks do not recur. When tasks recur, the constructed polynomials have a higher-order dependence on the elements of $\X_1,\dots,\X_T$ ($T$ is the number of possible tasks, in contrast to the number of iterations $k$). 
Then, the analysis becomes more subtle 
due to some additional constraints on the number of support vectors at each iteration $(N_t)$ 
(\eg when the same task is seen at iterations $t$ and $\prn{t+1}$, it must hold that $N_t=N_{t+1}$).
Finding a construction for a general sequence of support sizes $(N_t)$ is thus more challenging because not any sequence is attainable.
Without such a general construction, it remains possible that task recurrence leads to
a collapse into the measure-zero scenarios
where the lemma does not hold.

On the other hand, we \emph{are} able to prove that the lemma holds for some cases where tasks recur.
For instance, we can design constructions that allow task recurrence, as long as
the support sizes are fixed and hold ${N_t=N\le\dim/2,~\forall t\in \cnt{k}}$.
Specifically, we can construct a task sequence of $k$ iterations over $T\le k$ tasks (using a task ordering $\tau$ as in \defref{def:ordering}).
We construct the datasets so as to form a ``cyclic'' sequence with two types of tasks.
That is, the columns of $\X_{\tau(t)}=\X_{\supp_{\tau(t)}}\in \reals^{\dim\times N}$ are 
$
\x^{\tau(t)}_{n}
=
\begin{cases}
+\cos\prn{\theta} \e_n + \sin\prn{\theta} \e_{D-n}  
& \tau(t) = \tau(k)
\\
-\cos\prn{\theta} \e_n + \sin\prn{\theta} \e_{D-n} 
& \tau(t) \neq \tau(k)
\end{cases}$.
\linebreak
Under this construction, which is easy to analyze 
(as in \appref{app:scheduling}), 
one can show that at the $k$\nth iteration, if $\w_k\neq c_k \w_{k-1}$ then
$\valpha_{\supp_k}=\gamma\1_{N} \succ \0$, for some $\gamma > 0$.
This suffices for showing that the polynomials are nonzero (when ${N_t=N\le\dim/2,~\forall t\in \cnt{k}}$, for an arbitrary length $k$, with any form of recurrence of the task seen at iteration $k$),
and conclude that recurrence does not necessarily collapse to measure zero events where the lemma does not hold.

\item \textbf{Weighting schemes.} 
We initially proved our lemma for the isotropic weighting scheme where $\B_t=\I$ (notably, this corresponds exactly to the case in our main result in \thmref{thm:weakly_regularized}).
Subsequently, we used these isotropic weighting schemes to establish that our lemma applies to almost all weighting schemes as well.
However, common weighting schemes, such as Fisher-information-based schemes, rely on the data observed in previously encountered tasks.
Again, this makes the construction of a general task sequence where $\valpha_{\supp_{t}}\succ\0$ more complicated. 
Thus, it is possible that such weighting schemes will collapse into the measure zero cases where the lemma does not hold.
\end{enumerate}

We hypothesize that the reservations we expressed above are merely limitations of the analytical tools we utilized.
\linebreak
Various simulations we conducted demonstrated an agreement between the weakly-regularized iterates and the Sequential Max-Margin iterates. Closing these gaps in our analysis will likely require an alternative analytical approach.

\bigskip

\subsubsection{Example of a Measure Zero Case}
Finally, we briefly demonstrate a measure zero case where \lemref{lem:kkt_w_tilde} 
and \corref{cor:finite_w_tilde} do not hold.

Let the first task be $\X_{1}=\left[\e_{1}\right]\in\reals^{\dim\times 1}$. Then, $\w_{1}=\e_{1}$.

The second task is $\X_{2}=\left[\e_{1},\e_{2}\right]\in\reals^{\dim\times 2}$.
Then,
\begin{align*}
\w_{2}	&=\arg\min\left\Vert \w-\w_{1}\right\Vert \,\text{s.t. }\left(\e_{1}^{\top}\w\ge1\right)\wedge\left(\e_{2}^{\top}\w\ge1\right)
\\
	&=\e_{1}+\e_{2}\,.
\end{align*}

 Both $\e_1,\e_2$ are support vectors of the 2\nd task, but from the stationarity condition 
 (plug in $\B_2 = \I$, $c_2=1$ into \eqref{eq:kkt}), it holds that,
$$\alpha_2(\e_{1})\,\e_{1}+\alpha_2(\e_{2})\,\e_{2}=\w_{2}-\w_{1}=\e_{2}\,,$$
 thus requiring that $\alpha_2(\e_{1})=0$.
 Crucially, this prevents the existence of a finite $\tilde{\w}_2$ that holds $\exp\prn{-\tilde{\w}_2^\top\e_1}=\alpha_2(\e_1)$.

\newpage

\section{Proofs for the Sequential Max-Margin Projections Scheme (\secref{sec:adaptive_svm})}
\label{app:successive-projections}

\begin{recall}[\lemref{lem:euclidean_to_hinge}]
Recall our definition of
$
\m@th\displaystyle
{R\triangleq 
\max_{m\in\cnt{T}}
\max_{(\x,y)\in\dataset_m }
\!\!\!\norm{\x}}$.
The quantities of \defref{def:quantities} are related
as follows:
\begin{align*}
    \forall\w\!\in\!\reals^{\dim},\,
    m\!\in\!\cnt{T}:
    ~~
    F_m(\w)
    \le
    \dist^2(\w, \feasible_m)
    %
    \!
    \max_{(\x,y)\in\dataset_m}
    \!\!\!
    \!
    \tnorm{\x}^2
    \le
    \dist^2(\w, \teachers) R^2
    \,.
\end{align*}
Moreover, for the Sequential Max-Margin iterates 
$(\w_t)$ of Scheme~\ref{proc:adaptive},
all quantities are upper bounded by the ``problem complexity'',
\ie $\dist^2(\w_t, \teachers) R^2\le\norm{\teacher}^2\!R^2$.
\end{recall}

\medskip

\begin{proof}
    In our proof, we use the simplifying mapping from Remark~\ref{rmk:simplification} 
    ($y\x\longmapsto\x$).

    Using simple algebra and the Cauchy-Schwarz inequality, 
    we have 
    ${\forall m\in{\cnt{T}},\,
    \x\in\dataset_{m},\,
    \w\in\reals^{\dim}}$,
    \begin{align*}
        1-\w^{\top}\x
        &=
        1-\underbrace{\left(\mP_{m}\left(\w\right)\right)^{\top}\x}_{\ge1\text{ \,(see below)}}
        +
        \left(\mP_{m}\left(\w\right)\right)^{\top}\x-\w^{\top}\x
        \le
        \left(\mP_{m}\left(\w\right)-\w\right)^{\top}\x
        \\
        \explain{\text{Cauchy–Schwarz}}
        &
        \le
        \left\Vert \w-\mP_{m}\left(\w\right)\right\Vert 
        \tnorm{\x}
        =
        \dist(\w, \feasible_m)
        \tnorm{\x}
        \,,
    \end{align*}
    where the inequality in the underbrace stems from the fact that 
    $\mP_{m}(\w)\in\feasible_m$,
    and that $\feasible_m$ is defined (\asmref{asm:separability}) 
    as the set of solutions with a zero
    hinge loss over the samples in $\dataset_{m}$ (such as $\x$).

    Since $\teacher$ is the intersection of all ${\feasible_m}$,
    we get that \hfill 
    $\dist(\w, \feasible_m)\le \dist(\w, \teachers)$.

    Overall, it follows that
    \begin{align*}
        F_m (\w) 
        \triangleq
        \!
        \!
        \max_{(\x,y)\in\dataset_m}
        \!\!\!
        \prn{
        \max\left\{0,\,1-\w^\top \x\right\}
        }^2
        \!
        \,&
        \le
        \max_{\x\in\dataset_m}
        \!
        \Bigprn{
        \dist(\w, \feasible_m)
        \tnorm{\x}
        }^2
        \!
        =
        \dist^{2}(\w, \feasible_m)
        \max_{\x\in\dataset_m}
        \tnorm{\x}^2
        %
        %
        \le\!
        \dist^{2}(\w, \teachers)
        R^2
        .
    \end{align*}

Finally, due to the monotonicity from \lemref{lem:feasible_distance_monotonicity},
the iterates of \procref{proc:adaptive} hold that,
\begin{align*}
    \dist^{2}(\w_t, \teachers)
    \le 
    \dist^{2}(\w_0, \teachers)
    =\norm{\teacher}^2,
    ~~~~\forall t\in\naturals\,.
\end{align*}
\end{proof}

\newpage

\subsection{Adversarial Construction: Additional Discussion and Illustrations (\secref{sec:adversarial})}
\label{app:adversarial}

\vspace{-.5cm}

\begin{figure}[ht!]
    
      \centering
      \begin{minipage}[t!]{0.6\linewidth}
        \caption{
        Elaboration on the construction in \figref{fig:adversarial-data}.
        We always use $\dim\!=\!3$ dimensions.
        Here, we demonstrate our construction with only $T\!=\!20$ tasks.
        Each task consists of a single datapoint $\x_m$ which is positively labeled (\ie ${y_m=+1}$) and has a norm of ${\norm{\x_m} = 1 = R}$.
        Datapoints are uniformly spread on a plane slightly elevated above the $xy$-plane.
        We also plot the plane induced by a specific datapoint $\x_m$ (highlighted in orange), which is the boundary of its feasible set 
        ${\feasible_m
        \,\!\triangleq\!\,
        \left\{
        \w\!\in\!\reals^{\dim}
        \mid
        \w^\top \x_m\ge 1 \right\}}
        $.
        All such planes intersect at $\teacher$. 
        The elevation of the datapoints above the $xy$-plane determines the magnitude of $\norm{\teacher}$ (lower elevation implies a worse minimum margin and a larger $\norm{\teacher}$).
        In our experiment, we set ${\norm{\teacher} = 10}$. As the number of tasks ${T\!\to\!\infty}$, the uniform angles between consecutive tasks and the applied projections become smaller.
        }
      \end{minipage}
      \hfill
      \begin{minipage}[t!]{0.34\linewidth}
        {\includegraphics[width=.99\linewidth]{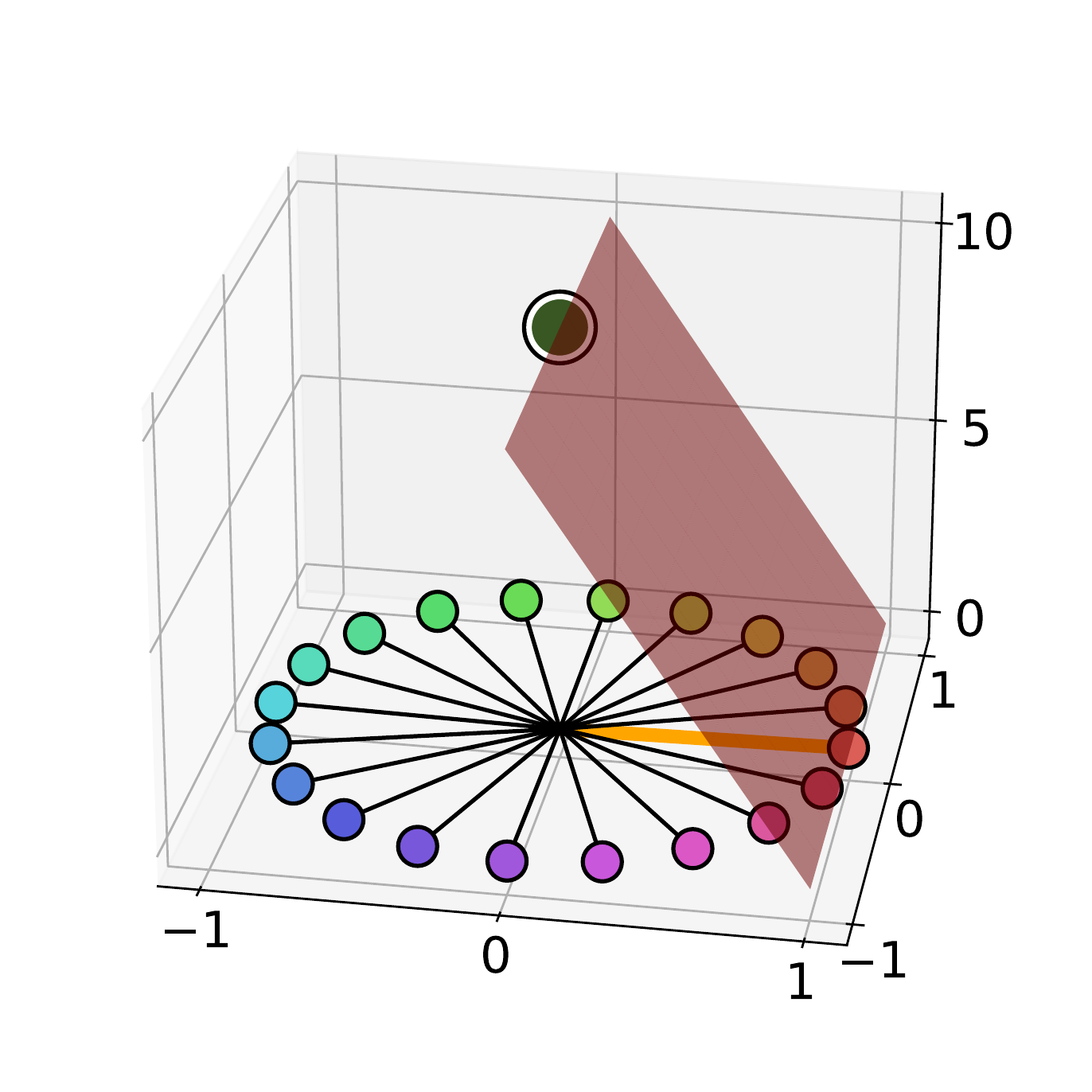}}
      \end{minipage}
\end{figure}

\vspace{-1em}

\begin{figure}[ht!]
    \centering
    {\includegraphics[width=.95\linewidth]{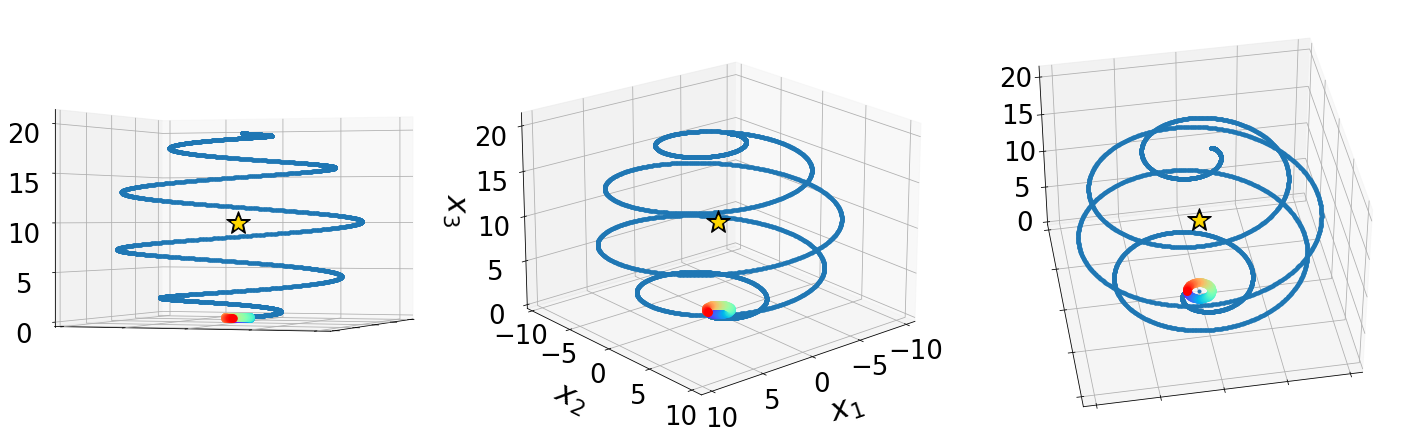}}
    \vspace{-.5em}
    \caption{Plotting the path of the iterates $(\w_t)$ during several cycles on $T=1,000$ tasks (forming a ``denser'' circle than the one in \figref{fig:adversarial-data}).
    Notice how the learner ``misses'' the minimum norm solution $\teacher=(0, 0, 10)$, which is indicated by a star.
    Instead, the iterates seem to converge
    near $2\teacher$, still holding the guarantee in \thmref{thm:optimality_guarantees}.
    %
    %
    Moreover, since at the end of learning it approximately holds that ${\norm{\w_t - \teacher} \approx \norm{2\teacher - \teacher} = 2\norm{\teacher}=20}$, then by the monotonicity that we prove in 
    \corref{cor:monotonicity},
    \ie that ${\norm{\w_t - \teacher} \le \norm{\w_{t'} - \teacher} \le \norm{\w_0 - \teacher},~
    \forall t' \le t}$,
    we understand that the iterates remain at a distance of approximately 10 throughout the entire learning process, thus approximately residing on a sphere centered at $\teacher$ with a radius of $\norm{\teacher}$.
    }
\end{figure}

\vspace{2em}

\begin{figure}[ht!]
    \centering
    {\includegraphics[width=.85\linewidth]{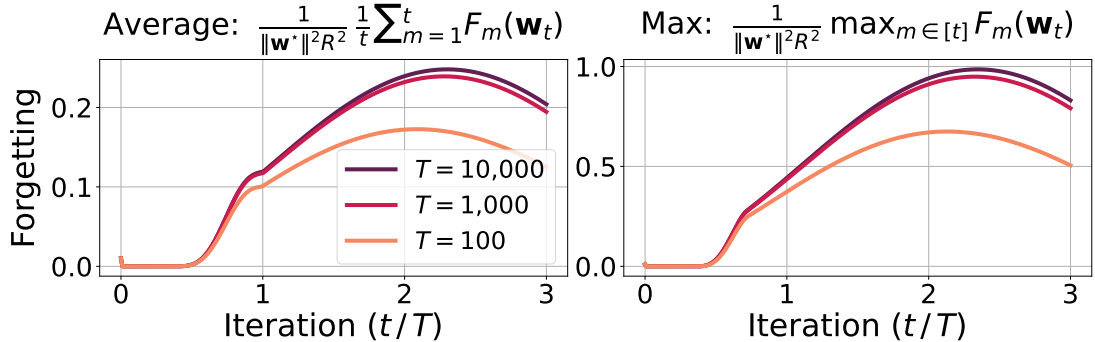}}
                \caption{
            This is a larger version of \figref{fig:adversarial}. The average and maximum forgetting for the adversarial construction. In this experiment, we trained on $3$ cycles of the same $T$ tasks, 
            gradually increasing $T$.
            Notably, after learning $T\to \infty$ jointly separable tasks, the quantities do \emph{not} decay.
            Recall that the maximum forgetting lower bounds the distance to the offline feasible set, which is upper bounded by $\norm{\teacher}^2\!R^2$ (\lemref{lem:euclidean_to_hinge}).
            Hence, the quantities of interest appear to become \emph{arbitrarily bad} at some point during the 3rd cycle of learning, in the sense that they reach $\norm{\teacher}^2\!R^2$.
            }
\end{figure}

\newpage

\subsection{Convergence to the Minimum-Norm Solution (\secref{sec:convergence_to_min})}
\label{app:min_norm}

\begin{recall}[\thmref{thm:optimality_guarantees}]
Any iterate $\w_{t}$ obtained by \procref{proc:adaptive}
holds 
$\norm{\w_{t}}\le 2\norm{\teacher}$,
where $\teacher$ is the minimum-norm offline solution 
(\defref{def:min-norm-solution}).

If additionally, $\w_{t}$ is ``offline''-feasible, 
\ie $\w_{t} \in \teachers$,
then
$$
\norm{\teacher} \le \norm{\w_{t}} \le 2\norm{\teacher}\,.
$$
\end{recall}

\medskip

\begin{proof}
To prove the first part of the theorem, we 
use the triangle inequality,
the monotonicity from \corref{cor:monotonicity} (below),
and the fact that $\w_{0}=\0_d$,
to
show that
\begin{align*}
    \forall t\in\naturals:
    \enskip
    \norm{\w_{t}}
    =
    \norm{\w_{t}-\w_{0}}
    =
    \norm{\w_{t}-\teacher+\teacher-\w_{0}}
    \le 
    {\norm{\w_{0}-\teacher}}
    +
    \norm{\w_{t}-\teacher}
    \stackrel{\ref{cor:monotonicity}}{\le}
    2\norm{\w_{0}-\teacher}
    =
    2\norm{\teacher}
    \,.
\end{align*}

The second part of the theorem follows immediately from the minimality of $\teacher\in \teachers$ 
(\defref{def:min-norm-solution}).
\end{proof}

\newpage

\section{Proofs for Recurring Tasks
(\secref{sec:repetitions})}
\label{app:repetitions_proofs}

\subsection{General Properties}
\label{sec:general-prop}

We start by stating a few general properties of projections and norms that will help us throughout the appendices.

\bigskip

\begin{property}[Projection properties]
\label{prop:projections}

Let $\mP: \reals^{\dim} \to \reals^{\dim}$ be a projection  operator onto a nonempty closed convex set $\mathcal{C}\subseteq\reals^{\dim}$.
\linebreak
Then, $\mP$ holds the following properties: 
\begin{enumerate}
    \item 
    \textbf{Geometric definition.}
    $\mP(\vv)=
    \argmin_{\w\in\mathcal{C}}\norm{\vv-\w}$
    for any $\vv \in \reals^{\dim}$~;
    \item \textbf{Idempotence.}
    $\mP^2\triangleq\mP \circ \mP =\mP$~;
    \item \textbf{Contraction (Non-expansiveness).} 
    For all $\vv, \vu \in \reals^{\dim}$,
    it holds that
    $\norm{\mP(\vv) - \mP(\vu)} \le \norm{\vv-\vu}$.
    \linebreak
    Consequently, when $\0_{\dim}\in \mathcal{C}$, it holds that 
    $\norm{\mP(\vv)} \le \norm{\vv}$
    (see Fact~1.9 in \citet{deutsch2006ratePOCS_II})~;
    \item
    \textbf{The operator $\I - \mP$.}
    Let $\I(\vv)=\vv$ be the identity operator
    and define the operator $(\I-\mP)(\vv)=\vv-\mP(\vv)$.
    \linebreak
    Then, for all $\vv, \vu \in \reals^{\dim}$,
    it holds that
    $\norm{(\I-\mP)(\vv)-(\I-\mP)(\vu)}^2
    \le 
    \norm{\vv-\vu}^2
    -
    \norm{\mP(\vv)-\mP(\vu)}^2
    $.
    \linebreak
    Consequently, when $\0_{\dim}\in \mathcal{C}$,
    it holds that 
    $\norm{\vv-\mP(\vv)}^2 \le 
    \norm{\vv}^2-\norm{\mP(\vv)}^2\le\norm{\vv}^2$
    for any $\vv\in\reals^{\dim}$ 
    \linebreak
    (see Propositions~4.2~and~4.8 in \citet{bauschke2011convexBook}
    and Fact~1.7 in \citet{deutsch2006ratePOCS_II}).
\end{enumerate}
\end{property}

\bigskip
\bigskip

As a result of the non-expansiveness property above, we get the following monotonicity result 
(stronger than \ref{lem:feasible_distance_monotonicity}).
\begin{corollary}
\label{cor:monotonicity}
Let $\w\in\teachers$ be an arbitrary offline solution and $\mP_{t}$ be the projection onto the feasible set of the ${t}$\nth task.
The following monotonicity holds for the iterates $(\w_t)$ of \procref{proc:adaptive}:
$$
\tnorm{\w_{t}-\!\underbrace{\w}_{\in\feasible_t}}
=
\tnorm{\mP_{t}\prn{\w_{t-1}} - \mP_{t}\prn{\w}}
\le
\norm{\w_{t-1}-\w},~~~\forall t\in\naturals\,.
$$
Specifically, this holds for the minimum-norm solution $\teacher$:
\hfill
$
\tnorm{\w_{t}-\teacher}
\le
\norm{\w_{t-1}-\teacher},~~~\forall t\in\naturals\,.
$
\end{corollary}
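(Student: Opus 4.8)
The plan is to reduce the statement to two elementary facts about the projection operators $\mP_t$ onto the feasible sets $\feasible_t$, both already recorded in Property~\ref{prop:projections}. First I would unfold the definition of the Sequential Max-Margin iterates in \procref{proc:adaptive}: by construction $\w_t = \mP_t(\w_{t-1})$, so the quantity on the left is literally $\tnorm{\mP_t(\w_{t-1}) - \w}$, and it only remains to rewrite $\w$ as $\mP_t(\w)$ and invoke non-expansiveness.

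The single point that genuinely needs justification is the first equality in the claim, namely $\w = \mP_t(\w)$. Since $\w \in \teachers = \feasible_1 \cap \dots \cap \feasible_T$, in particular $\w \in \feasible_t$ for every $t$; and the geometric definition of a projection (Property~\ref{prop:projections}, item~1) gives $\mP_t(\w) = \argmin_{\vu \in \feasible_t} \norm{\w - \vu}$, whose minimizer is $\w$ itself, attaining distance zero. Hence $\mP_t$ fixes every offline solution, which is what lets us turn $\w$ into $\mP_t(\w)$ inside the norm.

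With these two identities in hand, the inequality follows immediately from the contraction (non-expansiveness) property of projections onto closed convex sets (Property~\ref{prop:projections}, item~3): $\tnorm{\w_t - \w} = \tnorm{\mP_t(\w_{t-1}) - \mP_t(\w)} \le \norm{\w_{t-1} - \w}$, valid for every $t \in \naturals$. The specialization to the minimum-norm solution is then obtained by taking $\w = \teacher$, which belongs to $\teachers$ by \defref{def:min-norm-solution}. I do not expect a real obstacle here: the corollary is a direct consequence of the two cited projection properties, and the only step worth spelling out is that membership $\w \in \feasible_t$ upgrades to the fixed-point identity $\mP_t(\w) = \w$.
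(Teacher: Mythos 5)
Your proposal is correct and matches the paper's own (implicit) argument exactly: the corollary is stated as a direct consequence of the contraction property in Property~\ref{prop:projections}, using precisely the fixed-point identity $\mP_t(\w)=\w$ for $\w\in\teachers\subseteq\feasible_t$ together with non-expansiveness. Your explicit justification of the fixed-point step via the geometric definition of the projection is a fine way to spell out the one detail the paper leaves tacit.
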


\bigskip
\bigskip

Next, we state a known property of squared Euclidean norms, stemming from their convexity and from Jensen's inequality.
\begin{claim}
\label{clm:square_ineq}
 For any $m$ vectors $\vv_{1},\dots,\vv_{m}\!\in\!\reals^{\dim}$, it holds that $\left\Vert \vv_{1}\!+\!\dots\!+\!\vv_{m}\right\Vert ^{2}
 \le
 m\!\left(\left\Vert \vv_{1}\right\Vert ^{2}+\cdots+\left\Vert \vv_{m}\right\Vert ^{2}\right)$.
\end{claim}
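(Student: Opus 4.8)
The plan is to reduce the statement to a single application of Jensen's inequality for the convex map $f(\x)\triangleq\norm{\x}^2$. First I would recall that $f$ is convex on $\reals^{\dim}$ (its Hessian is $2\I\succeq\0$), so Jensen's inequality applied to the uniform average of the $m$ points gives
$$
\Bignorm{\tfrac{1}{m}\tsum_{i=1}^{m}\vv_{i}}^2
=
f\Bigprn{\tfrac{1}{m}\tsum_{i=1}^{m}\vv_{i}}
\le
\tfrac{1}{m}\tsum_{i=1}^{m}f(\vv_{i})
=
\tfrac{1}{m}\tsum_{i=1}^{m}\norm{\vv_{i}}^2\,.
$$
Multiplying both sides by $m^2$ and pulling the scalar $\tfrac{1}{m}$ out of the norm on the left then yields exactly $\norm{\vv_{1}+\dots+\vv_{m}}^2\le m\prn{\norm{\vv_{1}}^2+\dots+\norm{\vv_{m}}^2}$, which is the claim.

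As an equivalent and fully self-contained alternative, I would expand the left-hand side through the inner product, $\norm{\tsum_i \vv_{i}}^2=\tsum_{i,j}\iprod{\vv_{i}}{\vv_{j}}$, and bound each cross term by Young's inequality $\iprod{\vv_{i}}{\vv_{j}}\le\tfrac{1}{2}\prn{\norm{\vv_{i}}^2+\norm{\vv_{j}}^2}$. Summing over the $m^2$ ordered pairs, each index appears in $m$ of the terms on each side, so the right-hand side collapses to $m\tsum_{i}\norm{\vv_{i}}^2$, giving the bound directly without invoking Jensen. Either route is short; I would present the Jensen version as the main line and mention the AM-GM expansion as the elementary backup.

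I do not anticipate any genuine obstacle here: the result is a textbook consequence of convexity, valid for every $m\in\naturals$ and every choice of vectors, with the $m=1$ case reducing to an equality. The only point deserving minor care is the bookkeeping of the scalar factor when moving between $\norm{\tfrac{1}{m}\tsum_i\vv_{i}}$ and $\tfrac{1}{m}\norm{\tsum_i\vv_{i}}$ (and, in the second route, confirming that each squared norm is counted exactly $m$ times); everything else is routine algebra.
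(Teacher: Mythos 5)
Your Jensen-inequality argument is correct and is exactly the route the paper indicates for \clmref{clm:square_ineq}, which it states as "stemming from their convexity and from Jensen's inequality" without writing out the details you supply. Both your main line and your elementary cross-term backup are valid; nothing further is needed.
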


\bigskip

\begin{lemma}
\label{lem:residual_is_convex}
The residual from task $m$, defined by
$
    \left\Vert
    \vv - \mP_m(\vv)
    \right\Vert ^{2}
    =
    \left\Vert
    (\I-\mP_m)(\vv)
    \right\Vert ^{2}$,
is a convex function in $\vv\in\mathbb{R}^{\dim}$.
\end{lemma}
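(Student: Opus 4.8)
The plan is to prove convexity directly from the variational (geometric) characterization of the projection in Property~\ref{prop:projections}(1), together with the convexity of $\feasible_m$ (Assumption~\ref{asm:separability}) and of the squared Euclidean norm. Fix arbitrary $\vv_1,\vv_2\!\in\!\reals^{\dim}$ and $\lambda\!\in\![0,1]$, and write $\vv_\lambda \triangleq \lambda\vv_1 + (1-\lambda)\vv_2$. The goal is the inequality
$$
\tnorm{\vv_\lambda - \mP_m(\vv_\lambda)}^2
\le
\lambda\,\tnorm{\vv_1 - \mP_m(\vv_1)}^2
+
(1-\lambda)\,\tnorm{\vv_2 - \mP_m(\vv_2)}^2\,.
$$

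First I would exploit the minimality of the projection. Since $\feasible_m$ is convex, the point $\vu \triangleq \lambda\mP_m(\vv_1) + (1-\lambda)\mP_m(\vv_2)$ lies in $\feasible_m$. Because $\mP_m(\vv_\lambda)$ is, by Property~\ref{prop:projections}(1), the \emph{nearest} point of $\feasible_m$ to $\vv_\lambda$, we have $\tnorm{\vv_\lambda - \mP_m(\vv_\lambda)} \le \tnorm{\vv_\lambda - \vu}$. The chosen $\vu$ is designed so that the difference splits telescopically into the two individual residual vectors,
$$
\vv_\lambda - \vu
=
\lambda\bigprn{\vv_1 - \mP_m(\vv_1)}
+
(1-\lambda)\bigprn{\vv_2 - \mP_m(\vv_2)}\,.
$$

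Second, I would square the bound (both sides are nonnegative) and apply the convexity of the map $\vect{z}\mapsto\tnorm{\vect{z}}^2$ to this convex combination, obtaining
$$
\tnorm{\vv_\lambda - \vu}^2
\le
\lambda\,\tnorm{\vv_1 - \mP_m(\vv_1)}^2
+
(1-\lambda)\,\tnorm{\vv_2 - \mP_m(\vv_2)}^2\,,
$$
which chains with the previous inequality to give the claim. Convexity of the squared norm is itself immediate from the triangle inequality followed by the convexity and monotonicity of $t\mapsto t^2$ on $\reals_{\ge 0}$.

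There is no genuine obstacle here; the only point requiring care is the choice of the feasible comparison point $\vu$, which must be the convex combination of the two \emph{projections} (not, e.g., the projection of $\vv_\lambda$), so that its residual against $\vv_\lambda$ decomposes exactly into the desired convex combination. As an alternative, one may note that the residual equals $\dist^2(\vv,\feasible_m)$ and invoke the fact that partial minimization of the jointly convex function $(\vv,\w)\mapsto\tnorm{\vv-\w}^2$ over the convex set $\{\w\in\feasible_m\}$ preserves convexity; but the direct two-step argument above is more self-contained and reuses only results already stated in this section.
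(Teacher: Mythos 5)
Your proof is correct and follows exactly the paper's argument: you bound the residual at the convex combination by comparing against the feasible point $\lambda\mP_m(\vv_1)+(1-\lambda)\mP_m(\vv_2)$ (using convexity of $\feasible_m$ and minimality of the projection), then apply convexity of the squared norm to the resulting convex combination of residual vectors. This matches the paper's proof step for step, so there is nothing to add.
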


\begin{proof}
Let $\vu,\vv\in\mathbb{R}^{\dim}$, than for every $\alpha\in[0,1]$ it holds that
\begin{align*}
\left\Vert
(\I-\mP_{m})(\alpha\vu+(1\!-\!\alpha)\vv)
\right\Vert^{2}
&=
\left\Vert
\alpha\vu+(1\!-\!\alpha)\vv - \mP_{m}(\alpha\vu+(1\!-\!\alpha)\vv)
\right\Vert^{2}
\\
\explain{\text{projection properties}}
&\triangleq
\!
\min_{\w\in\feasible_{m}}
\!
\norm{\alpha\vu+(1\!-\!\alpha)\vv - \w}^2
\le
\Big\Vert
\alpha\vu+(1\!-\!\alpha)\vv - 
\underbrace{(\alpha
\overbrace{\mP_{m}(\vu)}^{\in\feasible_{m}}
+
(1\!-\!\alpha)
\overbrace{\mP_{m}(\vv)}^{\in\feasible_{m}}
)}_{
\in \feasible_m\text{, due to its convexity}
}
\Big\Vert^{2}
\\
&
=
\left\Vert
\alpha(\vu-\mP_{m}(\vu))+(1\!-\!\alpha)(\vv-\mP_{m}(\vv))
\right\Vert^{2}
\\
\explain{\text{squared norm is convex}}
&
\le
\alpha
\left\Vert
\vu-\mP_{m}(\vu)
\right\Vert^{2}
+
(1\!-\!\alpha)
\left\Vert
\vv-\mP_{m}(\vv)
\right\Vert^{2}\,.
\end{align*}
\end{proof}

\newpage

\subsection{Linear Regularity of Classification Tasks}

We start by proving an important property of our setting --
that the feasible set has a nonempty interior.
\begin{lemma}
\label{lem:feasible_interior}
Recall our definition of
$
{R\triangleq 
\max_{m\in\cnt{T}}
\max_{(\x,y)\in\dataset_m }
\!\!\norm{\x}}$.
Let $\teacher$ be the minimum-norm solution defined in \defref{def:min-norm-solution}.
Then, the vector $2\teacher$ has a \emph{feasible} ball of radius $\frac{1}{R}$ around it.
More formally,
$$\mathcal{B}\prn{2\teacher, \nicefrac{1}{R}}
\triangleq
\left\{
\w\in\reals^{\dim}
\,\mid\,
\norm{\w-2\teacher}\le \nicefrac{1}{R}
\right\}
\subset
\teachers
\,.$$
\end{lemma}

\begin{proof}
By the definition of feasibility, 
$\forall m\in\cnt{T},\,
\forall (\x,y)\in \dataset_m\!:\,
y {\teacher}^{\top} \x \ge 1
$.
Let $\w$ be an arbitrary vector in $\mathcal{B}\prn{2\teacher, \nicefrac{1}{R}}$.
Clearly, $\w$ can be instead denoted as $\w=2\teacher +\vv$ for some $\vv$ such that $\norm{\vv}\le\nicefrac{1}{R}$.
We then conclude that $\w$ is feasible:
$$
\forall m\in\cnt{T},\,
\forall (\x,y)\in \dataset_m\!:\,\,
y\w^\top \x = 
2\underbrace{y{\teacher}^{\top} \x}_{\ge 1} + y\vv^\top \x 
\ge
2 - \abs{\vv^\top \x }
\ge
2 - \underbrace{\norm{\vv}}_{\le\nicefrac{1}{R}}\underbrace{\norm{\x}}_{\le R}
\ge
1
~~\Longrightarrow~~
\w \in \teachers\,.
$$

\end{proof}

\bigskip

We are now ready to prove the linear regularity, using \lemref{lem:feasible_interior} and techniques from 
\citet{gubin1967methodProjections}
(in their proof of Lemma~5)
and \citet{nedic2010randomPOCS}
(in their proof of Proposition~8).
To be more exact, we prove \emph{bounded} linear regularity (as defined in \citet{bauschke1993convergence,deutsch2008ratePOCS_III}),
which suffices for our needs in this paper.

\begin{recall}[\lemref{lem:regularity}]
At the $t$\nth iteration, the distance to the offline feasible set is tied to the distance to the farthest feasible set of any \emph{specific} task.
Specifically, it holds that $\forall t\!\in\!\naturals$,
\begin{align*}
    \dist^2\!\prn{\w_{t}, \teachers}
    \le
    4\tnorm{\teacher}^2 R^2 
    \max_{m\in\cnt{T}} \dist^2 
    \bigprn{\w_{t}, \feasible_m}
    \,
    .
\end{align*}
\end{recall}

\begin{proof}
We start by showing (bounded) regularity for an arbitrary $\w\in\reals^{\dim}$ (not necessarily an iterate).
\linebreak
We define $\varepsilon \triangleq \max_{m\in\cnt{T}} d \prn{\w, \feasible_m}$
and consider a convex combination between $\w$ and $2\teacher$:
\begin{align*}
\y &= 
\frac{\nicefrac{1}{R}}{\varepsilon + \nicefrac{1}{R}}\w
+
\frac{\varepsilon}{\varepsilon + \nicefrac{1}{R}}2\teacher\,
= 
\frac{\nicefrac{1}{R}}{\varepsilon + \nicefrac{1}{R}}\w
+
\frac{\varepsilon}{\varepsilon + \nicefrac{1}{R}}2\teacher
-
\frac{\nicefrac{1}{R}}{\varepsilon + \nicefrac{1}{R}}\mP_m(\w)
+
\frac{\nicefrac{1}{R}}{\varepsilon + \nicefrac{1}{R}}\mP_m(\w)
\\
&
\triangleq
\frac{\varepsilon}{\varepsilon + \nicefrac{1}{R}}\vect{z}_m
+
\frac{\nicefrac{1}{R}}{\varepsilon + \nicefrac{1}{R}}\mP_m(\w)
\,,
\end{align*}
where we denoted 
$\vect{z}_m \triangleq
2\teacher
+
\frac{\nicefrac{1}{R}}{\varepsilon}
\prn{
    \w
    -
    \mP_m(\w)
}$, for any $m\in\cnt{T}$.

We notice that $\vect{z}_m \in \mathcal{B}\prn{2\teacher,\nicefrac{1}{R}}\stackrel{\text{\ref{lem:feasible_interior}}}{\subset}\teachers\subseteq \feasible_m$,
since $\norm{\vect{z}_m - 2\teacher}
=
\frac{\nicefrac{1}{R}}{\varepsilon}
\underbrace{
\norm{\w - \mP_m (\w)}
}_{
=\dist(\w, \feasible_m)\le \varepsilon
}
\le \nicefrac{1}{R}$.
Therefore, $\y$ is a convex combination of $\vect{z}_m,\mP_m\prn{\w}\in \feasible_m$ and is therefore also contained in $\feasible_m$.
Since this is true $\forall m\in\cnt{T}$, we get that $\y\in\teachers$.
Then, a bounded linear regularity property follows, since
\begin{align*}
\dist(\w, \teachers)
&\le
\norm{\w - \y}
=
\overbrace{\frac{\varepsilon}{\varepsilon+\nicefrac{1}{R}}
}^{ \le R\varepsilon }
\norm{\w - 2\teacher}
\le
\norm{\w - 2\teacher}
R
\varepsilon
=
\norm{\w - 2\teacher}
R
\max_{m\in\cnt{T}} \dist
\bigprn{\w, \feasible_m}\,.
\end{align*}
When $\w=\w_{t}$ is an iterate of the sequential \procref{proc:adaptive}, we can use the monotonicity from \corref{cor:monotonicity} 
(notice that $2\teacher\in\teachers$)
to show that
$\norm{\w_{t} - 2\teacher}\le
\norm{\w_{0} - 2\teacher}
=\norm{2\teacher}
=2\norm{\teacher}$,
and finally, conclude that
\begin{align*}
\dist(\w_{t}, \teachers)
&
\le
2
\norm{\teacher}R
\max_{m\in\cnt{T}} \dist
\bigprn{\w_{t}, \feasible_m}
\,.
\end{align*}
\end{proof}


\newpage

\subsection{Proofs for Cyclic Orderings
(\secref{sec:cyclic})}
\label{app:cyclic_proofs}

\begin{recall}[\lemref{lem:cyclic_lim}] 
Under~a~cyclic ordering
(and the separability assumption~\ref{asm:separability}), 
the iterates converge to a 2-optimal 
${\wlim\in\teachers}$.
That is,
$$
\lim_{k\to \infty} 
\dist(\w_k, \teachers)=0,
~~~
\norm{\teacher} \le \norm{\w_\infty} \le 2\norm{\teacher}
\,.
$$
\end{recall}

\begin{proof}
First, notice that we work in a finite-dimensional Euclidean space $\reals^{\dim}$ and that from Assumption~\ref{asm:separability} we have that
the offline feasibility set,
\ie the convex sets' intersection
$\teachers=\feasible_1\cap\dots\cap\feasible_T$, is nonempty.
Under these conditions, the (weak and strong) convergence of the cyclic iterates $(\w_{t})$ to a vector $\wlim\in\teachers$
can be deduced from the rates we derive (independently of the lemma here) in
Proposition~\ref{prop:cyclic_universal}.
The second part of our corollary stems directly from
our \thmref{thm:optimality_guarantees}.

Similar optimality guarantees for cyclic settings have also been proved in previous papers (\eg Theorem~1 in \citet{gubin1967methodProjections}
and Theorem~2.7 in \citet{deutsch2006ratePOCS_I}).
\end{proof}

\newpage

\begin{recall}[\thmref{thm:2_tasks_cyclic}]
    For ${T\ge2}$
    jointly-separable tasks
    learned cyclically,
    after 
    ${k\!=\!nT}$ iterations ($n$ cycles),
    our quantities of interest 
    (\defref{def:quantities})
    converge linearly as
    \begin{align*}
    \underbrace{
        \max_{m\in\cnt{T}}
        F_m (\w_k)}_{
        \substack{
        \text{Maximum forgetting}
        }
    }
    \le
    \underbrace{
        \max_{m\in\cnt{T}}
        \dist^{2\!}\prn{\w_k, \feasible_{m}}
    }_{
        \substack{
        \text{Maximum distance}
        \\
        \text{to any feasible set}
        }
    }
    R^2
    \le 
    \underbrace{
    \dist^{2\!}\prn{\w_k, \teachers}
    }_{
    \substack{
    \text{Dist.~to}
    \\
    \text{offline feasible set}
    }
    }
    R^2
    &\le
    g(k)
    \norm{\teacher}^2 \!R^2
    %
    \,,
    \end{align*}
    where
    $
    g(k)
    \triangleq
    \begin{cases}
    \enskip\,
    \exp
    \!\prn{-\tfrac{k}{4\tnorm{\teacher}^2 R^2}} & T=2
    \\
    4
    \exp\!\prn{-\tfrac{k}{16T^2\tnorm{\teacher}^2 R^2}} 
    & T\ge 3
    \end{cases}
    $
    and
    $\norm{\teacher}^2 \!R^2\!\ge\! 1$ is the problem complexity (Rem.~\ref{rmk:complexity}).
\end{recall}

\begin{proof}

Here, we exploit the regularity of our setting,
in which the convex sets are defined by the intersections of halfspaces.

\begin{enumerate}
    \item \textbf{Cyclic orderings of $T=2$ tasks.}

    Given the regularity of our problems
    (\lemref{lem:regularity}),
    the required upper bound on $\dist^2\!\prn{\w_k,\teachers}$ 
    is given from existing results on projection algorithms with two closed convex sets (specifically from Theorem~3.12 and Corollary~3.14 in \citet{bauschke1993convergence} 
    and Theorem~2.10 in \citet{bauschke2001projection}).
    However, the indexing in some of these previous works may be confusing. Therefore, for the sake of completeness, we prove it here as well.
    
    Assume w.l.o.g.~that $2\mid k$. Define the projection operator $\mP_{1\cap 2}$ onto the intersection
    $\teachers=\feasible_1\cap \feasible_2$.
    It follows that,
    \begin{align*}
    \frac{1}{4\tnorm{\teacher}^2 R^2}
    \dist^2\!\prn{\w_k,\teachers}
    &\!\!\stackrel{\text{\lemref{lem:regularity}}}{\le}\!\!\!
    \max_{m\in\{1,2\}} \dist^2\!\prn{\w_k,\feasible_m}
    \!\stackrel{\text{cyclic}}{=}\!
    \norm{\w_k-\mP_1 (\w_k)}^2
    \\
    &
    =
    \norm{\w_k-\mP_1 (\w_k) 
    -
    \mP_{1\cap 2}(\w_k)
    +
    \mP_{1\cap 2}(\w_k)
    }^2
    \\
    &
    =
    \norm{\w_k-\mP_1 (\w_k) 
    -
    \mP_{1\cap 2}(\w_k)
    +
    \mP_1 (\mP_{1\cap 2}(\w_k))}^2
    \\
    &
    =
    \norm{
    \prn{\I-\mP_1}(\w_k)
    -
    \prn{\I-\mP_1}(\mP_{1\cap 2}(\w_k))}^2
    \\
    \explain{\text{\propref{prop:projections}}}
    &
    \le
    \norm{\w_k-\mP_{1\cap 2}(\w_k) }^2
    -
    \norm{\mP_1 (\w_k) - \mP_1 (\mP_{1\cap 2}(\w_k))}^2
    \\
    &
    =
    \underbrace{\norm{\w_k-\mP_{1\cap 2}(\w_k) }^2}_{=\dist^2\!\prn{\w_k,\teachers}}
    -
    \underbrace{\norm{\w_{k+1} - \mP_{1\cap 2}(\w_k)}^2}_{
    \ge\,\dist^2\!\prn{\w_{k+1},\teachers} 
    }
    \le
    \dist^2\!\prn{\w_k,\teachers}
    -
    \dist^2\!\prn{\w_{k+1},\teachers}
    \\
    \dist^2\!\prn{\w_{k+1},\teachers}
    &
    \le
    \prn{1-\frac{1}{4\tnorm{\teacher}^2 R^2}}
    \dist^2\!\prn{\w_k,\teachers}
    \le
    \dots
    \le
    \prn{1-\frac{1}{4\tnorm{\teacher}^2 R^2}}^{k+1}
    \underbrace{\dist^2\!\prn{\w_0,\teachers}}_{
    \triangleq \tnorm{\teacher}^2
    }\,.
    \end{align*}
    Finally, we conclude the $T=2$ case by using
    the ordering property of our quantities of interest
    (\lemref{lem:euclidean_to_hinge}),
    and using the algebraic identity stating that
    $\forall z\!\in\!\prn{0,1},k\!>\!0:~
    \prn{1-z}^k \le \exp\prn{-kz}$.
    
    \begin{remark}
    A similar upper bound on
    $\dist^2\!\prn{\w_{k},\teachers}$,
    with a slightly worse rate,
    could be deduced from the stochastic upper bound in
    \thmref{thm:random_rates},
    by using the fact that for $T=2$ tasks, random orderings lead to exactly the same projections as cyclic ones, but slower 
    (the idempotence of the projections implies that applying the same projection many consecutive times in a random ordering has the same effect as applying it once like in a cyclic ordering). 
    \end{remark}

    \item \textbf{Cyclic orderings of $T\ge3$ tasks.}

    Given the regularity of our problems
    (\lemref{lem:regularity}),
    the required upper bound is an almost immediate 
    corollary from    
    Theorem~3.15 in \citet{deutsch2008ratePOCS_III}),
    stating that in our case we have
    \begin{align*}
        \norm{\w_{k}-\wlim}^2
        \le 
        \prn{
        1 - \frac{1}{4T\cdot 4 \norm{\teacher}^2 R^2 }
        }^{k/T}
        \norm{\w_{0} - \wlim}^2
        =
        \prn{
        1 - \frac{1}{16 T\norm{\teacher}^2 R^2 }
        }^{k/T}
        \norm{\wlim}^2\,.
    \end{align*}
    We use 
    \lemref{lem:cyclic_lim} 
    ($\wlim\in\teachers, \norm{\wlim}\le2\norm{\teacher}$)
    and
    the aforementioned algebraic identity,
    and get
    \begin{align*}
        \dist^{2\!}\prn{\w_{k},\teachers}
        \le
        \norm{\w_{k}-\wlim}^2
        \le 
        4
        \norm{\teacher}^2
        \exp\prn{-\frac{k}{16 T^2\norm{\teacher}^2 R^2 }}\,.
    \end{align*}
    \vspace{-2em}
\end{enumerate}
\end{proof}

\newpage

\subsubsection{Detour: Universal bounds for general cyclic Projections Onto Convex Sets settings (POCS)}
\label{app:universal}

First, we establish a lemma equivalent to Lemma~22 in \citet{evron2022catastrophic} for our POCS setting.
They used \emph{linear} projections solely while we analyze more general projection operators.
Thus, while we \emph{closely} follow their statements and proofs, we are required to perform several adjustments to capture the wider family of convex operators.

\bigskip

\begin{lemma}[{Cyclic-case auxiliary bounds}]
\label{lem:dimension_independent}
Let $\mQ_{1},\dots,\mQ_{T}: \reals^{\dim}\to\reals^{\dim}$
be $T$ projection operators
onto the nonempty closed convex sets {$\mathcal{C}_1,\dots, \mathcal{C}_T$} (respectively)
such that $\0_{\dim} \in \mathcal{C}_1 \cap \dots \cap \mathcal{C}_T$.
%
Let $\M=\mQ_{T}\circ\cdots\circ\mQ_{1}: \reals^{\dim}\to\reals^{\dim}$
be the cyclic operator formed by these projections.
Moreover, let $\vv\in \reals^{\dim}$ be an arbitrary vector.
Then:
\begin{enumerate}[label=(\textbf{Lemma~\ref*{lem:dimension_independent}\alph*)},leftmargin=2.5cm]\itemsep6pt
    \item 
    \label{lem:bound_on_projection_to_m_task}
    For any $m\in\cnt{T-1}$, it holds that
    \hfill
    $
    \dist^2\!\prn{\vv, \mathcal{C}_m}
    =
    \left\Vert
    \vv-\mQ_{m}(\vv)
    \right\Vert ^{2}
        \le 
        m
        \left(\left\Vert \vv\right\Vert ^{2}-
        \left\Vert \M(\vv)\right\Vert ^{2}\right)~;$
    \item 
    \label{lem:bound_of_one_cycle}
    It holds that
    \hfill
    $\left\Vert \vv - \M(\vv)\right\Vert ^{2}
        \le 
        T
        \!
        \left(\left\Vert \vv\right\Vert ^{2}-\left\Vert \M(\vv)\right\Vert ^{2}\right)~\!;$
    \item 
    \label{lem:first_task_forgetting}
    After $n\ge 1$ cycles it holds that
    \hfill
    $
        \left\Vert \M^{n}(\vv)\right\Vert ^{2}
        -
        \left\Vert \M^{n+1}(\vv)\right\Vert ^{2}
            \le 
        2 \norm{\M^{n}(\vv) - \M^{n+1}(\vv)}
        \norm{\vv}$~;
    \item \label{lem:m_task_forgetting}
    For any $m\in\cnt{T-1}$,
    after $n\ge 1$ cycles it holds that
    \begin{align*}
        \dist^2\!\prn{\M^{n}(\vv), \mathcal{C}_m}
        =
        \norm{\M^{n}(\vv)-\mQ_m(\M^{n}(\vv))}^2 
        \le 
        2m 
        \norm{\M^{n}(\vv) - \M^{n+1}(\vv)}
        \norm{\vv}
        ~;
    \end{align*}
    \item 
    \label{lem:T_over_n_norm}
    For any number of cycles $n\ge 1$, it holds that
    \hfill
    $\norm{\M^{n-1}(\vv) - \M^{n}(\vv)} 
    \le \sqrt{\nicefrac{T}{n}} \norm{\vv}$~.
\end{enumerate}
\end{lemma}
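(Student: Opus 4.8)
The plan is to establish the five sub-claims in the order (a)$\to$(b)$\to$(c)$\to$(d)$\to$(e), since each later part reuses the earlier ones. The single recurring device is the telescoping produced by the $\I-\mP$ inequality of \propref{prop:projections}: I introduce the intermediate iterates $\vv_0=\vv$ and $\vv_j=\mQ_j(\vv_{j-1})$ for $j\in\cnt{T}$ (so that $\vv_T=\M(\vv)$), and since $\0_{\dim}\in\mathcal{C}_j$, I get $\norm{\vv_{j-1}-\vv_j}^2\le\norm{\vv_{j-1}}^2-\norm{\vv_j}^2$ for each $j$. Summing telescopes to $\sum_{j=1}^m\norm{\vv_{j-1}-\vv_j}^2\le\norm{\vv}^2-\norm{\vv_m}^2$, and the contraction property yields the monotone chain $\norm{\vv_0}\ge\norm{\vv_1}\ge\cdots\ge\norm{\vv_T}$.

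For part (a), I first bound the \emph{direct} projection by the \emph{composed} iterate using the geometric (best-approximation) definition of a projection from \propref{prop:projections}: since $\vv_m\in\mathcal{C}_m$, we have $\dist^2(\vv,\mathcal{C}_m)=\norm{\vv-\mQ_m(\vv)}^2\le\norm{\vv-\vv_m}^2$. Then, writing $\vv-\vv_m=\sum_{j=1}^m(\vv_{j-1}-\vv_j)$, Claim~\ref{clm:square_ineq} gives $\norm{\vv-\vv_m}^2\le m\sum_{j=1}^m\norm{\vv_{j-1}-\vv_j}^2\le m(\norm{\vv}^2-\norm{\vv_m}^2)\le m(\norm{\vv}^2-\norm{\M(\vv)}^2)$, the last inequality using $\norm{\vv_m}\ge\norm{\vv_T}$. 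Part (b) is the identical computation applied to the full displacement $\vv-\M(\vv)=\sum_{j=1}^T(\vv_{j-1}-\vv_j)$, now with $T$ summands.

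For part (c), I set $\vu=\M^n(\vv)$ and factor $\norm{\vu}^2-\norm{\M(\vu)}^2=(\norm{\vu}-\norm{\M(\vu)})(\norm{\vu}+\norm{\M(\vu)})$. The reverse triangle inequality bounds the first factor by $\norm{\vu-\M(\vu)}$, while iterating the contraction property gives $\norm{\M(\vu)}\le\norm{\vu}$ and $\norm{\vu}=\norm{\M^n(\vv)}\le\norm{\vv}$, so the second factor is at most $2\norm{\vv}$; multiplying proves the bound. Part (d) is then immediate: apply part (a) to $\vu=\M^n(\vv)$ (for which $\M(\vu)=\M^{n+1}(\vv)$) and substitute the estimate from part (c).

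The only genuinely new ingredient lies in part (e), which I expect to be the crux. Here I must convert the summable quantity from part (b) into a per-step $1/n$ rate, and this requires that the consecutive displacements $\norm{\M^{k-1}(\vv)-\M^k(\vv)}$ be \emph{non-increasing} in $k$. This holds because $\M$ is non-expansive as a composition of non-expansive projections, so $\norm{\M^k(\vv)-\M^{k-1}(\vv)}=\norm{\M(\M^{k-1}(\vv))-\M(\M^{k-2}(\vv))}\le\norm{\M^{k-1}(\vv)-\M^{k-2}(\vv)}$. Summing the part-(b) bound over $k=1,\dots,n$ telescopes to $\sum_{k=1}^n\norm{\M^{k-1}(\vv)-\M^k(\vv)}^2\le T(\norm{\vv}^2-\norm{\M^n(\vv)}^2)\le T\norm{\vv}^2$; since the $n$ summands are non-increasing, the smallest (last) one satisfies $n\,\norm{\M^{n-1}(\vv)-\M^n(\vv)}^2\le T\norm{\vv}^2$, whence $\norm{\M^{n-1}(\vv)-\M^n(\vv)}\le\sqrt{T/n}\,\norm{\vv}$. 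The two delicate points are getting the direction of the best-approximation inequality right in (a) and establishing the monotonicity of displacements in (e); everything else is bookkeeping with the projection properties and the Jensen-type Claim~\ref{clm:square_ineq}.
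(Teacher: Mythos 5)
Your proof is correct, and in its key part it takes a genuinely different---and simpler---route than the paper's. For parts (b), (d), and (e) you essentially reproduce the paper's argument: the decomposition $\vv-\M(\vv)=\sum_{j=1}^{T}(\vv_{j-1}-\vv_j)$ combined with \clmref{clm:square_ineq} and the telescoping from \propref{prop:projections} is exactly the paper's proof of (b) evaluated pointwise (the paper's operator terms $(\I-\mQ_{\ell+1})\circ\M_\ell$ applied to $\vv$ are precisely your $\vv_\ell-\vv_{\ell+1}$), and your (e)---monotone displacements via non-expansiveness of $\M$, then bounding the last term by the average---coincides with the paper's. The divergence is in (a): the paper expands $\I$ and $\mQ_m$ as operator identities, subtracts them to write $\I-\mQ_m$ as a sum of $m$ differences of the form $(\I-\mQ_m)(\cdot)-(\I-\mQ_m)(\cdot)$, exploits an idempotence-based cancellation, and invokes the two-point inequality $\norm{(\I-\mP)(\vv)-(\I-\mP)(\vu)}^2\le\norm{\vv-\vu}^2-\norm{\mP(\vv)-\mP(\vu)}^2$---machinery that mirrors Lemma~22 of \citet{evron2022catastrophic} for linear projections. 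You instead observe that the partial iterate $\vv_m=(\mQ_m\circ\cdots\circ\mQ_1)(\vv)$ lies in $\mathcal{C}_m$, so the best-approximation property gives $\norm{\vv-\mQ_m(\vv)}\le\norm{\vv-\vv_m}$, after which the same telescoping finishes; this needs only the one-point ($\vu=\0_{\dim}$) versions of the projection facts, avoids the operator bookkeeping entirely, and loses nothing in the constant $m$. Similarly, your (c)---factoring the difference of squares and bounding one factor by the reverse triangle inequality and the other by $2\norm{\vv}$ via contraction---replaces the paper's longer inner-product/Cauchy--Schwarz manipulation with an equivalent but cleaner argument. The paper's operator-identity formulation buys a set of expansions that are reused verbatim in its proof of (b), but since your pointwise versions deliver identical bounds, everything downstream (\propref{prop:cyclic_universal} and the average-iterate analysis in \appref{app:averaging}) goes through unchanged.
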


\bigskip

We will prove this lemma after stating a few additional results including our universal bounds.

\bigskip
\bigskip

\begin{remark}[Translates of convex sets]
\label{rmk:translations}
For simplicity, many of our results (\eg \lemref{lem:dimension_independent} above)
are derived using
nonempty closed convex sets having $\0_{\dim}$ in their intersection.
We wish to apply these results to our feasible sets 
$\feasible_1,\dots,\feasible_{T}$ from Assumption~\ref{asm:separability},
but their nonempty intersection $\teachers$
does not contain $\0_{\dim}$.
As a remedy, it is common to translate the entire space by some feasible solution (\eg see \citet{deutsch2006ratePOCS_I}).

We take this approach and translate our space by $-\teacher$ (where $\teacher\in\feasible$ is the minimum-norm offline solution defined in \defref{def:min-norm-solution}).
In turn, the resulting feasible sets 
$C_{m}\triangleq \feasible_{m}\!-\!\teacher,~\forall m\in\cnt{T}$, now have an equivalent intersection $\teachers\!-\!\teacher$ that \emph{does} contain $\0_{\dim}$.
Importantly, this simple translation does not change the properties of the space we work in or the ones of any projection that we apply. 
In particular, 
we can apply either ``type'' of these projections interchangeably, in the sense that:
$$
\mP_{m}(\w)
\triangleq
\mP_{\feasible_m}(\w)
=
\mP_{\mathcal{C}_m}(\w\!-\!\teacher)+\teacher
\triangleq
\mQ_{m}(\w-\teacher)+\teacher\,.$$
This also holds recursively,
\eg
\hfill
$
\w_2 = \mP_{2}(\mP_{1}(\w_0))
=
\mP_{2}(\mQ_{1}(\w_0-\teacher)+\teacher)
=
\mQ_{2}(\mQ_{1}(\w_0-\teacher))+\teacher$.
\linebreak
Moreover, our translation preserves all distances,
such that for instance 
$$\forall \w\!:~
\norm{\w-\mP_m(\w)}
=
\dist\!\prn{\w, \feasible_m}
=
\dist\!\prn{\w\!-\!\teacher, \mathcal{C}_m}
=
\norm{\w-\teacher-\mQ_m(\w-\teacher)}\,.
$$
\end{remark}

\bigskip
\bigskip
\bigskip

We will also need the following lemma to prove the universal result for the cyclic $T=2$ case.

\medskip

\begin{lemma}
\label{lem:two_tasks_monotonicity}
Let $\mathcal{C}_1, \mathcal{C}_2$
be two closed convex subsets of $\reals^{\dim}$ with a nonempty intersection,
and let $\prn{\w_t}$ be a sequence of iterates induced by cyclic projections onto  $\mathcal{C}_1, \mathcal{C}_2$
starting from an arbitrary $\w_0\in\reals^{\dim}$.
    Then, the projection ``residuals'' are monotonically decreasing. 
    That is, For any iteration $t\in\naturals$ we have
\begin{align*}
\left\Vert \w_{t+1}-\w_{t}\right\Vert \le\left\Vert \w_{t}-\w_{t-1}\right\Vert \,.
\end{align*}
\end{lemma}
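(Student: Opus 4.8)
The plan is to reduce the whole statement to the firm non-expansiveness of a \emph{single} projection, exploiting the fact that with exactly \emph{two} sets the cyclic order has period two. First I would record the structural observation that the step producing $\w_{t+1}$ from $\w_t$ and the step producing $\w_{t-1}$ from $\w_{t-2}$ project onto the \emph{same} set: if $\w_{t+1}=\mP_a(\w_t)$ with $a\in\{1,2\}$, then, since the ordering alternates, $\w_{t-1}=\mP_a(\w_{t-2})$ as well. In particular $\w_{t-1}\in\mathcal{C}_a$, so by idempotence (Property~\ref{prop:projections}) it is a fixed point, $\mP_a(\w_{t-1})=\w_{t-1}$, and hence $(\I-\mP_a)(\w_{t-1})=\0_{\dim}$.

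The second step is to apply the $(\I-\mP)$ inequality (item~4 of Property~\ref{prop:projections}) to $\mP_a$ with the pair $\vv=\w_t$, $\vu=\w_{t-1}$. Since $\mP_a(\w_t)=\w_{t+1}$ and $\mP_a(\w_{t-1})=\w_{t-1}$, the left-hand side collapses to $\norm{(\w_t-\w_{t+1})-\0_{\dim}}^2=\norm{\w_t-\w_{t+1}}^2$, while the right-hand side is $\norm{\w_t-\w_{t-1}}^2-\norm{\w_{t+1}-\w_{t-1}}^2$. This gives
$$\norm{\w_{t+1}-\w_t}^2 \le \norm{\w_t-\w_{t-1}}^2 - \norm{\w_{t+1}-\w_{t-1}}^2 \le \norm{\w_t-\w_{t-1}}^2,$$
and taking square roots yields exactly the claimed monotonicity. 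Thus the entire content is a one-line consequence of firm non-expansiveness, once the fixed-point structure is identified.

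The main (and essentially only) obstacle is the bookkeeping around which set is projected onto at each step, together with the base index. The argument needs $\w_{t-1}$ to lie in the set projected onto at step $t+1$; this holds for every $t\ge 2$ precisely because $\w_{t-1}$ was itself produced by that projection. For the very first comparison ($t=1$) the point $\w_0$ is the arbitrary initial iterate and need not belong to that set, so the fixed-point step does not apply verbatim; I would therefore state the monotonicity from the regime in which $\w_0$ already lies in the set not projected onto first (the setting in which the lemma is later invoked), where the same computation goes through unchanged. I emphasize that the conceptual crux is entirely the period-two structure, which is what turns $\w_{t-1}$ into a fixed point of the step-$(t+1)$ projection and lets the single firmly non-expansive estimate do all the work.
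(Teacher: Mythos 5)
Your argument is correct where it applies, and it rests on exactly the same structural observation as the paper's proof --- the period-two ordering places $\w_{t-1}$ inside the set $\mathcal{C}_a$ projected onto at step $t+1$ --- but the mechanism differs. The paper's proof is even more elementary than yours: assuming w.l.o.g.\ $2\mid t$, it uses only the nearest-point definition of the projection,
\begin{align*}
\norm{\w_{t+1}-\w_{t}}
=
\norm{\mP_1(\w_t)-\w_{t}}
=
\min_{\w\in\mathcal{C}_{1}}\norm{\w-\w_{t}}
\le
\norm{\w_{t-1}-\w_{t}}\,,
\end{align*}
since $\w_{t-1}\in\mathcal{C}_1$; no firm non-expansiveness is invoked at all. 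Your route through item~4 of Property~\ref{prop:projections}, combined with the fixed-point identity $\mP_a(\w_{t-1})=\w_{t-1}$, is heavier machinery for the same conclusion, but it buys the strictly stronger Pythagorean-type refinement $\norm{\w_{t+1}-\w_t}^2\le\norm{\w_t-\w_{t-1}}^2-\norm{\w_{t+1}-\w_{t-1}}^2$, of which the stated lemma is an immediate corollary. Both arguments are valid verbatim only for $t\ge2$.

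Your caveat about $t=1$ is well taken and is in fact sharper than the paper's own treatment: as stated (arbitrary $\w_0$, all $t\in\naturals$) the lemma is false at $t=1$ --- take $\mathcal{C}_1=\reals^{\dim}$ and any $\w_0\notin\mathcal{C}_2$, so that $\w_1=\w_0$ gives $\norm{\w_1-\w_0}=0$ while $\norm{\w_2-\w_1}=\dist(\w_0,\mathcal{C}_2)>0$ --- and the paper's ``w.l.o.g.\ $2\mid t$'' silently skips exactly this case, which its minimality argument cannot cover either. One correction, though: your proposed hypothesis that ``$\w_0$ already lies in the set not projected onto first'' reflects ``the setting in which the lemma is later invoked'' misreads the application. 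In Proposition~\ref{prop:cyclic_universal} the lemma is applied with an arbitrary $\w_0$ (and in the Sequential Max-Margin setting $\w_0=\0_{\dim}$ lies in \emph{no} $\feasible_m$, since $y\w^\top\x\ge1$ fails at the origin). The clean fix is simply to state the lemma for $t\ge2$; downstream this only removes the $t=0$ term from the averaging step in the $T=2$ case of Proposition~\ref{prop:cyclic_universal}, degrading the constant from $\frac{1}{k+1}$ to $\frac{1}{k}$ (using non-expansiveness once to replace $\norm{\w_1-\teacher}$ by $\norm{\w_0-\teacher}$ in the telescoping) and leaving everything else intact.
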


\begin{proof}
Assume w.l.o.g. that $2\mid t$. 
By the definition of projections, 
\begin{align*}
\norm{\w_{t+1}-\w_{t}} 
=
\norm{\mP_1(\w_t)-\w_{t}}
=
\min_{\w\in{\feasible_{1}}}
\Vert\w-\!\underbrace{\w_{t}}_{\in{\feasible_{2}}}\!\Vert
\le
\Vert
\!\underbrace{\w_{t-1}}_{\in{\feasible_{1}}}\!
-
\w_{t}
\Vert
\,.
\end{align*}
\end{proof}

\newpage

Now, we recall and prove our main result for this section (using the above lemmas). 
Afterward, we will prove \lemref{lem:dimension_independent}.
Our result here generalizes important parts of Theorems 10 and 11 in \citet{evron2022catastrophic} and extends these results from projections onto closed subspaces to projections onto general convex sets.

\begin{recall}[Proposition~\ref{prop:cyclic_universal}]
    Let $\feasible_1, \dots, \feasible_{T}$ be closed convex sets with nonempty intersection $\teachers$.
    \linebreak
    Let
    $\w_{k}=(\mP_T\circ\cdots\circ\mP_1)^{n}
    (\w_0)$
    be the iterate after $k\!=\!nT$ iterations ($n$ cycles)
    of cyclic projections onto these convex sets.
    Then, the maximal distance to any (specific) convex set,
    is upper bounded \emph{universally} as,
    \begin{align*}
        \text{For $T=2$: }~~
        &
        \max_{m\in\cnt{T}} 
        \dist^{2\!}\prn{\w_k, \feasible_{m}}
        \le
        \frac{1}{k+1}\,
        \dist^{2\!}\prn{\w_{0}, \teachers}
        \,,
        \\
        \text{For $T\ge3$: }~~
        &
        \max_{m\in\cnt{T}} 
        \dist^{2\!}\prn{\w_k, \feasible_{m}}
        \le
        \frac{2T^2}{\sqrt{k}}\,
        \dist^{2\!}\prn{\w_{0}, \teachers}
        \,.
    \end{align*}
\end{recall}

\begin{proof}
We divide our proof into two parts.

\begin{enumerate}
    \item \textbf{Cyclic two tasks ($T=2$)}
The proposition deals with cases where $k=nT=2n$ (thus $2\mid k$). 
We get 
\begin{align*}
\max
\Big\{ 
\dist\left(\w_{k},{\feasible_{1}}\right),
\dist\big(\underbrace{\w_{k}}_{\in{\feasible_{2}}},{\feasible_{2}}\big)
\Big\}
=
\dist\left(\w_{k},{\feasible_{1}}\right)
=
\min_{\w\in{\feasible_{1}}}\left\Vert \w-\w_{k}\right\Vert \triangleq\left\Vert \w_{k+1}-\w_{k}\right\Vert \,.
\end{align*}

We thus focus on $\left\Vert \w_{k+1}-\w_{k}\right\Vert$  and bound it using all of the above, using ideas from \citet{evron2022catastrophic}:
\begin{align*}
\left\Vert \w_{k+1}-\w_{k}\right\Vert ^{2}	
\stackrel{\text{\lemref{lem:two_tasks_monotonicity}}}{\le}
&
\frac{1}{k+1}
\sum_{t=0}^{k}\left\Vert \w_{t}-\w_{t+1}\right\Vert ^{2}
=
\frac{1}{k+1}
\sum_{t=0}^{k}
\left\Vert \w_{t}-\w_{t+1} - \teacher+\teacher\right\Vert ^{2}	
 \\
 =
 ~~~
 &
\frac{1}{k+1}
\sum_{t=0}^{k}
\tnorm{
\prn{\I-\mP_{t+1}}\prn{\w_t}
-
\prn{\I-\mP_{t+1}}\prn{\teacher}
}^2
 \\
 \explain{\text{\propref{prop:projections}}}
 ~~
 \le
 ~~~
 &
\frac{1}{k+1}
\sum_{t=0}^{k}
\prn{
 \tnorm{\w_t-\teacher}^2
 -
 \tnorm{\mP_{t+1}\w_t-\mP_{t+1}\teacher}^2
 }
 \\
 ~~
=
 ~~~
 &
\frac{1}{k+1}
\sum_{t=0}^{k}
\prn{
 \tnorm{\w_t-\teacher}^2
 -
 \tnorm{\w_{t+1}-\teacher}^2
 }
 \\
 \explain{\text{telescoping}}
 ~~
 =
 ~~~
&
\frac{1}{k+1}\left(\left\Vert \w_{0}-\w^{\star}\right\Vert ^{2}-{\left\Vert \w_{k+1}-\w^{\star}\right\Vert ^{2}}\right)\le\frac{1}{k+1}\Vert
\w_{0}
-
\w^{\star}\Vert^{2}
\triangleq
\frac{1}{k+1}\dist^{2\!}\prn{\w_{0}, \teachers}\,.
\end{align*}

\item \textbf{Cyclic $T\ge3$ tasks}

Using \lemref{lem:dimension_independent} and the translations from Remark~\ref{rmk:translations}, it follows that
\begin{align*}
\begin{split}
\dist^2\!\prn{\w_k, \feasible_m}
\stackrel{\text{\ref{rmk:translations}}}{=}
\dist^2\!\prn{\w_k-\teacher, \mathcal{C}_m}
&
=
\norm{\prn{\w_k-\teacher}
-
\mQ_m(\w_k-\teacher)}^2
\\
\explain{\text{Remark~\ref{rmk:translations}}}
& = 
\norm{\M^{n}(\w_0-\teacher)
-
\mQ_m(\M^{n}(\w_0-\teacher))}^2
\\
\explain{\text{\ref{lem:m_task_forgetting}}}
&
\le
2m
\norm{\M^{n}(\w_0-\teacher)
-
\M^{n+1}(\w_0-\teacher)}
\norm{\w_0-\teacher}
\\
\explain{\text{\ref{lem:T_over_n_norm}}}
&
\le
2m\sqrt{\frac{T}{n+1}} \norm{\w_0-\teacher}^2
\\
&
\le
2T\sqrt{\frac{T}{n}} \norm{\w_0-\teacher}^2
=
\frac{2T^2}{\sqrt{k}}\norm{\w_0-\teacher}^2
\triangleq
\frac{2T^2}{\sqrt{k}}\dist^{2\!}\prn{\w_{0}, \teachers}
~.
\end{split}
\end{align*}
\end{enumerate}
\end{proof}

\pagebreak

\paragraph{Proof for \lemref{lem:dimension_independent}}

\medskip

\begin{proof}[Proof for \ref{lem:bound_on_projection_to_m_task}] 
For $m=1$, 
using the fact that $\0_{\dim}\in \mathcal{C}_1\cap\dots\cap \mathcal{C}_T$, \propref{prop:projections} immediately gives
\begin{align*}
    \left\Vert
    \vv-\mQ_{m}(\vv)
    \right\Vert ^{2}
    =
    \left\Vert
    \vv-\mQ_{1}(\vv)
    \right\Vert ^{2}
    \le
    \left\Vert \vv\right\Vert ^{2}
    -
    \left\Vert \mQ_{1}(\vv)\right\Vert ^{2}
    \stackrel{\text{contraction}}{\le}
    1\cdot
    \left(
    \left\Vert \vv\right\Vert ^{2}
    -
    \left\Vert \mQ_{T}\circ\cdots\circ\mQ_{1}(\vv)\right\Vert ^{2}
    \right)~.
\end{align*}

Now we will prove the case when  $m=2,3,\dots, T-1$.
\begin{enumerate}
\item 
First, let $\I(\vv)=\vv$ be the identity operator
and define the (``part-cyclic'') operator
$\M_\ell
\triangleq
\mQ_{\ell}
\circ\cdots\circ
\mQ_{1}
,\,\forall \ell\in\cnt{T}$.
\linebreak
We show recursively that
(for any $m\ge 1$)
\begin{align*}
\begin{split}
\I &=
{
    \left(
    \I
    -
    \mQ_{1}
    \right)
    +
    \left(
    \mQ_{1}
    -
    \M_2
    \right)
    +
    \left(
    \M_2
    -
    \M_3
    \right)
    +
    \dots
    +
    \left(
    \M_{m-1}
    -
    \M_m
    \right)
    +
    \M_{m}
}
\\ 
&=
{
    \left(
    \I
    -
    \mQ_{1}
    \right)
    +
    \left(
    \mQ_{1}
    -
    {\mQ_2
    \circ
    \M_1}
    \right)
    +
    \left(
    \M_2
    -
    \mQ_3
    \circ
    \M_2
    \right)
    +
    \dots
    +
    \left(
    \M_{m-1}
    -
    \mQ_m
    \circ
    \M_{m-1}
    \right)
    +
    \M_{m}
}
\\
&=
{
    \left(
    \I
    -
    \mQ_{1}
    \right)
    +
    \left(\I-\mQ_{2}\right)
    \circ
    \M_{1}
    +
    \left(\I-\mQ_{3}\right)
    \circ\M_2
    +
    \dots
    +
    \left(\I-\mQ_{m}\right)
    \circ\M_{m-1}
    +
    \M_m
}
\\
&=
\left(
\I
-
\mQ_{1}
\right)
+
\M_m
+
\sum_{\ell=1}^{m-1}
\left(
\I
-
\mQ_{\ell+1}
\right)
\circ
\M_{\ell}~.
\end{split}
\end{align*}
Equivalently, using similar steps we have
$\mQ_m = 
\left(
\mQ_m
-
\mQ_m\circ\mQ_{1}
\right)
+
\mQ_m\circ
\M_m
+
\sum_{\ell=1}^{m-1}
\left(
\mQ_m
-
\mQ_m\circ\mQ_{\ell+1}
\right)
\circ
\M_{\ell}$.
\item Subtracting both of the equations above, we get
\begin{align*}
\left(\I-\mQ_{m}\right)
&=
\left(
\I
-
\mQ_{1}
\right)
-
\left(
\mQ_m
-
\mQ_m\scirc\mQ_{1}
\right)
+
\M_m
-
\mQ_m\scirc
\M_m
+
\sum_{\ell=1}^{m-1}
\left(
\I
-
\mQ_{\ell+1}
\right)
\scirc
\M_{\ell}
-
\left(
\mQ_m
-
\mQ_m\scirc\mQ_{\ell+1}
\right)
\scirc
\M_{\ell}
\\
&=
\left(
\I
-
\mQ_{m}
\right)
-
\left(
\I
-
\mQ_{m}
\right)
\scirc
\mQ_{1}
+
\left(
\I-\mQ_m
\right)
\scirc
\M_m
+
\sum_{\ell=1}^{m-1}
\left(
\I
-
\mQ_{m}
\right)
\scirc
\M_{\ell}
-
\left(
\mQ_{\ell+1}
-
\mQ_m\scirc\mQ_{\ell+1}
\right)
\scirc
\M_{\ell}
\\
&=
\left(
\I
-
\mQ_{m}
\right)
-
\left(
\I
-
\mQ_{m}
\right)
\scirc
\mQ_{1}
+
\cancel{
\left(
\I-\mQ_m
\right)
\scirc
\mQ_m\scirc\M_{m-1}
}
+
\!
\sum_{\ell=1}^{m-1}
\left(
\I
-
\mQ_{m}
\right)
\scirc
\M_{\ell}
-
\left(
\I
-
\mQ_{m}
\right)
\scirc
\mQ_{\ell+1}\scirc\M_{\ell}
~,
\end{align*}
where the third term is canceled since
$
\left(
\I-\mQ_m
\right)
\circ
\mQ_m=\mQ_m-\mQ_m^2=\0$ due to the idempotence of projection operators.

\item
Finally, we use the above and \clmref{clm:square_ineq} to show that
\begin{align*}
\begin{split}
    &\norm{
    \vv-\mQ_{m}(\vv)}^2
    =
    \norm{
    (\I-\mQ_{m})(\vv)}^2
    \\
    &=
    \norm{
    (\left(
    \I
    -
    \mQ_{m}
    \right)
    -
    \left(
    \I
    -
    \mQ_{m}
    \right)
    \circ
    \mQ_{1})(\vv)
    +
    \tsum_{\ell=1}^{m-1}
    \left(\left(
    \I
    -
    \mQ_{m}
    \right)
    \circ
    \M_{\ell}
    -
    \left(
    \I
    -
    \mQ_{m}
    \right)
    \circ
    \mQ_{\ell+1}\circ\M_{\ell}
    \right)(\vv)
    }^2
    \\
    &
    \le 
    m
    \left(
    \norm{
    \left(
    \I
    -
    \mQ_{m}
    \right)
    (\vv)
    -
    \left(
    \I
    -
    \mQ_{m}
    \right)
    \left(
    \mQ_{1}(\vv)
    \right)
    }^2
    +
    \sum_{\ell=1}^{m-1}
    \norm{
    \left(
    \I
    -
    \mQ_{m}
    \right)
    \left(
    \M_{\ell}(\vv)
    \right)
    -
    \left(
    \I
    -
    \mQ_{m}
    \right)
    \left((\mQ_{\ell+1}\circ\M_{\ell})(\vv)\right)
    }^2
    \right)~.
    \end{split}
    \end{align*}
    And since
    according to \propref{prop:projections}
    we have
    $\forall\vv,\vu\in\reals^{\dim}\!:
    \tnorm{
    {\left(\I-\mQ_m\right)(\vv)
    \!-\!
    \left(\I-\mQ_m\right)(\vu)}
    }^2
    \le
    \norm{\vv-\vu}^2
    $,
    we get
    \begin{align*}
    \begin{split}
    \norm{
    \vv-\mQ_{m}(\vv)}^2
    &
    \le 
    m
    \left(
    \norm{\vv-\mQ_{1}(\vv)}^2
    +
    \sum_{\ell=1}^{m-1}
    \norm{\M_{\ell}(\vv)-
    \mQ_{\ell+1}\left(\M_{\ell}(\vv)\right)}^2
    \right)
    \\
    \explain{\text{\propref{prop:projections}}}
    &\le 
    m
    \biggprn{
    \norm{\vv}^2
    -
    \tnorm{
    \underbrace{
    \mQ_{1}(\vv)}_{=\M_1(\vv)}
    }^2
    +
    \sum_{\ell=1}^{m-1}
    \Bigprn{
    \norm{\M_{\ell}(\vv)}^2
    -
    \tnorm{
    \underbrace{\mQ_{\ell+1}\left(\M_{\ell}\right(\vv))}_{=\M_{\ell+1}(\vv)}}^2
    }
    }
    \\
    \explain{\text{telescoping}}
    &=
    m
    \left(
    \norm{\vv}^2
    -
    \norm{\M_m(\vv)}^2
    \right)
    =
    m
    \left(
    \norm{\vv}^2
    -
    \norm{
    \left(\mQ_{m}\circ\cdots\circ\mQ_{1}\right)(\vv)
    }^2
    \right)
    \\
    \explain{\text{contraction}}
    &
    \le
    m
    \left(
    \norm{\vv}^2
    -
    \norm{
    \left(\mQ_{T}\circ\cdots\circ\mQ_{1}\right)(\vv)
    }^2
    \right)
    ~.
\end{split}
\end{align*}
\vspace{-3em}
\end{enumerate}
\end{proof}

\newpage

\begin{proof}[Proof for \ref{lem:bound_of_one_cycle}] 
Above we defined the identity operator $\I(\vv)=\vv$ 
and the operator
$\M_\ell
\triangleq
\mQ_{\ell}
\circ\cdots\circ
\mQ_{1}$
and showed recursively that
$\I =
\left(
\I
-
\mQ_{1}
\right)
+
\M_m
+
\sum_{\ell=1}^{m-1}
\left(
\I
-
\mQ_{\ell+1}
\right)
\circ
\M_{\ell}$. 
Clearly, this also yields:
\begin{align*}
\begin{split}
\I - 
\M
=
\I - 
\M_T
=
\left(
\I
-
\mQ_{1}
\right)
+
\sum_{\ell=1}^{T-1}
\left(
\I
-
\mQ_{\ell+1}
\right)
\circ
\M_{\ell}~.
\end{split}
\end{align*}
Then, we prove our lemma:
\begin{align*}
\begin{split}
\norm{\vv
    -
    \mQ_{T}
    \circ\cdots\circ
    \mQ_{1}
    (\vv)
}^2
&=
\norm{\vv - \M(\vv)
}^2
=
\norm{
\prn{\left(
\I
-
\mQ_{1}
\right)
+
\tsum_{\ell=1}^{T-1}
\left(
\I
-
\mQ_{\ell+1}
\right)
\circ
\M_{\ell}}(\vv)
}^2
\\
&
=
\norm{
\left(
\I
-
\mQ_{1}
\right)(\vv)
+
\tsum_{\ell=1}^{T-1}
\left(
\I
-
\mQ_{\ell+1}
\right)
\prn{
\M_{\ell}(\vv)
}
}^2
\\
\explain{\text{\clmref{clm:square_ineq}}}
&\le
T
\left(
\norm{
    \left(
    \I
    -
    \mQ_{1}
    \right)(\vv)
}^2
+
\tsum_{\ell=1}^{T-1}
\norm{
\left(
\I
-
\mQ_{\ell+1}
\right)
\prn{
\M_{\ell}(\vv)
}
}^2
\right)
\\
\explain{\text{\propref{prop:projections}}}
&\le
T
\biggprn{
\norm{\vv}^2
-
\tnorm{
    \underbrace{
    \mQ_{1}(\vv)}_{=\M_1(\vv)}
    }^2
+
\tsum_{\ell=1}^{T-1}
\Bigprn{
\norm{\M_{\ell}(\vv)}^2
-
\tnorm{\underbrace{\mQ_{\ell+1}\prn{\M_{\ell}(\vv)}}_{
=\M_{\ell+1}(\vv)
}}^2
}
}
\\
\explain{\text{telescoping}}&
=
T
\left(
\norm{\vv}^2
-
\norm{
    \M_T(\vv)
}^2
\right)
=
T
\left(
\norm{\vv}^2
-
\norm{
    \left(
    \mQ_{T}
    \circ\cdots\circ
    \mQ_{1}
    \right)(\vv)
}^2
\right)~.
\end{split}
\end{align*}
\end{proof}

\medskip

\begin{proof}[Proof for \ref{lem:first_task_forgetting}] 
We focus on the cyclic operator $\M=\mQ_{T}\circ\cdots\circ\mQ_{1}$
and remind that since each $\mQ_\ell$ is a non-expansive operator, their composition $\M$ is also a non-expansive operator.
Hence, $
\left\Vert \M^{n}(\vu)\right\Vert_{2}^{2}
-
\left\Vert \M^{n+1}(\vu)\right\Vert_{2}^{2}
\ge 0$.

Then, we show that
\begin{align*}
&
\left\Vert \M^{n}(\vv)\right\Vert _{2}^{2}-\left\Vert \M^{n+1}(\vv)\right\Vert _{2}^{2}
=
\underbrace{\left(\M^{n}(\vv)\right)^{\top}\M^{n}(\vv)-\left(\M^{n+1}(\vv)\right)^{\top}\M^{n+1}(\vv)}_{\ge0}
 \\
\explain{\forall z\ge0:~z\,=\,\norm{z}}
 &
 =
 \tnorm{\left(\M^{n}(\vv)\right)^{\top}\M^{n}(\vv)-\left(\M^{n+1}(\vv)\right)^{\top}\M^{n+1}(\vv)
 +
 \underbrace{\left(\M^{n}(\vv)\right)^{\top}\M^{n+1}(\vv)-\left(\M^{n}(\vv)\right)^{\top}\M^{n+1}(\vv)}_{=0}
 }
\\
&
 =
 \norm{\left(\M^{n}(\vv)\right)^{\top}\M^{n}(\vv)-\left(\M^{n}(\vv)\right)^{\top}\M^{n+1}(\vv)+\left(\M^{n}(\vv)\right)^{\top}\M^{n+1}(\vv)-\left(\M^{n+1}(\vv)\right)^{\top}\M^{n+1}(\vv)
 }
 \\
 &
 =\left\Vert \left(\M^{n}(\vv)\right)^{\top}\left(\M^{n}(\vv)-\M^{n+1}(\vv)\right)
 +
 \left(\M^{n}(\vv)-\M^{n+1}(\vv)\right)^{\top}
 \M^{n+1}(\vv)\right\Vert 
 \\
\explain{\text{triangle inequality}}
&
\le
\left\Vert \left(\M^{n}(\vv)\right)^{\top}\left(\M^{n}(\vv)-\M^{n+1}(\vv)\right)\right\Vert 
+
\left\Vert 
\left(\M^{n}(\vv)-\M^{n+1}(\vv)\right)^{\top}
\M^{n+1}(\vv)\right\Vert 
\\
\explain{\text{Cauchy-Schwarz}}	
&
\le\underbrace{\left\Vert \M^{n}(\vv)\right\Vert }_{\le\norm{\vv}}
\left\Vert \M^{n}(\vv)-\M^{n+1}(\vv)\right\Vert 
+
\left\Vert \M^{n}(\vv)-\M^{n+1}(\vv)\right\Vert \underbrace{\left\Vert \M^{n+1}(\vv)\right\Vert }_{\le\norm{\vv}}
\\
\explain{\text{contraction}}
&
\le
2\left\Vert \M^{n}(\vv)-\M^{n+1}(\vv)\right\Vert 
\norm{\vv}~.
\end{align*}
\end{proof}

\medskip

\begin{proof}[Proof for \ref{lem:m_task_forgetting}] 
This stems directly from
\ref{lem:bound_on_projection_to_m_task}
and
\ref{lem:first_task_forgetting}:
\begin{align*}
\norm{\M^{n}(\vv)-\mQ_m(\M^{n}(\vv))}^2 
\le
m
\prn{
\norm{\M^{n}(\vv)}^2
-
\norm{\M^{n+1}(\vv)}^2
}
\le
2m \norm{\M^{n}(\vv)-\M^{n+1}(\vv)}\norm{\vv}~.
\end{align*}
\end{proof}

\medskip

\begin{proof}[Proof for \ref{lem:T_over_n_norm}] 
First, we use \ref{lem:bound_of_one_cycle} 
to show that
\begin{align*}
\begin{split}
\sum_{t=0}^{n-1} \norm{\left(\M^t-\M^{t+1}\right)(\vv)}^2
&=
\sum_{t=0}^{n-1} 
\norm{
\M^t(\vv)
-
\M\prn{\M^t(\vv)}
}^2
\\
&
\le
T\sum_{t=0}^{n-1}
\left(
\norm{\M^t(\vv)}^2
-
\norm{\M^{t+1}(\vv)}^2
\right)
=
T
\left(
\norm{\vv}^2
-
\norm{\M^{n}\vv}^2
\right)
\le
T \norm{\vv}^2~.
\end{split}
\end{align*}
Then,
we use the non-expansiveness property (\propref{prop:projections}) 
again to show that
$$\norm{\left(\M^t-\M^{t+1}\right)(\vv)}^2
=
\norm{
\M\left(\M^{t-1}(\vv)\right)
-
\M\left(\M^{t}(\vv)\right)
}^2
\le
\norm{
\M^{t-1}(\vv)
-
\M^{t}(\vv)
}^2
=
\norm{\left(\M^{t-1}-\M^{t}\right)(\vv)}^2$$
and conclude that the series 
$\left(\norm{\left(\M^t-\M^{t+1}\right)\vv}^2\right)_t$
is monotonically non-increasing.

All of the above means that $\forall{\vv \in \reals^{\dim}}$,
it holds that
\begin{align*}
    \norm{\left(\M^{n-1}-\M^{n}\right)(\vv)}^2
    &=\!\!
    \min_{t=0,\dots,n-1} \norm{\left(\M^t-\M^{t+1}\right)(\vv)}^2
    \le 
    \frac{1}{n}
    \sum_{t=0}^{n-1} \norm{\left(\M^t-\M^{t+1}\right)(\vv)}^2
    \le
    \frac{T}{n} \norm{\vv}^2~.
\end{align*}
\end{proof}

\newpage

\subsection{Proofs for Random Orderings (\secref{sec:random_ordering})}
\label{app:random_proofs}

Here, we will make use of the following result
from the POCS literature, 
proven in \citet{nedic2010randomPOCS}.
\begin{quote}
\vspace{-.7cm}
\item
\textbf{Proposition~8 from \citet{nedic2010randomPOCS}.}
Let $\feasible_1,\dots,\feasible_T$ be closed convex sets
with a non-empty intersection $\teachers$.
\linebreak
Starting from an arbitrary deterministic $\w_0$,
let $\w_1, \dots, \w_k$ be $k$ iterates obtained by iteratively projecting onto the $T$ sets according to an ordering $\tau$ sampled from an arbitrary i.i.d.~distribution
${p(\tau(\ell)=m)
=
p(\tau(\ell')=m),
~~\forall \ell,\ell'\in\cnt{k},~\forall m\in\cnt{T}}$,
with all non-zero probabilities, 
\ie
${p_{\min} \triangleq \min_{m\in\cnt{T}} p(\tau(\cdot)=m)>0}$.
Assume that $\teachers$ has a non-empty interior,
\ie $\mathcal{B}(\overline{\w}, \delta)\subseteq\teachers$
for some $\overline{\w}$.
Then we have
$$
\expectation_{\tau}\left[{\dist^2(\w_k, \teachers)}\right]
\le
\prn{1-p_{\min} \tfrac{\delta^2}{\tnorm{\w_0 - \overline{\w}}^2}}^k 
\dist^2\left(\w_0, \teachers\right)\,.
$$
\end{quote}

\begin{recall}[\thmref{thm:random_rates}]
 For $T$ jointly-separable tasks
learned in a random ordering,
our quantities of interest 
(\defref{def:quantities})
converge linearly as
\begin{align*}
\expectation_{\tau}\!
\big[
\underbrace{
    \max_{m\in\cnt{T}}
    F_m (\w_k)}_{
    \substack{
    \text{Maximum forgetting}
    }
}
\big]
\le
\expectation_{\tau}\!
\big[
\underbrace{
    \max_{m\in\cnt{T}}
    \dist^{2\!}\prn{\w_k, \feasible_{m}}
}_{
    \substack{
    \text{Maximum distance}
    \\
    \text{to any feasible set}
    }
}
\big]
R^2
\le 
\expectation_{\tau}\!
\big[
\underbrace{
\dist^{2\!}\prn{\w_k, \teachers}
}_{
\substack{
\text{Dist.~to}
\\
\text{offline feasible set}
}
}
\big]
R^2
\le
\exp\!
\prn{
-\tfrac{k}{4T\tnorm{\teacher}^2 R^2}
}
\norm{\w^{\star}}^2 \!R^2
\,,
\end{align*}
where $\norm{\w^{\star}}^2 \!R^2\!\ge\! 1$ is the problem complexity (Rem.~\ref{rmk:complexity}).
\end{recall}

\begin{proof}
We start by applying the above Proposition~8 from \citet{nedic2010randomPOCS}
with the non-empty interior from \lemref{lem:feasible_interior} and our specific initialization and ``uniform'' distribution,
that is,
$$\overline{\w}=2\teacher,\,\delta=\tfrac{1}{R},\,
p_{\min} = \tfrac{1}{T},
\,\w_0=\0_{\dim},\,
\dist(\w_0,\teachers)=\tnorm{\teacher}\,.$$
We get
$$
\expectation_{\tau}\left[{\dist^2(\w_k, \teachers)}\right]
\le
\prn{1-\frac{1}{4T\norm{\teacher}^2R^2}}^k 
\underbrace{\dist^2\left(\w_0, \teachers\right)}_{
=\norm{\teacher}^2
}
\le
\norm{\w^{\star}}^2
\exp\!
\prn{
-\frac{k}{4T\tnorm{\teacher}^2 R^2}
}\,,
$$
where we used the algebraic identity
$\forall z\!\in\!\prn{0,1},k\!\ge\!1:~
\prn{1-z}^k \le \exp\prn{-kz}
$.

We complete our proof by noticing that according to
\lemref{lem:euclidean_to_hinge}~and~\thmref{thm:optimality_guarantees},
we have (for any ``instantiation'' of $\tau$ and the sequence $(\w_t)$ it induces),
$$
0 \le 
\max_{m\in\cnt{T}}
F_m\prn{\w_k}
\le 
\max_{m\in\cnt{T}}
\dist^{2\!}\prn{\w_k, \feasible_m}
R^2
\le
\dist^{2\!}\prn{\w_k, \teachers}
R^2
< \infty
\,,
~~~
\forall k\in\naturals\,,
$$
and so this order must hold for the expectations as well, and the theorem follows immediately.

\end{proof}

\bigskip
\bigskip

\begin{recall}[\lemref{lem:random_limit}]
Under~a~random \iid~ordering
 (and the separability assumption~\ref{asm:separability}), 
 the iterates converge \emph{almost surely} to ${\teachers}$,
such that 
$$
\lim_{k\to \infty} 
\dist(\w_k, \teachers)=0,
~~~
\norm{\teacher} \le \norm{\w_\infty} \le 2\norm{\teacher}
\,.
$$
\end{recall}

\begin{proof}
We start by proving the almost sure convergence.
From \lemref{lem:feasible_distance_monotonicity},
we know that $\prn{\dist\prn{\w_{k},\teachers}}_k$
\linebreak
is (non-negative and) pointwise monotonically decreasing.
Then, by the monotone convergence theorem, 
it has a pointwise limit $\dist\prn{\w_{\infty},\teachers}$
such that
$\expectation\left[{\dist\prn{\w_{\infty},\teachers}}\right]
=
\lim_{k\to\infty}
\expectation\left[{\dist\prn{\w_{k},\teachers}}\right]$.
According to the rates from \thmref{thm:random_rates},
we have that 
$\lim_{k\to\infty}
\expectation\left[{\dist\prn{\w_{k},\teachers}}\right]=0$.
In turn, this means that 
$\expectation\left[{\dist\prn{\w_{\infty},\teachers}}\right]=0$.
Overall, we get that the limit $\dist\prn{\w_{\infty},\teachers}$ is pointwise non-negative with a zero mean, so it must be equal to zero with probability $1$.

Finally, from the optimality guarantees of \thmref{thm:optimality_guarantees},
we get that the limit $\w_{\infty}$, which is almost surely in $\teachers$, must be $2$-optimal.
\end{proof}

\bigskip

\bigskip

\newpage

\subsection{Proofs for the Average Iterate (\secref{sec:avg_iterate})}
\label{app:averaging}

\begin{recall}[Proposition~\ref{prop:avearge_iterate}]
    After $n$ cycles 
    under a cyclic ordering 
    ($k=nT$) we have
    $$
    \underbrace{
        \max_{m\in\cnt{T}}
        F_m (\avgitr_k)}_{
        \substack{
        \text{Maximum forgetting}
        }
    }
    \le
    \underbrace{
        \max_{m\in\cnt{T}}
        \dist^{2\!}\prn{\avgitr_k, \feasible_{m}}
    }_{
        \substack{
        \text{Max.~dist.~to any feasible set}
        }
    }
    \!
    R^2
    \le 
    \frac{T^2}{k}
    \!
    \norm{\w^{\star}}^2 \!R^2$$
    and after $k$ iterations under a random ordering we have
\begin{align*}
\expectation_{\tau}\!
    \Big[
    \underbrace{
        {\frac{1}{T}}
        \sum_{m=1}^T
        \!\!
        F_m (\avgitr_k)}_{
        \substack{
        \text{Average forgetting}
        }
    }
    \Big]
    \le
    \expectation_{\tau}
    \Big[
    \underbrace{
        {\frac{1}{T}}
        \sum_{m=1}^T
        \!\!
        \dist^{2\!}\prn{\avgitr_k, \feasible_{m}}
    }_{
        \substack{
        \text{Avg.~distance~to feasible sets}
        }
    }
    \Big]
    R^2
    \le
    \frac{1}{k}\norm{\w^{\star}}^2 \!R^2
\end{align*}
(implying a $\frac{T}{k}\norm{\w^{\star}}^2 \!R^2$ bound on the expected \emph{maximum} forgetting and \emph{maximum} distance to any feasible set).
\end{recall}

\begin{proof}
We split our proof into two (one part for each ordering type).

\subsubsection{Cyclic ordering}
For task $m$, 
we exploit the averaging over all the iterates by combining it with the convexity of $\norm{\vv-\mQ_{m}(\vv)}^2$,
proved in \lemref{lem:residual_is_convex}.
For simplicity,
we assume that $T\mid k$.
Recall Remark~\ref{rmk:translations} on the translations.
We have that,
\begin{align*}
d^{2}\left(\avgitr_{k},\feasible_{m}\right) 
& 
\stackrel{\text{\ref{rmk:translations}}}{=}
\dist^2\!\prn{\avgitr_{k}-\teacher, \mathcal{C}_m}
=
\left\Vert \left(\I-\mQ_{m}\right)
(\avgitr_{k}-\teacher)\right\Vert 
=
\left\Vert \left(\I-\mQ_{m}\right)
\prn{
\frac{1}{k}
\tsum_{t=1}^{k}
\bigprn{\w_{t}-\teacher}
}\right\Vert 
\\
\explain{
\substack{
\text{\lemref{lem:residual_is_convex}}
\\
\text{Jensen's inequality}
}
} 
& \leq\frac{1}{k}\sum_{t=1}^{k}\left\Vert \left(\I-\mQ_{m}\right)(\w_{t}\!-\!\teacher)\right\Vert 
=
\frac{1}{k}\sum_{t=1}^{k}\left\Vert \left(\I-\mQ_{m}\right)\left(\mQ_{t\text{ mod }T}\circ\smallcdots\circ\mQ_{1}\right)\left(\mQ_{T}\circ\smallcdots\circ\mQ_{1}\right)^{\left\lfloor t/T\right\rfloor }\left(\w_{0}\!-\!\teacher\right)\right\Vert ^{2}
\\
\explain{
\text{assuming }
T\mid k} & =\frac{1}{k}\sum_{n=1}^{\left\lfloor k/T\right\rfloor }\sum_{t=1}^{T}\left\Vert \left(\I-\mQ_{m}\right)\left(\mQ_{t}\circ\smallcdots\circ\mQ_{1}\right)\left(\mQ_{T}\circ\smallcdots\circ\mQ_{1}\right)^{n}\left(\w_{0}-\teacher\right)\right\Vert ^{2}\\
 & =\frac{1}{k}\sum_{t=1}^{T}\sum_{n=1}^{\left\lfloor k/T\right\rfloor }\left\Vert \left(\I-\mQ_{m}\right)\left(\mQ_{t}\circ\smallcdots\circ\mQ_{1}\right)\left(\mQ_{T}\circ\smallcdots\circ\mQ_{1}\right)^{n}\left(\w_{0}-\teacher\right)\right\Vert ^{2}
 \\
 & 
 =
 \frac{1}{k}\sum_{t=1}^{T}\sum_{n=1}^{\left\lfloor k/T\right\rfloor }
 \Bignorm{\left(\I-\mQ_{m}\right)\left(\mQ_{t}\circ\smallcdots\circ\mQ_{1}\!\circ\!\mQ_{T}\circ\smallcdots\circ\mQ_{\itr+1}\right)^{n}
\Bigprn{\underbrace{\mQ_{t}\circ\smallcdots\circ\mQ_{1}\left(\w_{0}-\teacher\right)}_{\triangleq\vu_{t}}
 }
 }^{2}
 \\
 & =\frac{1}{k}\sum_{t=1}^{T}\sum_{n=1}^{\left\lfloor k/T\right\rfloor }\left\Vert \left(\I-\mQ_{m}\right)\left(\mQ_{t}\circ\smallcdots\circ\mQ_{1}\!\circ\!\mQ_{T}\circ\smallcdots\circ\mQ_{\itr+1}\right)^{n}\left(\vu_{t}\right)\right\Vert ^{2}
 \\
\explain{
\substack{
\ref{lem:bound_on_projection_to_m_task}
}} & 
\le
\frac{T}{k}\sum_{t=1}^{T}\sum_{n=1}^{\left\lfloor k/T\right\rfloor }\left\Vert \left(\mQ_{t}\circ\smallcdots\circ\mQ_{1}\!\circ\!\mQ_{T}\circ\smallcdots\circ\mQ_{\itr+1}\right)^{n}\left(\vu_{t}\right)\right\Vert ^{2}-\left\Vert \left(\mQ_{t}\circ\smallcdots\circ\mQ_{1}\!\circ\!\mQ_{T}\circ\smallcdots\circ\mQ_{\itr+1}\right)^{n+1}\left(\vu_{t}\right)\right\Vert ^{2}\\
\explain{\text{telescoping}} 
& =\frac{T}{k}\sum_{t=1}^{T}\left\Vert \left(\mQ_{t}\circ\smallcdots\circ\mQ_{1}\!\circ\!\mQ_{T}\circ\smallcdots\circ\mQ_{\itr+1}\right)\left(\vu_{t}\right)\right\Vert ^{2}-\left\Vert \left(\mQ_{t}\smallcdots\mQ_{1}\!\circ\!\mQ_{T}\circ\smallcdots\circ\mQ_{\itr+1}\right)^{\left\lfloor k/T\right\rfloor +1}\left(\vu_{t}\right)\right\Vert ^{2}
\\
 & \leq\frac{T}{k}\sum_{t=1}^{T}\left\Vert \left(\mQ_{t}\circ\smallcdots\circ\mQ_{1}\!\circ\!\mQ_{T}\circ\smallcdots\circ\mQ_{\itr+1}\right)\left(\vu_{t}\right)\right\Vert ^{2}
  =
  \frac{T}{k}\sum_{t=1}^{T}\left\Vert \left(\mQ_{t}\circ\smallcdots\circ\mQ_{1}\!\circ\!\mQ_{T}\circ\smallcdots\circ\mQ_{1}\right)\left(\w_{0}-\teacher\right)\right\Vert ^{2}
 \\
 \explain{\text{contraction}}
 & 
 \leq
 \frac{T}{k}\sum_{t=1}^{T}\left\Vert \w_{0}-\teacher\right\Vert ^{2}
 =
 \frac{T^{2}}{k}\left\Vert\teacher\right\Vert ^{2}\,.
\end{align*}
Overall, we bounded the distance in the cyclic setting:
\hfill
$
    \dist^2\prn{\avgitr_{k}, \feasible_m}
    \le
    \frac{T}{n}
    \left\Vert
    \teacher
    \right\Vert ^{2}
    =
    \frac{T^2}{k}
    \tnorm{\teacher
    }^{2},
    ~\,\forall m\!\in\!\cnt{T}
$.
\linebreak
Then, given \lemref{lem:euclidean_to_hinge}, 
we conclude that
\hfill
$
    F_{m}(\avgitr_k)
    \le
    \dist^2(\avgitr_{k}, \feasible_{m})
    R^2
    \le
    \frac{T^2}{k}\norm{\teacher}^2
    R^2,
    ~\,\forall m\!\in\!\cnt{T}
$.

\subsubsection{Random ordering}

Our random ordering result is related to Proposition~6 and Equations~(12) and (13) in \citet{nedic2010randomPOCS} which analyzed the expected average and maximum distance to any feasible set.
Even more strongly related is Remark~15 in \citet{evron2022catastrophic} which analyzed the 
expected \emph{average} forgetting in continual linear regression.
Importantly, their proof in Appendix~E.2 does not exploit the linearity of the projection operators and works for general projections as well.
For completeness only, we closely follow their proof and present our own proof below.
\begin{align*}
\frac{1}{R^2}
\expectation_{\tau}
    \Big[
    \underbrace{
        \frac{1}{T}
        \sum_{m=1}^T
        F_m (\avgitr_k)}_{
        \substack{
        \text{Average forgetting}
        }
    }
    \Big]
\stackrel{\text{\lemref{lem:euclidean_to_hinge}}}{\le}
&
\expectation_{\tau}
    \Big[\,
    \frac{1}{T}
    \sum_{m=1}^T
    \dist^{2}(\avgitr_k, \feasible_m)
    \Big]
=
\expectation_{\tau}
    \Big[\,
    \frac{1}{T}
    \sum_{m=1}^T
    \norm{\left(\I-\mP_m\right)\avgitr_k}^2
    \Big]
\\
\explain{\text{i.i.d.~uniform}}
=\,
&
\mathbb{E}_{
\mP_{1},\dots,\mP_{k},\mP}
\Bignorm{\left(\I-\mP\right)
\bigprn{\overline{\w}_{k}}
}_{2}^{2}
\\
=\,
&
\mathbb{E}_{
\mP_{1},\dots,\mP_{k},\mP}
\Bignorm{\left(\I-\mP\right)
\bigprn{
\frac{1}{k}\sum_{\itr=1}^{k}\w_{\itr}}
}_{2}^{2}
\\
\explain{\text{\lemref{lem:residual_is_convex},     Jensen's inequality}}
\le\,
&
\frac{1}{k}
\sum_{\itr=1}^{k}
\mathbb{E}_{
\mP_{1},\dots,\mP_{t},\mP}
\Bignorm{\left(\I-\mP\right)
\bigprn{\w_{\itr}}
}_{2}^{2}
\\
=\,
&
\frac{1}{k}
\sum_{\itr=1}^{k}
\mathbb{E}_{
\mP_{1},\dots,\mP_{t},\mP}
\bigg[
\Bignorm{
\left(\I-\mP\right)\bigprn{\w_{\itr}}
-
\underbrace{\left(\I-\mP\right)\bigprn{\teacher}}_{=\0}
}_{2}^{2}
\bigg]
\\
\explain{\text{\propref{prop:projections}}}
\le\,
&
\frac{1}{k}
\sum_{\itr=1}^{k}
\mathbb{E}_{
\mP_{1},\dots,\mP_{t},\mP}
\bigg[
\Bignorm{
\w_{\itr}-\teacher}_{2}^{2}
-
\Bignorm{
\mP(\w_{\itr})-\underbrace{\mP(\teacher)}_{=\teacher}}_{2}^{2}
\bigg]
\\
\explain{\text{i.i.d.}}
=\,
&
\frac{1}{k}
\sum_{\itr=1}^{k}
\mathbb{E}_{
\mP_{1},\dots,\mP_{\itr+1}}
\bigg[
\Bignorm{
\w_{\itr}-\teacher}_{2}^{2}
-
\Bignorm{
\underbrace{\mP_{\itr+1}(\w_{\itr})}_{=\w_{\itr+1}}-\teacher}_{2}^{2}
\bigg]
\\
\explain{\text{telescoping}}
=\,
&
\frac{1}{k}
\mathbb{E}_{
\mP_{1},\dots,\mP_{k}}
\bigg[
\underbrace{\Bignorm{\w_{1}-\teacher}_{2}^{2}}_{
\substack{
\le \norm{\w_{0}-\teacher}_{2}^{2},\\
\text{by
\corref{cor:monotonicity}
}}}
-
\underbrace{\Bignorm{\w_{k+1}-\teacher}_{2}^{2}
}_{\ge0}
\bigg]
\le
\frac{1}{k}
\norm{\teacher}^2
~.
\end{align*}
\end{proof}

\newpage

\section{Proof for the Extended Settings (\secref{sec:extensions})}
\label{app:extensions}

Our next proof follows and generalizes our proof of \thmref{thm:weakly_regularized} in \appref{app:algorithmic_bias}.
We have split the statements and proofs of these two cases for clarity and to make sure that our fundamental \thmref{thm:weakly_regularized} stands on its own.

For ease of readability, we mark \rnotice{$\lambda_t$} and \nnotice{$\mathbf{B}_t$} in red and blue, respectively.

We provide here a unified proof for Theorems~\ref{thm:weak_schedule}~and~\ref{thm:weak_weighted}. To this end, we also define a unified scheme.

\begin{algorithm}[h!]
   \caption{Scheduled and Weighted Sequential Max-Margin
    \label{proc:sche_weighted_adaptive}}
\begin{algorithmic}
   \STATE {\algmargin\bfseries Initialization:} 
   $\w_0 = \0_{\dim}$
   \STATE {\algmargin{\bfseries Iterative update for each task $t\in\cnt{k} 
   $:}} 
    \vspace{-.3em}
    \begin{align}
    \algmargin
    \w_{t}
    =
    {\argmin}_{\w}
    \,
    & 
    \!
    \norm{\w-c_t\w_{t-1}}^2_{\nnotice{\B_{t}}},
    \quad\quad\quad\quad\quad\quad\quad\quad 
    c_t\triangleq{
        \lim_{\lambda\to 0}
        \!
        \tfrac{\ln\rnotice{\lambda_{t-1}}}{\ln\rnotice{\lambda_{t}}}
        }
    \label{sche_weigh}
    \\
    \suchthat
    \,
    \enskip & 
    \,y\w^\top \x \!\ge\! 1,
    \,
    \forall (\x,y)\!\in\! \dataset_{t}
    \nonumber
    \end{align}
    \vspace{-1.5em}
\end{algorithmic}
\end{algorithm}

As we explain in the main body of the paper, we assume that $\forall t\!\in\!\cnt{k}$,
the singular values of $\nnotice{\B_{t}}$ 
are bounded,
\ie it holds that
${0<\nnotice{\mu_t}\le 
\sigma_{\min}(\nnotice{\B_t}) \le
\sigma_{\max}(\nnotice{\B_t})
\le \nnotice{M_t}<\infty}$
for some finite $\nnotice{\mu_t,M_t}\in\reals_{>0}$
(independent of $\lambda$).
Moreover, given $\lambda>0$,
we parameterize the regularization strengths as 
$\rnotice{\lambda_{t}}\!\triangleq\!\rnotice{\lambda_{t}(\lambda)}>0$
for arbitrary functions
$\rnotice{\lambda_{t}}\!:\reals_{>0}\!\to\!\reals_{>0}$
holding that
$\lim_{\lambda\to 0}\rnotice{\lambda_{t}(\lambda)} \!=\! 0$
and
$
\m@th\displaystyle
\lim_{\lambda\to 0}
\rnotice{\tfrac{\ln\lambda_{t-1}}{\ln\lambda_{t}}}<\infty$ is well-defined
$\forall t\!\in\!\cnt{k}$.

\begin{proof}[Proof for Theorems~\ref{thm:weak_schedule}~and~\ref{thm:weak_weighted}]
We will prove by induction on $t\ge0$
that 
the scale of the residual 
$
\m@th\displaystyle
\residual_{t}^{(\lambda)}
\triangleq
{\w_{t}^{(\lambda)} 
\!-\!
{\ln\prn{\tfrac{1}{\rnotice{\lambda_t}}}\w_{t}}}
$
at each iteration is 
$\bignorm{\residual_{t}^{(\lambda)}}
    =\bigO\prn{
        \sum_{t'=1}^{t}
        \bigprn{
        \prod_{n=t'}^{t}
        \nnotice{\tfrac{M_{n}}{\mu_{n}}}
        }
        \ln\ln\prn{\tfrac{1}{\rnotice{\lambda_{t'}}}}
    }$;
and that consequently (since $\ln\prn{\tfrac{1}{\lambda}}\w_t$ grows faster;
see Remark~\ref{rmk:validity}), 
the iterates are either identical
(\ie $\w_{t}^{(\lambda)}=\w_{t}$)
or converge in the same direction when $\lambda\to0$,
\ie $
\lim_{\lambda\to0}
\frac{\w_t^{(\lambda)}}{\tnorm{\w_t^{(\lambda)}}}
=
\frac{\w_t}{\tnorm{\w_t}}
$.

\paragraph{For $t=0$:}
By the conditions of the theorem, it trivially holds that 
$\w_0^{(\lambda)}=\w_0=\0_{\dim}$ and
$\residual_{0}^{(\lambda)}
=\0_{\dim}$.

\medskip

\paragraph{For $t\ge1$:}

The solved optimization problem
(recall Remark~\ref{rmk:simplification})
is:
%
$$\displaystyle
\w^{(\lambda)}_{t}
=
\argmin_{\w \in \reals^{\dim}} 
\lamloss(\w)
\triangleq
\argmin_{\w \in \reals^{\dim}} 
{
\sum_{\x\in \dataset_{t}}
e^{-\w^\top \x }
+
\frac{\rnotice{\lambda_{t}}}{2}
\norm{\nnotice{\sqrt{\B_{t}}}\w\!-\!\nnotice{\sqrt{\B_{t}}}\w^{(\lambda)}_{t-1}}^2
}\,.
$$

\bigskip

We follow the same ideas as in our proof for \thmref{thm:weakly_regularized} in \appref{app:algorithmic_bias}
(with some adjustments since, for instance, our objective here is no longer $\lambda$-strongly convex,
but rather $\rnotice{\lambda_t}\nnotice{\mu_{t}}$-strongly convex, 
since its Hessian matrix holds
${\nabla^2 \lamloss(\w)
\succeq \rnotice{\lambda_t}\nnotice{\B_{t}}
\succeq \rnotice{\lambda_t}\nnotice{\mu_{t}}}\I
\succ \0$).

First, we compute the gradient of $\lamloss$, normalized by $\rnotice{\lambda_t}$:
\begin{align*}
    &\frac{1}{\rnotice{\lambda_t}}
    \nabla
    \lamloss(\w)
    =
    -\frac{1}{\rnotice{\lambda_t}}
    \sum_{\x\in\dataset_{t}}\x\exp\left(-\w^{\top}\x\right)+
    \nnotice{\B_{t}}\w-\nnotice{\B_{t}}\w^{(\lambda)}_{t-1}\,.
\end{align*}
Then, we plug in
$\w
=
\prn{\ln\prn{\tfrac{1}{\rnotice{\lambda_t}}}+\wsign_{t}\ln\ln\prn{\tfrac{1}{\rnotice{\lambda_t}}}}\w_{t}
+\tilde{\w}_{t}
=
\ln\prn{\tfrac{1}{\rnotice{\lambda_t}}\ln^{\wsign_{t}\!}\prn{\tfrac{1}{\rnotice{\lambda_t}}}}
\w_{t}
+\tilde{\w}_{t}$,
for some sign $\wsign_{t}\in\{-1,+1\}$ and
a vector $\tilde{\w}_{t}$ with a norm independent of $\lambda$ (both will be defined below).
\begin{align*}
    &\frac{1}{\rnotice{\lambda_t}}
    \nabla
    \lamloss\Bigprn{
    \prn{\ln\prn{\tfrac{1}{\rnotice{\lambda_t}}}
    +
    \wsign_{t}\ln\ln\prn{\tfrac{1}{\rnotice{\lambda_t}}}
    }
    \w_{t}
    +
    \tilde{\w}_{t}}
    \\
    &=
    -\frac{1}{\rnotice{\lambda_t}}
    \sum_{\x\in\dataset_{t}}
    \x
    e^{
    \ln\prn{\rnotice{\lambda_t}\ln^{-\wsign_t\!}\prn{\tfrac{1}{\rnotice{\lambda_t}}}}\w_{t}^\top 
    \x}
    e^{-\tilde{\w}_{t}^\top \x}
    +
    \ln\Bigprn{\tfrac{1}{\rnotice{\lambda_t}}
    \ln^{\wsign_t\!}\prn{\tfrac{1}{\rnotice{\lambda_t}}}}
    \nnotice{\B_{t}}\w_{t}
    +\nnotice{\B_{t}}\tilde{\w}_{t}
    -\nnotice{\B_{t}}\w^{(\lambda)}_{t-1}
    \\
    &=
    -\frac{1}{\rnotice{\lambda_t}}
    \sum_{\x\in\dataset_{t}}
    \x
    \prn{\rnotice{\lambda_t}\ln^{-\wsign_t\!}\prn{\tfrac{1}{\rnotice{\lambda_t}}}}^{\w_{t}^\top \x}
    e^{-\tilde{\w}_{t}^\top \x}
    +
    \ln\Bigprn{\tfrac{1}{\rnotice{\lambda_t}}
    \ln^{\wsign_t\!}\prn{\tfrac{1}{\rnotice{\lambda_t}}}}
    \nnotice{\B_{t}}\w_{t}
    +
    \nnotice{\B_{t}}\tilde{\w}_{t}
    \!-\!\nnotice{\B_{t}}\w^{(\lambda)}_{t-1}
    \,.
\end{align*}
Now, denoting the set of support vectors by
$\supp_{t}\triangleq\left\{ \x\in \dataset_{t}\mid\w_{t}^{\top}\x=1\right\}$
(which might be empty for $t\ge 2$),
and using the inductive assumption
that
$
\w_{t-1}^{(\lambda)}=
\residual_{t-1}^{(\lambda)}
+
\ln\prn{\tfrac{1}{\rnotice{\lambda_{t-1}}}}\w_{t-1}
$
(where $\bignorm{\residual_{t-1}^{(\lambda)}}=
    \bigO\prn{
        \sum_{t'=1}^{t-1}
        \bigprn{
        \prod_{n=t'}^{t-1}
        \nnotice{\tfrac{M_{n}}{\mu_{n}}}
        }
        \ln\ln\prn{\tfrac{1}{\rnotice{\lambda_{t'}}}}
    }$), 
\linebreak    
we get 
\begin{flalign*}
\mathrlap{
\frac{1}{\rnotice{\lambda_t}}
    \nabla
    \lamloss\Bigprn{
    \prn{\ln\prn{\tfrac{1}{\rnotice{\lambda_t}}}
    +
    \wsign_{t}\ln\ln\prn{\tfrac{1}{\rnotice{\lambda_t}}}
    }
    \w_{t}
    +
    \tilde{\w}_{t}}
}
\\
\mathrlap{
=
    -
    \frac{1}{\rnotice{\lambda_t}}
    \prn{\rnotice{\lambda_t}\ln^{-\wsign_t\!}\prn{\tfrac{1}{\rnotice{\lambda_t}}}}^{1}
    \sum_{\x\in\supp_{t}}
    \x
    e^{-\tilde{\w}_{t}^\top \x}
    -
    \frac{1}{\rnotice{\lambda_t}}\!
    \sum_{\x\notin\supp_{t}}
    \x
    \prn{\rnotice{\lambda_t}\ln^{-\wsign_t\!}\prn{\tfrac{1}{\rnotice{\lambda_t}}}}^{\w_{t}^\top \x}
    e^{-\tilde{\w}_{t}^\top \x}
    +
}
\\
&& \mathllap{
+
    \ln \prn{\tfrac{1}{\rnotice{\lambda_t}}}
    \nnotice{\B_{t}}\w_{t}
    -
    \ln \prn{\tfrac{1}{\rnotice{\lambda_{t-1}}}} \nnotice{\B_{t}}\w_{t-1}
    +
    \ln\ln^{\wsign_t\!}\prn{\tfrac{1}{\rnotice{\lambda_t}}}
    \nnotice{\B_{t}}\w_{t}
    +
    \nnotice{\B_{t}}\tilde{\w}_{t}
    -
    \nnotice{\B_{t}}\residual_{t-1}^{(\lambda)}
}
\\
\mathrlap{
=
-
\ln^{-\wsign_t\!}\prn{\tfrac{1}{\rnotice{\lambda_t}}}
\sum_{\x\in\supp_{t}}
\x
e^{-\tilde{\w}_{t}^\top \x}
-
\sum_{\x\notin\supp_{t}}
\x
{\rnotice{\lambda_t}}^{\w_{t}^\top \x-1}
\prn{\ln\prn{\tfrac{1}{\rnotice{\lambda_t}}}}^{-\wsign_t\w_{t}^\top \x}
e^{-\tilde{\w}_{t}^\top \x}
+
}
\\
&& \mathllap{
+
    \ln \prn{\tfrac{1}{\rnotice{\lambda_{t}}}}
    \nnotice{\B_{t}}
    \prn{\w_{t}-
    \rnotice{\tfrac{\ln\lambda_{t-1}}{\ln\lambda_{t}}}\w_{t-1}}
    +
    \ln\ln^{\wsign_t\!}\prn{\tfrac{1}{\rnotice{\lambda_{t}}}}
    \nnotice{\B_{t}}\w_{t}
    +
    \nnotice{\B_{t}}\tilde{\w}_{t}
    -
    \nnotice{\B_{t}}\residual_{t-1}^{(\lambda)}
    \,.
}
\end{flalign*}

By the triangle inequality 
and since $\bignorm{\ln\ln^{\wsign_t\!}\prn{\tfrac{1}{\rnotice{\lambda_{t}}}}}
=
\bignorm{\wsign_t\ln\ln\prn{\tfrac{1}{\rnotice{\lambda_{t}}}}}
=
\overbrace{\abs{\wsign_t}}^{= 1}
\overbrace{\ln\ln\prn{{1}/{\rnotice{\lambda_{t}}}}}^{>0\text{, when }\rnotice{\lambda_t}<{1}/{e}}$, we have,
\begin{align*}   
&\norm{\frac{1}{\rnotice{\lambda_t}}
\nabla
\lamloss\Bigprn{\ln\prn{\tfrac{1}{\rnotice{\lambda_{t}}}}+
    \wsign_{t}\ln\ln\prn{\tfrac{1}{\rnotice{\lambda_{t}}}}\w_{t}+\tilde{\w}_{t}}
}
\\
&
\le
\,
\underbrace{\biggnorm{
    \ln \prn{\tfrac{1}{\rnotice{\lambda_t}}}
    \nnotice{\B_{t}}
    \prn{\w_{t}\!-\!\rnotice{\tfrac{\ln\lambda_{t-1}}{\ln\lambda_{t}}}\w_{t-1}}
    -
    \ln^{-\wsign_{t}\!}
    \prn{\tfrac{1}{\rnotice{\lambda_t}}}
    \!
    \sum_{\x\in\supp_{t}}\!
    \x
    e^{-\tilde{\w}_{t}^\top \x}
}
}_{\triangleq\vect{a}_1(\rnotice{\lambda_{t}})}
\\
&
\hspace{1cm}
+
\underbrace{
\biggnorm{
    \sum_{\x\notin\supp_{t}}
    \x
    {\rnotice{\lambda_{t}}}^{\w_{t}^\top \x-1}
    \!
    \prn{\ln\prn{\tfrac{1}{\rnotice{\lambda_{t}}}}}^{-\wsign_t\w_{t}^\top \x}
    e^{-\tilde{\w}_{t}^\top \x}
}
}_{\triangleq\vect{a}_2(\rnotice{\lambda_{t}})}
+
\ln\ln\prn{\tfrac{1}{\rnotice{\lambda_{t}}}}
\Bignorm{
    \nnotice{\B_{t}}\w_{t}
}
+
\Bignorm{
    \nnotice{\B_{t}}\tilde{\w}_{t}
}
+
\Bignorm{
    \nnotice{\B_{t}}\residual_{t-1}^{(\lambda)}
}
\\
&
\le \vect{a}_1(\rnotice{\lambda_{t}})
+
\vect{a}_2(\rnotice{\lambda_{t}})
+
\ln\ln\prn{\tfrac{1}{\rnotice{\lambda_{t}}}}
\nnotice{M_{t}}\bignorm{\w_{t}}
+
\nnotice{M_{t}}
\bignorm{\tilde{\w}_{t}}
+
\nnotice{M_{t}}
\bignorm{\residual_{t-1}^{(\lambda)}}\,.
\end{align*}

\newpage

Here also, we distinguish between two different  behaviors of
the SMM solution $\w_t$
(recall that ${c_t\triangleq{
        \lim_{\lambda\to 0}
        \!
        \tfrac{\ln\rnotice{\lambda_{t-1}}}{\ln\rnotice{\lambda_{t}}}
        }}$):
\begin{enumerate}
    \item \textbf{When 
    $\w_{t}
    \triangleq 
    \mP_{t} (c_t \w_{t-1})
    \neq 
    c_t \w_{t-1}$ 
    (and necessarily 
    $|{\supp_{t}}|\ge1$):}
    %
    We choose $\wsign_{t}\!=\!-1$ and $\vect{a}_1(\lambda)$ becomes:
    \begin{align*}
    \vect{a}_1(\rnotice{\lambda_{t}})
    &
    =
    \bignorm{
    \ln \prn{\tfrac{1}{\rnotice{\lambda_{t}}}}
    \nnotice{\B_{t}}
    \prn{\w_{t}\!-\!c_t \w_{t-1}}
    -
    \!
    \ln^{-\wsign_{t}\!}
    \prn{\tfrac{1}{\rnotice{\lambda_t}}}
    \!\!
    \sum_{\x\in\supp_{t}}
    \!
    \x
    e^{-\tilde{\w}_{t}^\top \x}
    }
    =
    \ln \prn{\tfrac{1}{\rnotice{\lambda_{t}}}}
    \!
    \bignorm{
    \nnotice{\B_{t}}
    \prn{\w_{t}\!-\!c_t \w_{t-1}}
    -\!\!
    \sum_{\x\in\supp_{t}}
    \!
    \x
    e^{-\tilde{\w}_{t}^\top \x}
    }.
    \end{align*}
    We thus wish to choose 
    $\tilde{\w}_{t}$
    so as to \emph{zero} $\vect{a}_1(\rnotice{\lambda_{t}})$.
    That is,
    according to the KKT conditions of
    \eqref{sche_weigh},
    we have
    $$
    \tsum_{\x\in\supp_{t}}
    \x
    e^{-\tilde{\w}_{t}^\top \x}
    =
    \nnotice{\B_{t}}\prn{
    \w_{t}
    -c_t \w_{t-1}
    }
    \triangleq
    \tsum_{\x\in\supp_{t}}
    \x
    \alpha(\x)
    \,,
    $$
    where $\bm{\alpha}\in\reals_{\ge0}^{\abs{\dataset_{t}}}$ 
    is the dual solution of \eqref{sche_weigh}.
    From \lemref{lem:kkt_w_tilde} and \corref{cor:finite_w_tilde} (applied with our $\B_t$ and $c_t$ here), we know that there almost surely exists such a vector $\tilde{\w}_{t}$ whose norm
    is $\bigO\prn{1}$ (independent of $\lambda$).
    
    Furthermore, 
    since 
    $\lim_{\lambda\to 0} 
    \prn{\lambda^{c-1}
    \ln^{c} \prn{\nicefrac{1}{\lambda}}
    }=0,~\forall c>1$,
    it holds that
    $\vect{a}_2(\rnotice{\lambda_{t}})$ becomes
    \vspace{-0.4em}
    \begin{align*}   
    \vect{a}_2(\rnotice{\lambda_{t}})
    &=
    \biggnorm{
    \sum_{\x\notin\supp_{t}}
    \underbrace{
        \x
        e^{-\tilde{\w}_{t}^\top \x}
    }_{={\bigO\prn{1}}}
    \underbrace{
        \rnotice{\lambda_{t}}^{\overbrace{\w_{t}^\top \x-1}^{>0}}
        \prn{\ln \prn{\nicefrac{1}{\rnotice{\lambda_{t}}}}}^{
        {\w_{t}^\top \x}
        }
    }_{\to 0}
    }
    \xrightarrow[]{\lambda\to 0} 0\,.
    \end{align*}

    In conclusion, we can choose $\tilde{\w}_t$ and $\wsign_{t}$
    such that $\norm{\tilde{\w}_t}=\bigO\prn{1}$,
    $\vect{a}_1(\rnotice{\lambda_{t}})=0$,
    and $\vect{a}_2(\rnotice{\lambda_{t}})\to0$.

    \medskip
    
    \item
    \textbf{When 
    $\w_{t}
    \triangleq 
    \mP_{t} (c_t \w_{t-1})
    = c_t \w_{t-1}$
    (and possibly $\supp_{t} = \emptyset$):
    }
    %
    We choose $\tilde{\w}_t=
    \0_{\dim}
    $ and $\wsign_{t}=1$.
    It follows that
    \vspace{-0.3em}
    \begin{align*}
    \vect{a}_1(\rnotice{\lambda_{t}})
    &
    =\bignorm{
    \,
    \ln^{-\wsign_{t}\!}
    \prn{\tfrac{1}{\rnotice{\lambda_t}}}
    \sum_{\x\in\supp_{t}}
    \x\,
    }
    =
    \underbrace{
    \ln^{-1\!}
    \prn{\tfrac{1}{\rnotice{\lambda_t}}}
    }_{\to 0}
    \underbrace{
    \bignorm{\,
    \tsum_{\x\in\supp_{t}}
        \x
        \,
    }
    }_{={\bigO\prn{1}}}
    \xrightarrow[]{\lambda\to 0} 0\,,
    \\
    \vect{a}_2(\rnotice{\lambda_{t}})
    &=
    \biggnorm{
    \sum_{\x\notin\supp_{t}}
    \underbrace{
        \x
    }_{={\bigO\prn{1}}}
    \underbrace{
        \rnotice{\lambda_{t}}^{\overbrace{\w_{t}^\top \x-1}^{>0}}
    }_{\to 0}
    \underbrace{
        \prn{\ln \prn{\tfrac{1}{\rnotice{\lambda_{t}}}}}^{
        {\overbrace{-\w_{t}^\top \x}^{<-1}}
        }
    }_{\to 0}
    }
    \xrightarrow[]{\lambda\to 0} 0\,.
    \end{align*}
\end{enumerate}

\bigskip

Finally, we use the $\rnotice{\lambda_t}\nnotice{\mu_{t}}$-strong convexity of our objective
and \lemref{lem:strong-convexity}
to bound the distance to the optimum 
by
\begin{align*}
    \bignorm{\residual_{t}^{(\lambda)}}
    &\triangleq
    \bignorm{
    \w_{t}^{(\lambda)}
    -
    \ln\prn{\tfrac{1}{\rnotice{\lambda_{t}}}}\w_{t}
    }
    =
    \bignorm{
    \w_{t}^{(\lambda)}
    -
    \prn{
    \prn{\ln\prn{\tfrac{1}{\rnotice{\lambda_{t}}}}
    +
    \wsign_t\ln\ln\prn{\tfrac{1}{\rnotice{\lambda_{t}}}}}\w_{t}
    +
    \tilde{\w}_{t}
    }
    +
    \wsign_t\ln\ln\prn{\tfrac{1}{\rnotice{\lambda_{t}}}}
    \w_{t}
    +
    \tilde{\w}_{t}
    }
    \\
    \explain{\text{triangle ineq.}}
    &
    \le
    \bignorm{
    \w_{t}^{(\lambda)}
    -
    \prn{
    \prn{\ln\prn{\tfrac{1}{\rnotice{\lambda_{t}}}}
    +
    \wsign_t\ln\ln\prn{\tfrac{1}{\rnotice{\lambda_{t}}}}}\w_{t}
    +
    \tilde{\w}_{t}
    }
    }
    +
    \bignorm{
    \wsign_t\ln\ln\prn{\tfrac{1}{\rnotice{\lambda_{t}}}}
    \w_{t}
    }
    +
    \bignorm{
    \tilde{\w}_{t}
    }
    \\
    \explain{\text{\lemref{lem:strong-convexity}}}
    &
    \le 
    \frac{1}{\rnotice{\lambda_{t}}\nnotice{\mu_{t}}}
    \norm{
    \nabla
    \lamloss\Bigprn{
        \prn{\ln\prn{\tfrac{1}{\rnotice{\lambda_{t}}}}
        + \wsign_t\ln\ln\prn{\tfrac{1}{\rnotice{\lambda_{t}}}}}\w_{t}
        +
        \tilde{\w}_{t}
    }
    }
    +
    \ln\ln\prn{\tfrac{1}{\rnotice{\lambda_{t}}}}
    \bignorm{\w_{t}}
    +
    \bignorm{\tilde{\w}_{t}}
    \\
    \explain{\text{the above}}
    &
    \le
    \nnotice{\frac{1}{\mu_{t}}}
    \biggprn{\vect{a}_1(\rnotice{\lambda_{t}})
    +
    \vect{a}_2(\rnotice{\lambda_{t}})
    +
    \ln\ln\prn{\tfrac{1}{\rnotice{\lambda_{t}}}}
    \nnotice{M_{t}}\bignorm{\w_{t}}
    +
    \nnotice{M_{t}}
    \bignorm{\tilde{\w}_{t}}
    +
    \nnotice{M_{t}}
    \bignorm{\residual_{t-1}^{(\lambda)}}
    }
    +
    \ln\ln\prn{\tfrac{1}{\rnotice{\lambda_{t}}}}
    \bignorm{\w_{t}}
    +
    \bignorm{\tilde{\w}_{t}}
    \\
    &
    =
    \underbrace{
    \nnotice{\frac{1}{\mu_{t}}}
    \vect{a}_1(\rnotice{\lambda_{t}})+
    \nnotice{\frac{1}{\mu_{t}}}
    \vect{a}_2(\rnotice{\lambda_{t}})
    }_{\to 0\text{, since }\nnotice{\mu_{t}}\text{ is finite}}
    +
    \ln\ln\prn{\tfrac{1}{\rnotice{\lambda_{t}}}}
    \nnotice{\prn{1+\frac{M_t}{\mu_{t}}}}
    \underbrace{
    \bignorm{\w_{t}}
    }_{=\bigO\prn{1}}
    \,+\,
    \nnotice{\prn{1+\frac{M_t}{\mu_{t}}}}
    \underbrace{
        \bignorm{
            \tilde{\w}_{t}
        }
    }_{
    =\bigO\prn{1}}
    \,\,\,\,+\,\,\,
    ~
    \nnotice{\frac{M_t}{\mu_{t}}}
    \bignorm{\residual_{t-1}^{(\lambda)}}
    ~
    ,
    %
\end{align*}
and since we have by the induction assumption that 
$\bignorm{\residual_{t-1}^{(\lambda)}}
=\bigO\prn{
    \sum_{t'=1}^{t-1}
    \bigprn{
    \prod_{n=t'}^{t-1}
    \nnotice{\tfrac{M_{n}}{\mu_{n}}}
    }
    \ln\ln\prn{\tfrac{1}{\rnotice{\lambda_{t'}}}}
}$ 
and also $\nnotice{\tfrac{M_{n}}{\mu_{n}}}\ge1$,
we can conclude that 
$
    \bignorm{\residual_{t}^{(\lambda)}}
    =\bigO\prn{
        \sum_{t'=1}^{t}
        \bigprn{
        \prod_{n=t'}^{t}
        \nnotice{\tfrac{M_{n}}{\mu_{n}}}
        }
        \ln\ln\prn{\tfrac{1}{\rnotice{\lambda_{t'}}}}
    }
    $.
\end{proof}

\vspace{1em}
    
\begin{remark}[Applicability of our analysis for finite $\lambda$]
    \label{rmk:validity}
    When the singular values of the weight matrices of all tasks are between
    $\nnotice{0<\mu\le\mu_t\le M_t \le M <\infty}$, 
    the residual from our analysis becomes of the order 
    ${
    \bignorm{\residual_{t}^{(\lambda)}}
    \triangleq
    \bignorm{
    \w_{t}^{(\lambda)}
    -
    \ln \tprn{\tfrac{1}{\rnotice{\lambda_{t}}}} \w_{t}}
    =\bigO\prn{
        \sum_{t'=1}^{t}
        \bigprn{
        \nnotice{\hfrac{M}{\mu}}
        }^{t-t'+1}
        \ln\ln\prn{\tfrac{1}{\rnotice{\lambda_{t'}}}}
    }
    }
    $.
    When this bound on the condition number, \ie $\nnotice{\hfrac{M}{\mu}}$, is strictly larger than~$1$,
    we get an exponential growth of the residuals.
    Therefore, 
    within a few tasks under a \emph{finite} regularization strength $\lambda$, the bound on the residuals might become even larger than the scale of the scaled SMM solutions 
    \ie larger than
    $\ln \tprn{\tfrac{1}{\rnotice{\lambda_{t}}}}$.
    In turn, this will invalidate our analysis.
    Of course, in the limit of $\lambda\to 0$, our analysis still applies.
    As we explained in Remark~\ref{rmk:limits_order},
    we take $\lambda\to 0$ \emph{after} fixing the number of iterations $k$ (or $t$).

    The question of determining the specific value of $\lambda$ that practically ensures that
    $\frac{\w_t^{(\lambda)}}{\tnorm{\w_t^{(\lambda)}}}
    \approx \frac{\w_t}{\tnorm{\w_t}}$, as well as understanding the true impact of $\nnotice{\hfrac{M}{\mu}}>1$ on the residuals, 
    remains an intriguing and open research question. 
\end{remark}

\newpage

\subsection{Proofs for Regularization Strength Scheduling (\secref{sec:scheduling})}
\label{app:scheduling}

\begin{recall}[Proposition~\ref{prop:large_p_no_convergence}]
There exists a construction of two jointly-separable tasks
in which the iterates of the cyclic ordering do not converge to $\teachers$, for any $p>1$.
Specifically, for any ${p> 1,\norm{\teacher}>1}$,
it holds that
\begin{align*}
    \lim_{k\to\infty}
    \frac{\dist(\w_{k},\teachers)}{\tnorm{\teacher}}
    =
    \frac{\norm{\teacher}^2(p-1)}{
        2+\norm{\teacher}^2(p-1)
    }
    \sqrt{1-\frac{1}{\norm{\teacher}^2}}
    \,.
\end{align*}
\end{recall}

\begin{proof}
First, we explain our construction in the following figure.
\vspace{-0.5em}
\begin{figure}[ht!]
\centering
\begin{minipage}[t!]{0.61\linewidth}
\caption{
    We consider a 2-dimensional setting of 2 tasks, one positively-labeled normalized sample per task,
    \ie $(\x_1, +1),\,(\x_2, +1)$. 
    Both $\x_1,\x_2$ are on the unit sphere (hence $R=1$)
    and are symmetric w.r.t.~the vertical axis.
    \\
    In this setting, when $p>1$, it can be readily seen that at each iteration we perform an orthogonal projection onto a closed affine subspace. Therefore, if we converge to $\teachers$, we must converge to the minimum-norm solution $\teacher$ specifically (see \citet{halperin1962product} for instance).
    We thus wish to study $\lim_{k\to\infty}{\tnorm{\w_k-\teacher}}$.
    \\
    To ease our notations, we again translate our space by $-\teacher$ such that instead of using the ``affine'' halfspaces $\feasible_1, \feasible_2$,
    we use their ``homogeneous'' counterparts 
    $\mathcal{C}_m \triangleq \feasible_m - \teacher$
    (see Remark~\ref{rmk:translations}).
    Under this ``change of coordinates'', instead of analyzing $\norm{\w_k-\teacher}$, we can  simply analyze $\norm{\w_k}$.
    \\
    Since the setting is 2-dimensional and each task has a single sample, we can parameterize 
    the orthogonal projections as 
    ${\mP_1=\vu\vu^\top\!=\I\!-\!\x_1\x_1^\top}$ and ${\mP_2=\vv\vv^\top\!=\I\!-\!\x_2\x_2^\top}$ 
    (where $\norm{\vu}\!=\!\norm{\vv}\!=\!1$). 
    We denote the angle between the two subspaces that we project onto as
    ${\theta \triangleq 
    \arccos\prn{\vu^{\top}\vv}
    =
    \pi \!-\! \arccos{\left(\x_{1}^{\top} \x_{2}\right)}}$. 
    \\
    Notice that (before the translation) the min-norm solution is $\m@th\displaystyle{\teacher\!=\!\frac{1}{\sin\prn{\hfrac{\theta}{2}}
    }\!\begin{bmatrix}0 \\ 1\end{bmatrix}}$.
}
\end{minipage}
\hfill
\begin{minipage}[t!]{0.37\linewidth}
\frame{\includegraphics[width=.99\linewidth]{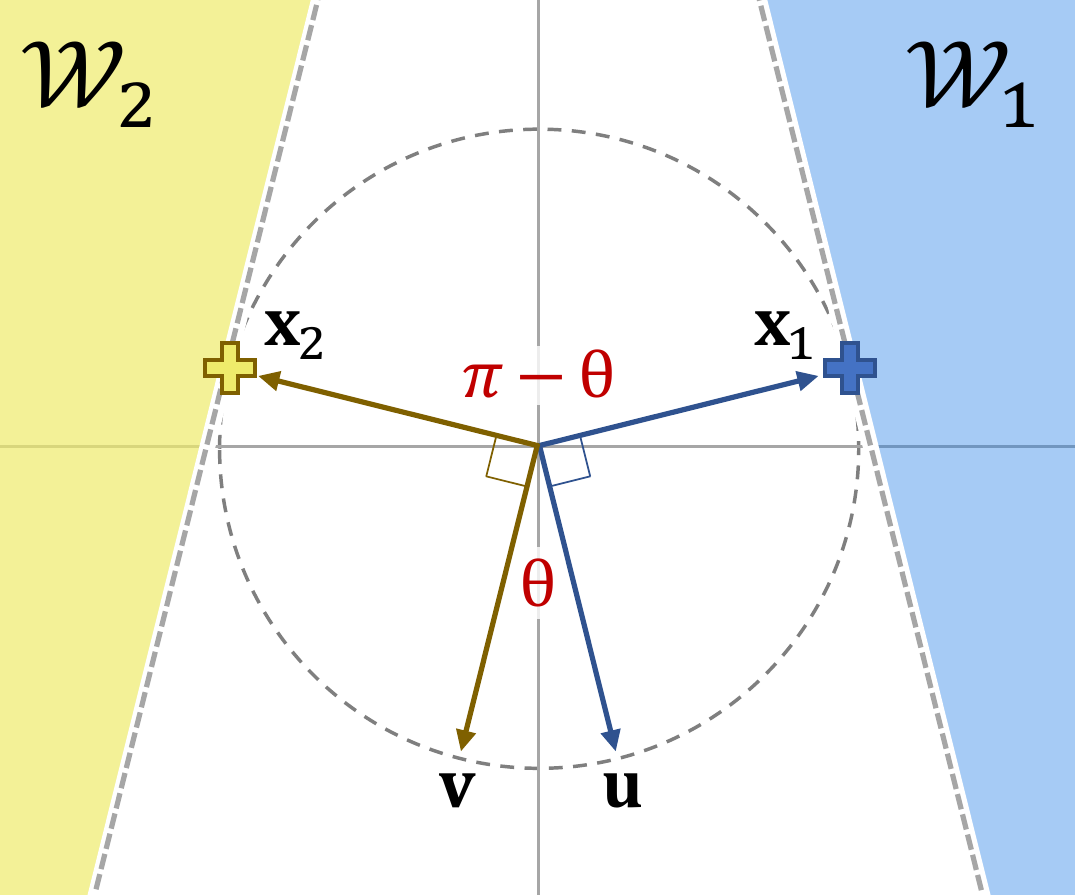}}
\end{minipage}
\end{figure}

\vspace{-.3em}

In the Scheduled Sequential Max-Margin \procref{proc:scheduled_adaptive},
each iterate is multiplied by  $\beta\triangleq\frac{1}{p}$
before being projected onto the next subspace,
equivalently to
\begin{align*}
    \w_{k} = 
    \w_{2n} = 
    \mP_2
    \Bigprn{
        (1-\beta)\w_0
        +\beta\mP_1
        \bigprn{(1-\beta)\w_0
        +
        \beta\w_{2\prn{n-1}}
        }
    }
\end{align*}
To understand this recursive expression,
let $n=1$. 
We get (the projections are linear)
\hfill
$
    \w_2 = 
    (1-\beta)\mP_2\w_0+\beta\mP_2\mP_1\w_0
$.
\linebreak
Then, for $n=2$, we get
\hfill
$
    \w_4 = 
    (1-\beta)\mP_2\w_0+\beta(1-\beta)\mP_2\mP_1\w_0
    +\beta^2(1-\beta)\mP_2\mP_1\mP_2\w_0 
    + \beta^3\left(\mP_2\mP_1\right)^2\w_0
$.

Recursively, we get
\hfill
$
\m@th\displaystyle
    \w_k = \w_{2n} = 
    (1\!-\!\beta)
    \!
    \left(
    \sum_{k=0}^{n-1}\beta^{2k}\left(\mP_2\mP_1\right)^k\mP_2\w_0
    + 
    \sum_{k=1}^{n-1}\beta^{2k-1}\left(\mP_2\mP_1\right)^{k}\w_0
    \right)
    \!
    +
    \beta^{2n-1}\left(\mP_2\mP_1\right)^{2}\w_0
$.

Plugging in the parameterization of the projections,
we get
\begin{align*}
    {\w_{k}} = 
    {\w_{2n}} = 
    (1-\beta)\left(
    \left[\sum_{k=0}^{n-1}(\beta\cos{\theta})^{2k}\right]\vv\vv^\top\w_0
    +\left[\sum_{k=1}^{n-1}(\beta\cos{\theta})^{2k-1}\right]\vv\vu^\top\w_0
    \right)
    +(\beta\cos{\theta})^{2n-1}\vv\vu^\top\w_0
    \,.
\end{align*}
Since $\beta=\tfrac{1}{p}<1$, we have
\begin{align*}
    \lim_{n\to\infty}{\w_{2n}} 
    = 
    (1-\beta)
    \prn{
    \frac{\vv^\top\w_0}{1-(\beta\cos\theta)^2}
    +
    \frac{\beta\vu^\top\w_0 \cos{\theta}}{1-(\beta\cos\theta)^2}
    }\vv
    = 
    (1-\beta)
    \prn{
    \frac{\vv^\top\w_0+\beta\vu^\top\w_0 \cos{\theta}}{1-(\beta\cos\theta)^2}
    }\vv\,.
\end{align*}
From the definition of $\vv,\vu$ it holds that $\vv^{\top}\w_0=\vu^{\top}\w_0=\cos{\frac{\theta}{2}}\tnorm{\w_0}$
and also that $\tnorm{\w_0}=\frac{1}{\sin\prn{\hfrac{\theta}{2}}}$
(after the translation). 
Putting it all together, we get
\hfill
   ${\m@th\displaystyle
    \lim_{k\to\infty}\!{\w_k} 
    =
    \lim_{n\to\infty}{\w_{2n}} 
    = 
    (1\!-\!\beta)
    \cos{\tfrac{\theta}{2}}
    \tnorm{\w_0}
    \prn{
    \frac{1+\beta \cos{\theta}}{1-(\beta\cos\theta)^2}
    }\vv
    = 
    \tnorm{\w_0}
    \frac{(1\!-\!\beta)
    \cos{\tfrac{\theta}{2}}}{
    1-\beta\cos\theta}
    \vv}$.

Recalling that $\norm{\vv}\!=\!1$ and returning to the original coordinate system (by reversing the aforementioned translation), we get:
%
%
%
\begin{align*}
    \lim_{k\to\infty}\frac{\tnorm{\w_k - \teacher}}{\tnorm{\w_0 - \teacher}}
    = 
    \frac{
        \prn{1-\beta}
        \cos{\frac{\theta}{2}}
    }{1-\beta\cos{\theta}}
    =
    \frac{1-\beta}{1-\beta\bigprn{1-\tfrac{2}{\norm{\teacher}^2}}}
    \sqrt{1-\frac{1}{\norm{\teacher}^2}}
    =
    \frac{p-1}{
        p-1+\tfrac{2}{\norm{\teacher}^2}
    }
    \sqrt{1-\frac{1}{\norm{\teacher}^2}}
    \,.
\end{align*}
\vspace{-.3em}
\end{proof}

\newpage

\subsection{Additional Material for Weighted Regularization (\secref{sec:weighted})}
\label{app:weighted}

\begin{recall}[Proposition~\ref{prop:regression_fisher}]
Using a Fisher-information-based weighting scheme 
of $\B_{t}=\sum_{i=1}^{t-1}\sum_{\x\in \dataset_{i}}\x\x^{\top}$,
there is no forgetting in continual linear regression.
\end{recall}

\begin{proof}[Details and proof.]
Consider training a linear regression model with the square loss on a sequence of tasks, obtaining iterates $\w_1,..., \w_{k}$. 
We follow \citet{evron2022catastrophic},
assuming that the tasks are jointy-realizable by a linear predictor, \ie there exists $\tilde{\w}$ such that for all $t$, $\forall(\x,y)\in \dataset_t:\x^{\top}\tilde{\w}=y$.
The forgetting after training task $t$ is defined 
(as in \citet{evron2022catastrophic})
to be the average training loss of $\w_t$ on tasks $1,...,t$.

Now, we identify a specific regularization scheme that guarantees no forgetting.%

For linear regression with square loss, define
\begin{align*}
\w_{1}\!\in
&
\argmin_{\w}
\!\!\!\!
\sum_{\left(\x,y\right)\in \dataset_{1}}
\!\!\!\!\!
\left(\x^{\top}\w-y\right)^{2}
\,,
\\
\w_{t}\!\in
&
\argmin_{\w}
\!\!\!\!
\sum_{\left(\x,y\right)\in \dataset_{t}}
\!\!\!\!\!
\left(\x^{\top}\w\!-\!y\right)^{2}
\!\!+\!
\frac{\lambda}{2}
\!
\norm{\w\!-\!\w_{t-1}}_{\B_{t}}^2
\!,\,
t\!\ge\!2.
\end{align*}
 where 
 $\B_{t}=\sum_{i=1}^{t-1}\sum_{\x\in \dataset_{i}}\x\x^{\top}$.
Then for any $t\in\cnt{k}$ and any $\lambda>0$, 
we show that there is no forgetting.

\bigskip

The proof follows by induction. For $t=1$ the statement is trivially
correct. Assume by induction that the statement is correct for some
task $t$, meaning there is no forgetting for $\w_t$ on tasks $1,...,t$.
We want to show that the statement is correct for task $t+1$, \ie for $\w_{t+1}$ there is no forgetting on tasks $1,...,t+1$.

Note that for task $t+1$ the regularization term is 
\begin{align*}
\frac{\lambda}{2}\left(\w-\w_{t}\right)^{\top}\B_{t+1}\left(\w-\w_{t}\right) & =\frac{\lambda}{2}\left(\w-\w_{t}\right)^{\top}\left(\sum_{i=1}^{t}\sum_{\x\in \dataset_{i}}\x\x^{\top}\right)\left(\w-\w_{t}\right)\\
 & =\frac{\lambda}{2}\sum_{i=1}^{t}\sum_{\x\in \dataset_{i}}\left(\w-\w_{t}\right)^{\top}\x\x^{\top}\left(\w-\w_{t}\right)\\
 & =\frac{\lambda}{2}\sum_{i=1}^{t}\sum_{\x\in \dataset_{i}}\left(\x^{\top}\w-\x^{\top}\w_{t}\right)^{2}~.
\end{align*}
By the realizability assumption and the inductive assumption, we have that for all $i=1,2,...,t$
it holds that ${\forall\left(\x,y\right)\in \dataset_{i}:\,\x^{\top}\w_{t}=y}$,
therefore we get
\[
\frac{\lambda}{2}\left(\w-\w_{t}\right)^{\top}\B_{t+1}\left(\w-\w_{t}\right)=\frac{\lambda}{2}\sum_{i=1}^{t}\sum_{\left(\x,y\right)\in \dataset_{i}}\left(\x^{\top}\w-y\right)^{2}~.
\]
It follows that
\begin{align*}
\w_{t+1} & \in\argmin_{\w}
\left[\sum_{\left(\x,y\right)\in S_{t+1}}\!\!\!\left(\x^{\top}\w-y\right)^{2}+\frac{\lambda}{2}\sum_{i=1}^{t}\sum_{\left(\x,y\right)\in \dataset_{i}}\left(\x^{\top}\w-y\right)^{2}\right]~.
\end{align*}
Finally, by the realizability assumption, $\w_{t+1}$ can achieve zero loss on all tasks $1,...,t+1$, and thus there is no forgetting.
\end{proof}

\newpage

\subsubsection{Additional experiments for Fisher-Information-based weighting schemes}
\label{app:weighted_examples}

Here we demonstrate two different behaviors of Fisher-based weighting schemes.

To the right, we see a larger version of Figure \ref{fig:weighted}.
The two positively-labeled datapoints of the first task are
${\x_1=\begin{bmatrix}
-0.3714,  0.9285
\end{bmatrix}^\top}$
and ${\x_2=\begin{bmatrix}
0.9285,  0.3714
\end{bmatrix}^\top}$.
Since these examples are orthogonal, the Fisher Information matrix is proportional to
${\x_1\x_1^\top + \x_2\x_2^\top 
=
\I}$ (there is a multiplicative factor stemming from the normalized probability $\frac{\exp\prn{y_n \tilde{\w}^\top \x_n}}{\exp\prn{y_n \tilde{\w}^\top \x_n} + \exp\prn{-y_n \tilde{\w}^\top \x_n}}
=
\frac{\exp\prn{1}}{\exp\prn{1} + \exp\prn{-1}}
$ of these two datapoints which are both support vectors).
The datapoint of the second task is $\x_3=\begin{bmatrix}
-0.9285,  0.3714
\end{bmatrix}^\top$.
Notice how the weighted regularization yields the same solution as the vanilla regularization, since $\B_2 \propto \I$.

To the left, we run a different experiment. 
The two positively-labeled datapoints of the first task are
${\x_1=\begin{bmatrix}
-2.4660,  0.4110
\end{bmatrix}^\top}$
and ${\x_2=\begin{bmatrix}
0.9285,  0.3714
\end{bmatrix}^\top}$.
The datapoint of the second task is $\x_3=\begin{bmatrix}
-0.4642,  0.1857
\end{bmatrix}^\top$.
In contrast to the previous example, here the Fisher Information matrix is proportional to
$
\begin{bmatrix}
6.9431 & -0.6687 
\\ 
-0.6687 & 0.3068
\end{bmatrix}$,
which notably assigns much less penalty to changes in the vertical axis compared to changes in the horizontal axes. As a result, the learner remains within the feasible set of the first task and projects onto the offline feasible set, effectively retaining the knowledge of the first task without any forgetting.

This experiment suggests that the key to avoiding forgetting in weighted regularization schemes may lie in employing ill-conditioned weighting matrices.
This stands in contrast to linear \emph{regression}, where the Fisher-information matrix always prevents forgetting 
(\propref{prop:regression_fisher}), even when the datapoints of the previous task are orthogonal and the weighting matrix is proportional to the identity matrix.

\medskip

\begin{figure}[h!]
\centering
\begin{minipage}[t]{0.92\linewidth}
\caption{
Solving two different task sequences while using their corresponding Fisher-Information matrices for weighting.
\\
We plot the iterates obtained by solving the weakly-regularized weighted \procref{proc:adaptive} with Normalized Gradient Descent (NGD). 
\\
We asserted that the obtained iterates agree with those of the weighted SMM \procref{proc:weighted_adaptive}.
}
\vspace{.9em}
\end{minipage}
\begin{subfigure}[h]{0.66\textwidth}
    \centering
    \includegraphics[width=.95\linewidth]{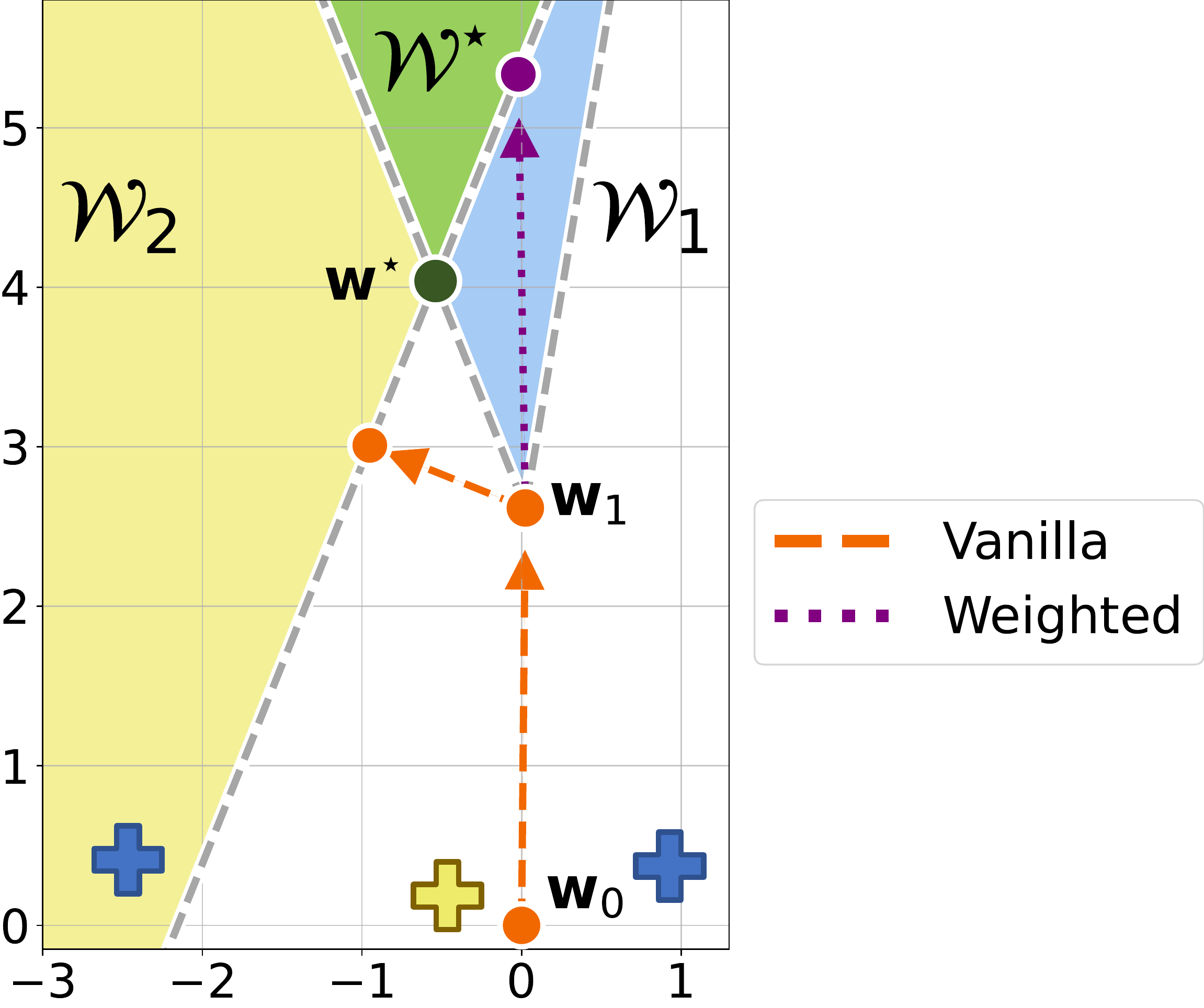}
\end{subfigure}
%
%
\begin{subfigure}[h]{0.3\textwidth}
    \centering
    \includegraphics[width=.95\linewidth]{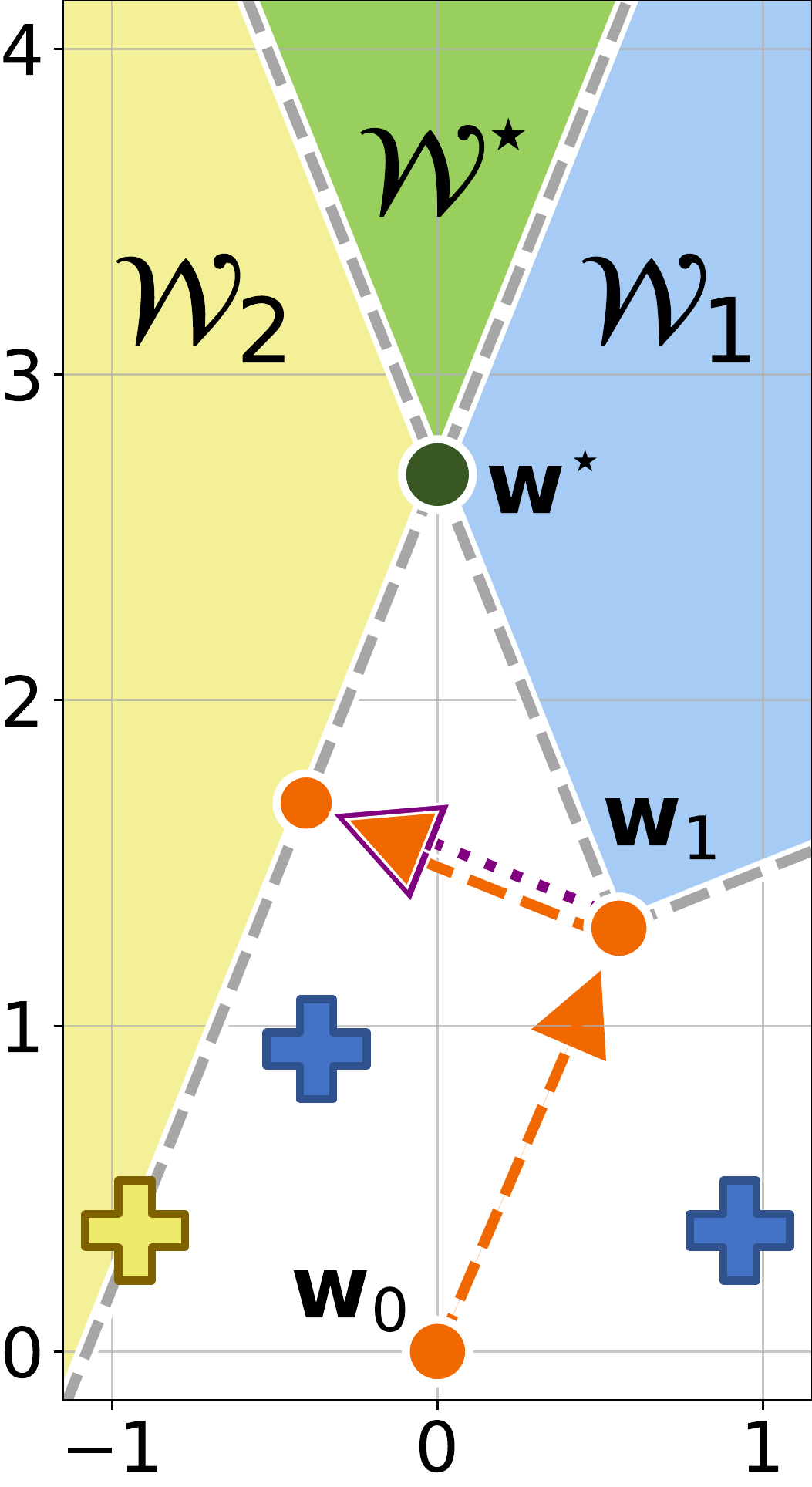}
\end{subfigure}
\end{figure}

\newpage

\section{Additional Material for the Early Stopping Discussion (\secref{sec:early})}
\label{app:early}

In Figure \ref{fig:early_stopping_vs_weakly_reg} we used $\lambda=\epsilon=\exp(-200)$. To make sure that this choice is small enough, we rerun the same setting as described in Figure \ref{fig:early_stopping_vs_weakly_reg} but with $\lambda=\epsilon=\exp(-400)$ and $\lambda=\epsilon=\exp(-600)$. 
The following figure demonstrates that the observed phenomenon remains unchanged --- early stopping and weak regularization consistently lead to distinct solutions. 

\vspace{.2cm}

\begin{figure}[h!]
\centering
\begin{subfigure}[h]{0.75\textwidth}
    \centering
    \includegraphics[width=.95\linewidth]{figures/early200.pdf}
\caption{Same as Figure \ref{fig:early_stopping_vs_weakly_reg} with $\lambda=\epsilon=\exp(-200)$}
\end{subfigure}

\vspace{.5cm}

\begin{subfigure}[h]{0.75\textwidth}
    \centering
    \includegraphics[width=.95\linewidth]{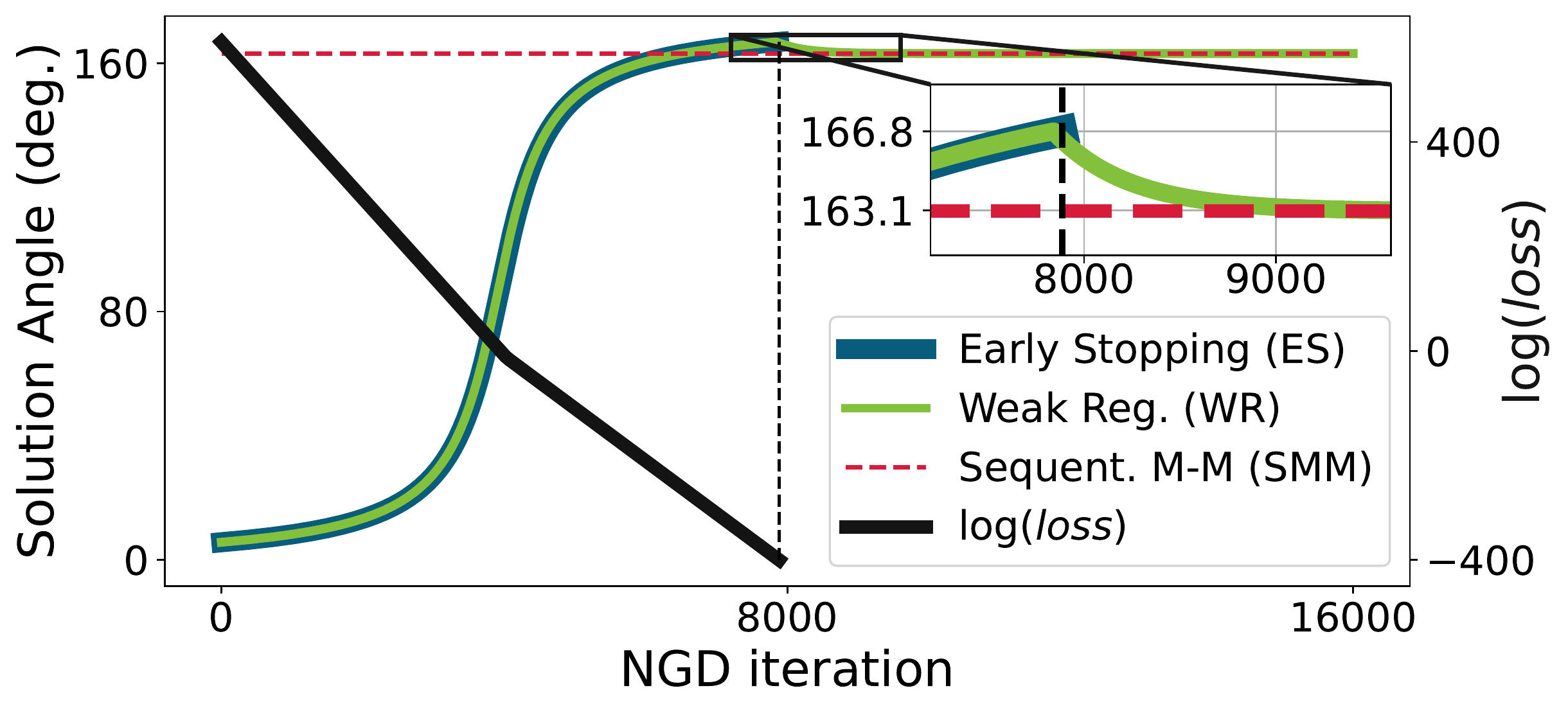}
\caption{$\lambda=\epsilon=\exp(-400)$}
\end{subfigure}

\vspace{.5cm}

\begin{subfigure}[h]{0.75\textwidth}
    \centering
    \includegraphics[width=.95\linewidth]{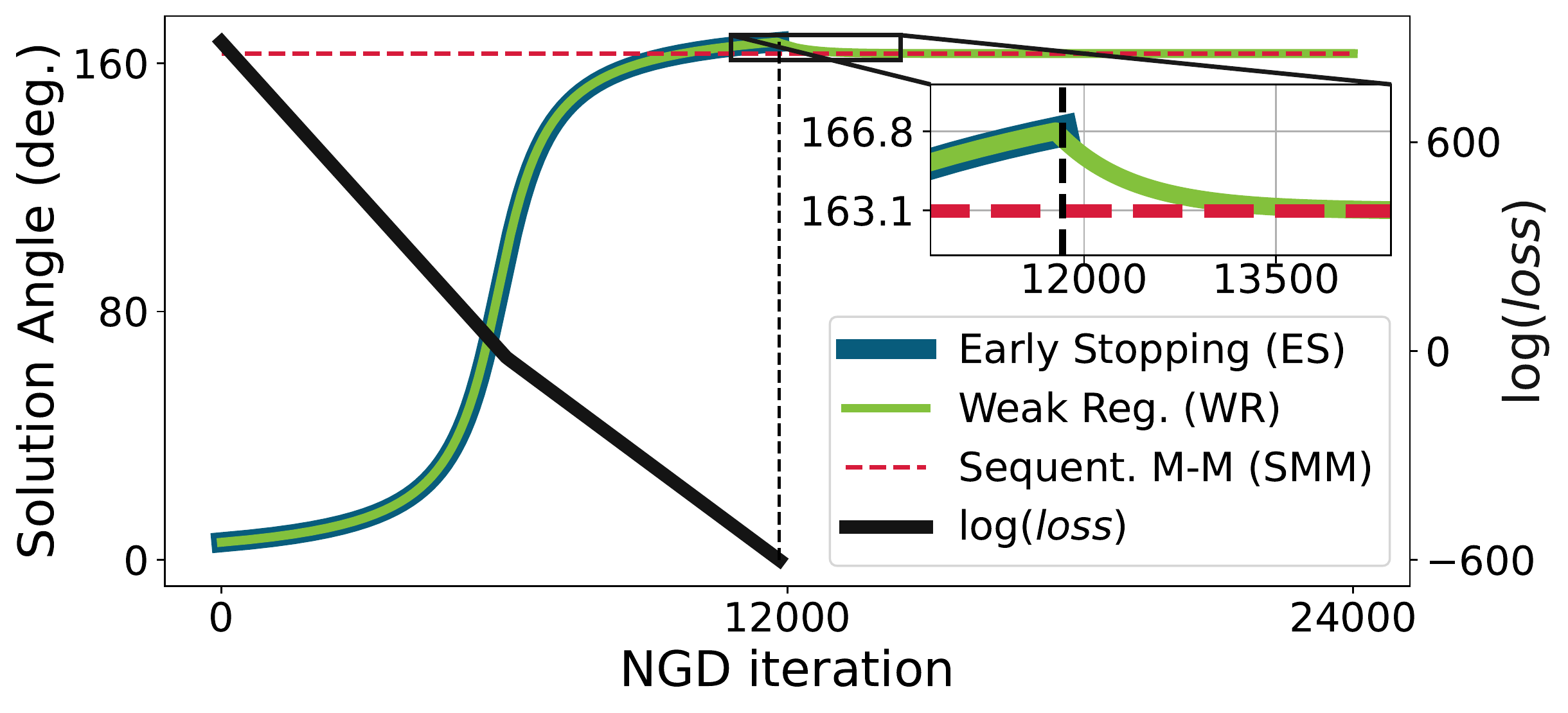}
\caption{$\lambda=\epsilon=\exp(-600)$}
\end{subfigure}
\caption{Repeating the experiment in Figure \ref{fig:early_stopping_vs_weakly_reg} with smaller $\lambda$ and $\epsilon$.}
\label{early_stopping_eps}
\end{figure}

\end{document}